\documentclass[aos]{imsart}

\RequirePackage[numbers,sort&compress]{natbib}

\usepackage{placeins}
\usepackage[utf8]{inputenc} \usepackage[T1]{fontenc}    \usepackage{url}            \usepackage{booktabs}       \usepackage{amsfonts}       \usepackage{nicefrac}       \usepackage{microtype}      \usepackage{mathrsfs}

\usepackage{amsmath,amssymb,amsthm,xcolor}
\usepackage{array}
\usepackage{float}
\theoremstyle{plain}
\newtheorem{theorem}{Theorem}[section]
\newtheorem{lemma}[theorem]{Lemma}
\newtheorem{proposition}[theorem]{Proposition}
\newtheorem{corollary}[theorem]{Corollary}

\theoremstyle{definition}
\newtheorem{definition}[theorem]{Definition}

\theoremstyle{definition}

\usepackage{graphicx}
\usepackage{mdframed}
\usepackage{wrapfig}
\usepackage[hidelinks,hypertexnames=false]{hyperref}
\usepackage{tikz}
\usetikzlibrary{arrows.meta,positioning}
\usepackage{enumitem}
\allowdisplaybreaks
\usepackage{epstopdf}
\usepackage{changepage}
\usepackage{indentfirst}
\usepackage{stmaryrd} \usepackage{soul}

\newcommand{\DeclareMainXRef}[2]{\expandafter\def\csname crossdoc@main@#1\endcsname{#2}}

\newcommand{\MainXRefNumber}[1]{\ifcsname crossdoc@main@#1\endcsname \csname crossdoc@main@#1\endcsname \else \PackageError{crossdoc-refs}{Undefined main-paper cross-reference `#1'}{}??\fi}

\newcommand{\MainResult}[2]{main-paper #1~\MainXRefNumber{#2}}
\newcommand{\MainEquation}[1]{main-paper Equation~(\MainXRefNumber{#1})}
\newcommand{\MainSection}[1]{Section~\MainXRefNumber{#1} of the main paper}

\newcommand{\DeclareSuppXRef}[2]{\expandafter\def\csname crossdoc@supp@#1\endcsname{#2}}

\newcommand{\SuppXRefNumber}[1]{\ifcsname crossdoc@supp@#1\endcsname \csname crossdoc@supp@#1\endcsname \else \PackageError{crossdoc-refs}{Undefined supplement cross-reference `#1'}{}??\fi}

\newcommand{\SuppResult}[2]{Supplement #1~\SuppXRefNumber{#2}}
\newcommand{\SuppEquation}[1]{Supplement Equation~(\SuppXRefNumber{#1})}
\newcommand{\SuppSection}[1]{Supplement Section~\SuppXRefNumber{#1}}

\DeclareMainXRef{eq:risk-increment}{1}
\DeclareMainXRef{def:expected-local-endpoint}{2.1}
\DeclareMainXRef{prop:mean-valid-decision}{2.2}
\DeclareMainXRef{eq:mean-valid-risk-oracle}{4}
\DeclareMainXRef{eq:mean-valid-selected-gap}{5}
\DeclareMainXRef{prop:orientation-selection-mean-cost}{2.3}
\DeclareMainXRef{eq:sample-covariance-pairwise}{9}
\DeclareMainXRef{eq:crossfit-single-split-calibration}{10}
\DeclareMainXRef{eq:centered-linear-crossfit-effective-dimension}{11}
\DeclareMainXRef{thm:centered-linear-expected-certificate}{2.4}
\DeclareMainXRef{eq:main-covariance-matched-subgaussian}{16}
\DeclareMainXRef{eq:population-ridge-effective-dimension}{17}
\DeclareMainXRef{eq:conditional-pilot-geometry}{19}
\DeclareMainXRef{eq:balanced-calibration-cross-term}{22b}
\DeclareMainXRef{eq:conditional-trace-relative-error}{24}
\DeclareMainXRef{eq:conditional-evaluation-trace-ucb}{25}
\DeclareMainXRef{eq:balanced-quadratic-confidence-inflation}{26}
\DeclareMainXRef{thm:balanced-one-split-linear-confidence}{2.5}
\DeclareMainXRef{eq:balanced-one-split-quadratic-confidence}{27}
\DeclareMainXRef{eq:gaussian-conditional-trace-ucb}{28}
\DeclareMainXRef{eq:gaussian-quadratic-confidence-inflation}{29}
\DeclareMainXRef{cor:gaussian-one-split-linear-confidence}{2.6}
\DeclareMainXRef{eq:bounded-feature-quadratic-confidence-inflation}{31}
\DeclareMainXRef{cor:bounded-feature-one-split-linear-confidence}{2.7}
\DeclareMainXRef{eq:main-pilot-stability-rate}{32}
\DeclareMainXRef{eq:balanced-finite-split-inflated-intercept}{36}
\DeclareMainXRef{eq:balanced-finite-split-confidence-ellipsoid}{37}
\DeclareMainXRef{eq:balanced-finite-split-linear-confidence}{38}
\DeclareMainXRef{thm:balanced-one-split-linear-confidence-field}{2.8}
\DeclareMainXRef{cor:balanced-one-split-oracle-width}{2.9}
\DeclareMainXRef{eq:balanced-finite-split-oracle-width}{41}
\DeclareMainXRef{sec:finite-sample-certificates}{2}
\DeclareMainXRef{sec:local}{3}
\DeclareMainXRef{eq:frozen-branch-decomposition}{44}
\DeclareMainXRef{eq:frozen-branch-taylor-envelope}{45}
\DeclareMainXRef{eq:frozen-branch-taylor-remainder}{46}
\DeclareMainXRef{prop:fixed-mask-crossing-reduction}{3.1}
\DeclareMainXRef{eq:pullback-lipschitz-increments}{47}
\DeclareMainXRef{prop:finite-selection-crossing-slope}{3.2}
\DeclareMainXRef{eq:finite-selection-crossing-envelope}{48}
\DeclareMainXRef{eq:lad-crossing-term}{53}
\DeclareMainXRef{eq:radial-remainder-smoothness-envelope}{55}
\DeclareMainXRef{eq:finite-split-crossing-penalty}{56}
\DeclareMainXRef{eq:hybrid-confidence-allocation}{57}
\DeclareMainXRef{eq:hybrid-taylor-confidence-width}{58}
\DeclareMainXRef{eq:hybrid-balanced-crossing-modulus}{59}
\DeclareMainXRef{eq:hybrid-crossing-confidence-correction}{60}
\DeclareMainXRef{eq:hybrid-crossing-confidence-width}{62}
\DeclareMainXRef{eq:hybrid-remainder-confidence-width}{63}
\DeclareMainXRef{eq:hybrid-upper-endpoint}{65}
\DeclareMainXRef{eq:hybrid-lower-endpoint}{66}
\DeclareMainXRef{eq:hybrid-taylor-subgaussian-conditions}{67}
\DeclareMainXRef{thm:componentwise-hybrid-local-confidence}{3.3}
\DeclareMainXRef{eq:hybrid-local-confidence-coverage}{68}
\DeclareMainXRef{eq:assembled-local-simple-width-rate}{69}
\DeclareMainXRef{cor:finite-sample-lad-endpoints}{3.4}

\DeclareSuppXRef{supp:sec:decomposition-validity}{S.1}
\DeclareSuppXRef{supp:lem:null-anchor-obstruction}{S.1.1}
\DeclareSuppXRef{supp:subsec:linear-mean-valid-proof}{S.2.1}
\DeclareSuppXRef{lem:crossfit-u-average-identities}{S.2.1}
\DeclareSuppXRef{eq:crossfit-average-pilot-covariance}{S.2.1}
\DeclareSuppXRef{lem:conditional-second-moment-calibration}{S.2.2}
\DeclareSuppXRef{supp:subsec:pilot-conditional-confidence}{S.2.2}
\DeclareSuppXRef{prop:conditional-linear-intercept-optimality}{S.2.3}
\DeclareSuppXRef{lem:conditional-evaluation-trace-sandwich}{S.2.6}
\DeclareSuppXRef{supp:subsec:finite-split-proofs-rates}{S.2.4}
\DeclareSuppXRef{supp:thm:balanced-finite-split-linear-confidence}{S.2.10}
\DeclareSuppXRef{supp:cor:balanced-finite-split-oracle-width}{S.2.11}
\DeclareSuppXRef{lem:regularized-covariance-concentration}{S.2.9}
\DeclareSuppXRef{app:crossfit-composite-lemmas}{S.4}
\DeclareSuppXRef{prop:balanced-pullback-conditional-concentration}{S.4.5}
\DeclareSuppXRef{supp:prop:assembled-confidence-width-rate}{S.4.6}
\DeclareSuppXRef{app:worked-examples}{S.5}
\DeclareSuppXRef{prop:supp-lad-direct-full-ball}{S.5.1}
\DeclareSuppXRef{prop:supp-lad-active-empirical-bernstein}{S.5.2}
  
\begin{document}

\begin{frontmatter}
\title{Cross-Calibrated Confidence Fields for Local Risk Updates}
\runtitle{Cross-Calibrated Fields for Local Risk Updates}

\begin{aug}
\author[A]{\fnms{Mingzhi}~\snm{Song}\ead[label=e1]{songmingzhi123@gmail.com}}
\address[A]{Department of Mathematics, The University of Hong Kong\printead[presep={,\ }]{e1}}
\end{aug}

\begin{abstract}
How can training data be used to compare local updates to the current model,
choose an update, and retain valid bounds for the selected update's
population-risk change?
We construct lower and upper confidence fields that jointly cover the
population-risk change of every update in a possibly continuous local update
space.  The fields can therefore be used both to compare the updates and to
select an update; a negative upper endpoint certifies improvement over the reference
model.  For linear risk changes in possibly infinite-dimensional feature
spaces, cross-calibration uses the discrepancy between two balanced folds to
calibrate the full-sample estimation error.  Under covariance-aligned
sub-Gaussian tails, covariance-estimation stability, and sufficient sample
size, the cross-calibrated field has finite-sample simultaneous coverage.  The confidence field's
directional widths are governed by a population ridge effective
dimension rather than the ambient feature dimension.  For losses formed
locally by continuous selection among finitely many smooth branches, separate
uniform bounds for the linear Taylor field, Taylor remainder, and
branch-interface discrepancy extend the field to every local update.
\end{abstract}
 
\begin{keyword}[class=MSC]
\kwdgroup[type=primary]{\kwd{62J07}\kwd{62G08}}
\kwdgroup[type=secondary]{\kwd{62G35}\kwd{62F35}}
\end{keyword}

\begin{keyword}
\kwd{population-risk increment}
\kwd{mean validity}
\kwd{simultaneous confidence field}
\kwd{cross-fitting}
\kwd{nonsmooth M-estimation}
\end{keyword}
\end{frontmatter}

\section{Introduction}

How can training data be used to compare local updates to the current model,
choose an update, and retain a valid bound for the selected update's
population-risk change?  Let \(S:=(Z_1,\ldots,Z_n)\) be a training sample of
independent observations with law \(P\), and write
\(P_nf=n^{-1}\sum_{i=1}^nf(Z_i)\).  For a loss \(\ell_\theta\) and a fixed
reference point \(\theta_0\), define the loss increment of an update \(v\)
in a nonempty local update space \(\mathcal D\) by
\begin{equation}
\label{eq:risk-increment}
 \delta_v(z):=\ell_{\theta_0+v}(z)-\ell_{\theta_0}(z),
 \qquad v\in\mathcal D.
\end{equation}
The expectation \(P\delta_v\) is the population-risk change of update \(v\).
Fix \(\alpha\in(0,1)\), and let
\(\widehat{\mathsf L},\widehat{\mathsf U}\) be lower and upper random fields.
An update \(\widehat v(S)\in\mathcal D\) may be chosen using the training
sample \(S\).
Pointwise \(1-\alpha\) coverage for each fixed \(v\) does
not imply \(1-\alpha\) coverage for \(\widehat v(S)\).  The required
guarantee is the single simultaneous event
\begin{equation}
\label{eq:intro-simultaneous-coverage}
 \Pr\left\{
 \widehat{\mathsf L}(v)\le P\delta_v\le\widehat{\mathsf U}(v)
 \text{ for every }v\in\mathcal D
 \right\}\ge1-\alpha.
\end{equation}
On the event in \eqref{eq:intro-simultaneous-coverage}, the selected update is
covered.  In particular,
\(\widehat{\mathsf U}(\widehat v)<0\) certifies
that the selected update reduces population risk: \(P\delta_{\widehat v}<0\).

For linearly represented risk changes, the problem becomes simultaneous
inference for a Hilbert-valued mean.  Our construction splits the training
sample into a pilot fold and an evaluation fold.  The pilot fold
estimates a regularized covariance geometry, while the difference between the
pilot-fold and evaluation-fold means calibrates the error of the full-sample
mean in the regularized covariance geometry estimated from the pilot fold.  The
cross-calibrated confidence field is centered on all training observations.
We call the procedure cross-calibration.

\subsection*{Contributions}

For linearly represented risk changes in a separable Hilbert space,
Theorem~\ref{thm:balanced-one-split-linear-confidence-field} constructs a
finite-sample confidence ellipsoid centered at the mean of all training
observations.  Under sub-Gaussian tails at the covariance scale and stability
of the pilot covariance estimate, Corollary~
\ref{cor:balanced-one-split-oracle-width} shows that the ellipsoid's
directional widths are governed by a population ridge effective
dimension rather than the ambient dimension.

For losses that are locally continuous selections from finitely many
smooth branches, the risk increment separates into a linear term, a
within-branch remainder, and the discrepancy created at branch interfaces.
Theorem~\ref{thm:componentwise-hybrid-local-confidence} combines uniform
bounds for the linear term, within-branch remainder, and branch-interface
discrepancy into a finite-sample simultaneous confidence
field.

Under second moments,
Theorem~\ref{thm:centered-linear-expected-certificate} gives a
mean-valid field, and Proposition~
\ref{prop:mean-valid-decision} combines the mean-valid field's
empirical-center representation
with mean validity to bound the expected risk of an approximate minimizer.
The mean-valid field and expected-risk bound are expectation guarantees,
distinct from the realized simultaneous coverage in
\eqref{eq:intro-simultaneous-coverage}.

\subsection*{Related work}

Table~\ref{tab:intro-related-work} compares inferential targets and six
properties: protection of any selector, infinite or continuous indexing, direct
finite-sample validity, two-sided coverage, covariance-shaped geometry, and a
population ridge effective-dimension width bound.

\begin{table}[!htbp]
\caption{Guarantees for data-dependent selection over prespecified index
sets.  Entries describe each cited result under the assumptions stated for
the cited result.}
\label{tab:intro-related-work}
\centering
\footnotesize
\setlength{\tabcolsep}{1.8pt}
\renewcommand{\arraystretch}{1.08}
\begin{tabular}{@{}>{\raggedright\arraybackslash}p{.20\textwidth}
                    >{\raggedright\arraybackslash}p{.20\textwidth}
                    *{6}{>{\centering\arraybackslash}p{.071\textwidth}}@{}}
\toprule
Result & Target or guarantee
& \shortstack{Any\\selector}
& \shortstack{Infinite\\index}
& \shortstack{Finite\\sample}
& \shortstack{Two\\sided}
& \shortstack{Cov.\\shape}
& \shortstack{Eff.\\dim.} \\
\midrule
Fixed-class risk bound
\cite[Thm.~8]{bartlett2002rademacher}
& Uniform upper risk bound
& \(\checkmark\) & \(\checkmark\) & \(\checkmark\)
& \(\times\) & \(\times\) & \(\times\) \\
\addlinespace[1pt]
Finite-grid risk tests
\cite[Thm.~2.1]{angelopoulos2025learnthen}
& Risk-threshold control
& \(\checkmark^{*}\) & \(\times\) & \(\checkmark\)
& \(\times\) & \(\times\) & \(\times\) \\
\addlinespace[1pt]
Finite-dimensional Hoeffding mean box
\cite[Thm.~2]{hoeffding1963probability}
& Mean vector and directional projections
& \(\checkmark^{\ddagger}\) & \(\checkmark^{\ddagger}\)
& \(\checkmark^{\ddagger}\) & \(\checkmark^{\ddagger}\)
& \(\times\) & \(\times\) \\
\addlinespace[1pt]
Self-normalized empirical-Bernstein mean ellipsoid
\cite[Thm.~6.1]{whitehouse2026timeuniform}
& Bounded vector mean and directional projections
& \(\checkmark^{\dagger}\) & \(\checkmark^{\dagger}\)
& \(\checkmark\) & \(\checkmark^{\dagger}\)
& \(\checkmark\) & \(\times\) \\
\addlinespace[1pt]
Empirical-Bernstein Banach mean ball
\cite[Cor.~1 and Sec.~4.1]{martineztaboada2026banach}
& Bounded Banach mean and dual-direction projections
& \(\checkmark^{\dagger}\) & \(\checkmark^{\dagger}\)
& \(\checkmark\) & \(\checkmark^{\dagger}\)
& \(\times\) & \(\times\) \\
\addlinespace[1pt]
Regularized finite-dimensional mean region
\cite[Thm.~2, Thm.~3]{issouani2024regularizedhotelling}
& Mean vector and directional projections
& \(\checkmark^{\dagger}\) & \(\checkmark^{\dagger}\) & \(\checkmark\)
& \(\checkmark^{\dagger}\) & \(\checkmark\) & \(\times\) \\
\addlinespace[1pt]
\textbf{Present linear confidence field}
(Theorem~\ref{thm:balanced-one-split-linear-confidence-field};
Corollary~\ref{cor:balanced-one-split-oracle-width})
& Signed risk change for every update
& \(\checkmark\) & \(\checkmark\) & \(\checkmark\)
& \(\checkmark\) & \(\checkmark\) & \(\checkmark\) \\
\bottomrule
\end{tabular}
\parbox{\textwidth}{\vspace{2pt}\scriptsize
``Any selector'' means that one joint statement protects every data-dependent
choice over the stated index set.  ``Finite sample'' means that the cited
result is nonasymptotic.  ``Cov. shape'' requires a joint region or
support bound based on a covariance quadratic form.  ``Eff. dim.''
requires a proved width bound governed by population ridge effective
dimension rather than ambient feature dimension.
\(\checkmark\) records a property stated by the result,
\(\checkmark^{\dagger}\) an immediate projection consequence, and
\(\times\) means ``not established by the cited result,'' not ``impossible.''
\({}^{*}\)For the cited finite-grid theorem, selection is from a returned set
whose grid points jointly satisfy the specified risk threshold with the
theorem's stated confidence.
\({}^{\dagger}\)For a joint mean region, selection over a continuum of
directions and two-sided directional bounds follow by projection.
For the regularized mean-region row, the symmetric case uses finite covariance
for the mean interpretation; the
general case uses the cited eighth-moment, dimension-ratio, and
covariance-spectrum conditions.
\({}^{\ddagger}\)In fixed finite dimension with known deterministic
coordinate ranges, applying the cited scalar inequality to both signs of
every coordinate and taking a union bound gives a rectangular finite-sample
mean region; projection gives any-selector validity, continuous direction
indexing, and two-sided bounds.}
\end{table}

For the Hoeffding construction, let \(X_1,\ldots,X_n\in\mathbb R^d\)
be independent with common mean \(\mu\), and suppose
\(a_j\le X_{ij}\le b_j\) almost surely.  With
\[
 r_j=(b_j-a_j)\sqrt{\frac{\log(2d/\alpha)}{2n}},
 \qquad j=1,\ldots,d,
\]
Hoeffding's Theorem~2 \cite{hoeffding1963probability}, applied to both signs and all
coordinates, gives with probability at least \(1-\alpha\)
\[
 |h^\top(P_nX-\mu)|\le\sum_{j=1}^d|h_j|r_j
 \qquad\text{for every }h\in\mathbb R^d.
\]
Simultaneous validity of the coordinatewise Hoeffding inequality for every
\(h\in\mathbb R^d\) protects a direction chosen using the observations used to form the
Hoeffding mean region.  The Hoeffding mean region has axis-aligned geometry
determined by the known coordinate ranges rather
than a covariance quadratic form.

Bartlett and Mendelson \cite[Theorem~8]{bartlett2002rademacher} prove that,
for a sample-independent class satisfying the bounded-loss or bounded
dominating-cost conditions in Bartlett and Mendelson's Theorem~8, an upper risk bound holds for every function in
the class on one event of probability at least \(1-\alpha\); a function chosen
from the class after inspecting the data therefore retains the upper risk
bound attached to the selected function.  Angelopoulos et al.\ \cite[Theorem~2.1]{angelopoulos2025learnthen}
instead test, on a finite grid, whether each point's risk exceeds a chosen
threshold.  Given valid \(p\)-values and familywise-error control at level
\(\alpha\), every point in the returned subset has risk at most the threshold
with probability at least \(1-\alpha\), including a point selected from the returned
subset using the data used for finite-grid testing.  By contrast, our linear field addresses a
different target: Theorem~
\ref{thm:balanced-one-split-linear-confidence-field} gives lower and upper
bounds for signed population-risk change relative to the current model,
simultaneously over a possibly continuous update space; Corollary~
\ref{cor:balanced-one-split-oracle-width} gives the covariance-shaped and
population ridge effective-dimension width properties recorded in
Table~\ref{tab:intro-related-work}.  For prediction performance after choosing
among finitely many trained models, held-out selection procedures use an
approximate parametric correction \cite{westphal2019testdata} or
proposed bootstrap lower bounds adjusted for selecting among the models
\cite[Section~2, Equations~(12)--(14)]{rink2025postselection}.
Zrnic and Jordan
\cite[Definitions~4.1--4.2 and Theorem~5.2]{zrnic2023postselection} adjust
fixed-target intervals for a specified randomized selection rule satisfying
an explicit stability condition.
The parametric correction is approximate, the bootstrap construction is
proposed without a direct finite-sample theorem for the complete procedure,
and the stability correction applies to the specified randomized selection
rule rather than giving a direct joint bound over the update space.

\FloatBarrier
Lee et al.\ \cite[Theorem~6.1]{lee2016exact} give exact two-sided
Gaussian intervals for coefficients selected by the Lasso, conditional on the
selected model.  Lee et al.'s conditional interval result is a
selector-specific finite-sample guarantee over a
finite index set.  Berk et al.\
\cite[Theorem~4.1 and Corollary~4.1]{berk2013valid} give two-sided
finite-sample intervals simultaneously for every coefficient in a
prespecified finite model family, so one event protects any selector into the
prespecified model family.  Bachoc et al.\
\cite[Section~2.1 and Theorems~2.4 and~2.6]{bachoc2020uniformly} allow general
finite-dimensional model-specific targets and protect any selector from a
fixed finite model family with two-sided intervals, but the guarantee in
Bachoc et al. is asymptotic; the regression applications in Bachoc et al.
specialize the targets to parameter coordinates.  The conditional laws or
critical values in Lee et al., Berk et al., and Bachoc et al. account for
dependence, but none of the three cited results gives the covariance-shaped
update-space region or population ridge effective-dimension width bound in the
final two columns of Table~\ref{tab:intro-related-work}.  The present linear
confidence field
instead covers signed population-risk changes over a possibly continuous
update space.

Classical finite-dimensional Gaussian mean ellipsoids and simultaneous
Gaussian intervals for all contrasts allow a direction or contrast to be
chosen after inspection
\cite{hotelling1931generalization}
\cite[Equation~(8)]{scheffe1953contrasts}.  Regularized finite-dimensional
mean regions retain covariance-quadratic geometry beyond Gaussian models.
Issouani et al.\ \cite[Theorem~2 and the mean-testing acceptance region;
Theorem~3]{issouani2024regularizedhotelling} require only finite covariance
for the mean interpretation under symmetry, without an additional
higher-moment condition; for general distributions, Issouani et al. require
eighth moments together with dimension-ratio and covariance-spectrum
conditions.
For bounded adapted vectors in finite-dimensional Euclidean space, a
self-normalized empirical-Bernstein inequality gives a time-uniform mean
ellipsoid under a common conditional mean
\cite[Theorem~6.1]{whitehouse2026timeuniform}.
For bounded adapted observations with a common conditional mean in a
separable 2-smooth Banach space, including a separable Hilbert space,
\cite[Corollary~1 and Section~4.1]{martineztaboada2026banach} gives a norm ball centered at a predictably weighted mean whose
radius uses observed squared norm deviations.
Projection gives simultaneous bounds for all continuous linear directions for
the Euclidean ellipsoid and the Banach-space ball.  The finite-dimensional ellipsoid has sequential empirical
self-normalizer geometry, whereas the Banach-space ball retains the ambient
norm; neither Whitehouse et al. nor Mart\'inez-Taboada and Ramdas gives the
population ridge effective-dimension width recorded
under ``Eff. dim.'' in Table~\ref{tab:intro-related-work}.

For low-dimensional target parameters with high-dimensional or nonparametric
nuisance functions, cross-fitting fits the nuisance functions on one fold,
evaluates an estimating equation for the target parameter on another, then
swaps the fold roles and aggregates the estimates
\cite[Definitions~3.1--3.2]{chernozhukov2018double}.  Cross-calibration uses
the folds differently: one fold supplies the covariance geometry, while the
difference between the fold means calibrates a confidence field centered on
the full training sample
\eqref{eq:linear-core-cross-calibration}.

For square-loss regression with uniformly bounded functions and residuals, a
random-sign empirical-process bound with a negative quadratic term controls
the expected excess risk of a two-stage empirical risk minimizer
\cite[Equation~(2) and Theorem~4]{liang2015offset}.  The excess-risk bound is
in expectation, whereas \eqref{eq:intro-simultaneous-coverage} is a high-probability
simultaneous guarantee.  Theorem~
\ref{thm:centered-linear-expected-certificate} gives a
mean-valid quadratic field under second moments, and Proposition~
\ref{prop:mean-valid-decision} gives an expected-risk comparison for an
approximate minimizer of the mean-valid quadratic field.

For nonlinear loss increments, a sample-independent full increment class
satisfying the bounded-loss and bounded dominating-cost conditions of
\cite[Theorem~8]{bartlett2002rademacher} can be controlled directly through random-sign averages.  For iid observations and a
pointwise-measurable centered class with a measurable \(L_q(P)\) envelope for
\(q\ge3\), \(P\)-pre-Gaussianity, and the cited third-moment and local
entropy/process conditions, a Gaussian supremum can approximate the centered
empirical-process supremum
\cite[Theorem~2.1 and Corollary~2.1]{chernozhukov2014gaussian}.
Under the separate countable-approximation, VC-type entropy, \(q\ge4\)
envelope and class moment, and complexity conditions of \cite[Theorem~2.2]{chernozhukov2016empirical},
random-multiplier bootstrap samples admit a conditional Gaussian coupling;
distributional calibration additionally uses the anti-concentration step in
\cite[Remark~2.3]{chernozhukov2016empirical}.  Theorem~
\ref{thm:componentwise-hybrid-local-confidence} instead retains separate
contributions from the linear term, the discrepancy at branch interfaces,
and the Taylor remainder.
 \section{Confidence Fields for Linear Risk Changes}
\label{sec:finite-sample-certificates}
Throughout this section, \(\mathcal D\) is a nonempty deterministic Borel
subset of a separable metric update space and carries the induced Borel
\(\sigma\)-field.  Assume that \(P\delta_v\in\mathbb R\) for every
\(v\in\mathcal D\) and that \(v\mapsto P\delta_v\) is Borel measurable.
Joint measurability of a random field indexed by \(\mathcal D\) refers to the
sample \(\sigma\)-field and the induced Borel \(\sigma\)-field on
\(\mathcal D\).

Let \(X=X(Z)\) be an integrable random element in a real separable Hilbert
space \(\mathcal H\), and split
\(n=2N\) observations, \(N\ge2\), into folds \(A\) and \(B\), each of size
\(N\).  Write \(\overline X_A=N^{-1}\sum_{i\in A}X_i\), define
\(\overline X_B\) analogously, and put
\(\overline X=(\overline X_A+\overline X_B)/2\).  We call \(A\) the
\emph{pilot fold} and \(B\) the \emph{evaluation fold}, and write
\(S_A:=(Z_i)_{i\in A}\) for the observed pilot sample.

For a bounded, self-adjoint, positive operator \(Q\) on \(\mathcal H\), write
\(\|x\|_Q^2:=\langle x,Qx\rangle\).  Let \(M_A\) be an \(S_A\)-measurable,
self-adjoint, strictly positive operator on \(\mathcal H\) with a bounded
inverse, where measurability is with respect to the operator norm.  The balanced split gives
\begin{align}
 \|PX-\overline X\|_{M_A^{-1}}^2
 -\frac14\|\overline X_B-\overline X_A\|_{M_A^{-1}}^2
 &=-\left\langle
 M_A^{-1/2}(PX-\overline X_A),
 M_A^{-1/2}(\overline X_B-PX)
 \right\rangle .
\label{eq:linear-core-cross-calibration}
\end{align}
We call the subtraction of the cross-fold discrepancy in
\eqref{eq:linear-core-cross-calibration} \emph{cross-calibration}.  Conditional
on \(S_A\), \(M_A^{-1/2}(PX-\overline X_A)\) is fixed and
\(M_A^{-1/2}(\overline X_B-PX)\) is centered.  If the inner product in
\eqref{eq:linear-core-cross-calibration} is integrable, then
\[
 \mathbb E\!\left[
 \left\langle M_A^{-1/2}(PX-\overline X_A),
 M_A^{-1/2}(\overline X_B-PX)\right\rangle\middle|S_A\right]=0.
\]

Under second moments, averaging the cross-calibrated penalties over all pilot
folds of a fixed size gives the two signed expected-supremum inequalities for
\((P-P_n)\langle X,h\rangle\) over every deterministic nonempty
\(K\subset\mathcal H\) in
Theorem~\ref{thm:centered-linear-expected-certificate}.  Under the
covariance-matched sub-Gaussian tail condition, for every
\(\alpha\in(0,1)\) satisfying the sample-size requirements in
Theorem~\ref{thm:balanced-one-split-linear-confidence-field}, a balanced split
gives, with probability at least \(1-\alpha\), a finite-sample ellipsoid for
\(PX-P_nX\) and simultaneous bounds for
\((P-P_n)\langle X,h\rangle\), \(h\in\mathcal H\).  Under the
stronger requirements of
Corollary~\ref{cor:balanced-one-split-oracle-width}, one event of probability
at least \(1-\alpha\) bounds the empirical metric by the population ridge
metric and the radius by the ridge effective dimension and confidence level,
yielding a population-geometry width bound for every direction.

\subsection{Mean validity and decision consequences}
\label{subsec:expected-local-certificates}

We seek random upper and lower fields that bound
\(P\delta_{\widehat v}\) in expectation for every measurable update
\(\widehat v\in\mathcal D\) chosen using the observations used to construct
the fields.
Cavalier and
Golubev use an analogous expected-supremum condition for risk hulls to retain
an expected loss bound after data-dependent bandwidth selection
\cite[Section~2.1, Equations~(2.2)--(2.3)]{cavalier2006risk}.

\begin{definition}[Mean-valid population-risk fields]
\label{def:expected-local-endpoint}
A jointly measurable random field
\(\widehat{\mathsf U}_{\mathcal D}:\mathcal D\to
\mathbb R\cup\{+\infty\}\) is a \emph{mean-valid upper field} if
\[
 G_{\mathsf U}:=\sup_{v\in\mathcal D}
 \{P\delta_v-\widehat{\mathsf U}_{\mathcal D}(v)\}
\]
is measurable, \(\mathbb E G_{\mathsf U}\) is well defined as an extended
expectation, and \(\mathbb E G_{\mathsf U}\le0\).
A jointly measurable random field
\(\widehat{\mathsf L}_{\mathcal D}:\mathcal D\to
\mathbb R\cup\{-\infty\}\) is a \emph{mean-valid lower field} if
\[
 G_{\mathsf L}:=\sup_{v\in\mathcal D}
 \{\widehat{\mathsf L}_{\mathcal D}(v)-P\delta_v\}
\]
is measurable, \(\mathbb E G_{\mathsf L}\) is well defined as an extended
expectation, and \(\mathbb E G_{\mathsf L}\le0\).
\end{definition}

For every measurable same-sample selector
\(\widehat v=\widehat v(Z_1,\ldots,Z_n)\in\mathcal D\),
measurability of the target map and joint measurability of the fields ensure
that \(P\delta_{\widehat v}\),
\(\widehat{\mathsf U}_{\mathcal D}(\widehat v)\) and
\(\widehat{\mathsf L}_{\mathcal D}(\widehat v)\) are measurable.  The selected
gaps satisfy
\[
 P\delta_{\widehat v}
 -\widehat{\mathsf U}_{\mathcal D}(\widehat v)\le G_{\mathsf U},
 \qquad
 \widehat{\mathsf L}_{\mathcal D}(\widehat v)
 -P\delta_{\widehat v}\le G_{\mathsf L}.
\]
Definition~\ref{def:expected-local-endpoint} makes
\((G_{\mathsf U})_+\) and \((G_{\mathsf L})_+\) integrable, so the selected
gaps have well-defined extended expectations and
\[
 \mathbb E\{P\delta_{\widehat v}
 -\widehat{\mathsf U}_{\mathcal D}(\widehat v)\}\le0,
 \qquad
 \mathbb E\{\widehat{\mathsf L}_{\mathcal D}(\widehat v)
 -P\delta_{\widehat v}\}\le0.
\]
Mean-valid upper fields can guide the update choice: Proposition~
\ref{prop:mean-valid-decision} compares an approximate minimizer with fixed
updates in expected population-risk change.
The proof of Proposition~\ref{prop:mean-valid-decision} is in
\SuppSection{supp:sec:decomposition-validity}.

\begin{proposition}[Decision interpretation of mean-valid fields]
\label{prop:mean-valid-decision}
Let \(\widehat{\mathsf U}_{\mathcal D}\) be a finite-valued mean-valid upper
field of the empirical-center form
\[
 \widehat{\mathsf U}_{\mathcal D}(v)
 =P_n\delta_v+\widehat{\mathsf A}_{\mathcal D}(v),
\]
and suppose that \(\widehat{\mathsf A}_{\mathcal D}(v)\) is integrable for
every deterministic \(v\in\mathcal D\).  Let
\(\varepsilon_{\rm opt}\ge0\) be deterministic.  Suppose that a measurable
\(\widehat v\in\mathcal D\) satisfies, on a single event of probability one,
\[
 \widehat{\mathsf U}_{\mathcal D}(\widehat v)
 \le \widehat{\mathsf U}_{\mathcal D}(v)+\varepsilon_{\rm opt},
 \qquad \text{for every }v\in\mathcal D.
\]
If \(P\delta_{\widehat v}\) is integrable, then
\begin{equation}
\label{eq:mean-valid-risk-oracle}
 \mathbb E\{P\delta_{\widehat v}\}
 \le \inf_{v\in\mathcal D}
 \{P\delta_v+\mathbb E\widehat{\mathsf A}_{\mathcal D}(v)\}
 +\varepsilon_{\rm opt}.
\end{equation}
Moreover, every measurable same-sample selector
\(\widetilde v\in\mathcal D\) for which
\((P-P_n)\delta_{\widetilde v}\) and
\(\widehat{\mathsf A}_{\mathcal D}(\widetilde v)\) are integrable obeys
\begin{equation}
\label{eq:mean-valid-selected-gap}
 \mathbb E\{(P-P_n)\delta_{\widetilde v}\}
 \le\mathbb E\widehat{\mathsf A}_{\mathcal D}(\widetilde v).
\end{equation}
\end{proposition}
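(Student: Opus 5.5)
The plan is to prove the two displays separately, each by a pointwise inequality taken in expectation. For \eqref{eq:mean-valid-risk-oracle} I will start from the decomposition
\[
 P\delta_{\widehat v}=\bigl\{P\delta_{\widehat v}-\widehat{\mathsf U}_{\mathcal D}(\widehat v)\bigr\}+\widehat{\mathsf U}_{\mathcal D}(\widehat v)\le G_{\mathsf U}+\widehat{\mathsf U}_{\mathcal D}(v)+\varepsilon_{\rm opt},
\]
valid on the probability-one event for each fixed deterministic \(v\in\mathcal D\). Here I use the selected-gap bound recorded after Definition~\ref{def:expected-local-endpoint} together with the approximate-minimizer hypothesis. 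For fixed deterministic \(v\), I write \(\widehat{\mathsf U}_{\mathcal D}(v)=P_n\delta_v+\widehat{\mathsf A}_{\mathcal D}(v)\).

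Taking expectations requires some care, since \(G_{\mathsf U}\) has only an extended expectation with \(\mathbb E G_{\mathsf U}\le0\); in particular \((G_{\mathsf U})_+\) is integrable but \(G_{\mathsf U}\) may equal \(-\infty\) in mean. The left side \(P\delta_{\widehat v}\) is integrable by hypothesis. If \(\mathbb E\widehat{\mathsf U}_{\mathcal D}(v)\) is finite, the right side is a sum of a quasi-integrable variable and an integrable one, so its extended expectation is additive and monotonicity gives \(\mathbb E P\delta_{\widehat v}\le \mathbb E G_{\mathsf U}+\mathbb E\widehat{\mathsf U}_{\mathcal D}(v)+\varepsilon_{\rm opt}\le \mathbb E\widehat{\mathsf U}_{\mathcal D}(v)+\varepsilon_{\rm opt}\).

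Then \(\mathbb E P_n\delta_v=P\delta_v\) by the iid assumption, giving the bound \(P\delta_v+\mathbb E\widehat{\mathsf A}_{\mathcal D}(v)+\varepsilon_{\rm opt}\), and taking the infimum over \(v\) finishes. The step I expect to be the main obstacle is integrability of \(P_n\delta_v\). The standing assumption says only \(P\delta_v\in\mathbb R\); I read this as \(\delta_v\in L_1(P)\), so \(P_n\delta_v\) is integrable with mean \(P\delta_v\). If instead \(P\delta_v\) were only an extended integral, I would handle it by noting that the quasi-integrable case still permits additivity provided the sum is not \(\infty-\infty\). A cleaner route is to bound the positive parts directly: \(\mathbb E(P\delta_{\widehat v})\le \mathbb E(G_{\mathsf U})_+ +\dots\) is insufficient, so I would genuinely rely on finiteness.

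For \eqref{eq:mean-valid-selected-gap}, on the full-probability event I have
\[
(P-P_n)\delta_{\widetilde v}=P\delta_{\widetilde v}-\widehat{\mathsf U}_{\mathcal D}(\widetilde v)+\widehat{\mathsf A}_{\mathcal D}(\widetilde v)\le G_{\mathsf U}+\widehat{\mathsf A}_{\mathcal D}(\widetilde v).
\]
Measurability of each term follows from joint measurability and the measurable target map. Since the left side and \(\widehat{\mathsf A}_{\mathcal D}(\widetilde v)\) are integrable, the difference \(P\delta_{\widetilde v}-\widehat{\mathsf U}_{\mathcal D}(\widetilde v)\) is integrable and bounded above by \(G_{\mathsf U}\), whose extended mean is \(\le0\). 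Taking expectations and adding \(\mathbb E\widehat{\mathsf A}_{\mathcal D}(\widetilde v)\) gives the claim.
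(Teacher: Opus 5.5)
Your proposal is correct and follows the paper's proof essentially step for step. Both use the same pointwise bounds, \(P\delta_{\widehat v}\le G_{\mathsf U}+\widehat{\mathsf U}_{\mathcal D}(v)+\varepsilon_{\rm opt}\) for fixed \(v\) and \((P-P_n)\delta_{\widetilde v}-\widehat{\mathsf A}_{\mathcal D}(\widetilde v)\le G_{\mathsf U}\), then take expectations with \(\mathbb E P_n\delta_v=P\delta_v\).

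The only difference is cosmetic: the paper first shows \(G_{\mathsf U}\in L^1\) by bounding \((G_{\mathsf U})_-\) by the negative part of an integrable minorant, whereas you use additivity and monotonicity of extended expectations directly. Your hedging about \(P_n\delta_v\) is unnecessary, since \(P\delta_v\in\mathbb R\) already means \(\delta_v\in L^1(P)\), which is exactly what the paper uses.
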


In particular, if \(P\delta_{\widehat v}\) and
\(\widehat{\mathsf U}_{\mathcal D}(\widehat v)\) are integrable and
\(
 \mathbb E\widehat{\mathsf U}_{\mathcal D}(\widehat v)<0,
\)
then \(\mathbb E\{P\delta_{\widehat v}\}<0\).

If \(0\in\mathcal D\), \(\delta_0\equiv0\), and
\(\widehat{\mathsf U}_{\mathcal D}(0)=0\) almost surely, mean validity forces
the upper field to be simultaneously valid almost surely; mean validity of a
lower field with \(\widehat{\mathsf L}_{\mathcal D}(0)=0\) likewise forces
simultaneous lower validity almost surely.
\SuppResult{Lemma}{supp:lem:null-anchor-obstruction} proves the upper- and
lower-field null-anchor statements.

The Gaussian model in
Proposition~\ref{prop:orientation-selection-mean-cost} is constructed here to
isolate the cost of choosing between fields from different folds.  Taking the
pointwise minimum of the two individually mean-valid fold-specific fields
incurs a positive mean-validity defect, whereas
the cross-calibrated field constructed from the two fold means remains exactly
mean-valid.
The contrast is also operational: the corrections \(c_N\) and \(d_N\) depend
on the known variance \(\sigma^2\), whereas \(U_{\rm cf}\) is constructed from
the two fold means and does not require knowledge of \(\sigma^2\).

The proof of Proposition~\ref{prop:orientation-selection-mean-cost} is in
\SuppSection{supp:sec:decomposition-validity}.

\begin{proposition}[Mean-valid Gaussian fields]
\label{prop:orientation-selection-mean-cost}
Fix \(M,\eta>0\).  Let \(\overline X_A\) and \(\overline X_B\) be independent
\(N(\mu,\sigma^2/N)\) variables with known \(\sigma^2>0\).  Define the
single-field correction \(c_N\) and lower-envelope defect \(d_N\) by
\[
 c_N:=\frac{\sigma^2}{2\eta MN},\qquad
 d_N:=\frac{\sigma^2}{\pi\eta MN}.
\]
For \(J\in\{A,B\}\), define
\[
 U_J(h)=\overline X_Jh+\frac{\eta M}{2}h^2+c_N.
\]
Each \(U_J\) is exactly mean-valid on \(\mathbb R\).  The lower envelope
\(U_\wedge:=\min(U_A,U_B)\), however, has defect
\begin{equation}
\label{eq:orientation-selection-mean-cost}
 \mathbb E\sup_{h\in\mathbb R}\{\mu h-U_\wedge(h)\}
 =d_N.
\end{equation}
Consequently, \(U_\wedge+d_N\) is exactly mean-valid.  The
cross-calibrated quadratic
\[
 U_{\rm cf}(h)
 =\frac{\overline X_A+\overline X_B}{2}h
 +\frac{\eta M}{2}h^2
 +\frac{(\overline X_A-\overline X_B)^2}{8\eta M}
\]
is also exactly mean-valid.
\end{proposition}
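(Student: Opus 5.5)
The plan is to compute each defect \(\sup_{h\in\mathbb R}\{\mu h-U(h)\}\) in closed form by completing the square, and then take expectations using Gaussian moments. Here the target is the linear risk change \(P\delta_h=\mu h\), and every field is a convex quadratic in \(h\) with curvature \(\eta M>0\). For any constant \(a\), the elementary identity
\[
\sup_{h\in\mathbb R}\Bigl\{ah-\tfrac{\eta M}{2}h^2\Bigr\}=\frac{a^2}{2\eta M}
\]
will be used throughout. Each supremum is an explicit continuous function of \((\overline X_A,\overline X_B)\), so measurability and the existence of the expectations in Definition~\ref{def:expected-local-endpoint} are immediate. Joint measurability of the fields is also immediate, since they are continuous in \((h,\overline X_A,\overline X_B)\).

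\emph{Single fields.} Write \(D_J:=\mu-\overline X_J\sim N(0,\sigma^2/N)\). For \(U_J\) the identity gives
\[
G_J=\frac{D_J^2}{2\eta M}-c_N .
\]
Since \(\mathbb E D_J^2=\sigma^2/N\), this yields \(\mathbb E G_J=\sigma^2/(2\eta MN)-c_N=0\), so each \(U_J\) is exactly mean-valid.

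\emph{Lower envelope.} Because \(-\min(U_A,U_B)=\max(-U_A,-U_B)\), the suprema can be interchanged:
\[
\sup_h\{\mu h-U_\wedge(h)\}=\max_J \sup_h\{\mu h-U_J(h)\}=\frac{\max(D_A^2,D_B^2)}{2\eta M}-c_N .
\]
This is the main computational step. I would write
\[
\max(D_A^2,D_B^2)=\tfrac12(D_A^2+D_B^2)+\tfrac12|D_A^2-D_B^2|,
\]
and factor \(D_A^2-D_B^2=(D_A-D_B)(D_A+D_B)\). The two factors are jointly Gaussian and uncorrelated, hence independent, each with distribution \(N(0,2\sigma^2/N)\). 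Using \(\mathbb E|Z|=\sqrt{2/\pi}\,s\) for \(Z\sim N(0,s^2)\), we get
\[
\mathbb E|D_A^2-D_B^2|=\frac{2}{\pi}\cdot\frac{2\sigma^2}{N}=\frac{4\sigma^2}{\pi N},
\]
and therefore \(\mathbb E\max(D_A^2,D_B^2)=\sigma^2/N+2\sigma^2/(\pi N)\). Dividing by \(2\eta M\) and subtracting \(c_N\) leaves exactly \(\sigma^2/(\pi\eta MN)=d_N\), which is \eqref{eq:orientation-selection-mean-cost}. Adding the deterministic constant \(d_N\) to the field lowers the defect by \(d_N\), so \(U_\wedge+d_N\) is exactly mean-valid.

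\emph{Cross-calibrated field.} Completing the square once more gives
\[
\sup_h\{\mu h-U_{\rm cf}(h)\}=\frac{(\mu-\overline X)^2}{2\eta M}-\frac{(\overline X_A-\overline X_B)^2}{8\eta M}.
\]
This is the scalar, \(M_A=\eta M\) instance of \eqref{eq:linear-core-cross-calibration}. Directly, \(\mathbb E(\mu-\overline X)^2=\sigma^2/(2N)\) and \(\mathbb E(\overline X_A-\overline X_B)^2=2\sigma^2/N\), so both terms have mean \(\sigma^2/(4\eta MN)\) and the defect has mean zero. Equivalently, the difference equals the negative cross product \(-\langle\cdot,\cdot\rangle\) in \eqref{eq:linear-core-cross-calibration}, which is conditionally centered given \(\overline X_A\).

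The only step requiring care is the lower-envelope expectation, specifically the independence of \(D_A\pm D_B\) and the resulting constant \(2/\pi\). Everything else is completing squares.
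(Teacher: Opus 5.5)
Your proof is correct and is essentially the paper's argument. The paper writes $\min\{\overline X_Ah,\overline X_Bh\}=mh-\tfrac{|D|}{2}|h|$ and gets the envelope defect $(|\mu-m|+|D|/2)^2/(2\eta M)-c_N$, which equals your $\max(D_A^2,D_B^2)/(2\eta M)-c_N$; it then uses the same independence of the sum and difference of the two fold errors to produce the $2/\pi$ term. One cosmetic slip: the cross-calibrated defect is $-(\mu-\overline X_A)(\overline X_B-\mu)/(2\eta M)$, which is one half of the left side of \eqref{eq:linear-core-cross-calibration} with $M_A=\eta M$ rather than the full cross product, and this does not affect the mean-zero conclusion.
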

\subsection{Cross-fitted mean-valid fields}
\label{subsec:cross-fitted-quadratic-calibration}

Let \(n\ge4\), write \([n]:=\{1,\ldots,n\}\), and let
\(X=X(Z)\in L^2(P;\mathcal H)\).  The covariance operator of \(X\) is
\(
 \Sigma_X:=\mathbb E\{(X-PX)\otimes(X-PX)\}.
\)
Fix a pilot size
\begin{equation}
\label{eq:crossfit-pilot-size}
 2\le n_{\rm p}\le n_{\rm e}:=n-n_{\rm p},
\end{equation}
and write
\[
 \mathfrak A_{n_{\rm p}}:=\{A\subset[n]:|A|=n_{\rm p}\},
 \qquad B_A:=[n]\setminus A.
\]
For a nonempty index set \(\mathcal I\subset[n]\), write
\[
 P_{\mathcal I}f:=\frac1{|\mathcal I|}\sum_{i\in\mathcal I}f(Z_i),
 \qquad
 \overline X_{\mathcal I}:=P_{\mathcal I}X.
\]
Conditional on the pilot sample \(S_A\), \(\mathbb E_{B_A}\) denotes
expectation over the independent observations indexed by \(B_A\), and
analogously after reversing the two folds.
When \(|\mathcal I|\ge2\), define the unbiased covariance operator
\begin{align}
 \widehat\Sigma_{X,\mathcal I}
 :&=\frac1{|\mathcal I|-1}\sum_{i\in\mathcal I}
 (X_i-\overline X_{\mathcal I})\otimes(X_i-\overline X_{\mathcal I})\\
 &=\frac1{|\mathcal I|(|\mathcal I|-1)}
 \sum_{\{i,j\}\subset\mathcal I}
 (X_i-X_j)\otimes(X_i-X_j).
 \label{eq:sample-covariance-pairwise}
\end{align}
The full-sample quantities are denoted by
\(\overline X=\overline X_{[n]}\) and
\(\widehat\Sigma_X=\widehat\Sigma_{X,[n]}\).  For
\(A\in\mathfrak A_{n_{\rm p}}\) and \(\lambda>0\), let
\(
 M_{X,A,\lambda}:=\widehat\Sigma_{X,A}+\lambda I.
\)
Every inverse of \(M_{X,A,\lambda}\) is bounded because
\(M_{X,A,\lambda}\succeq\lambda I\).

The complete split average is denoted by
\[
 \operatorname{Av}_{A}g_A
 :=\binom{n}{n_{\rm p}}^{-1}\sum_{A\in\mathfrak A_{n_{\rm p}}}g_A.
\]
The identities
\[
 \operatorname{Av}_{A}\widehat\Sigma_{X,A}=\widehat\Sigma_X,
 \qquad
 \operatorname{Av}_{A}
 (\overline X_{B_A}-\overline X_A)^{\otimes2}
 =\frac{n}{n_{\rm p}n_{\rm e}}\widehat\Sigma_X
\]
are proved by inclusion counting in
\SuppResult{Lemma}{lem:crossfit-u-average-identities}.

For one split, define the held-out quadratic calibration statistic
\begin{align}\label{eq:crossfit-single-split-calibration}
 \widehat T_{X,A,\lambda}
 :={}&\frac{n_{\rm p}^2}{n^2}
 \|\overline X_{B_A}-\overline X_A\|_{M_{X,A,\lambda}^{-1}}^2+\frac{n_{\rm e}-n_{\rm p}}{n_{\rm e}n}
 \operatorname{tr}\!
 \left\{\widehat\Sigma_{X,B_A}M_{X,A,\lambda}^{-1}\right\}.
\end{align}
The trace in \eqref{eq:crossfit-single-split-calibration} is nonnegative, and
the coefficient \((n_{\rm e}-n_{\rm p})/(n_{\rm e}n)\) is nonnegative by
\eqref{eq:crossfit-pilot-size}; hence \(\widehat T_{X,A,\lambda}\ge0\).
The scalar used in the calibration intercept is the complete-split average of
\(\widehat T_{X,A,\lambda}\), scaled by \(n\):
\begin{equation}
\label{eq:centered-linear-crossfit-effective-dimension}
 \widehat r_{X,n_{\rm p},\lambda}^{\rm cf}:=n\operatorname{Av}_{A}\widehat T_{X,A,\lambda}.
\end{equation}
The empirical ridge quadratic form is
\[
 \widehat q_{X,\lambda}(h):=\langle h,(\widehat\Sigma_X+\lambda I)h\rangle.
\]

Fix a deterministic nonempty \(K\subset\mathcal H\) and \(\eta>0\).
Conditional on the pilot sample \(S_A\), the operator
\(M_{X,A,\lambda}\) is fixed.  Completing the square gives, for either
\(\sigma\in\{-1,1\}\),
\begin{equation}
\label{eq:splitwise-quadratic-conjugate-bound}
 \sup_{h\in K}\left\{\sigma\langle PX-P_nX,h\rangle
 -\frac\eta2\langle h,M_{X,A,\lambda}h\rangle\right\}
 \le \frac1{2\eta}
 \|PX-P_nX\|_{M_{X,A,\lambda}^{-1}}^2.
\end{equation}
By \SuppResult{Proposition}{prop:conditional-linear-intercept-optimality},
taking the supremum of the quadratic criterion in
\eqref{eq:splitwise-quadratic-conjugate-bound} over \(h\in K\), rather than
over \(h\in\mathcal H\), lowers the conditional expected supremum by \(\eta/2\)
times the conditional expectation of the squared \(M_{X,A,\lambda}\)-norm
distance from the maximizer over \(\mathcal H\) to \(K\).

The mean \(PX\) in the squared dual-metric error is unknown.  For the held-out
statistic in \eqref{eq:crossfit-single-split-calibration},
\SuppResult{Lemma}{lem:conditional-second-moment-calibration} proves the
conditional unbiasedness identity
\[
 \mathbb E_{B_A}\widehat T_{X,A,\lambda}
 =\mathbb E_{B_A}
 \|PX-P_nX\|_{M_{X,A,\lambda}^{-1}}^2
 \quad\text{almost surely in }S_A.
\]
Standard cross-fitting fits nuisance functions off-fold and evaluates scores
on the held-out fold
\cite[Definitions~3.1--3.2]{chernozhukov2018double}.  Here the held-out fold
instead calibrates the conditional mean of the error measured in the
pilot-fold covariance metric.  Equation~
\eqref{eq:splitwise-quadratic-conjugate-bound} and
\SuppResult{Lemma}{lem:conditional-second-moment-calibration} give, for each
split and both signs,
\begin{equation}
\label{eq:splitwise-conditional-mean-validity}
 \mathbb E_{B_A}\sup_{h\in K}
 \left\{\sigma\langle PX-P_nX,h\rangle
 -\frac\eta2\langle h,M_{X,A,\lambda}h\rangle
 -\frac{\widehat T_{X,A,\lambda}}{2\eta}\right\}\le0
 \quad\text{almost surely in }S_A.
\end{equation}
After taking full expectations, the complete split average remains
nonpositive: the supremum over \(h\in K\) of an average is at most the average
of the splitwise suprema in
\eqref{eq:splitwise-conditional-mean-validity}.
\SuppEquation{eq:crossfit-average-pilot-covariance} gives the averaged
quadratic \(\widehat q_{X,\lambda}(h)\), while
\eqref{eq:centered-linear-crossfit-effective-dimension} writes the averaged
calibration term as
\(\widehat r_{X,n_{\rm p},\lambda}^{\rm cf}/(2\eta n)\).

\begin{theorem}[Cross-fitted mean-valid field for a linear functional class]
\label{thm:centered-linear-expected-certificate}
Let \(f_h(z):=\langle X(z),h\rangle\), and let
\(K\subset\mathcal H\) be deterministic and nonempty.  For
\(\eta,\lambda>0\), define
\[
 \mathcal A_{X,n_{\rm p},\eta,\lambda}^{\rm cf}(h)
 :=\frac\eta2\widehat q_{X,\lambda}(h)
 +\frac{\widehat r_{X,n_{\rm p},\lambda}^{\rm cf}}{2\eta n}.
\]
Then, for both \(\sigma\in\{-1,1\}\),
\begin{equation}
\label{eq:centered-linear-expected-validity}
 \mathbb E\sup_{h\in K}
 \left\{\sigma(P-P_n)f_h
 -\mathcal A_{X,n_{\rm p},\eta,\lambda}^{\rm cf}(h)\right\}
 \le0.
\end{equation}
\end{theorem}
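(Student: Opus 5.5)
The proof assembles the splitwise mean-validity bound \eqref{eq:splitwise-conditional-mean-validity} and then averages it over all pilot folds. Fix \(\sigma\in\{-1,1\}\) and a split \(A\in\mathfrak A_{n_{\rm p}}\). Define
\[
 \Phi_A:=\sup_{h\in K}\Bigl\{\sigma\langle PX-P_nX,h\rangle-\frac\eta2\langle h,M_{X,A,\lambda}h\rangle\Bigr\}-\frac{\widehat T_{X,A,\lambda}}{2\eta}.
\]

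\textbf{Step 1 (splitwise bound).} Since \(M_{X,A,\lambda}\succeq\lambda I\) is \(S_A\)-measurable, completing the square gives \eqref{eq:splitwise-quadratic-conjugate-bound} pathwise, so
\[
 \Phi_A\le\frac1{2\eta}\Bigl(\|PX-P_nX\|_{M_{X,A,\lambda}^{-1}}^2-\widehat T_{X,A,\lambda}\Bigr).
\]
Both terms in the parentheses are integrable under \(X\in L^2(P;\mathcal H)\). Indeed, \(\|\cdot\|_{M^{-1}}^2\le\lambda^{-1}\|\cdot\|^2\), and the trace term satisfies
\[
 \operatorname{tr}\{\widehat\Sigma_{X,B_A}M^{-1}\}\le\lambda^{-1}\operatorname{tr}\widehat\Sigma_{X,B_A},
\]
whose expectation is finite. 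Taking the conditional expectation \(\mathbb E_{B_A}\) and invoking \SuppResult{Lemma}{lem:conditional-second-moment-calibration} gives \(\mathbb E_{B_A}\Phi_A\le0\) almost surely in \(S_A\). Taking expectations then gives \(\mathbb E\Phi_A\le0\). Because \(\Phi_A\) is bounded above by an integrable variable, its expectation is well defined in the extended sense.

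\textbf{Step 2 (averaging).} For each \(h\in K\), the criterion in \eqref{eq:centered-linear-expected-validity} is an average over splits. By \SuppEquation{eq:crossfit-average-pilot-covariance}, \(\operatorname{Av}_A\langle h,M_{X,A,\lambda}h\rangle=\widehat q_{X,\lambda}(h)\), since \(\operatorname{Av}_A\widehat\Sigma_{X,A}=\widehat\Sigma_X\). By \eqref{eq:centered-linear-crossfit-effective-dimension}, \(\operatorname{Av}_A\widehat T_{X,A,\lambda}/(2\eta)=\widehat r^{\rm cf}_{X,n_{\rm p},\lambda}/(2\eta n)\). The linear term \(\sigma\langle PX-P_nX,h\rangle\) does not depend on \(A\). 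Hence
\[
 \sigma(P-P_n)f_h-\mathcal A^{\rm cf}_{X,n_{\rm p},\eta,\lambda}(h)
 =\operatorname{Av}_A\Bigl\{\sigma\langle PX-P_nX,h\rangle-\frac\eta2\langle h,M_{X,A,\lambda}h\rangle-\frac{\widehat T_{X,A,\lambda}}{2\eta}\Bigr\}.
\]
The supremum over \(h\in K\) of an average is at most the average of the suprema. So the left side of \eqref{eq:centered-linear-expected-validity}, before expectation, is at most \(\operatorname{Av}_A\Phi_A\). Linearity of expectation over the finitely many splits then gives \(\mathbb E\operatorname{Av}_A\Phi_A\le0\).

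\textbf{Main obstacle (measurability and integrability).} \(K\) may be uncountable, so the supremum must be shown to be a measurable random variable with a well-defined extended expectation. I would handle this by noting that each \(h\)-criterion is continuous in \(h\) on the separable Hilbert space \(\mathcal H\). The supremum over \(K\) therefore equals the supremum over a countable dense subset of \(K\), which is measurable. The upper bound \(\operatorname{Av}_A\) of the integrable completed-square bounds from Step 1 makes the positive part integrable. The remaining checks are that the supplement lemma's conditional identity applies verbatim for the unequal fold sizes \(n_{\rm p}\le n_{\rm e}\), and that the coefficient \((n_{\rm e}-n_{\rm p})/(n_{\rm e}n)\) keeps the trace correction nonnegative. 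I take both from the stated lemma.
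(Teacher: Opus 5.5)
Your proposal is correct and follows essentially the same route as the paper. It first gets a splitwise conditional bound from completing the square and the conditional second-moment calibration lemma. It then bounds the supremum of the split average by the average of the splitwise suprema, using \(\operatorname{Av}_AM_{X,A,\lambda}=\widehat\Sigma_X+\lambda I\) and \(\operatorname{Av}_A\widehat T_{X,A,\lambda}=\widehat r^{\rm cf}_{X,n_{\rm p},\lambda}/n\). The only cosmetic difference is that the paper invokes the exact conditional candidate-set identity, which retains the nonpositive term \(-\frac\eta2\mathbb E_{B_A}d_{M_{X,A,\lambda}}^2(h_{A,\sigma}^\star,K)\) that you discard by enlarging \(K\) to \(\mathcal H\); your explicit integrability check is a useful addition.
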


The proof is in \SuppSection{supp:subsec:linear-mean-valid-proof}.

\medskip
\noindent\textit{Expected-risk consequence.}
Let \(\widehat h\) be an \(\varepsilon_{\rm opt}\)-approximate minimizer over
\(K\) of
\[
 P_nf_h+\frac\eta2\widehat q_{X,\lambda}(h)
 +\frac{\widehat r_{X,n_{\rm p},\lambda}^{\rm cf}}{2\eta n}.
\]
Write \(q_{X,\lambda}\) for the population ridge quadratic form and
\(\bar r_{X,n_{\rm p},\lambda}^{\rm cf}\) for the expected cross-fitted
calibration:
\[
 q_{X,\lambda}(h)
 :=\langle h,(\Sigma_X+\lambda I)h\rangle,
 \qquad
 \bar r_{X,n_{\rm p},\lambda}^{\rm cf}
 :=\mathbb E\widehat r_{X,n_{\rm p},\lambda}^{\rm cf}.
\]
For deterministic \(h\), unbiasedness of \(\widehat\Sigma_X\) gives
\[
 \mathbb E\widehat q_{X,\lambda}(h)
 =\langle h,(\Sigma_X+\lambda I)h\rangle
 =q_{X,\lambda}(h).
\]
Hence Proposition~\ref{prop:mean-valid-decision}, applied with expected
penalty
\(\eta q_{X,\lambda}(h)/2+
\bar r_{X,n_{\rm p},\lambda}^{\rm cf}/(2\eta n)\), gives
\begin{equation}
\label{eq:linear-mean-valid-oracle-bound}
 \mathbb EPf_{\widehat h}
 \le\inf_{h\in K}\left\{Pf_h+\frac\eta2
 q_{X,\lambda}(h)\right\}
 +\frac{\bar r_{X,n_{\rm p},\lambda}^{\rm cf}}{2\eta n}
 +\varepsilon_{\rm opt}.
\end{equation}

Empirical Bernstein and variance-localized Rademacher bounds retain variance
information through different class-dependent complexity terms.  Empirical
Bernstein inequalities can retain direction-specific empirical variance with
an empirical \(L_\infty\) covering factor for a bounded function class
\cite[Theorem~6]{maurer2009empirical}.  Variance-localized Rademacher bounds
partition a bounded function class by variance, control the local Rademacher
average of each slice, and summarize the slice-wise complexities through the fixed
point of a sub-root envelope
\cite[Theorem~3.3 and Section~3.1.4]{bartlett2005local}.  Dudley's entropy
integral, expressed through covering numbers, is one way to construct a
sub-root envelope
\cite[Section~3.1.1]{bartlett2005local}.  For the
quadratically penalized linear supremum in
Theorem~\ref{thm:centered-linear-expected-certificate}, the enlargement from
\(K\) to the ambient Hilbert space in
\eqref{eq:splitwise-quadratic-conjugate-bound} evaluates the quadratic
conjugate as a single squared mean error in the dual metric, with no
Rademacher or covering-number term for \(K\).

\subsection{Balanced pilot-conditional confidence fields}
\label{subsec:balanced-pilot-conditional-confidence}

Theorem~\ref{thm:centered-linear-expected-certificate} requires only finite
second moments.  We now
strengthen the tail assumption so that
the cross term in
\eqref{eq:linear-core-cross-calibration} can be bounded with high probability
conditional on the pilot sample \(S_A\).

For a real random variable \(W\), write
\(\|W\|_{\psi_2}:=\inf\{c>0:\mathbb E\exp(W^2/c^2)\le2\}\).
Throughout this subsection, let \(n=2N\) with \(N\ge2\), let
\(A\subset[n]\) be a balanced
pilot fold with \(|A|=|B_A|=N\), and assume that a known constant
\(\bar\kappa\ge1\) satisfies
\begin{equation}
\label{eq:main-covariance-matched-subgaussian}
 \|\langle u,X-PX\rangle\|_{\psi_2}
 \le \bar\kappa\langle u,\Sigma_Xu\rangle^{1/2},
 \qquad u\in\mathcal H.
\end{equation}
Condition \eqref{eq:main-covariance-matched-subgaussian} controls every
linear functional relative to the standard deviation of the linear
functional.  In particular,
\cite[Proposition~2.6.1]{vershynin2025high} implies that, for a universal
constant \(C\),
\[
 \mathbb E\exp\{\theta\langle u,X-PX\rangle\}
 \le
 \exp\{C\bar\kappa^2\theta^2\langle u,\Sigma_Xu\rangle\},
 \qquad \theta\in\mathbb R.
\]
When \(n\) is odd, the balanced construction may be
applied after discarding one observation chosen deterministically in advance.
Define the regularized population covariance operator \(M_\lambda\) and the
ridge effective dimension \(d_{X,\lambda}\) by
\begin{equation}
\label{eq:population-ridge-effective-dimension}
 M_\lambda:=\Sigma_X+\lambda I,
 \qquad
 d_{X,\lambda}:=\operatorname{tr}(\Sigma_XM_\lambda^{-1}).
\end{equation}
If \((\sigma_j)\) are the eigenvalues of \(\Sigma_X\), then
\(d_{X,\lambda}=\sum_j\sigma_j/(\sigma_j+\lambda)\); the ridge effective
dimension \(d_{X,\lambda}\) measures the number
of covariance directions retained at ridge resolution \(\lambda\).
For the high-probability construction, take the pilot operator \(M_A\) in
\eqref{eq:linear-core-cross-calibration} to be
\begin{equation}
\label{eq:pilot-ridge-metric}
 M_A:=M_{X,A,\lambda}=\widehat\Sigma_{X,A}+\lambda I.
\end{equation}
Define the pilot-standardized covariance \(\mathcal V_A\), trace \(r_A\),
squared-spectral trace \(s_A\), and operator norm \(\rho_A\) by
\begin{equation}
\label{eq:conditional-pilot-geometry}
 \mathcal V_A:=M_A^{-1/2}\Sigma_XM_A^{-1/2},\qquad
 r_A:=\operatorname{tr}(\mathcal V_A),\quad
 s_A:=\operatorname{tr}(\mathcal V_A^2),\quad
 \rho_A:=\|\mathcal V_A\|_{\rm op}.
\end{equation}
The operator norm \(\rho_A\) is the largest covariance-to-pilot-metric ratio.

Balanced folds remove the trace term in
\eqref{eq:crossfit-single-split-calibration}, so
\begin{equation}
\label{eq:balanced-single-split-calibration}
 \widehat T_{X,A,\lambda}
 =\frac14\|\overline X_{B_A}-\overline X_A\|_{M_A^{-1}}^2.
\end{equation}
Define the \(\mathcal H\)-valued standardized pilot and evaluation errors by
\begin{equation}
\label{eq:balanced-standardized-fold-errors}
 b_A:=M_A^{-1/2}(PX-\overline X_A),
 \qquad
 \xi_A:=M_A^{-1/2}(\overline X_{B_A}-PX).
\end{equation}
For arbitrary
\(n_{\rm p}+n_{\rm e}=n\), one has
\[
 M_A^{-1/2}(PX-P_nX)
 =\frac{n_{\rm p}}n b_A-\frac{n_{\rm e}}n\xi_A,
 \qquad
 M_A^{-1/2}(\overline X_{B_A}-\overline X_A)=b_A+\xi_A.
\]
Substitution in \eqref{eq:crossfit-single-split-calibration} gives
\begin{subequations}
\begin{align}
 &\|PX-P_nX\|_{M_A^{-1}}^2-\widehat T_{X,A,\lambda}\notag\\
 &\quad=-\frac{2n_{\rm p}}n\langle b_A,\xi_A\rangle
 +\frac{n_{\rm e}-n_{\rm p}}n
 \left\{\|\xi_A\|^2-\frac1{n_{\rm e}}
 \operatorname{tr}(\widehat\Sigma_{X,B_A}M_A^{-1})\right\}.
\label{eq:unequal-calibration-cross-term}
\end{align}
For \(n_{\rm p}=n_{\rm e}=N\), the coefficient
\((n_{\rm e}-n_{\rm p})/n\) vanishes and
\eqref{eq:unequal-calibration-cross-term} reduces to
\begin{equation}
\label{eq:balanced-calibration-cross-term}
 \|PX-P_nX\|_{M_A^{-1}}^2-\widehat T_{X,A,\lambda}
 =-\langle b_A,\xi_A\rangle.
\end{equation}
\end{subequations}
The cancellation in \eqref{eq:balanced-calibration-cross-term} is the reason
for using balanced folds in the high-probability construction.  With unequal
fold sizes, the term multiplied by \((n_{\rm e}-n_{\rm p})/n\) in
\eqref{eq:unequal-calibration-cross-term} requires joint concentration of the
evaluation-fold mean and covariance; outside the Gaussian case, the
evaluation-fold mean and covariance are generally dependent.
Conditional on \(S_A\), the metric and \(b_A\) are fixed, whereas \(\xi_A\)
is an average of independent centered sub-Gaussian vectors.

To obtain a data-dependent upper bound for
\(N^{-1}\operatorname{tr}(\Sigma_XM_A^{-1})\), define
the evaluation-fold trace
\begin{equation}
\label{eq:conditional-evaluation-trace}
 \widehat r_{B_A\mid A}
 :=\operatorname{tr}\{\widehat\Sigma_{X,B_A}M_A^{-1}\}.
\end{equation}
Fix \(C_{\rm tr}(\bar\kappa),C_{\rm cr}(\bar\kappa)\ge1\), depending only on
\(\bar\kappa\), as constructed in
\SuppSection{supp:subsec:pilot-conditional-confidence}.  The general
sub-Gaussian argument gives existence constants rather than numerically
optimized values; Corollaries~\ref{cor:gaussian-one-split-linear-confidence}
and~\ref{cor:bounded-feature-one-split-linear-confidence} give explicit
Gaussian and bounded-feature alternatives.  In
\eqref{eq:conditional-trace-relative-error}--\eqref{eq:balanced-quadratic-confidence-inflation}, \(R\ge0\) is a
deterministic upper bound to be verified for the
pilot-dependent ratio \(\rho_A\).  For \(t\ge1\), let
\begin{align}
 \varepsilon_N(t)
 &:=C_{\rm tr}(\bar\kappa)
 \left(\sqrt{\frac tN}+\frac tN+\frac1N\right),
 \label{eq:conditional-trace-relative-error}\\
 \overline r_{B_A\mid A}(t;R)
 &:=\frac{\widehat r_{B_A\mid A}+C_{\rm tr}(\bar\kappa)Rt/N}
 {1-\varepsilon_N(t)},
 \qquad \varepsilon_N(t)<1,
 \label{eq:conditional-evaluation-trace-ucb}\\
\widehat\Delta_{X,A,\lambda}(t;R)
 &:=C_{\rm cr}(\bar\kappa)\left[
 \sqrt{\frac{Rt\,\widehat T_{X,A,\lambda}}N}
 +\frac{\sqrt{R\overline r_{B_A\mid A}(t;R)t}+Rt}{N}
 \right].
 \label{eq:balanced-quadratic-confidence-inflation}
\end{align}

Conditionally on \(S_A\),
\cite[Corollary~2.6]{mollenhauer2023concentration} applied to \(\xi_A\)
bounds the evaluation-fold error \(\|\xi_A\|^2\); the threshold in the cited
corollary uses \(r_A,s_A,\rho_A\), all involving the unknown \(\Sigma_X\).
Equation~
\eqref{eq:balanced-calibration-cross-term} writes the full-sample error as
\(\widehat T_{X,A,\lambda}\) plus the scalar
\(-\langle b_A,\xi_A\rangle\).
Theorem~\ref{thm:balanced-one-split-linear-confidence} bounds
\(\|PX-P_nX\|_{M_A^{-1}}^2\) by
\(\widehat T_{X,A,\lambda}+\widehat\Delta_{X,A,\lambda}(t;R)\).

\begin{theorem}[Balanced pilot-conditional quadratic concentration]
\label{thm:balanced-one-split-linear-confidence}
Assume \eqref{eq:main-covariance-matched-subgaussian}.  Fix a balanced split
\(A\), \(R\ge0\), and \(t\ge1\) with \(\varepsilon_N(t)<1\).  For almost
every pilot realization satisfying \(\rho_A\le R\), conditionally on
\(S_A\), with probability at least \(1-3e^{-t}\), the quadratic-error and
trace inequalities hold simultaneously:
\begin{align}
\label{eq:balanced-one-split-quadratic-confidence}
 \|PX-P_nX\|_{M_A^{-1}}^2
 &\le \widehat T_{X,A,\lambda}
 +\widehat\Delta_{X,A,\lambda}(t;R),\\
 r_A&\le\overline r_{B_A\mid A}(t;R).\notag
\end{align}
\end{theorem}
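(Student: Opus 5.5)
The plan is to argue conditionally on \(S_A\), so that \(M_A\), \(b_A\), \(\mathcal V_A\), \(r_A\), \(s_A\) and \(\rho_A\) are all fixed. Write \(Y_i:=M_A^{-1/2}(X_i-PX)\) for \(i\in B_A\). These are i.i.d.\ centered \(\mathcal H\)-valued vectors with covariance \(\mathcal V_A\), and \(\xi_A=N^{-1}\sum_{i\in B_A}Y_i\). Applying \eqref{eq:main-covariance-matched-subgaussian} with \(u=M_A^{-1/2}w\) gives \(\|\langle w,Y_i\rangle\|_{\psi_2}\le\bar\kappa\langle w,\mathcal V_Aw\rangle^{1/2}\), so the \(Y_i\) are covariance-matched sub-Gaussian in the standardized geometry.

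I would build three conditional events, each of probability at least \(1-e^{-t}\), and take a union bound to get \(1-3e^{-t}\).

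\emph{Event (i): the cross term.} The scalar \(\langle b_A,\xi_A\rangle\) is an average of \(N\) independent sub-Gaussian variables with \(\psi_2\)-norm at most \(\bar\kappa\langle b_A,\mathcal V_Ab_A\rangle^{1/2}\le\bar\kappa\rho_A^{1/2}\|b_A\|\). Hoeffding's inequality then gives \(|\langle b_A,\xi_A\rangle|\le C\bar\kappa\|b_A\|\sqrt{Rt/N}\) whenever \(\rho_A\le R\).

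\emph{Event (ii): the evaluation-fold norm.} \cite[Corollary~2.6]{mollenhauer2023concentration} gives \(\|\xi_A\|^2\le N^{-1}\{r_A+C(\sqrt{s_At}+\rho_At)\}\). Using \(s_A\le\rho_Ar_A\) and \(2\sqrt{ab}\le a+b\), this becomes \(\|\xi_A\|^2\le C(r_A+Rt)/N\).

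\emph{Event (iii): a lower tail for the trace.} Write
\[
\widehat r_{B_A\mid A}=\frac1{N-1}\Big\{\sum_{i\in B_A}\|Y_i\|^2-N\|\xi_A\|^2\Big\}.
\]
A one-sided Bernstein bound for the sum of the nonnegative variables \(\|Y_i\|^2\), whose mean is \(r_A\) and whose sub-exponential scale is controlled by \(s_A\) and \(\rho_A\) through Hanson–Wright-type bounds for sub-Gaussian vectors, gives \(\sum\|Y_i\|^2\ge Nr_A-C(\sqrt{Ns_At}+\rho_At)\). Combining this with event (ii) to handle the \(N\|\xi_A\|^2\) term, and absorbing \(\sqrt{s_At/N}\le\sqrt{\rho_Ar_At/N}\) into \(r_A\sqrt{t/N}+\rho_At/N\), should give
\[
\widehat r_{B_A\mid A}\ge r_A\{1-\varepsilon_N(t)\}-C_{\rm tr}Rt/N .
\]
The \(1/N\) term in \(\varepsilon_N(t)\) accounts for the \(N/(N-1)\) normalization and the \(\|\xi_A\|^2\) correction. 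Rearranging yields \(r_A\le\overline r_{B_A\mid A}(t;R)\). I expect this step to be the main obstacle: the coordinates of \(Y_i\) need not be independent, so the lower-tail concentration of \(\|Y_i\|^2\) has to be derived from the linear-functional condition alone. That is where the existence constant \(C_{\rm tr}(\bar\kappa)\) enters. I would first try a Laplace-transform argument on finite-dimensional projections, followed by monotone convergence.

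\emph{Assembly.} By the balanced identity, \(b_A+\xi_A=M_A^{-1/2}(\overline X_{B_A}-\overline X_A)\), and \eqref{eq:balanced-single-split-calibration} gives \(\|b_A+\xi_A\|=2\widehat T_{X,A,\lambda}^{1/2}\). Hence \(\|b_A\|\le2\widehat T_{X,A,\lambda}^{1/2}+\|\xi_A\|\). On the intersection of the three events, event (ii) together with \(r_A\le\overline r_{B_A\mid A}\) from event (iii) gives \(\|\xi_A\|\le C\sqrt{(\overline r_{B_A\mid A}+Rt)/N}\). Substituting into event (i) and then into \eqref{eq:balanced-calibration-cross-term} yields
\[
\|PX-P_nX\|_{M_A^{-1}}^2-\widehat T_{X,A,\lambda}\le C\Big[\sqrt{Rt\,\widehat T_{X,A,\lambda}/N}+\big(\sqrt{R\overline r_{B_A\mid A}t}+Rt\big)/N\Big],
\]
which is \(\widehat\Delta_{X,A,\lambda}(t;R)\) after the constants are collected into \(C_{\rm cr}(\bar\kappa)\).

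The phrase ``almost every pilot realization'' is handled by independence of the two folds: conditional laws given \(S_A\) are the product law of \(B_A\), so all three conditional statements hold for every realization of \(S_A\) with \(\rho_A\le R\), outside a null set of pilot samples. Measurability of \(M_A\) in operator norm makes every quantity involved a measurable function of the pilot sample.
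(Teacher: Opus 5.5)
Your architecture is the paper's own. You condition on $S_A$, intersect a cross-term event, an evaluation-mean event and a one-sided raw-trace event (total failure $3e^{-t}$), bound $\|b_A\|\le2\widehat T_{X,A,\lambda}^{1/2}+\|\xi_A\|$, and replace $r_A$ by $\overline r_{B_A\mid A}(t;R)$ in the bound for $\|\xi_A\|^2$. Event (i), event (ii) in the loosened form $\|\xi_A\|^2\le C(r_A+Rt)/N$ that you actually use, and the assembly are all fine.

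The step that fails is (iii) as you set it up. A lower tail $\sum_{i\in B_A}\|Y_i\|^2\ge Nr_A-C(\sqrt{Ns_At}+\rho_At)$ at Hanson--Wright scale cannot be derived from \eqref{eq:main-covariance-matched-subgaussian}, by a Laplace-transform argument or any other, because it is false. Let $Y=\eta G$, where $G$ is centered Gaussian with covariance $\mathcal V=(r/d)I_d$ and $\eta$ is independent of $G$ with $\eta^2\in\{1/2,3/2\}$ equiprobable. Each $\langle w,Y\rangle$ is a bounded scale mixture of centered normals, so $\|\langle w,Y\rangle\|_{\psi_2}\le2\langle w,\mathcal Vw\rangle^{1/2}$ and $\operatorname{Cov}(Y)=\mathcal V$. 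As $d\to\infty$, $\|Y_i\|^2/r\to\eta_i^2$ in probability. Consider the event $\{\sum_i\eta_i^2<N\}$; its probability is at least $1/4$ for every $N\ge2$ and does not depend on $d$. On it, $N^{-1}\sum_i\|Y_i\|^2\le r\{1-1/(2N)\}+o(r)$. Your allowed deviation is $C\{\sqrt{st/N}+\rho t/N\}=Cr\{\sqrt{t/(dN)}+t/(dN)\}=o(r)$, so at $t=2$ the bound fails with probability above $e^{-2}<1/4$. With dependent coordinates, $\|Y\|^2$ fluctuates at scale $r_A$, not $\sqrt{s_A}$. The same example also rules out a leading coefficient exactly $1$ on $r_A$ in (ii); that is harmless, since you only use $C(r_A+Rt)/N$.

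The repair is the paper's route, and it explains why $\varepsilon_N(t)$ carries $r_At/N$ as well as $r_A\sqrt{t/N}$.
\begin{enumerate}
\item Apply the quadratic mean bound with $N=1$ and use $s_A\le r_A^2$, $\rho_A\le r_A$. This gives $\Pr\{\|Y\|^2>Cr_A(1+t)\}\le e^{-t}$ for $t\ge1$.
\item Integrating this tail gives $\bigl\|\,\|Y\|^2-r_A\bigr\|_{\psi_1}\le C(\bar\kappa)r_A$.
\item Scalar Bernstein then gives, with one-sided failure probability $e^{-t}$, $N^{-1}\sum_i\|Y_i\|^2\ge r_A-Cr_A(\sqrt{t/N}+t/N)$.
\item Subtract $\|\xi_A\|^2\le C(r_A+Rt)/N$ from (ii). If the resulting lower bound is negative, use $\widehat r_{B_A\mid A}\ge0$; otherwise the factor $N/(N-1)\ge1$ only helps. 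This yields $\widehat r_{B_A\mid A}\ge\{1-\varepsilon_N(t)\}r_A-C_{\rm tr}(\bar\kappa)Rt/N$.
\end{enumerate}
The rest of your proposal then goes through unchanged.
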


For a Gaussian random element \(X-PX\), the conditional trace and cross-term
formulas define explicit bounds that replace \(C_{\rm tr}(\bar\kappa)\) and
\(C_{\rm cr}(\bar\kappa)\) by expressions involving the supplied Gaussian
bound \(L\).  The bounded-feature formula in
\eqref{eq:bounded-feature-quadratic-confidence-inflation} instead uses the
supplied feature bound \(B\).  The Gaussian and bounded-feature
specializations do not use the existential constants from the general
sub-Gaussian bounds:
\begin{align}
 a_N^{\rm G}(t;R)
 &:=\left(\frac{Rt}{N-1}\right)^{1/2},\notag\\
 \overline r_{B_A\mid A}^{\rm G}(t;R)
 &:=\left\{a_N^{\rm G}(t;R)
 +\sqrt{\{a_N^{\rm G}(t;R)\}^2+
 \widehat r_{B_A\mid A}}\right\}^2,
 \label{eq:gaussian-conditional-trace-ucb}\\
 \widehat\Delta_{X,A,\lambda}^{\rm G}(t;R)
 &:=2\sqrt{\frac{2Rt\,\widehat T_{X,A,\lambda}}N}
 +\frac{2\{\sqrt{R\overline r_{B_A\mid A}^{\rm G}(t;R)t}
 +Rt\}}N.
 \label{eq:gaussian-quadratic-confidence-inflation}
\end{align}

\begin{corollary}[Gaussian one-split ellipsoid]
\label{cor:gaussian-one-split-linear-confidence}
Fix \(\alpha\in(0,1)\).  Suppose \(X-PX\) is Gaussian and a known
\(0\le L<\infty\) satisfies
\(\|\Sigma_X\|_{\rm op}\le L\).  Put
\[
 R_0:=L/\lambda,\qquad
 t_\alpha^{\rm G}:=\log\frac3\alpha,
\]
and define
\begin{equation}
\label{eq:gaussian-finite-split-inflated-intercept}
 \widehat U_{X,A,\lambda,\alpha}^{\rm G}
 :=\widehat T_{X,A,\lambda}
 +\widehat\Delta_{X,A,\lambda}^{\rm G}(t_\alpha^{\rm G};R_0).
\end{equation}
Conditional on the balanced split \(A\) and the pilot sample \(S_A\), with
probability at least \(1-\alpha\),
\[
 r_A\le\overline r_{B_A\mid A}^{\rm G}(t_\alpha^{\rm G};R_0),\qquad
 \|PX-P_nX\|_{M_A^{-1}}^2
 \le\widehat U_{X,A,\lambda,\alpha}^{\rm G},
\]
and hence, simultaneously for every \(h\in\mathcal H\),
\[
 |(P-P_n)f_h|
 \le\{\langle h,M_Ah\rangle
 \widehat U_{X,A,\lambda,\alpha}^{\rm G}\}^{1/2}.
\]
\end{corollary}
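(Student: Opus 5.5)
The proof conditions on \(S_A\), which fixes \(M_A\), \(b_A\), \(\mathcal V_A\), \(r_A\), \(\rho_A\) and \(s_A\), and then analyses the evaluation fold alone. Under Gaussianity of \(X-PX\), \(\xi_A=M_A^{-1/2}(\overline X_{B_A}-PX)\) is a centered Gaussian element with covariance \(\mathcal V_A/N\). The evaluation-fold covariance \(\widehat\Sigma_{X,B_A}\) is independent of \(\overline X_{B_A}\) and distributed as a Wishart operator with \(N-1\) degrees of freedom. Since \(\Sigma_X\preceq\|\Sigma_X\|_{\rm op}I\preceq LI\) and \(M_A\succeq\lambda I\), we have \(\rho_A\le L/\lambda=R_0\) for every pilot realization, so the deterministic bound \(R=R_0\) is verified automatically. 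The argument then follows Theorem~\ref{thm:balanced-one-split-linear-confidence}, with the existential constants replaced by explicit Gaussian tail bounds, and allocates probability \(e^{-t}\) to each of three events with \(t=t_\alpha^{\rm G}\), so that \(3e^{-t}=\alpha\).

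\emph{Step 1 (trace lower confidence).} Write \(\widehat r_{B_A\mid A}=\operatorname{tr}(\widehat\Sigma_{X,B_A}M_A^{-1})=(N-1)^{-1}\sum_{k=1}^{N-1}\|g_k\|^2\), where the \(g_k\) are i.i.d.\ \(N(0,\mathcal V_A)\) elements; this uses the standard orthogonal (Helmert) representation of the sample covariance. The quantity \(\|G\|\) with \(G=(g_1,\ldots,g_{N-1})/\sqrt{N-1}\) is then a \(\sqrt{\rho_A/(N-1)}\)-Lipschitz function of a standard Gaussian. Gaussian concentration of this norm below its mean, combined with \(\mathbb E\|G\|\ge(\mathbb E\|G\|^2)^{1/2}-\sqrt{\rho_A/(N-1)}\), or more simply a Laurent--Massart type lower-tail bound for the weighted chi-square \(\sum_j\mu_j\chi^2_{N-1,j}\), gives with probability at least \(1-e^{-t}\)
\[
\sqrt{r_A}\le \sqrt{\widehat r_{B_A\mid A}}+2a,\qquad a=a_N^{\rm G}(t;R_0).
\]
The sharper Laurent--Massart form reads \(r_A-2\sqrt{r_A}\,a\le\widehat r_{B_A\mid A}\) after using \(s_A\le\rho_Ar_A\). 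Solving this quadratic inequality in \(\sqrt{r_A}\) gives exactly \(\sqrt{r_A}\le a+\sqrt{a^2+\widehat r_{B_A\mid A}}\), which is \eqref{eq:gaussian-conditional-trace-ucb}. I would aim for this Laurent--Massart route: \(\Pr\{Y\le \mathbb EY-2\sqrt{s t}\}\le e^{-t}\) for \(Y=(N-1)\widehat r_{B_A\mid A}\), with \(\mathbb EY=(N-1)r_A\) and \(s=(N-1)s_A\le(N-1)\rho_Ar_A\).

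\emph{Step 2 (cross term).} By \eqref{eq:balanced-calibration-cross-term}, the error equals \(\widehat T_{X,A,\lambda}-\langle b_A,\xi_A\rangle\). Conditionally on \(S_A\), the scalar \(-\langle b_A,\xi_A\rangle\) is \(N(0,\langle b_A,\mathcal V_Ab_A\rangle/N)\), so with probability at least \(1-e^{-t}\) it is at most \(\sqrt{2\rho_A t\|b_A\|^2/N}\). The pilot error \(\|b_A\|\) is unknown. Since \(b_A=2M_A^{-1/2}(PX-P_nX)\)-type identities tie it to observables, I would write \(b_A=(b_A+\xi_A)-\xi_A\). Here \(\|b_A+\xi_A\|^2=4\widehat T_{X,A,\lambda}\), so \(\|b_A\|\le2\widehat T_{X,A,\lambda}^{1/2}+\|\xi_A\|\). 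The third event is a Gaussian norm bound: with probability at least \(1-e^{-t}\), \(\|\xi_A\|\le N^{-1/2}(\sqrt{r_A}+\sqrt{2\rho_At})\). Combining these bounds with Step 1's \(r_A\le\overline r^{\rm G}_{B_A\mid A}\) and \(\rho_A\le R_0\) gives a bound of the form
\[
\sqrt{2R_0t}\left(2\sqrt{\widehat T}/\sqrt N+(\sqrt{\overline r^{\rm G}}+\sqrt{2R_0t})/N\right).
\]
This matches \eqref{eq:gaussian-quadratic-confidence-inflation} up to constants. The main obstacle is arranging the constants so that the result is exactly \(2\sqrt{2R_0t\widehat T/N}+2(\sqrt{R_0\overline r^{\rm G} t}+R_0t)/N\). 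I would absorb \(\sqrt2\) factors by using the one-sided bound \(-\langle b_A,\xi_A\rangle\le\|\xi_A\|\)-free variants or by bounding \(\|\xi_A\|\) only in the \(\mathcal V_A\)-direction, refining the decomposition if the constants do not close.

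\emph{Step 3 (projection).} On the intersection of the three events, which has probability at least \(1-\alpha\), both displayed inequalities hold. The simultaneous bound follows from Cauchy--Schwarz: \(|(P-P_n)f_h|=|\langle M_A^{-1/2}(PX-P_nX),M_A^{1/2}h\rangle|\le\|PX-P_nX\|_{M_A^{-1}}\langle h,M_Ah\rangle^{1/2}\), which is valid for every \(h\) on that single event.
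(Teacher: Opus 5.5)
Your proposal is correct and is essentially the paper's proof. It uses the same conditioning on \(S_A\) with \(\rho_A\le L/\lambda\), the same three events of probability \(e^{-t}\) each (the Helmert/weighted-\(\chi^2\) lower tail giving exactly \(\overline r^{\rm G}_{B_A\mid A}\), the Gaussian tail of \(-\langle b_A,\xi_A\rangle\), and an evaluation-mean bound), and the same step \(\|b_A\|\le2\widehat T_{X,A,\lambda}^{1/2}+\|\xi_A\|\). The only variation is in the third event: you bound \(\|\xi_A\|\) by Gaussian Lipschitz concentration, whereas the paper uses the weighted-\(\chi^2\) upper tail \(\|\xi_A\|^2\le(r_A+2\sqrt{s_At}+2\rho_At)/N\) and then relaxes with \(x^2+2xy+2y^2\le2(x+y)^2\).

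The constant ``obstacle'' you flag does not exist. Your combination yields \(2\sqrt{2R_0t\widehat T_{X,A,\lambda}/N}+\{\sqrt2\,\sqrt{R_0\overline r^{\rm G}_{B_A\mid A}t}+2R_0t\}/N\), which is already at most \(\widehat\Delta^{\rm G}_{X,A,\lambda}(t;R_0)\) because \(\sqrt2\le2\). The corollary only needs this upper bound, not equality, so no refinement of the decomposition is required.
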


For the bounded specialization, let a known \(B<\infty\) satisfy
\(\|X\|\le B\) almost surely and define
\begin{equation}
\label{eq:bounded-feature-quadratic-confidence-inflation}
 \widehat\Delta_{X,A,\lambda}^{\rm bd}(t;B)
 :=2B\sqrt{\frac{2t\,\widehat T_{X,A,\lambda}}{\lambda N}}
 +\frac{B^2\{\sqrt{2t}+2t\}}{\lambda N}.
\end{equation}

\begin{corollary}[One-split ellipsoid for bounded \(X\)]
\label{cor:bounded-feature-one-split-linear-confidence}
Fix \(\alpha\in(0,1)\).  Suppose \(\|X\|\le B\) almost surely for a known
\(B<\infty\), and define
\[
 \widehat U_{X,A,\lambda,\alpha}^{\rm bd}
 :=\widehat T_{X,A,\lambda}
 +\widehat\Delta_{X,A,\lambda}^{\rm bd}\bigl(\log(2/\alpha);B\bigr).
\]
Conditional on the balanced split \(A\) and the pilot sample \(S_A\), with
probability at least \(1-\alpha\),
\[
 \|PX-P_nX\|_{M_A^{-1}}^2
 \le\widehat U_{X,A,\lambda,\alpha}^{\rm bd},
\]
and, simultaneously for every \(h\in\mathcal H\),
\[
 |(P-P_n)f_h|
 \le
 \left\{\langle h,M_Ah\rangle
 \widehat U_{X,A,\lambda,\alpha}^{\rm bd}\right\}^{1/2}.
\]
\end{corollary}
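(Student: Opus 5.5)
The plan is to prove the corollary conditionally on \(S_A\). The starting point is the balanced identity \eqref{eq:balanced-calibration-cross-term}, namely \(\|PX-P_nX\|_{M_A^{-1}}^2=\widehat T_{X,A,\lambda}-\langle b_A,\xi_A\rangle\). The task is then to show that, with conditional probability at least \(1-\alpha\),
\[
 -\langle b_A,\xi_A\rangle\le\widehat\Delta^{\rm bd}_{X,A,\lambda}(t;B),
 \qquad t=\log(2/\alpha).
\]
I will build this event from two events of probability at least \(1-e^{-t}=1-\alpha/2\) each. Throughout I use \(M_A\succeq\lambda I\), so \(\|M_A^{-1/2}\|_{\rm op}\le\lambda^{-1/2}\). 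No sub-Gaussian constants and no ratio bound \(R\) enter.

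\emph{Step 1: scalar Hoeffding for the cross term.} Conditional on \(S_A\), put \(v:=M_A^{-1/2}b_A\), which is fixed. Then
\[
 \langle b_A,\xi_A\rangle=N^{-1}\sum_{i\in B_A}\langle v,X_i-PX\rangle .
\]
The summands are independent and centered. Since \(\|X_i\|\le B\), each \(\langle v,X_i\rangle\) lies in an interval of length \(2B\|v\|\). Hoeffding's inequality therefore gives, with probability at least \(1-e^{-t}\),
\[
 -\langle b_A,\xi_A\rangle\le B\|v\|\sqrt{2t/N}\le B\|b_A\|\sqrt{2t/(\lambda N)} .
\]
The norm \(\|b_A\|\) is not observable. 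To handle it, note that \(b_A+\xi_A=M_A^{-1/2}(\overline X_{B_A}-\overline X_A)\), and by \eqref{eq:balanced-single-split-calibration} this has norm \(2\widehat T_{X,A,\lambda}^{1/2}\). Hence
\[
 \|b_A\|\le2\widehat T_{X,A,\lambda}^{1/2}+\|\xi_A\|.
\]
The first part produces exactly the term \(2B\sqrt{2t\,\widehat T_{X,A,\lambda}/(\lambda N)}\) in \eqref{eq:bounded-feature-quadratic-confidence-inflation}.

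\emph{Step 2: Hilbert-norm concentration of \(\xi_A\).} It remains to show that, with probability at least \(1-e^{-t}\),
\[
 \|\xi_A\|\le\frac{B(1+\sqrt{2t})}{\sqrt{\lambda N}} .
\]
Given this, the remaining contribution is
\[
 B\sqrt{\frac{2t}{\lambda N}}\cdot\frac{B(1+\sqrt{2t})}{\sqrt{\lambda N}}
 =\frac{B^2(\sqrt{2t}+2t)}{\lambda N},
\]
which is the second term of \(\widehat\Delta^{\rm bd}\). Since \(\|\xi_A\|\le\lambda^{-1/2}\|\overline X_{B_A}-PX\|\), it suffices to bound \(\|\overline X_{B_A}-PX\|\le B(1+\sqrt{2t})/\sqrt N\).
\begin{itemize}
 \item \emph{Expectation.} By independence and Jensen, \(\mathbb E\|\overline X_{B_A}-PX\|\le(\mathbb E\|X-PX\|^2/N)^{1/2}\le B/\sqrt N\). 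This uses \(\mathbb E\|X-PX\|^2\le\mathbb E\|X\|^2\le B^2\).
 \item \emph{Deviation.} The map \((X_i)_{i\in B_A}\mapsto\|\overline X_{B_A}-PX\|\) has bounded differences \(2B/N\). McDiarmid's inequality then gives deviation at most \(2B\sqrt{t/(2N)}=B\sqrt{2t/N}\) with probability at least \(1-e^{-t}\).
\end{itemize}
This step is where the constants must match exactly, so I regard it as the main point to check. If McDiarmid were too loose, Pinelis' Hilbert-space Hoeffding inequality would be the fallback, but the computation above already matches the displayed formula.

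\emph{Step 3: union bound and projection.} A union bound over the two events gives conditional probability at least \(1-2e^{-t}=1-\alpha\). On their intersection, Steps 1 and 2 combine to give
\[
 \|PX-P_nX\|_{M_A^{-1}}^2\le\widehat U^{\rm bd}_{X,A,\lambda,\alpha}.
\]
For the directional bound, Cauchy--Schwarz in the \(M_A\) geometry gives
\[
 |(P-P_n)f_h|=|\langle M_A^{1/2}h,M_A^{-1/2}(PX-P_nX)\rangle|\le\langle h,M_Ah\rangle^{1/2}\|PX-P_nX\|_{M_A^{-1}} .
\]
This holds simultaneously for every \(h\in\mathcal H\) on the same event, which completes the plan.
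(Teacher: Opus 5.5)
Your proposal is correct and follows the paper's proof essentially step for step: conditional Hoeffding for the cross term in the direction \(M_A^{-1/2}b_A\), the observable bound \(\|b_A\|\le 2\widehat T_{X,A,\lambda}^{1/2}+\|\xi_A\|\), Jensen together with a bounded-differences inequality with modulus \(2B/N\) for \(\|\overline X_{B_A}-PX\|\), and a union bound at \(t=\log(2/\alpha)\). The only difference is cosmetic: the paper derives the bounded-differences tail by hand through the Doob martingale and the Hoeffding-lemma exponential bound, whereas you cite McDiarmid directly, and your constants match.
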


Proofs of Theorem~\ref{thm:balanced-one-split-linear-confidence} and
Corollaries~\ref{cor:gaussian-one-split-linear-confidence}
and~\ref{cor:bounded-feature-one-split-linear-confidence} are in
\SuppSection{supp:subsec:pilot-conditional-confidence}.

There is a constant \(C_{\rm cov}(\bar\kappa)\ge1\), depending only on
\(\bar\kappa\), such that, for every \(t\ge1\), the covariance-concentration rate
\begin{equation}
\label{eq:main-pilot-stability-rate}
\vartheta_{N,\lambda}(t)
 :=C_{\rm cov}(\bar\kappa)\left[
 \sqrt{\frac{d_{X,\lambda}+t}{N-1}}
 +\frac{d_{X,\lambda}+t}{N-1}
 \right]
\end{equation}
satisfies \eqref{eq:main-regularized-covariance-concentration}.  The constant
\(C_{\rm cov}(\bar\kappa)\) is existential in the
general sub-Gaussian case.  Define the self-adjoint regularized covariance
error operator
\[
 E_A:=M_\lambda^{-1/2}
 (\widehat\Sigma_{X,A}-\Sigma_X)M_\lambda^{-1/2}.
\]
For every \(t\ge1\),
\begin{equation}
\label{eq:main-regularized-covariance-concentration}
 \Pr\left\{\|E_A\|_{\rm op}>
 \vartheta_{N,\lambda}(t)\right\}\le e^{-t}.
\end{equation}
\SuppResult{Lemma}{lem:regularized-covariance-concentration} proves
\eqref{eq:main-regularized-covariance-concentration}.
If \(\vartheta_{N,\lambda}(t)\le1/2\), then
\[
 \{\|E_A\|_{\rm op}>1/2\}
 \subseteq
 \{\|E_A\|_{\rm op}>\vartheta_{N,\lambda}(t)\}.
\]
Hence, with probability at least \(1-e^{-t}\),
\[
 \|E_A\|_{\rm op}\le\frac12.
\]
On \(\{\|E_A\|_{\rm op}\le1/2\}\),
\[
 M_A=M_\lambda^{1/2}(I+E_A)M_\lambda^{1/2},
\]
so
\(\tfrac12M_\lambda\preceq M_A\preceq\tfrac32M_\lambda\) and
\(M_A^{-1}\preceq2M_\lambda^{-1}\).  Consequently,
\[
 r_A
 =\operatorname{tr}(\Sigma_X^{1/2}M_A^{-1}\Sigma_X^{1/2})
 \le2d_{X,\lambda}.
\]
Also \(\Sigma_X\preceq M_\lambda\preceq2M_A\), which gives
\(\rho_A\le2\), and then
\(s_A\le\rho_Ar_A\le4d_{X,\lambda}\).  We have proved that, outside an
event of probability at most \(e^{-t}\),
\begin{equation}
\label{eq:main-pilot-stability-consequences}
 \tfrac12M_\lambda\preceq M_A\preceq\tfrac32M_\lambda,
 \qquad \rho_A\le2,\qquad r_A\le2d_{X,\lambda},\qquad
 s_A\le4d_{X,\lambda}.
\end{equation}

\Needspace{8\baselineskip}
Equation~\eqref{eq:main-pilot-stability-consequences} gives
\(\Pr\{\rho_A>2\}\le e^{-t}\).  Conditional on \(S_A\) with
\(\rho_A\le2\), Theorem~\ref{thm:balanced-one-split-linear-confidence} with
\(R=2\) gives \eqref{eq:balanced-one-split-quadratic-confidence} with
conditional failure probability at most \(3e^{-t}\); taking expectation over
\(S_A\) and adding \(\Pr\{\rho_A>2\}\) bounds the unconditional failure
probability by \(4e^{-t}\).

\begin{theorem}[Balanced one-split linear confidence field]
\label{thm:balanced-one-split-linear-confidence-field}
Fix \(c_1\ge4\) and \(\alpha\in(0,1)\), and put
\begin{equation}
\label{eq:balanced-finite-split-confidence-level}
 t_\alpha:=\log\frac{c_1}{\alpha}.
\end{equation}
Define
\begin{equation}
\label{eq:balanced-finite-split-inflated-intercept}
 \widehat U_{X,A,\lambda,\alpha}
 :=\widehat T_{X,A,\lambda}
 +\widehat\Delta_{X,A,\lambda}(t_\alpha;2).
\end{equation}
Assume \eqref{eq:main-covariance-matched-subgaussian}, and suppose
\[
 \varepsilon_N(t_\alpha)<1,
 \qquad
 \vartheta_{N,\lambda}(t_\alpha)\le\frac12.
\]
For any fixed balanced split \(A\), with probability at least \(1-\alpha\),
\begin{equation}
\label{eq:balanced-finite-split-confidence-ellipsoid}
 \|PX-P_nX\|_{M_A^{-1}}^2
 \le \widehat U_{X,A,\lambda,\alpha}.
\end{equation}
Whenever \eqref{eq:balanced-finite-split-confidence-ellipsoid} holds,
Cauchy--Schwarz gives, simultaneously for every \(h\in\mathcal H\),
\begin{equation}
\label{eq:balanced-finite-split-linear-confidence}
 |(P-P_n)f_h|
 \le\{\langle h,M_Ah\rangle
 \widehat U_{X,A,\lambda,\alpha}\}^{1/2}.
\end{equation}
If \(A\) is instead randomized independently of the observations,
\eqref{eq:balanced-finite-split-confidence-ellipsoid}--
\eqref{eq:balanced-finite-split-linear-confidence} both hold with probability
at least \(1-\alpha\), where probability is taken over both \(A\) and the
observations.
\end{theorem}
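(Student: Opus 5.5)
The plan is to combine the pilot-stability event with the pilot-conditional concentration of Theorem~\ref{thm:balanced-one-split-linear-confidence}, as in the paragraph preceding the statement, and then to use the choice \(t_\alpha=\log(c_1/\alpha)\) with \(c_1\ge4\) to turn the failure bound \(4e^{-t_\alpha}\) into \(\alpha\).

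\textbf{Step 1 (pilot stability).} Since \(t_\alpha\ge\log 4>1\), the bound \eqref{eq:main-regularized-covariance-concentration} applies at \(t=t_\alpha\). Together with the assumption \(\vartheta_{N,\lambda}(t_\alpha)\le1/2\), it gives \(\Pr\{\|E_A\|_{\rm op}>1/2\}\le e^{-t_\alpha}\). On the complementary event, \eqref{eq:main-pilot-stability-consequences} yields \(\rho_A\le2\). Hence \(\Pr\{\rho_A>2\}\le e^{-t_\alpha}\). Note that \(\rho_A\) is \(S_A\)-measurable, because \(M_A\) is.

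\textbf{Step 2 (conditional step).} On \(\{\rho_A\le2\}\), apply Theorem~\ref{thm:balanced-one-split-linear-confidence} with \(R=2\) and \(t=t_\alpha\); this is admissible because \(\varepsilon_N(t_\alpha)<1\) is assumed. For almost every such pilot realization, the conditional probability that \eqref{eq:balanced-one-split-quadratic-confidence} fails is at most \(3e^{-t_\alpha}\). Inequality \eqref{eq:balanced-one-split-quadratic-confidence} with these parameters is exactly \eqref{eq:balanced-finite-split-confidence-ellipsoid}, by the definition \eqref{eq:balanced-finite-split-inflated-intercept}. Let \(F\) be the failure event of \eqref{eq:balanced-finite-split-confidence-ellipsoid}. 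By the tower property,
\[
\Pr(F)\le\mathbb E\bigl[\Pr(F\mid S_A)\mathbf 1\{\rho_A\le2\}\bigr]+\Pr\{\rho_A>2\}\le 3e^{-t_\alpha}+e^{-t_\alpha}=\frac{4\alpha}{c_1}\le\alpha .
\]
The delicate point here is measurability. We need \(F\) to be an event and the conditional probabilities to be well defined. I would justify this by noting that \(\widehat T_{X,A,\lambda}\), \(\widehat r_{B_A\mid A}\) and \(\|PX-P_nX\|_{M_A^{-1}}^2\) are measurable functions of the sample, using the operator-norm measurability of \(M_A^{-1}\) and separability of \(\mathcal H\). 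Conditioning on \(S_A\) then uses a regular conditional distribution, which exists because the fold \(B_A\) is independent of \(S_A\). This bookkeeping, rather than any new estimate, is the main obstacle I expect.

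\textbf{Step 3 (linear field).} Deduce the linear bound \eqref{eq:balanced-finite-split-linear-confidence} deterministically on the event \eqref{eq:balanced-finite-split-confidence-ellipsoid}. Write
\[
(P-P_n)f_h=\langle PX-P_nX,h\rangle=\langle M_A^{-1/2}(PX-P_nX),M_A^{1/2}h\rangle ,
\]
which is valid since \(M_A\) is bounded, positive and boundedly invertible and \(X\) is integrable, so the mean commutes with inner products. Cauchy--Schwarz then gives the bound for all \(h\) at once, because a single event is used.

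\textbf{Step 4 (randomized split).} If \(A\) is drawn independently of the observations, the bound from Step 2 holds conditionally on each value of \(A\). Averaging over the finitely many balanced splits keeps the failure probability at most \(\alpha\), and Step 3 again applies on the same event.
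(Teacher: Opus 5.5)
Your proposal is correct and follows the paper's proof essentially step for step: the pilot-stability event gives \(\Pr\{\rho_A>2\}\le e^{-t_\alpha}\), Theorem~\ref{thm:balanced-one-split-linear-confidence} with \(R=2\) gives conditional failure probability at most \(3e^{-t_\alpha}\), and the total is therefore \(4e^{-t_\alpha}=4\alpha/c_1\le\alpha\); Cauchy--Schwarz in the \(M_A\) metric and averaging over an independently randomized split then finish the argument. Your check that \(t_\alpha\ge\log 4>1\), which both the covariance-concentration bound and Theorem~\ref{thm:balanced-one-split-linear-confidence} require, and your measurability bookkeeping are details the paper leaves implicit, but they do not change the route.
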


For a prespecified sequence \(\boldsymbol A\) of balanced splits,
\SuppResult{Theorem}{supp:thm:balanced-finite-split-linear-confidence}
averages the split metrics and squared radii and uses
\(t_\alpha=\log\{c_1K_\circ(\boldsymbol A)/\alpha\}\).

Theorem~\ref{thm:balanced-one-split-linear-confidence-field} gives a
finite-sample guarantee for a prespecified sample size \(n=2N\), under
\eqref{eq:main-covariance-matched-subgaussian},
\(\varepsilon_N(t_\alpha)<1\), and
\(\vartheta_{N,\lambda}(t_\alpha)\le1/2\).
The center \(P_nX\) uses all observations; \(M_A\) determines the ellipsoid's
shape, while \(\widehat T_{X,A,\lambda}\) and
\(\widehat r_{B_A\mid A}\) determine the ellipsoid's radius.
The classical finite-dimensional Gaussian mean ellipsoid is centered at the
sample mean
\cite{hotelling1931generalization}.
Regularized finite-dimensional mean regions also regularize the covariance.
The symmetric-case mean interpretation in Issouani et al. uses finite
covariance, with no additional higher-moment condition, while the
general-distribution result in Issouani et al. requires eighth moments
together with dimension-ratio and covariance-spectrum
conditions
\cite[Theorem~2; Theorem~3]{issouani2024regularizedhotelling}.
Theorem~\ref{thm:balanced-one-split-linear-confidence-field} instead allows a
separable Hilbert space under covariance-matched sub-Gaussian tails.

Theorem~\ref{thm:balanced-one-split-linear-confidence-field} gives an
ellipsoid in the empirical pilot metric.
Corollary~\ref{cor:balanced-one-split-oracle-width} compares \(M_A\) with
\(M_\lambda\) and converts the ellipsoid's random radius to the population
effective-dimension scale.
\begin{corollary}[Finite-sample population-geometry width]
\label{cor:balanced-one-split-oracle-width}
Fix \(\alpha\in(0,1)\) and \(c_1\ge6\), define \(t_\alpha\) by
\eqref{eq:balanced-finite-split-confidence-level}, and assume
\eqref{eq:main-covariance-matched-subgaussian}.  Suppose
\[
 \varepsilon_N(t_\alpha)\le\frac12,
 \qquad \vartheta_{N,\lambda}(t_\alpha)\le\frac12.
\]
There is a constant \(C_{\rm or}(\bar\kappa)\), depending only on
\(\bar\kappa\), such that, for any fixed split \(A\), with probability at
least \(1-\alpha\), the metric bound and intercept bound hold simultaneously:
\begin{align}
 M_A
 &\preceq\tfrac32M_\lambda,
 \label{eq:balanced-finite-split-oracle-metric}\\
 \widehat U_{X,A,\lambda,\alpha}
 &\le C_{\rm or}(\bar\kappa)
 \frac{d_{X,\lambda}+t_\alpha}{n}.
 \label{eq:balanced-finite-split-oracle-intercept}
\end{align}
In addition, the confidence ellipsoid
\eqref{eq:balanced-finite-split-confidence-ellipsoid} holds; and,
simultaneously for every \(h\in\mathcal H\),
\begin{equation}
\label{eq:balanced-finite-split-oracle-width}
 |(P-P_n)f_h|
 \le C_{\rm or}(\bar\kappa)
 \left\{\frac{\langle h,M_\lambda h\rangle
 (d_{X,\lambda}+t_\alpha)}n\right\}^{1/2}.
\end{equation}
If \(A\) is randomized independently of the observations,
\eqref{eq:balanced-finite-split-oracle-metric}--
\eqref{eq:balanced-finite-split-oracle-width} all hold with probability at
least \(1-\alpha\), where probability is taken over both \(A\) and the
observations.
\end{corollary}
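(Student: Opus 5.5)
The plan is to work on the intersection of three events, each with failure probability at most $e^{-t_\alpha}$ or $3e^{-t_\alpha}$, and to allocate the budget using $c_1\ge6$. The first event is the pilot-stability event $\mathcal E_1:=\{\|E_A\|_{\rm op}\le1/2\}$. By \eqref{eq:main-regularized-covariance-concentration} and $\vartheta_{N,\lambda}(t_\alpha)\le1/2$, it has probability at least $1-e^{-t_\alpha}$. On $\mathcal E_1$, \eqref{eq:main-pilot-stability-consequences} gives the metric bound \eqref{eq:balanced-finite-split-oracle-metric} directly, together with $\rho_A\le2$ and $r_A\le2d_{X,\lambda}$. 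The second event is the conditional event of Theorem~\ref{thm:balanced-one-split-linear-confidence} with $R=2$ and $t=t_\alpha$. On $\mathcal E_1$ it fails with conditional probability at most $3e^{-t_\alpha}$. It yields the ellipsoid \eqref{eq:balanced-finite-split-confidence-ellipsoid} and the trace bound $r_A\le\overline r_{B_A\mid A}(t_\alpha;2)$. The third event, $\mathcal E_3$, is a conditional upper bound on the observable quantities $\widehat T_{X,A,\lambda}$ and $\widehat r_{B_A\mid A}$ in terms of $r_A$. I will need it because $\widehat U$ is built from these statistics rather than from $r_A$.

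To bound $\widehat T_{X,A,\lambda}=\frac14\|b_A+\xi_A\|^2\le\frac12(\|b_A\|^2+\|\xi_A\|^2)$, I will treat the two pieces separately. Conditionally on $S_A$, $\xi_A$ is an average of $N$ independent centered vectors that are sub-Gaussian in the $\mathcal V_A$ geometry. The Hanson--Wright-type bound from \cite[Corollary~2.6]{mollenhauer2023concentration} (the same tool used for Theorem~\ref{thm:balanced-one-split-linear-confidence}) then gives
\[
 \|\xi_A\|^2\lesssim(r_A+\sqrt{s_At}+\rho_At)/N
\]
with conditional probability $1-e^{-t}$. Using $\rho_A\le2$, $s_A\le4d_{X,\lambda}$ and $r_A\le2d_{X,\lambda}$, this becomes $\lesssim(d_{X,\lambda}+t)/N$. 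The pilot piece $b_A$ is not independent of $M_A$, so the conditional argument does not apply to it. I would instead write
\[
 \|b_A\|^2\le2\|M_\lambda^{-1/2}(PX-\overline X_A)\|^2,
\]
using $M_A^{-1}\preceq2M_\lambda^{-1}$ on $\mathcal E_1$. The right-hand side involves a fixed metric, so the same vector concentration gives $\lesssim(d_{X,\lambda}+t)/N$ with probability $1-e^{-t}$. This accounts for the extra $e^{-t_\alpha}$ and is why $c_1\ge6$ rather than $4$: the total failure probability is $e^{-t}+3e^{-t}+e^{-t}+e^{-t}\le6e^{-t}=6\alpha/c_1\le\alpha$. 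If the $\xi_A$ bound is not already part of the event in Theorem~\ref{thm:balanced-one-split-linear-confidence}, I would merge it into that event, or else absorb it through the trace bound below.

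For $\overline r_{B_A\mid A}(t_\alpha;2)$, I would bound $\widehat r_{B_A\mid A}$ from above by a constant times $r_A+t/N$. This follows from the same trace concentration that underlies $\varepsilon_N$ (an upper deviation of $\widehat\Sigma_{X,B_A}$ in the $M_A$ metric), applied conditionally, or it could be included in the $3e^{-t}$ event. Dividing by $1-\varepsilon_N\ge1/2$ then gives $\overline r\lesssim d_{X,\lambda}+t_\alpha$. Substituting into \eqref{eq:balanced-quadratic-confidence-inflation} with $R=2$:
\begin{itemize}
\item $\sqrt{Rt\widehat T/N}\lesssim\sqrt{t(d+t)}/N\le(d+t)/N$ by AM--GM;
\item $\sqrt{R\overline r t}/N\lesssim(d+t)/N$;
\item $Rt/N\lesssim t/N$.
\end{itemize}
With $n=2N$ this gives \eqref{eq:balanced-finite-split-oracle-intercept}. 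The width bound \eqref{eq:balanced-finite-split-oracle-width} then follows from \eqref{eq:balanced-finite-split-linear-confidence} together with $\langle h,M_Ah\rangle\le\frac32\langle h,M_\lambda h\rangle$, after enlarging $C_{\rm or}$. For a randomized $A$, I would condition on $A$ and integrate.

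I expect the main obstacle to be the pilot-dependent term $\|b_A\|^2$ inside $\widehat T$, since the conditional argument does not cover it. The planned remedy is the metric comparison on $\mathcal E_1$ followed by unconditional concentration in the population metric $M_\lambda$. A second point to check is that the constant from the existential sub-Gaussian concentration depends only on $\bar\kappa$.
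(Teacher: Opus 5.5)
Your proposal is correct and is essentially the paper's argument: the paper applies its finite-split supplementary corollary to the one-element split sequence, whose proof intersects the same events you use (pilot covariance, pilot mean in the fixed $M_\lambda$ metric transferred via $M_A^{-1}\preceq2M_\lambda^{-1}$, evaluation mean, lower and upper raw trace, and the scalar cross term), for a total failure probability of $6e^{-t_\alpha}\le\alpha$. Of your two options for the $\xi_A$ bound, the first (taking it from the evaluation-mean event inside the proof of Theorem~\ref{thm:balanced-one-split-linear-confidence}) is the one the paper uses and the one that works; the raw-trace bound alone only yields $\|\xi_A\|^2\lesssim d_{X,\lambda}$, not the needed $(d_{X,\lambda}+t_\alpha)/N$.
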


\begin{proof}
Apply
\SuppResult{Corollary}{supp:cor:balanced-finite-split-oracle-width}
to the one-element split sequence \(\boldsymbol A=(A)\), for which
\(K_{\rm sp}=K_\circ(\boldsymbol A)=1\).
\end{proof}

For an affine loss
\(\ell(\theta,z)=c(z)+\langle X(z),\theta\rangle\), the linear confidence
field applies directly with \(h=v\).

\par\medskip
\noindent\textit{Selecting and certifying an affine update.}
When \(\mathcal H=\mathbb R^d\), define the affine upper field
\begin{equation}
\label{eq:affine-certification-field}
 \widehat{\mathsf U}_{\rm aff}(v)
 :=\overline X^\top v
 +\{v^\top M_Av\,\widehat U_{X,A,\lambda,\alpha}\}^{1/2}.
\end{equation}
On the event in
\eqref{eq:balanced-finite-split-confidence-ellipsoid}, any update
\(\widehat v\in\mathcal D\) chosen measurably using the training sample satisfies
\(P\delta_{\widehat v}\le
\widehat{\mathsf U}_{\rm aff}(\widehat v)\); a negative upper endpoint
therefore certifies \(P\delta_{\widehat v}<0\).
If \(\mathcal D=\{v\in\mathbb R^d:\|v\|\le1\}\), then
\(\inf_{v\in\mathcal D}\widehat{\mathsf U}_{\rm aff}(v)<0\) if and only if
\begin{equation}
\label{eq:affine-certification-criterion}
 \overline X^\top M_A^{-1}\overline X
 >\widehat U_{X,A,\lambda,\alpha}.
\end{equation}
For every \(v\in\mathbb R^d\), weighted Cauchy--Schwarz gives
\[
 \overline X^\top v
 \ge-\{v^\top M_Av\}^{1/2}
 \{\overline X^\top M_A^{-1}\overline X\}^{1/2},
\]
which proves necessity in \eqref{eq:affine-certification-criterion}.
When \eqref{eq:affine-certification-criterion} holds, choose
\(v=-cM_A^{-1}\overline X\), with \(c>0\) small enough that \(\|v\|\le1\).
Equation~\eqref{eq:affine-certification-field} then gives
\[
 \widehat{\mathsf U}_{\rm aff}(v)
 =c\left[-\overline X^\top M_A^{-1}\overline X
 +\{\overline X^\top M_A^{-1}\overline X\,
 \widehat U_{X,A,\lambda,\alpha}\}^{1/2}\right]<0,
\]
which proves sufficiency.
 \section{Nonlinear Extension for Smooth and Piecewise-Smooth Losses}
\label{sec:local}
The linear field in Section~\ref{sec:finite-sample-certificates} applies to
the vector of Taylor coefficients.  For a piecewise-smooth loss,
\(\ell(\cdot,z)\) is locally a pointwise selection from finitely many smooth
functions, which we call \emph{branches}.  We expand the branch selected near
the base point while holding the branch index fixed, and control separately
the frozen branch's Taylor remainder and the discrepancy created when the update path meets a
branch interface.  Smooth losses form the one-branch case.

Let \(\mathcal Z\) be a measurable observation space and let
\(Z_1,\ldots,Z_n\) be i.i.d.\ \(\mathcal Z\)-valued observations with law
\(P\).  Equip \(\Theta\subset\mathbb R^d\) with the Borel \(\sigma\)-field
induced by the Euclidean subspace topology, and let
\(\ell:\Theta\times\mathcal Z\to\mathbb R\) be measurable.  Fix a deterministic
base point \(\theta_0\in\Theta\) and a nonempty deterministic Borel candidate
set \(\mathcal D\subset\mathbb R^d\) whose update segments
\(\{\theta_0+tv:0\le t\le1\}\) lie in \(\Theta\).  For \(\rho>0\), write
\(\mathcal D_\rho:=\{v\in\mathcal D:\|v\|\le\rho\}\).
Equation~\eqref{eq:risk-increment} defines the risk increment
\(\delta_v(z)=\ell(\theta_0+v,z)-\ell(\theta_0,z)\).  Unsubscripted norms
are ambient, while \(\|\cdot\|_{\rm op}\) and \(\|\cdot\|_{\rm HS}\) denote
operator and Hilbert--Schmidt norms.

The union of the candidate update segments from \(\theta_0\) is
\[
 K_{\mathcal D}
 :=\{\theta_0+t v:v\in\mathcal D,\ 0\le t\le1\}.
\]
\par\medskip
\noindent\textbf{Standing condition (PS).}
Fix integers \(m\ge1\) and \(N_{\rm br}\ge1\).  Let \(U_{\mathcal D}\) be an
open subset of \(\mathbb R^d\) satisfying
\(K_{\mathcal D}\subset U_{\mathcal D}\subset\Theta\).  Assume there are
jointly measurable branch extensions
\[
 \bar\ell_r:U_{\mathcal D}\times\mathcal Z\to\mathbb R,
 \qquad 1\le r\le N_{\rm br},
\]
and a measurable set \(\mathcal Z_{\rm PS}\) with
\(P(\mathcal Z_{\rm PS})=1\) such that, for every
\(z\in\mathcal Z_{\rm PS}\), the sections
\(\bar\ell_{r,z}:=\bar\ell_r(\,\cdot\,,z)\) are \(C^{m+1}\) and
\(\ell(\cdot,z)\) is a continuous selection from the branch extensions on
\(U_{\mathcal D}\).  For \(z\notin\mathcal Z_{\rm PS}\), set every branch
extension \(\bar\ell_r(\cdot,z)\) equal to the zero function.  The zero
extension preserves joint measurability and makes every branch section
\(C^{m+1}\); integrals under measures supported on
\(\mathcal Z_{\rm PS}\) retain the original branch values.  Let
\(\mathcal T\subset\Theta\times\mathcal Z\) be a measurable set containing the
branch switches, write
\(\mathcal T_z:=\{\theta:(\theta,z)\in\mathcal T\}\), and, when
\(\theta_0\notin\mathcal T_z\), let \(\mathcal C_{0,z}\) be the connected
component of \(U_{\mathcal D}\setminus\mathcal T_z\) containing \(\theta_0\).
Assume that there is a measurable index map
\(r_0:\mathcal Z\to\{1,\ldots,N_{\rm br}\}\) such that, for every
\(z\in\mathcal Z_{\rm PS}\) with \(\theta_0\notin\mathcal T_z\),
\[
 \ell(\theta,z)=\bar\ell_{r_0(z),z}(\theta),
 \qquad \text{for every }\theta\in\mathcal C_{0,z}.
\]
The index map has the default value \(r_0(z)=1\) when
\(z\notin\mathcal Z_{\rm PS}\) or \(\theta_0\in\mathcal T_z\).
Continuing the branch indexed by \(r_0(z)\) defines the \emph{frozen branch}
\(g_0(\theta,z)=\bar\ell_{r_0(z),z}(\theta)\); write
\(g_{0,z}:=g_0(\cdot,z)\).
Since \(r_0\) has finite range,
\(g_0(\theta,z)=\sum_{r=1}^{N_{\rm br}}\mathbf1\{r_0(z)=r\}
\bar\ell_r(\theta,z)\) is jointly measurable.

The smooth-branch requirement in Condition (PS) is local to
\(U_{\mathcal D}\).  For each \(z\in\mathcal Z_{\rm PS}\),
\(\ell(\cdot,z)\) on \(U_{\mathcal D}\) belongs to the finite-family version of
the standard local piecewise-\(C^{m+1}\) class, which is closed under finite
sums and products, composition, and pointwise finite maxima and minima
\cite[Section~4.1]{scholtes2012introduction}.  A loss obtained from
\(C^{m+1}\) maps by finitely many of the listed operations therefore has a finite
\(C^{m+1}\) branch family on \(U_{\mathcal D}\), provided that every branch
composition generated by the listed operations is defined on \(U_{\mathcal D}\).

Smooth finite-dimensional losses have one branch.  As functions of the scalar
residual, absolute and quantile-check losses have two affine branches
\cite[Section~1.3]{koenker2005quantile}.  As functions of the finite score
vectors used by the losses, finite-label multiclass hinge losses and finite-output structured
hinge losses are finite maxima of affine branches
\cite[Section~3]{crammer2001algorithmic}
\cite[Section~2.2]{tsochantaridis2005large}, while the observationwise
\(K\)-means loss with fixed \(K\), viewed as a function of the \(K\) centers,
is a finite minimum of quadratic branches \cite{pollard1981strong}.  Bolte and
Pauwels give a finite-branch representation for fixed finite programs built
from a class containing logistic loss, Gaussian likelihood, ReLU, maximum,
and the \(k\)th-largest operation
\cite[Definitions~1--3 and Propositions~1 and~5]{bolte2020mathematical}.  For a
fixed finite neural network whose activations and outer loss are \(C^{m+1}\) or
continuous selections from finitely many \(C^{m+1}\) maps, the branch index is
the finite tuple of activation and outer-loss branch choices.  With
activations having finitely many polynomial pieces, Bartlett et al. construct,
at each fixed input, a finite parameter-space refinement on which the network
output is polynomial \cite[Section~4]{bartlett2019nearly}.

\subsection{Frozen-branch decomposition}
\label{subsec:frozen-branch-decomposition}
Each local loss increment splits into the Taylor approximation of the frozen
branch, masked when the base point lies on the interface, the frozen branch's Taylor
remainder, and the discrepancy between the actual loss and the frozen branch.

For \(v\in\mathcal D\), define
\begin{align*}
b_v(z)&:=\mathbf 1\left\{\exists s\in[0,1]:
(\theta_0+s v,z)\in\mathcal T\right\},\\
b_0(z)&:=\mathbf 1\{(\theta_0,z)\in\mathcal T\},&
\chi_0(z)&:=1-b_0(z).
\end{align*}
Assume \((v,z)\mapsto b_v(z)\) is jointly measurable.  A simple sufficient
condition is that \(\mathcal Z\) is standard Borel and, under some compatible
Polish topology, \(\mathcal T\cap(U_{\mathcal D}\times\mathcal Z)\) is closed.
The inverse image of
\(\mathcal T\cap(U_{\mathcal D}\times\mathcal Z)\) under
\((v,z,s)\mapsto(\theta_0+s v,z)\) is closed, and the projection of the
inverse image over the
compact coordinate \([0,1]\) is Borel in
\(\mathcal D\times\mathcal Z\).  The map \((v,z)\mapsto b_v(z)\) is also
jointly measurable when
\(\mathcal T\cap(U_{\mathcal D}\times\mathcal Z)\) is open, since coordinate
projections are open.
Since the path includes \(s=0\), \(b_0\le b_v\).  The default value of
\(r_0\) on the base interface is
masked by \(\chi_0\), while \(P(\mathcal Z_{\rm PS})=1\) makes \(P\)-integrals
depend only on the values of \(r_0\) on \(\mathcal Z_{\rm PS}\).

For a multi-index \(\alpha=(\alpha_1,\ldots,\alpha_d)\in\mathbb N_0^d\), write
\[
 |\alpha|:=\sum_{j=1}^d\alpha_j,\qquad
 \alpha!:=\prod_{j=1}^d\alpha_j!,\qquad
 v^\alpha:=\prod_{j=1}^dv_j^{\alpha_j}.
\]
For each integer \(1\le k\le m\) and each multi-index with \(|\alpha|=k\),
define \(\binom{k}{\alpha}:=k!/\alpha!\) and set
\begin{align*}
 a_{k,\alpha}(v)&:=\sqrt{\binom{k}{\alpha}}\,v^\alpha,&
 \xi_{k,\alpha}(\theta_0,z)
 &:=\frac{\partial_\theta^\alpha g_{0,z}(\theta_0)}{\sqrt{k!\alpha!}}.
\end{align*}
Collect the coordinates indexed by \(\{\alpha\in\mathbb N_0^d:|\alpha|=k\}\)
as vectors \(a_k(v),\xi_k(\theta_0,z)\in
\mathbb R^{\binom{d+k-1}{k}}\).  Then
\begin{align*}
 \|a_k(v)\|^2=\|v\|^{2k},\qquad
 \frac1{k!}D_\theta^kg_{0,z}(\theta_0)[v^k]
 =\langle\xi_k(\theta_0,z),a_k(v)\rangle.
\end{align*}
Indeed, the multinomial formula gives
\[
 \|a_k(v)\|^2
 =\sum_{|\alpha|=k}\binom{k}{\alpha}v^{2\alpha}
 =\left(\sum_{j=1}^dv_j^2\right)^k,
\]
and the coordinate definitions give
\[
 \langle\xi_k(\theta_0,z),a_k(v)\rangle
 =\sum_{|\alpha|=k}
 \frac{\partial_\theta^\alpha g_{0,z}(\theta_0)}{\alpha!}v^\alpha
 =\frac1{k!}D_\theta^kg_{0,z}(\theta_0)[v^k].
\]
Concatenate the blocks into two vectors of Taylor coefficients and normalized
monomials:
\[
 \Xi_0(\theta_0,z)
 :=\bigl(\xi_1(\theta_0,z),\ldots,\xi_m(\theta_0,z)\bigr),
 \qquad
 a(v):=\bigl(a_1(v),\ldots,a_m(v)\bigr).
\]
The vectors \(\Xi_0(\theta_0,z)\) and \(a(v)\) lie in
\(\mathbb R^{\binom{d+m}{m}-1}\), and
\(\|a(v)\|^2=\sum_{k=1}^m\|v\|^{2k}\).
Define the frozen-branch increment by
\[
G_v(z):=g_{0,z}(\theta_0+v)-g_{0,z}(\theta_0)
\]
and define the order-\(m\) Taylor approximation by
\[
J_v(z):=\sum_{k=1}^m \frac1{k!}
D_\theta^k g_{0,z}(\theta_0)[v^k]
=\langle \Xi_0(\theta_0,z),a(v)\rangle,
\]
Define the base-masked Taylor term by
\[
J_v^0(z):=\chi_0(z)J_v(z).
\]
The base-masked Taylor feature vector is
\(X_0(z):=\chi_0(z)\Xi_0(\theta_0,z)\in
\mathbb R^{\binom{d+m}{m}-1}\), so that
\(J_v^0(z)=\langle X_0(z),a(v)\rangle\).

Define the frozen-branch remainder and interface-crossing term by
\[
R_v^{\rm fr}(z)
:=\chi_0(z)\{G_v(z)-J_v(z)\},
\qquad
C_v(z):=\delta_v(z)-\chi_0(z)G_v(z).
\]
Equivalently,
\[
C_v=b_0\delta_v+\chi_0(\delta_v-G_v).
\]
The term \(C_v\) is the discrepancy between the actual increment and the
masked frozen-branch increment.  We call \(C_v\) the \emph{interface-crossing
term} because, on the full-probability piecewise-smooth set, \(C_v\) vanishes for
every update segment that avoids \(\mathcal T\).
For \(z\in\mathcal Z_{\rm PS}\) and \(v\in\mathcal D\), \(b_v(z)=0\)
implies that the segment from \(\theta_0\) to \(\theta_0+v\) is connected,
contains \(\theta_0\), and lies in
\(U_{\mathcal D}\setminus\mathcal T_z\).  The update segment is therefore contained in
\(\mathcal C_{0,z}\), so \(\delta_v(z)=G_v(z)\) and
\[
 C_v(z)=b_v(z)C_v(z),
 \qquad (v,z)\in\mathcal D\times\mathcal Z_{\rm PS}.
\]
Substituting the definitions gives, pointwise,
\begin{align}
 J_v^0+R_v^{\rm fr}+C_v
 &=\chi_0J_v+\chi_0(G_v-J_v)+\delta_v-\chi_0G_v\notag\\
 &=\delta_v.
\label{eq:frozen-branch-decomposition}
\end{align}
Equation~\eqref{eq:frozen-branch-decomposition} holds for every
\(v\in\mathcal D\).

Define the frozen-branch Taylor-remainder envelope
\begin{equation}
\label{eq:frozen-branch-taylor-envelope}
h_v^{\rm fr}(z)
:=\frac{\chi_0(z)}{(m+1)!}
\sup_{0\le s\le1}
\left|D_\theta^{m+1}g_{0,z}(\theta_0+s v)[v^{m+1}]\right|.
\end{equation}
For \(z\in\mathcal Z_{\rm PS}\) and \(v\in\mathcal D\), Taylor's theorem
applied to the fixed extension \(g_{0,z}\) gives
\begin{equation}
\label{eq:frozen-branch-taylor-remainder}
R_v^{\rm fr}(z)
=\frac{\chi_0(z)}{m!}\int_0^1(1-s)^m
D_\theta^{m+1}g_{0,z}(\theta_0+s v)[v^{m+1}]\,ds.
\end{equation}
Since \(\int_0^1(1-s)^m\,ds=1/(m+1)\), taking absolute values in
\eqref{eq:frozen-branch-taylor-remainder} gives
\[
 |R_v^{\rm fr}(z)|
 \le\frac{\chi_0(z)}{(m+1)!}
 \sup_{0\le s\le1}
 \left|D_\theta^{m+1}g_{0,z}(\theta_0+s v)[v^{m+1}]\right|
 =h_v^{\rm fr}(z).
\]
Assume \((v,z)\mapsto(J_v^0,R_v^{\rm fr},C_v,h_v^{\rm fr})\) is jointly
measurable.  Whenever an indexed supremum is not measurable, interpret the
expectation of the indexed supremum as an outer expectation.

The pointwise decomposition
\eqref{eq:frozen-branch-decomposition} and remainder bound reduce population
control to the Taylor, crossing, and envelope expectations:
\begin{proposition}[Frozen-branch population reduction]
\label{prop:fixed-mask-crossing-reduction}
For every \(v\in\mathcal D\) and every probability measure \(T\) on
\(\mathcal Z\) satisfying \(T(\mathcal Z_{\rm PS})=1\) and for which the
terms are finite,
\[
TJ_v^0-T h_v^{\rm fr}+TC_v
\le
T\delta_v
\le
TJ_v^0+T h_v^{\rm fr}+TC_v.
\]
\end{proposition}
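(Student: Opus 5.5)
The plan is to integrate the pointwise identity \eqref{eq:frozen-branch-decomposition} against \(T\) and then bound the remainder term by its envelope, using only \(T(\mathcal Z_{\rm PS})=1\) and the finiteness hypotheses. Fix \(v\in\mathcal D\) and a probability measure \(T\) as in the statement. By \eqref{eq:frozen-branch-decomposition}, \(\delta_v=J_v^0+R_v^{\rm fr}+C_v\) pointwise on all of \(\mathcal Z\). The hypothesis that the terms are finite will be read as integrability of \(\delta_v\), \(J_v^0\), \(C_v\) and \(h_v^{\rm fr}\) under \(T\).

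The first step is to show that \(R_v^{\rm fr}\) is \(T\)-integrable and to bound its integral. For \(z\in\mathcal Z_{\rm PS}\), the integral Taylor formula \eqref{eq:frozen-branch-taylor-remainder} gives \(|R_v^{\rm fr}(z)|\le h_v^{\rm fr}(z)\). This requires only that \(g_{0,z}\) be \(C^{m+1}\) on \(U_{\mathcal D}\), which contains the segment \(\{\theta_0+sv:0\le s\le1\}\subset K_{\mathcal D}\). Since \(T(\mathcal Z_{\rm PS})=1\), the bound \(|R_v^{\rm fr}|\le h_v^{\rm fr}\) holds \(T\)-almost surely. Joint measurability makes \(R_v^{\rm fr}\) measurable, so integrability of \(h_v^{\rm fr}\) implies integrability of \(R_v^{\rm fr}\), and
\[
 -Th_v^{\rm fr}\le TR_v^{\rm fr}\le Th_v^{\rm fr}.
\]

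The second step uses linearity of the integral. The functions \(J_v^0\), \(R_v^{\rm fr}\) and \(C_v\) are all integrable, so \(T\delta_v=TJ_v^0+TR_v^{\rm fr}+TC_v\). Inserting the two-sided bound on \(TR_v^{\rm fr}\) gives both displayed inequalities of the proposition.

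I expect no step to be a real obstacle. The only delicate point is integrability: one must not add extended integrals that might be undefined. Alternatively, one can integrate \(R_v^{\rm fr}=\delta_v-J_v^0-C_v\) directly from the assumed finiteness of the other three integrals. Either route suffices, since the difference of integrable functions is integrable and is dominated by \(h_v^{\rm fr}\) almost surely.
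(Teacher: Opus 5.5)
Your proposal is correct and follows the paper's own proof: integrate the pointwise identity $\delta_v=J_v^0+R_v^{\rm fr}+C_v$ from \eqref{eq:frozen-branch-decomposition} against $T$, and bound $TR_v^{\rm fr}$ by $\pm Th_v^{\rm fr}$ using the Taylor envelope, which holds $T$-almost surely because $T(\mathcal Z_{\rm PS})=1$. Your explicit integrability bookkeeping (obtaining integrability of $R_v^{\rm fr}$ either from domination by $h_v^{\rm fr}$ or as $\delta_v-J_v^0-C_v$) is a careful expansion of what the paper's two-line proof leaves implicit.
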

\begin{proof}
Because \(T(\mathcal Z_{\rm PS})=1\), integrating
\(-h_v^{\rm fr}\le R_v^{\rm fr}\le h_v^{\rm fr}\) and
Equation~\eqref{eq:frozen-branch-decomposition} gives the lower and upper
inequalities in Proposition~\ref{prop:fixed-mask-crossing-reduction}.
\end{proof}

For an integer \(q\ge1\), a measurable bounded operator
\(Y(z):\mathcal H\to\mathbb R^q\), a nonempty
\(J\subset\{1,\ldots,n\}\), and \(\lambda>0\), define the empirical Gram
operator \(\widehat V_{Y,J}\) and ridge-regularized operator
\(M_{Y,J,\lambda}\) by
\(\widehat V_{Y,J}:=|J|^{-1}\sum_{i\in J}Y(Z_i)^*Y(Z_i)\) and
\(M_{Y,J,\lambda}:=\widehat V_{Y,J}+\lambda I\).  Put
\(\kappa_q=1\) for \(q=1\) and \(\kappa_q=\sqrt2\) for \(q\ge2\).
For a partition \(A,B=A^c\) of \(\{1,\ldots,n\}\) with \(|A|=|B|\), define
the average regularized Gram operator
\(M_{Y,A\leftrightarrow B,\lambda}\) and the sum
\(r_{Y,A\leftrightarrow B,\lambda}\) of the two orientation-specific
empirical traces by
\begin{align*}
 M_{Y,A\leftrightarrow B,\lambda}
 &:=\tfrac12\{M_{Y,A,\lambda}+M_{Y,B,\lambda}\},\\
 r_{Y,A\leftrightarrow B,\lambda}
 &:=\operatorname{tr}\{\widehat V_{Y,B}M_{Y,A,\lambda}^{-1}\}
 +\operatorname{tr}\{\widehat V_{Y,A}M_{Y,B,\lambda}^{-1}\}.
\end{align*}
The two trace summands defining \(r_{Y,A\leftrightarrow B,\lambda}\) are
nonnegative, so
\(r_{Y,A\leftrightarrow B,\lambda}\ge0\).

\subsection{Crossing-slope operators}
\label{subsec:crossing-slope-operators}

For the crossing term
\(C_v=\delta_v-\chi_0G_v\) from
Subsection~\ref{subsec:frozen-branch-decomposition}, fix \(\rho>0\) with
\(0\in\mathcal D_\rho\) and an integer \(q_C\ge1\).  Call a measurable operator
\(Y_C(z):\mathbb R^d\to\mathbb R^{q_C}\) a \emph{crossing-slope operator} if,
almost surely and simultaneously for \(v,w\in\mathcal D_\rho\),
\begin{equation}
\label{eq:pullback-lipschitz-increments}
 |C_v(z)-C_w(z)|\le\|Y_C(z)(v-w)\|.
\end{equation}
The crossing-slope condition~\eqref{eq:pullback-lipschitz-increments} contains
ordinary local Lipschitz continuity.  If a measurable
\(L:\mathcal Z\to[0,\infty)\) satisfies
\(|C_v(z)-C_w(z)|\le L(z)\|v-w\|\) almost surely and simultaneously on
\(\mathcal D_\rho\), then \(Y_C(z)=L(z)I_d\), with \(q_C=d\), is a
crossing-slope operator.

\begin{proposition}[A finite-selection sufficient condition]
\label{prop:finite-selection-crossing-slope}
Assume Condition~(PS), and suppose that a compact convex set
\(K_\rho\subset U_{\mathcal D}\) contains
\(\theta_0+\mathcal D_\rho\).  Define
\[
 \mathsf B_\rho(z)
 :=\max_{1\le r\le N_{\rm br}}\sup_{\theta\in K_\rho}
 \|\nabla_\theta\bar\ell_{r,z}(\theta)\|,
 \qquad
 \mathsf D_\rho(z)
 :=\max_{1\le r\le N_{\rm br}}\sup_{\theta\in K_\rho}
 \|\nabla_\theta\bar\ell_{r,z}(\theta)
       -\nabla_\theta g_{0,z}(\theta)\|.
\]
The two envelopes are measurable and finite.  The scalar multiplier and
operator
\begin{equation}
\label{eq:finite-selection-crossing-envelope}
 L_{C,\rho}^{\rm sel}
 :=b_0\mathsf B_\rho+\chi_0\mathsf D_\rho,
 \qquad
 Y_C:=L_{C,\rho}^{\rm sel}I_d
\end{equation}
satisfy \eqref{eq:pullback-lipschitz-increments} with \(q_C=d\).  Moreover,
\(L_{C,\rho}^{\rm sel}\le2\mathsf B_\rho\).  If a known constant
\(\overline B_\rho<\infty\) satisfies
\(\mathsf B_\rho\le\overline B_\rho\) almost surely, then
\(\|Y_C\|_{\rm HS}\le2\sqrt d\,\overline B_\rho\) almost surely.
\end{proposition}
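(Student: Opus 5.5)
We need \(|C_v(z)-C_w(z)|\le L^{\rm sel}_{C,\rho}(z)\|v-w\|\) for \(v,w\in\mathcal D_\rho\) and \(z\in\mathcal Z_{\rm PS}\). The plan is to split on whether the base point lies on the interface, \(b_0(z)=1\) or \(\chi_0(z)=1\), and in each case to reduce to a Lipschitz bound for a continuous selection of \(C^1\) branches along the segment from \(\theta_0+w\) to \(\theta_0+v\). That segment lies in \(K_\rho\) by convexity.

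\textbf{Measurability and finiteness.} For \(z\notin\mathcal Z_{\rm PS}\) every branch is zero, so both envelopes vanish. For \(z\in\mathcal Z_{\rm PS}\), each \(\nabla\bar\ell_{r,z}\) is continuous on the compact set \(K_\rho\), so the suprema are finite. They can also be taken over a countable dense subset of \(K_\rho\). Since \(\nabla_\theta\bar\ell_r(\theta,z)\) is jointly measurable (it is a limit of difference quotients) and \(g_0\) is a finite measurable mixture of branches, \(\mathsf B_\rho\) and \(\mathsf D_\rho\) are measurable.

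\textbf{Key lemma.} If \(\phi\) is a continuous selection of \(C^1\) functions \(\psi_1,\dots,\psi_N\) on an open set containing a segment \([x,y]\), then \(|\phi(y)-\phi(x)|\le\max_r\sup_{[x,y]}\|\nabla\psi_r\|\,\|y-x\|\).

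I would prove this with the function \(u(s)=\phi(x+s(y-x))\). It is continuous, and at every \(s\) it agrees with some \(\psi_r\). To show that the Dini upper derivative of \(|u|\)-increments is bounded by \(L\|y-x\|\), the argument is as follows. For \(s'\to s\), pass to a subsequence along which a single index \(r\) is active at \(s'\). Continuity then gives \(u(s)=\psi_r(x+s(y-x))\), by the closed-index argument: the selection equals \(\psi_r\) at the limit point. Hence \(|u(s')-u(s)|=|\psi_r(\cdot s')-\psi_r(\cdot s)|\le L|s'-s|\,\|y-x\|\) by the mean value theorem. A function whose local difference quotients are uniformly bounded by \(L\|y-x\|\) is Lipschitz with that constant on \([0,1]\), by a standard covering or compactness argument. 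I expect this step to be the main technical point; everything else is bookkeeping.

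\textbf{Case \(b_0(z)=1\).} Here \(C_v=\delta_v\), so \(C_v-C_w=\ell(\theta_0+v,z)-\ell(\theta_0+w,z)\). Since \(\ell(\cdot,z)\) is a continuous selection on \(U_{\mathcal D}\), the lemma gives the bound \(\mathsf B_\rho\|v-w\|\).

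\textbf{Case \(\chi_0(z)=1\).} Here \(C_v-C_w=\phi(\theta_0+v)-\phi(\theta_0+w)\) with \(\phi=\ell(\cdot,z)-g_{0,z}\); the \(\ell(\theta_0,z)\) and \(g_{0,z}(\theta_0)\) terms cancel. This \(\phi\) is a continuous selection of the \(C^1\) functions \(\bar\ell_{r,z}-g_{0,z}\), whose gradients are bounded on \(K_\rho\) by \(\mathsf D_\rho\). The lemma therefore gives the bound \(\mathsf D_\rho\|v-w\|\).

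\textbf{Combining.} The two cases give \(|C_v-C_w|\le L^{\rm sel}_{C,\rho}\|v-w\|=\|Y_C(v-w)\|\) with \(q_C=d\). The triangle inequality gives \(\mathsf D_\rho\le 2\mathsf B_\rho\), because \(g_{0,z}\) is itself one of the branches. Hence \(L^{\rm sel}_{C,\rho}\le\max(\mathsf B_\rho,2\mathsf B_\rho)=2\mathsf B_\rho\). Finally, \(\|L I_d\|_{\rm HS}=\sqrt d\,L\le2\sqrt d\,\overline B_\rho\) almost surely.
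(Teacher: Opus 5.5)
Your proposal is correct and follows the paper's proof. Both use the same case split on \(b_0(z)\) and reduce \(C_v-C_w\) to an increment of \(\ell(\cdot,z)\) or of \(\ell(\cdot,z)-g_{0,z}\) along a segment in the convex set \(K_\rho\); the measurability argument (difference quotients, countable dense subset) and the triangle inequality for \(\mathsf D_\rho\le2\mathsf B_\rho\) are also the same. The one difference is that you prove the Lipschitz bound for a continuous selection of \(C^1\) branches directly, by pigeonholing the active index and then extending along \([0,1]\), whereas the paper cites Scholtes' Proposition~4.1.2; your direct argument is sound and avoids any question about applying that result on the closed set \(K_\rho\).
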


\begin{proof}
See \SuppSection{app:worked-examples}.
\end{proof}

Proposition~\ref{prop:finite-selection-crossing-slope} separates pointwise
existence of a crossing-slope operator from the deterministic bound used in
Theorem~\ref{thm:componentwise-hybrid-local-confidence}.  Compact containment
makes \(\mathsf B_\rho(z)\) finite for each observation.  Using the generic
operator in \eqref{eq:finite-selection-crossing-envelope} in
Theorem~\ref{thm:componentwise-hybrid-local-confidence}
additionally requires a known almost-sure bound on \(\mathsf B_\rho\).
Structured operators can avoid the \(\sqrt d\) Hilbert--Schmidt factor in the
generic construction.

\medskip
\noindent\textit{Check and hinge losses.}
For the check loss
\(\ell(\theta;(x,y))=\rho_\tau(y-x^\top\theta)\), where
\(\rho_\tau(u)=u\{\tau-\mathbf1(u<0)\}\) and \(0<\tau<1\)
\cite[Section~1.3]{koenker2005quantile}, write
\(e=y-x^\top\theta_0\) and \(t_v=x^\top v\).  The affine frozen branch of the
check loss has
zero Taylor remainder, and direct evaluation gives
\begin{equation}
\label{eq:check-crossing-term}
 C_v=
 \begin{cases}
  (t_v-e)_+,&e>0,\\
  (e-t_v)_+,&e<0,\\
  \rho_\tau(-t_v),&e=0.
 \end{cases}
\end{equation}
On \(\mathcal D_\rho\), let
\begin{equation}
\label{eq:check-crossing-multiplier}
 A_{\tau,\rho}
 :=\mathbf1\{0<|e|<\rho\|x\|\}
  +\max(\tau,1-\tau)\mathbf1\{e=0\}.
\end{equation}
The case formula in \eqref{eq:check-crossing-term} and the multiplier in
\eqref{eq:check-crossing-multiplier} give
\(|C_v-C_w|\le A_{\tau,\rho}|x^\top(v-w)|\).  Hence the rank-at-most-one operator
\(Y_Ca=A_{\tau,\rho}x^\top a\), with \(q_C=1\), is valid; if
\(\|x\|\le K\) almost surely, one may take \(K_C=K\).

For the binary hinge loss
\(\ell(\theta;(x,y))=(1-yx^\top\theta)_+\), with
\(y\in\{-1,1\}\), set \(e=1-yx^\top\theta_0\) and
\(t_v=yx^\top v\).  The three cases for \(C_v\) are
\((t_v-e)_+\) when \(e>0\), \((e-t_v)_+\) when \(e<0\), and
\((-t_v)_+\) when \(e=0\).  The scalar operator
\begin{equation}
\label{eq:hinge-crossing-operator}
 Y_Ca=\{\mathbf1(e=0)+\mathbf1(0<|e|<\rho\|x\|)\}\,yx^\top a
\end{equation}
satisfies \eqref{eq:pullback-lipschitz-increments}.  Under
\(\|x\|\le K\) almost surely,
the operator \(Y_C\) satisfies \(\|Y_C\|_{\rm HS}\le K\).

\medskip
\noindent\textit{Finite affine maxima and minima.}
Let \(\gamma_r:\mathcal Z\to\mathbb R\) and
\(\beta_r:\mathcal Z\to\mathbb R^d\) be measurable, and let
\(\ell(\theta,z)=\max_{1\le r\le R}
\{\gamma_r(z)+\beta_r(z)^\top\theta\}\).  Form the \(R\)-row matrix
\(W_C(z)\) whose \(r\)th row is
\begin{equation}
\label{eq:affine-maximum-crossing-operator}
 b_0(z)\beta_r(z)^\top
 +\chi_0(z)\{\beta_r(z)-\beta_{r_0(z)}(z)\}^\top.
\end{equation}
The measurability of the coefficients, \(b_0\), and \(r_0\) makes \(W_C\)
measurable.  The inequality
\(|\max_r u_r-\max_r \widetilde u_r|
 \le\max_r|u_r-\widetilde u_r|\)
gives
\(|C_v-C_w|\le\|W_C(v-w)\|_2\).  Thus \(Y_C=W_C\) is valid with
\(q_C=R\).
The construction \(Y_C=W_C\) applies to finite-label multiclass hinge losses and
finite-output structured hinge losses when the score vectors of the losses are affine in
\(\theta\), as in
\cite[Section~3]{crammer2001algorithmic} and
\cite[Section~2.2]{tsochantaridis2005large}.

\medskip
\noindent\textit{Least absolute deviations (LAD).}
For LAD in linear prediction,
\(\ell(\theta;(x,y))=|y-x^\top\theta|\), write
\(e:=y-x^\top\theta_0\) and \(t_v:=x^\top v\).  Take the residual-zero set
\(\mathcal T=\{(\theta,(x,y)):y-x^\top\theta=0\}\) as the interface set,
where the absolute-value loss changes affine branch.  Use
\(\operatorname{sgn}(0)=0\).  The frozen branch is affine, so
\(J_v^0=-\operatorname{sgn}(e)t_v\) and
\(R_v^{\rm fr}=0\), whereas the crossing term is
\begin{equation}
\label{eq:lad-crossing-term}
 C_v
 =|e-t_v|-|e|+\operatorname{sgn}(e)t_v
 =
 \begin{cases}
  2\{\operatorname{sgn}(e)t_v-|e|\}_+, & e\ne0,\\
  |t_v|, & e=0.
 \end{cases}
\end{equation}
The two cases in \eqref{eq:lad-crossing-term} follow by setting
\(u=\operatorname{sgn}(e)t_v\) when \(e\ne0\) and evaluating directly when
\(e=0\); hence the nonsmooth contribution is entirely in \(C_v\).  On
\(\mathcal D_\rho\), define the nonnegative scalar multiplier
\begin{equation}
\label{eq:lad-crossing-multiplier}
 L_\rho(x,y):=\mathbf 1\{e=0\}
 +2\mathbf 1\{0<|e|<\rho\|x\|\}.
\end{equation}
For \(e=0\), the reverse triangle inequality bounds \(|C_v-C_w|\) by
\(|x^\top(v-w)|\).  For \(0<|e|<\rho\|x\|\), applying the one-Lipschitz
map \(r\mapsto r_+\) to \(\operatorname{sgn}(e)t_v-|e|\) and
\(\operatorname{sgn}(e)t_w-|e|\) gives
\[
 |C_v-C_w|
 \le 2|\operatorname{sgn}(e)(t_v-t_w)|
 =2|x^\top(v-w)|.
\]
For \(|e|\ge\rho\|x\|\), \(C_v=C_w=0\).  Therefore
\(|C_v-C_w|\le L_\rho|x^\top(v-w)|\) on \(\mathcal D_\rho\).
The operator \(Y_\rho(x,y):\mathbb R^d\to\mathbb R\), defined by
\(Y_\rho(x,y)a:=L_\rho(x,y)x^\top a\), satisfies
\eqref{eq:pullback-lipschitz-increments} with \(q_C=1\).

\subsection{Componentwise assembled coverage}
\label{subsec:high-probability-endpoint-lifts}
Let \(\Lambda_\rho:\mathcal Z\to[0,\infty]\) be an
observation-level derivative envelope, uniform over update directions in
\(\mathcal D_\rho\), satisfying
\begin{equation}
\label{eq:radial-remainder-smoothness-envelope}
 \Lambda_\rho(z)\ge
 \frac{\chi_0(z)}{(m+1)!}
 \sup_{\substack{u\in\mathcal D_\rho\\0\le t\le1}}
 \|D_\theta^{m+1}g_{0,z}(\theta_0+t u)\|_{\rm op}.
\end{equation}
For \(v\in\mathcal D_\rho\), the definition of the operator norm gives
\begin{align*}
 \left|D_\theta^{m+1}g_{0,z}(\theta_0+t v)[v^{m+1}]\right|
 &\le\left\|D_\theta^{m+1}g_{0,z}(\theta_0+t v)\right\|_{\rm op}
 \|v\|^{m+1}\\
 &\le\rho^{m+1}
 \left\|D_\theta^{m+1}g_{0,z}(\theta_0+t v)\right\|_{\rm op}.
\end{align*}
Thus \(h_v^{\rm fr}\le\rho^{m+1}\Lambda_\rho\).
Fix \(\rho,s,\lambda_J,\eta_C,\lambda_C>0\), and define the restricted
candidate set
\[
 \mathcal D_{\rho,s}
 :=\{v\in\mathcal D:\|v\|\le\rho,\ \|a(v)\|\le s\},
\]
and condition
on one balanced split \(A,B=A^c\), independent of the observations.  Let
\(z\mapsto Y_C(z):\mathbb R^d\to\mathbb R^{q_C}\) be a measurable
matrix-valued crossing-slope operator satisfying
\eqref{eq:pullback-lipschitz-increments}.
For \(v\in\mathcal D_{\rho,s}\), define the nonnegative crossing penalty
\begin{equation}
\label{eq:finite-split-crossing-penalty}
\widehat{\mathsf F}_{C,\rho}(v)
 :=\eta_C v^\top M_{Y_C,A\leftrightarrow B,\lambda_C}v
 +\frac{\kappa_{q_C}^2 r_{Y_C,A\leftrightarrow B,\lambda_C}}
 {\eta_Cn}.
\end{equation}
\par\medskip
\noindent\textit{Assembled confidence endpoints.}
The assembled construction combines the Taylor ellipsoid with separate
confidence events for \(C_v\) and \(R_v^{\rm fr}\).  If \(C_v=0\) for every
\(v\in\mathcal D_{\rho,s}\) on a common full-probability set, the crossing
correction and crossing failure-probability allocation can be omitted.  If
\(R_v^{\rm fr}=0\) for every \(v\in\mathcal D_{\rho,s}\) on a common
full-probability set, the remainder correction and remainder
failure-probability allocation can be omitted.  Let
\(\iota_C,\iota_R\in\{0,1\}\) be deterministic.  Set \(\iota_C=1\) when the
crossing correction is included and \(\iota_C=0\) when the crossing correction
is omitted; set \(\iota_R=1\) when the remainder correction is included and
\(\iota_R=0\) when the remainder correction is omitted.  For a smooth
one-branch loss, \(\mathcal T=\varnothing\) gives the required \(C_v=0\) on
\(\mathcal Z_{\rm PS}\); choosing \(\iota_C=0\) avoids both the \(\alpha_C\)
allocation and the ridge term \(\eta_C\lambda_C\|v\|^2\) that remains in
\eqref{eq:finite-split-crossing-penalty} even when \(Y_C=0\).  An exact order-\(m\)
frozen-branch Taylor expansion gives the required \(R_v^{\rm fr}=0\); for
\(m=1\), an affine frozen branch has \(D_\theta^2g_{0,z}=0\), so
\eqref{eq:radial-remainder-smoothness-envelope} permits \(\Lambda_\rho=0\), while
\(\iota_R=0\) additionally removes the \(\alpha_R\) allocation.  Setting
\(\iota_C=1\) retains the crossing correction, and setting \(\iota_R=1\)
retains the remainder correction, so \((\iota_C,\iota_R)\) covers all four
inclusion patterns.
Fix
\(\alpha,\alpha_J,\alpha_C,\alpha_R\in(0,1)\), put
\(\boldsymbol\alpha=(\alpha_J,\alpha_C,\alpha_R)\), and suppose
\begin{equation}
\label{eq:hybrid-confidence-allocation}
 \alpha_J+\iota_C\alpha_C+\iota_R\alpha_R\le\alpha.
\end{equation}
The superscript
\({\rm hp}\) marks half-widths attached to high-probability events, and
\({\rm hyb}\) marks endpoints obtained by adding the component half-widths.
\Needspace{5\baselineskip}
For the Taylor component, let
\(\widehat U_{J,A,\lambda_J,\alpha_J}\ge0\) be measurable, and define
\begin{equation}
\label{eq:hybrid-taylor-confidence-width}
 \widehat{\mathsf G}_{J,\alpha_J}^{\rm hp}(v)
 :=\left\{
 \langle a(v),M_{X_0,A,\lambda_J}a(v)\rangle
 \widehat U_{J,A,\lambda_J,\alpha_J}
 \right\}^{1/2}.
\end{equation}
When \(\iota_C=1\), fix \(K_C<\infty\) and define
\begin{align}
 \beta_{C,\rho}
 &:=\frac{2\rho K_C+\eta_C\rho^2K_C^2}{n}
 +\frac{2\kappa_{q_C}^2}{\eta_Cn^2}
 \left(\frac{K_C^2}{\lambda_C}+\frac{K_C^4}{\lambda_C^2}\right),
 \label{eq:hybrid-balanced-crossing-modulus}\\
 \widehat\Gamma_{C,\rho,\alpha_C}
 &:=\beta_{C,\rho}
 \left\{\frac n2\log\frac2{\alpha_C}\right\}^{1/2}.
 \label{eq:hybrid-crossing-confidence-correction}
\end{align}
Suppose there is a measurable \(\mathcal Z_0\subseteq\mathcal Z\) with
\(P(\mathcal Z_0)=1\) on which \eqref{eq:pullback-lipschitz-increments} and
\(\|Y_C\|_{\rm HS}\le K_C\) hold.  For every \(i\in[n]\), consider samples
\(S,S'\in\mathcal Z_0^n\) that agree outside coordinate \(i\).  Write
\(P_{n,S}\) and \(\widehat{\mathsf F}_{C,\rho,S}\) for the empirical mean and
crossing penalty computed from \(S\), and use analogous notation for \(S'\).
The replacement calculation in
\SuppResult{Proposition}{prop:balanced-pullback-conditional-concentration}
gives
\begin{equation}
\label{eq:hybrid-crossing-field-replacement-modulus}
 \sup_{v\in\mathcal D_{\rho,s}}
 \left|
 \{P_{n,S}C_v\pm\widehat{\mathsf F}_{C,\rho,S}(v)\}
 -\{P_{n,S'}C_v\pm\widehat{\mathsf F}_{C,\rho,S'}(v)\}
 \right|
 \le\beta_{C,\rho}
 \qquad\text{for either sign}.
\end{equation}
When \(\iota_C=1\), define
\begin{equation}
\label{eq:hybrid-crossing-confidence-width}
 \widehat{\mathsf G}_{C,\rho,\alpha_C}^{\rm hp}(v)
 :=\widehat{\mathsf F}_{C,\rho}(v)
 +\widehat\Gamma_{C,\rho,\alpha_C}.
\end{equation}
When \(\iota_C=0\), set
\(\widehat{\mathsf G}_{C,\rho,\alpha_C}^{\rm hp}\equiv0\).
When \(\iota_R=1\), fix \(K_R<\infty\) and put
\begin{equation}
\label{eq:hybrid-remainder-confidence-width}
 \widehat{\mathsf R}_{\rho,\alpha_R}^{\rm hp}
 :=\rho^{m+1}\left\{P_n\Lambda_\rho
 +K_R\sqrt{\frac{\log(1/\alpha_R)}{2n}}\right\}.
\end{equation}
For independent observations satisfying \(0\le\Lambda_\rho\le K_R\),
Theorem~2 of \cite{hoeffding1963probability}, applied to
\(-\Lambda_\rho(Z_i)\in[-K_R,0]\), gives
\begin{equation}
\label{eq:hybrid-remainder-empirical-mean-event}
 \Pr\left\{P\Lambda_\rho>
 P_n\Lambda_\rho+
 K_R\sqrt{\frac{\log(1/\alpha_R)}{2n}}\right\}
 \le\alpha_R.
\end{equation}
Set \(\widehat{\mathsf R}_{\rho,\alpha_R}^{\rm hp}:=0\) when
\(\iota_R=0\).
The assembled lower and upper endpoints are
\begin{align}
 \widehat{\mathsf U}_{\rho,s,\boldsymbol\alpha}^{\rm hyb}(v)
 &:={}P_n\{J_v^0+C_v\}
 +\widehat{\mathsf G}_{J,\alpha_J}^{\rm hp}(v)
 +\widehat{\mathsf G}_{C,\rho,\alpha_C}^{\rm hp}(v)
 +\widehat{\mathsf R}_{\rho,\alpha_R}^{\rm hp},
 \label{eq:hybrid-upper-endpoint}\\
\widehat{\mathsf L}_{\rho,s,\boldsymbol\alpha}^{\rm hyb}(v)
&:={}P_n\{J_v^0+C_v\}
-\widehat{\mathsf G}_{J,\alpha_J}^{\rm hp}(v)
 -\widehat{\mathsf G}_{C,\rho,\alpha_C}^{\rm hp}(v)
 -\widehat{\mathsf R}_{\rho,\alpha_R}^{\rm hp}.
 \label{eq:hybrid-lower-endpoint}
\end{align}
\begin{theorem}[Assembled local confidence endpoints]
\label{thm:componentwise-hybrid-local-confidence}
Let \(n=2N\) with \(N\ge2\).  Let \(A\) be either a fixed balanced split or a
balanced split randomized independently of the observations, and put
\(B=A^c\).  Fix deterministic \(\iota_C,\iota_R\in\{0,1\}\), where
\(\iota_C=1\) indicates inclusion of the crossing component and \(\iota_R=1\)
indicates inclusion of the remainder component.
Let \(\alpha,\alpha_J,\alpha_C,\alpha_R\in(0,1)\), write
\(\boldsymbol\alpha=(\alpha_J,\alpha_C,\alpha_R)\), and suppose
\eqref{eq:hybrid-confidence-allocation} holds.

For the Taylor component, suppose one of three alternatives holds.
\begin{enumerate}
\item The second moment \(P\|X_0\|^2\) is finite, and
\eqref{eq:main-covariance-matched-subgaussian} holds with \(X=X_0\) for a known
\(\bar\kappa_J\ge1\).  For some
\(c_1\ge4\), let \(t_{\alpha_J}:=\log(c_1/\alpha_J)\).
The quantity \(\varepsilon_N(t_{\alpha_J})\) is defined by
\eqref{eq:conditional-trace-relative-error} with \(\bar\kappa=\bar\kappa_J\);
\(\vartheta_{N,\lambda_J}(t_{\alpha_J})\) is defined by
\eqref{eq:main-pilot-stability-rate} with
\((X,\lambda,\bar\kappa)=(X_0,\lambda_J,\bar\kappa_J)\); and
\(\widehat U_{X_0,A,\lambda_J,\alpha_J}\) is defined by
\eqref{eq:balanced-finite-split-inflated-intercept} with
\((X,\lambda,\alpha,\bar\kappa)
=(X_0,\lambda_J,\alpha_J,\bar\kappa_J)\).  Suppose
\begin{equation}
\label{eq:hybrid-taylor-subgaussian-conditions}
 \varepsilon_N(t_{\alpha_J})<1,
 \qquad
 \vartheta_{N,\lambda_J}(t_{\alpha_J})\le\frac12,
 \qquad
 \widehat U_{J,A,\lambda_J,\alpha_J}
 =\widehat U_{X_0,A,\lambda_J,\alpha_J}.
\end{equation}
\item The random vector \(X_0-PX_0\) is Gaussian, a known \(L_J<\infty\)
satisfies \(\|\operatorname{Cov}(X_0)\|_{\rm op}\le L_J\), and
\[
 \widehat U_{J,A,\lambda_J,\alpha_J}
 =\widehat U_{X_0,A,\lambda_J,\alpha_J}^{\rm G},
\]
where \(\widehat U_{X_0,A,\lambda_J,\alpha_J}^{\rm G}\) is defined by
\eqref{eq:gaussian-finite-split-inflated-intercept} with
\((X,\lambda,\alpha,L)=(X_0,\lambda_J,\alpha_J,L_J)\).
\item A known \(B_J<\infty\) satisfies \(\|X_0\|\le B_J\) almost surely, and
\[
 \widehat U_{J,A,\lambda_J,\alpha_J}
 =\widehat U_{X_0,A,\lambda_J,\alpha_J}^{\rm bd},
\]
where \(\widehat U_{X_0,A,\lambda_J,\alpha_J}^{\rm bd}\) is defined in
Corollary~\ref{cor:bounded-feature-one-split-linear-confidence} with
\((X,\lambda,\alpha,B)=(X_0,\lambda_J,\alpha_J,B_J)\).
\end{enumerate}

For an included crossing component (\(\iota_C=1\)), assume
\(0\in\mathcal D_{\rho,s}\),
\eqref{eq:pullback-lipschitz-increments} holds on \(\mathcal D_\rho\) with
crossing-slope operator \(Y_C\), and a known \(K_C<\infty\) satisfies
\(\|Y_C\|_{\rm HS}\le K_C\) almost surely.  For an absent crossing component
(\(\iota_C=0\)), assume \(C_v=0\) for every
\(v\in\mathcal D_{\rho,s}\) on a single \(P\)-probability-one set.

For an included remainder component (\(\iota_R=1\)), assume
\eqref{eq:radial-remainder-smoothness-envelope} holds and a known
\(K_R<\infty\) satisfies \(0\le\Lambda_\rho\le K_R\) almost surely.
For an absent remainder component (\(\iota_R=0\)), assume
\(R_v^{\rm fr}=0\) for every \(v\in\mathcal D_{\rho,s}\) on a single
\(P\)-probability-one set.

Then
\begin{equation}
\label{eq:hybrid-local-confidence-coverage}
 \Pr\left\{
 \widehat{\mathsf L}_{\rho,s,\boldsymbol\alpha}^{\rm hyb}(v)
 \le P\delta_v\le
 \widehat{\mathsf U}_{\rho,s,\boldsymbol\alpha}^{\rm hyb}(v)
 \text{ for every }v\in\mathcal D_{\rho,s}
 \right\}\ge1-\alpha.
\end{equation}
\end{theorem}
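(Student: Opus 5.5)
The plan is to intersect three component events and then add the component bounds via the frozen-branch decomposition. Write the coverage target as
\[
P\delta_v-P_n\{J_v^0+C_v\}=(P-P_n)J_v^0+(P-P_n)C_v+PR_v^{\rm fr},
\]
which follows from \eqref{eq:frozen-branch-decomposition} integrated under \(P\) (using \(P(\mathcal Z_{\rm PS})=1\)). It therefore suffices to find three events \(E_J,E_C,E_R\) whose failure probabilities are at most \(\alpha_J\), \(\iota_C\alpha_C\) and \(\iota_R\alpha_R\). On them, simultaneously for all \(v\in\mathcal D_{\rho,s}\), we want
\[
|(P-P_n)J_v^0|\le\widehat{\mathsf G}^{\rm hp}_{J,\alpha_J}(v),\qquad
|(P-P_n)C_v|\le\widehat{\mathsf G}^{\rm hp}_{C,\rho,\alpha_C}(v),\qquad
|PR_v^{\rm fr}|\le\widehat{\mathsf R}^{\rm hp}_{\rho,\alpha_R}.
\]
A union bound with \eqref{eq:hybrid-confidence-allocation} then gives \eqref{eq:hybrid-local-confidence-coverage}. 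When \(A\) is randomized independently of the data, I condition on \(A\) and integrate at the end.

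\textbf{Taylor event.} Since \(J_v^0=\langle X_0,a(v)\rangle\), I apply the linear field with \(h=a(v)\). Under alternative 1 this is Theorem~\ref{thm:balanced-one-split-linear-confidence-field}, whose hypotheses are exactly \eqref{eq:hybrid-taylor-subgaussian-conditions}. Alternatives 2 and 3 are Corollaries~\ref{cor:gaussian-one-split-linear-confidence} and~\ref{cor:bounded-feature-one-split-linear-confidence}; these are conditional on \(S_A\), so I average over \(S_A\). Each yields the ellipsoid with probability at least \(1-\alpha_J\), and Cauchy--Schwarz in the \(M_{X_0,A,\lambda_J}\) metric gives the first bound for every \(v\) at once.

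\textbf{Remainder event.} If \(\iota_R=0\), then \(PR_v^{\rm fr}=0\). Otherwise Proposition~\ref{prop:fixed-mask-crossing-reduction} and \(h_v^{\rm fr}\le\rho^{m+1}\Lambda_\rho\) give \(|PR_v^{\rm fr}|\le\rho^{m+1}P\Lambda_\rho\), and \eqref{eq:hybrid-remainder-empirical-mean-event} supplies \(E_R\). If \(\iota_C=0\), then \(C_v=0\) a.s.\ for all \(v\), so \(P_nC_v=PC_v=0\) almost surely.

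\textbf{Crossing event.} This is the main obstacle. I define
\[
Z_\pm:=\sup_{v\in\mathcal D_{\rho,s}}\bigl\{\pm(P-P_n)C_v-\widehat{\mathsf F}_{C,\rho}(v)\bigr\}.
\]
By \eqref{eq:hybrid-crossing-field-replacement-modulus}, each \(Z_\pm\) has bounded differences \(\beta_{C,\rho}\) on \(\mathcal Z_0^n\). McDiarmid's inequality then gives \(Z_\pm\le\mathbb EZ_\pm+\beta_{C,\rho}\sqrt{(n/2)\log(2/\alpha_C)}\) with probability at least \(1-\alpha_C/2\) each, so it remains to show \(\mathbb EZ_\pm\le0\).

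For that expectation I follow the mean-validity route of Theorem~\ref{thm:centered-linear-expected-certificate}, with \(C_v\) replaced by its increment \(C_v-C_0\). Here \(C_0=0\) since \(\delta_0=G_0=0\), and \(0\in\mathcal D_{\rho,s}\). Symmetrize across the balanced folds: \((P-P_n)C_v=\tfrac12[(P-P_A)C_v+(P-P_B)C_v]\). For each orientation, condition on one fold and use a Rademacher/contraction comparison through the crossing-slope operator \eqref{eq:pullback-lipschitz-increments}; the vector contraction inequality accounts for the \(\kappa_{q_C}\) factor. This reduces the held-out supremum to a linear process \(\sup_v\langle\sum_i\epsilon_iY_C(Z_i)^\top g_i,v\rangle\). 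Completing the square in the metric \(M_{Y_C,\cdot,\lambda_C}\) as in \eqref{eq:splitwise-quadratic-conjugate-bound} bounds it by \(\eta_Cv^\top M v\) plus the trace term \(\kappa_{q_C}^2\operatorname{tr}\{\widehat V M^{-1}\}/(\eta_Cn)\). Summing the two orientations produces exactly \(\widehat{\mathsf F}_{C,\rho}\).

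The delicate part is pairing the conditioning fold's metric with the other fold's empirical Gram operator and verifying that the expectation is nonpositive rather than merely bounded. If this fails, I fall back on \SuppResult{Proposition}{prop:balanced-pullback-conditional-concentration}, which I expect packages exactly this mean bound together with the bounded-difference constant.
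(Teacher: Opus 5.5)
Your proposal is correct and follows the paper's proof. You condition on the split; you get the Taylor event from Theorem~\ref{thm:balanced-one-split-linear-confidence-field} or from the Gaussian or bounded-feature corollaries, averaged over \(S_A\); you get the remainder event from the Hoeffding bound on \(\Lambda_\rho\) together with \(h_v^{\rm fr}\le\rho^{m+1}\Lambda_\rho\); and you finish with the union bound and \(\delta_v=J_v^0+R_v^{\rm fr}+C_v\).

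For the crossing component the paper simply invokes \SuppResult{Proposition}{prop:balanced-pullback-conditional-concentration}. Its proof is exactly your sketch: foldwise ghost-sample symmetrization with the pilot penalty as a fixed offset, vector contraction, and completion of the square pairing \(M_{Y_C,A,\lambda_C}\) with \(\widehat V_{Y_C,B}\) and the reverse orientation. This gives \(\mathbb E Z_\pm\le0\), and the McDiarmid lift with modulus \(\beta_{C,\rho}\) follows. So your ``fallback'' is the paper's route.
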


The proof is in \SuppSection{app:crossfit-composite-lemmas}.

Along even sample sizes \(n\to\infty\), fix \(\iota_C,\iota_R\in\{0,1\}\), take
\(0<\rho_n\le1\) and \(s_n>0\), and let
\(\boldsymbol\alpha_n=(\alpha_{J,n},\alpha_{C,n},\alpha_{R,n})\) and
\(\alpha_n\in(0,1)\) satisfy
\(\alpha_{J,n}+\iota_C\alpha_{C,n}+\iota_R\alpha_{R,n}\le\alpha_n\).
Under the uniform conditions in
\SuppResult{Proposition}{supp:prop:assembled-confidence-width-rate}, write
\begin{align*}
 W_n(v)&:=\frac12\left\{
 \widehat{\mathsf U}_{\rho_n,s_n,\boldsymbol\alpha_n}^{\rm hyb}(v)
 -\widehat{\mathsf L}_{\rho_n,s_n,\boldsymbol\alpha_n}^{\rm hyb}(v)
 \right\},\\
 \Sigma_{J,n}&:=\operatorname{Cov}(X_{0,n}),\qquad
 d_{J,n,\lambda_{J,n}}
 :=\operatorname{tr}\{\Sigma_{J,n}
 (\Sigma_{J,n}+\lambda_{J,n}I)^{-1}\},\\
 t_{J,n}&:=\log\frac{c_1}{\alpha_{J,n}},\qquad
 \tau_n:=d_{J,n,\lambda_{J,n}}+t_{J,n}
 +\iota_C\log\frac2{\alpha_{C,n}}
 +\iota_R\log\frac1{\alpha_{R,n}}.
\end{align*}
The uniform half-width then satisfies
\begin{equation}
\label{eq:assembled-local-simple-width-rate}
 \sup_{v\in\mathcal D_{\rho_n,s_n}}W_n(v)
 =O_p\left\{\rho_n\sqrt{\frac{\tau_n}{n}}
 +\iota_R\rho_n^{m+1}\right\}.
\end{equation}

The confidence diameter vanishes in probability when
\(\rho_n\sqrt{\tau_n/n}+\iota_R\rho_n^{m+1}\to0\).  For a smooth loss,
\(\rho_n\sqrt{\tau_n/n}\) is the Taylor fluctuation and
\(\iota_R\rho_n^{m+1}\) is the Taylor remainder; for LAD,
\(\iota_R\rho_n^{m+1}=0\).

For a deterministic \(u\) satisfying
\(\rho_nu\in\mathcal D_{\rho_n,s_n}\), suppose
\(P\delta_{\rho_nu}=-c\rho_n^2+o(\rho_n^2)\) for some \(c>0\).  If
\(m\ge2\), \(\rho_n\to0\), and
\(\sqrt{\tau_n/n}=o(\rho_n)\), then
\eqref{eq:assembled-local-simple-width-rate} gives
\(W_n(\rho_nu)=o_p(\rho_n^2)\).  Because the reported interval at
\(\rho_nu\) has diameter \(2W_n(\rho_nu)\), the event in
\eqref{eq:hybrid-local-confidence-coverage} gives
\[
 \widehat{\mathsf U}_{\rho_n,s_n,\boldsymbol\alpha_n}^{\rm hyb}(\rho_nu)
 \le P\delta_{\rho_nu}+2W_n(\rho_nu)<0
\]
with probability at least \(1-\alpha_n-o(1)\).

Theorem~\ref{thm:componentwise-hybrid-local-confidence} covers the full
increment, including \(C_v\) and \(R_v^{\rm fr}\).  For a globally smooth loss,
or for a piecewise-smooth loss whose reported update paths avoid
\(\mathcal T\) almost surely, \(C_v\equiv0\), and one may take
\(\iota_C=0\); for LAD take \(\iota_R=0\); and for an affine loss take both
indicators zero and \(\alpha_J=\alpha\).  A general piecewise-smooth loss
uses \(J_v^0\), \(C_v\), and \(R_v^{\rm fr}\), and therefore requires
\eqref{eq:pullback-lipschitz-increments} and
\eqref{eq:radial-remainder-smoothness-envelope} in addition to condition
(PS).
The LAD calculation in \eqref{eq:lad-crossing-term} and
\eqref{eq:lad-crossing-multiplier} supplies the component bounds needed to
specialize Theorem~\ref{thm:componentwise-hybrid-local-confidence}.  Under
\(\|x\|\le K\) almost surely,
Corollary~\ref{cor:finite-sample-lad-endpoints} covers \(P\delta_v\)
simultaneously for every \(\|v\|\le\rho\) under arbitrary residual
distributions.  The proof of Corollary~\ref{cor:finite-sample-lad-endpoints}
and the explicit LAD matrices are in
\SuppSection{app:worked-examples}; the finite-grid empirical-Bernstein
alternative is in
\SuppResult{Proposition}{prop:supp-lad-active-empirical-bernstein}.

\begin{corollary}[Simultaneous LAD confidence endpoints]
\label{cor:finite-sample-lad-endpoints}
Let \(n=2N\) with \(N\ge2\), and let \(A\subset[n]\) be a balanced split,
either fixed or randomized independently of the observations; put \(B=A^c\).
Fix \(\theta_0\in\mathbb R^d\) and
\(\rho,\lambda_J,\eta_C,\lambda_C>0\).  On
\(\Theta=\mathcal D=\mathbb R^d\), consider
\(\ell(\theta;(x,y))=|y-x^\top\theta|\) and
\(\mathcal D_\rho=\{v:\|v\|\le\rho\}\).  Suppose a known \(K<\infty\)
satisfies \(\|x\|\le K\) almost surely.

Choose \(\alpha,\alpha_J,\alpha_C\in(0,1)\) with
\(\alpha_J+\alpha_C\le\alpha\).  Let \(\alpha_R\in(0,1)\) be arbitrary and
put \(\boldsymbol\alpha=(\alpha_J,\alpha_C,\alpha_R)\).
\Needspace{13\baselineskip}
Write \(e=y-x^\top\theta_0\), with \(\operatorname{sgn}(0)=0\), and define
\(Y_\rho a:=L_\rho x^\top a\), where \(L_\rho\) is given in
\eqref{eq:lad-crossing-multiplier}.  Use the specified component choices in the
assembled endpoints, evaluating the bounded-feature intercept with the norm
bound \(K\) as in
Corollary~\ref{cor:bounded-feature-one-split-linear-confidence}:
\[
 \begin{gathered}
 m=1,\qquad s=\rho,\qquad
 X_0=-\operatorname{sgn}(e)x,\qquad Y_C=Y_\rho,\\
 (K_C,q_C,\iota_C,\iota_R)=(2K,1,1,0),\qquad
 \widehat U_{J,A,\lambda_J,\alpha_J}
 =\widehat U_{X_0,A,\lambda_J,\alpha_J}^{\rm bd}.
 \end{gathered}
\]
Then, with probability at least \(1-\alpha\),
\begin{equation}
\label{eq:lad-simultaneous-confidence-coverage}
 \widehat{\mathsf L}_{\rho,\rho,\boldsymbol\alpha}^{\rm hyb}(v)
 \le P\delta_v\le
 \widehat{\mathsf U}_{\rho,\rho,\boldsymbol\alpha}^{\rm hyb}(v),
 \qquad \|v\|\le\rho.
\end{equation}
The probability is over the observations when \(A\) is fixed and jointly over
\(A\) and the observations when \(A\) is randomized.
\end{corollary}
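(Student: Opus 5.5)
The corollary is a specialization of Theorem~\ref{thm:componentwise-hybrid-local-confidence}. The plan is to check each hypothesis of that theorem for the LAD loss under the stated component choices, and then read off \eqref{eq:lad-simultaneous-confidence-coverage} from \eqref{eq:hybrid-local-confidence-coverage} with \(\mathcal D_{\rho,s}=\mathcal D_{\rho,\rho}\).

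\textbf{Step 1: Condition (PS) and the decomposition.} With \(\Theta=\mathcal D=U_{\mathcal D}=\mathbb R^d\), I take \(N_{\rm br}=2\) with the affine branches \(\bar\ell_1=y-x^\top\theta\) and \(\bar\ell_2=x^\top\theta-y\), and interface set \(\mathcal T=\{y-x^\top\theta=0\}\). This set is closed, so \(b_v\) is jointly measurable by the remark in Subsection~\ref{subsec:frozen-branch-decomposition}. I set \(r_0(z)=1\) if \(e>0\) and \(r_0(z)=2\) if \(e<0\). Since \(m=1\), \(a(v)=v\) and \(\|a(v)\|=\|v\|\), so the constraint \(s=\rho\) is automatic and \(\mathcal D_{\rho,\rho}=\{\|v\|\le\rho\}\), which contains \(0\). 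The Taylor feature is \(X_0=\chi_0\nabla g_{0}(\theta_0)=-\operatorname{sgn}(e)x\); when \(e=0\) the mask \(\chi_0=0\) matches the convention \(\operatorname{sgn}(0)=0\).

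\textbf{Step 2: Remainder and crossing components.} Each frozen branch is affine, so \(R_v^{\rm fr}\equiv0\) everywhere. This justifies \(\iota_R=0\), and then \(\alpha_R\) plays no role, which is why it may be arbitrary. For the crossing component, \eqref{eq:lad-crossing-term}--\eqref{eq:lad-crossing-multiplier} already show that \(Y_\rho\) satisfies \eqref{eq:pullback-lipschitz-increments} on \(\mathcal D_\rho\) with \(q_C=1\). Its Hilbert--Schmidt norm is \(L_\rho\|x\|\le2K\), which gives \(K_C=2K\). With \(\iota_C=1\), the allocation \eqref{eq:hybrid-confidence-allocation} is exactly \(\alpha_J+\alpha_C\le\alpha\).

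\textbf{Step 3: Taylor component.} I use alternative 3 of Theorem~\ref{thm:componentwise-hybrid-local-confidence} with \(B_J=K\), since \(\|X_0\|\le\|x\|\le K\) almost surely. The intercept is then \(\widehat U^{\rm bd}\) from Corollary~\ref{cor:bounded-feature-one-split-linear-confidence}.

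\textbf{Step 4: Measurability (main obstacle).} The delicate point is the joint-measurability assumption on \((v,z)\mapsto(J_v^0,R_v^{\rm fr},C_v,h_v^{\rm fr})\). Here every term has an explicit formula: \(J_v^0\) is bilinear in \((v,x)\) times a measurable sign, \(C_v\) is given by \eqref{eq:lad-crossing-term}, and \(R_v^{\rm fr}\) and \(h_v^{\rm fr}\) are zero. Each is therefore continuous in \(v\) and measurable in \(z\), hence jointly measurable. Because the crossing field is continuous in \(v\), the supremum over the ball can be reduced to a countable dense subset, so no outer-expectation subtleties arise.

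With all hypotheses verified, Theorem~\ref{thm:componentwise-hybrid-local-confidence} yields the claimed coverage, with the probability taken over the observations when \(A\) is fixed and jointly over \(A\) and the observations when \(A\) is randomized.
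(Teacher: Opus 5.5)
Your proposal is correct and follows essentially the same route as the paper. You verify Condition~(PS) with the two affine branches and the closed residual-zero interface, observe \(R_v^{\rm fr}=0\) and \(\mathcal D_{\rho,\rho}=\mathcal D_\rho\ni0\), use \(L_\rho\le2\) and \(\|x\|\le K\) to get \(\|X_0\|\le K\) and \(\|Y_\rho\|_{\rm HS}\le2K\), and then apply Theorem~\ref{thm:componentwise-hybrid-local-confidence} with the bounded-feature Taylor alternative, exactly as the paper does. The differences are cosmetic: the paper obtains joint measurability from an explicit formula for \(b_v\) rather than the closed-set remark, and it sets \(r_0=1\) for \(e\ge0\), which is the default Condition~(PS) already prescribes at \(e=0\), the case you left unspecified.
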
 
\clearpage
\appendix
\renewcommand{\thesection}{S.\arabic{section}}
\numberwithin{equation}{section}
\numberwithin{figure}{section}
\numberwithin{table}{section}

\section*{Guide to the Supplement}

\begin{center}
\small
\renewcommand{\arraystretch}{1.00}
\begin{tabular}{@{}
 >{\raggedright\arraybackslash}p{0.30\textwidth}
 >{\raggedright\arraybackslash}p{0.64\textwidth}@{}}
\toprule
Main-paper result & Proof location and content \\
\midrule
Propositions~\MainXRefNumber{prop:mean-valid-decision}--
\MainXRefNumber{prop:orientation-selection-mean-cost} &
Section~\ref{supp:sec:decomposition-validity}:
Decision consequences of mean validity, the Gaussian calculation, and the
effect of anchoring an endpoint at the null update. \\

Theorem~\MainXRefNumber{thm:centered-linear-expected-certificate} &
Section~\ref{supp:subsec:linear-mean-valid-proof}:
Complete-subset identities, conditional second-moment calibration, and the
mean-valid linear field. \\

Theorem~\MainXRefNumber{thm:balanced-one-split-linear-confidence};
Corollaries~\MainXRefNumber{cor:gaussian-one-split-linear-confidence}--
\MainXRefNumber{cor:bounded-feature-one-split-linear-confidence} &
Section~\ref{supp:subsec:pilot-conditional-confidence}:
Hilbert-space concentration, observable trace bounds, pilot geometry, and the
Gaussian and bounded-feature specializations. \\

Theorem~\MainXRefNumber{thm:balanced-one-split-linear-confidence-field};
Corollary~\MainXRefNumber{cor:balanced-one-split-oracle-width} &
Sections~\ref{supp:subsec:pilot-conditional-confidence}--
\ref{supp:subsec:finite-split-proofs-rates}:
Regularized covariance control, the one-split ellipsoid, the population-width
bound for the one-split ellipsoid, and the finite prespecified-split extension. \\

Theorem~\MainXRefNumber{thm:componentwise-hybrid-local-confidence};
Equation~(\MainXRefNumber{eq:assembled-local-simple-width-rate}) &
Sections~\ref{app:finite-split-constants}--
\ref{supp:subsec:assembly-uniform-width}:
Bounded-difference conversion, pullback contraction, replacement stability,
component assembly, and the uniform half-width rate. \\

Proposition~\MainXRefNumber{prop:finite-selection-crossing-slope};
Corollary~\MainXRefNumber{cor:finite-sample-lad-endpoints} &
Section~\ref{app:worked-examples}:
Finite-selection crossing slopes, gradient-envelope calculations, and the
LAD specialization, including the direct full-ball width comparison. \\
\bottomrule
\end{tabular}
\end{center}
 \setcounter{section}{0}
\section{Mean validity and decision consequences}
\label{supp:sec:decomposition-validity}

In this section, \(\mathcal D\) is the deterministic update space,
\(\delta_v\) is the population-risk increment indexed by
\(v\in\mathcal D\), and \(P\) and \(P_n\) are the population and empirical
averages from \MainSection{sec:finite-sample-certificates}.  Upper and lower
mean validity refer to
\MainResult{Definition}{def:expected-local-endpoint}.

\begin{proof}[Proof of
\MainResult{Proposition}{prop:mean-valid-decision}]
Put
\[
 G_{\mathsf U}:=\sup_{w\in\mathcal D}
 \{P\delta_w-\widehat{\mathsf U}_{\mathcal D}(w)\}.
\]
Mean validity implies
\((G_{\mathsf U})_+\in L^1\) and \(\mathbb E G_{\mathsf U}\le0\).

For the oracle bound, assume that \(P\delta_{\widehat v}\) is integrable and
fix \(v\in\mathcal D\).  On the probability-one optimization event,
\[
 P\delta_{\widehat v}
 -\widehat{\mathsf U}_{\mathcal D}(v)-\varepsilon_{\rm opt}
 \le P\delta_{\widehat v}
 -\widehat{\mathsf U}_{\mathcal D}(\widehat v)
 \le G_{\mathsf U}.
\]
Because \(P\delta_{\widehat v}\in L^1\),
\(P_n\delta_v\in L^1\) for deterministic \(v\), and
\(\widehat{\mathsf A}_{\mathcal D}(v)\in L^1\),
\(P\delta_{\widehat v}
-\widehat{\mathsf U}_{\mathcal D}(v)-\varepsilon_{\rm opt}\in L^1\), so
\((G_{\mathsf U})_-
\le\{P\delta_{\widehat v}
 -\widehat{\mathsf U}_{\mathcal D}(v)-\varepsilon_{\rm opt}\}_-
\in L^1\); hence \(G_{\mathsf U}\in L^1\).  Taking expectations and using
\(\mathbb E P_n\delta_v=P\delta_v\) gives
\[
 \mathbb E\{P\delta_{\widehat v}\}
 \le P\delta_v+\mathbb E\widehat{\mathsf A}_{\mathcal D}(v)
 +\varepsilon_{\rm opt}+\mathbb E G_{\mathsf U}
 \le P\delta_v+\mathbb E\widehat{\mathsf A}_{\mathcal D}(v)
 +\varepsilon_{\rm opt}.
\]
Taking the infimum over \(v\in\mathcal D\) proves
\MainEquation{eq:mean-valid-risk-oracle}.

For the selected-gap bound, let \(\widetilde v\) be a measurable same-sample
selector for which
\((P-P_n)\delta_{\widetilde v}\) and
\(\widehat{\mathsf A}_{\mathcal D}(\widetilde v)\) are integrable, and set
\[
 H_{\widetilde v}:=(P-P_n)\delta_{\widetilde v}
 -\widehat{\mathsf A}_{\mathcal D}(\widetilde v)
 =P\delta_{\widetilde v}
 -\widehat{\mathsf U}_{\mathcal D}(\widetilde v)
 \le G_{\mathsf U}.
\]
Since \(H_{\widetilde v}\in L^1\), the inequality
\((G_{\mathsf U})_-\le(H_{\widetilde v})_-\) and mean validity imply
\(G_{\mathsf U}\in L^1\).  Therefore
\[
 \mathbb E\{(P-P_n)\delta_{\widetilde v}\}
 -\mathbb E\widehat{\mathsf A}_{\mathcal D}(\widetilde v)
 =\mathbb E H_{\widetilde v}
 \le\mathbb E G_{\mathsf U}\le0.
\]
Adding \(\mathbb E\widehat{\mathsf A}_{\mathcal D}(\widetilde v)\) to both
sides proves
\MainEquation{eq:mean-valid-selected-gap}.
\end{proof}

\begin{proof}[Proof of
\MainResult{Proposition}{prop:orientation-selection-mean-cost}]
Write
\[
 m=\frac{\overline X_A+\overline X_B}{2},
 \qquad D=\overline X_A-\overline X_B.
\]
For either individual orientation,
\[
 \sup_{h\in\mathbb R}\{\mu h-U_A(h)\}
 =\frac{(\mu-\overline X_A)^2}{2\eta M}-c_N,
\]
and
\[
 \mathbb E\frac{(\mu-\overline X_A)^2}{2\eta M}
 =\frac{\sigma^2/N}{2\eta M}=c_N.
\]
Exchanging \(A\) and \(B\) gives
\(\mathbb E\sup_h\{\mu h-U_B(h)\}=0\), so both individual orientations are
exactly mean-valid.  The lower envelope satisfies
\[
 \min\{\overline X_Ah,\overline X_Bh\}
 =mh-\frac{|D|}{2}|h|,
\]
and hence
\[
 \sup_{h\in\mathbb R}\{\mu h-U_\wedge(h)\}
 =\frac{\{ |\mu-m|+|D|/2\}^2}{2\eta M}-c_N.
\]
The variables \(\mu-m\) and \(D/2\) are jointly Gaussian, and
    \begin{equation}
    \label{eq:orientation-selection-variance}
     \operatorname{Var}(\mu-m)=\operatorname{Var}(D/2)
     =\frac{\sigma^2}{2N},\qquad
     \operatorname{Cov}(\mu-m,D/2)=0.
    \end{equation}
The zero covariance in \eqref{eq:orientation-selection-variance} and joint
Gaussianity make \(\mu-m\) and \(D/2\) independent.  Writing
\(\tau^2=\sigma^2/(2N)\) and using
\(\mathbb E|N(0,\tau^2)|=\tau\sqrt{2/\pi}\) gives
\begin{align*}
 \mathbb E\{|\mu-m|+|D|/2\}^2
 &=2\tau^2+2\{\tau\sqrt{2/\pi}\}^2\\
 &=\frac{\sigma^2}{N}\left(1+\frac2\pi\right).
\end{align*}
Therefore
\[
 \mathbb E\sup_h\{\mu h-U_\wedge(h)\}
 =\frac{\sigma^2}{2\eta MN}\left(1+\frac2\pi\right)-c_N
 =\frac{\sigma^2}{\pi\eta MN}=d_N.
\]
The equality
\(\mathbb E\sup_h\{\mu h-U_\wedge(h)\}=d_N\) proves the defect in
\MainResult{Proposition}{prop:orientation-selection-mean-cost}.  For the
additive correction,
\begin{equation}
\label{eq:orientation-selection-shift}
 \sup_h\{\mu h-[U_\wedge(h)+d_N]\}
 =\sup_h\{\mu h-U_\wedge(h)\}-d_N.
\end{equation}
The expectation of
\(\sup_h\{\mu h-U_\wedge(h)\}-d_N\) is zero, so \(U_\wedge+d_N\) is
exactly mean-valid.

Finally,
\begin{equation}
\label{eq:orientation-cross-calibrated-supremum}
 \sup_{h\in\mathbb R}\{\mu h-U_{\rm cf}(h)\}
 =\frac{(\mu-m)^2}{2\eta M}-\frac{D^2}{8\eta M}.
\end{equation}
Equation~\eqref{eq:orientation-selection-variance} gives
\[
 \mathbb E(\mu-m)^2=\frac{\sigma^2}{2N}
 =\mathbb E\frac{D^2}{4}.
\]
Consequently,
\[
 \mathbb E\left\{
 \frac{(\mu-m)^2}{2\eta M}-\frac{D^2}{8\eta M}
 \right\}=0,
\]
so \eqref{eq:orientation-cross-calibrated-supremum} makes the
cross-calibrated field exactly mean-valid.
\end{proof}

If an upper endpoint satisfies
\(\widehat{\mathsf U}_{\mathcal D}(0)=0\), then
\(G_{\mathsf U}\ge0\); the analogous lower-endpoint condition gives
\(G_{\mathsf L}\ge0\).  Lemma~\ref{supp:lem:null-anchor-obstruction} shows
that mean validity forces \(G_{\mathsf U}=0\) almost surely for the upper
field and \(G_{\mathsf L}=0\) almost surely for the lower field.

\begin{lemma}
\label{supp:lem:null-anchor-obstruction}
Suppose \(0\in\mathcal D\), \(\delta_0\equiv0\), and
\(\widehat{\mathsf U}_{\mathcal D}(0)=0\) almost surely.  If
\(\widehat{\mathsf U}_{\mathcal D}\) is mean-valid, then
\(P\delta_v\le\widehat{\mathsf U}_{\mathcal D}(v)\) simultaneously for all
\(v\in\mathcal D\) on one event of probability one.  If a mean-valid lower
field satisfies \(\widehat{\mathsf L}_{\mathcal D}(0)=0\) almost surely, then
\(\widehat{\mathsf L}_{\mathcal D}(v)\le P\delta_v\) simultaneously for all
\(v\in\mathcal D\) on one event of probability one.
\end{lemma}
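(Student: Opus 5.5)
The argument rests on the fact that \(G_{\mathsf U}\) is a nonnegative random variable with nonpositive mean, and such a variable must vanish almost surely. The lower field is handled the same way.

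\emph{Upper field.} Since \(0\in\mathcal D\) and \(\delta_0\equiv0\), we have \(P\delta_0=0\). Let \(E_0\) be the probability-one event on which \(\widehat{\mathsf U}_{\mathcal D}(0)=0\). On \(E_0\), taking \(v=0\) in the supremum gives
\[
 G_{\mathsf U}=\sup_{v\in\mathcal D}\{P\delta_v-\widehat{\mathsf U}_{\mathcal D}(v)\}\ge P\delta_0-\widehat{\mathsf U}_{\mathcal D}(0)=0,
\]
so \(G_{\mathsf U}\ge0\) almost surely and \((G_{\mathsf U})_-=0\) almost surely. By \MainResult{Definition}{def:expected-local-endpoint}, \(G_{\mathsf U}\) is measurable, its extended expectation is well defined, and \(\mathbb E G_{\mathsf U}\le0\). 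Because the negative part vanishes, \(\mathbb E G_{\mathsf U}=\mathbb E(G_{\mathsf U})_+\), and a nonnegative variable with nonpositive expectation is zero almost surely. Let \(E_1\) be the event \(\{G_{\mathsf U}=0\}\). On \(E_0\cap E_1\), which has probability one, every \(v\in\mathcal D\) satisfies
\[
 P\delta_v-\widehat{\mathsf U}_{\mathcal D}(v)\le G_{\mathsf U}=0 .
\]
Since this single event does not depend on \(v\), the bound \(P\delta_v\le\widehat{\mathsf U}_{\mathcal D}(v)\) holds simultaneously for all \(v\).

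\emph{Lower field.} Repeat the argument with \(G_{\mathsf L}=\sup_v\{\widehat{\mathsf L}_{\mathcal D}(v)-P\delta_v\}\). Taking \(v=0\) gives \(G_{\mathsf L}\ge\widehat{\mathsf L}_{\mathcal D}(0)-0=0\) almost surely. Mean validity then forces \(G_{\mathsf L}=0\) almost surely, and on that probability-one event \(\widehat{\mathsf L}_{\mathcal D}(v)\le P\delta_v\) for every \(v\).

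The only delicate point is the measure-theoretic one: the conclusion must hold on a single event rather than for each \(v\) separately. This comes for free because the argument only uses the measurable scalar \(G_{\mathsf U}\) (respectively \(G_{\mathsf L}\)) and the fixed events \(E_0\) and \(E_1\), never a union over uncountably many \(v\).
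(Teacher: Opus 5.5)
Your proof is correct and follows the paper's argument: anchoring at \(v=0\) makes the supremum nonnegative, mean validity makes its expectation nonpositive, so it vanishes almost surely, and that single event gives the bound for every \(v\) simultaneously (the lower field is handled the same way). The only difference is cosmetic. You intersect with the probability-one event \(\{\widehat{\mathsf U}_{\mathcal D}(0)=0\}\), whereas the paper modifies the field on a null set; your version has the small advantage of not needing to check that the modified supremum is still measurable.
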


\begin{proof}
After modifying \(\widehat{\mathsf U}_{\mathcal D}\) on a measurable null
set, assume that
\(\widehat{\mathsf U}_{\mathcal D}(0)=0\) everywhere.  Changing
\(\widehat{\mathsf U}_{\mathcal D}\) only on the measurable null set
preserves mean validity and the almost-sure conclusion.  Let
\(
 H:=\sup_{v\in\mathcal D}
 \{P\delta_v-\widehat{\mathsf U}_{\mathcal D}(v)\}
\).
The measurability requirement in
\MainResult{Definition}{def:expected-local-endpoint} applies to \(H\).
The variable \(H\) is nonnegative because the indexed gap is zero at \(v=0\),
while mean validity gives \(\mathbb EH\le0\).  Hence \(H=0\) almost surely,
proving the simultaneous assertion.

For the lower field, let
\[
 H_-:=\sup_{v\in\mathcal D}
 \{\widehat{\mathsf L}_{\mathcal D}(v)-P\delta_v\}.
\]
The equality \(\widehat{\mathsf L}_{\mathcal D}(0)=0\) gives \(H_-\ge0\),
and lower mean validity gives
\(\mathbb EH_-\le0\).  Hence \(H_-=0\) almost surely.  The identity
\(H_-=0\) is equivalent to
\(\widehat{\mathsf L}_{\mathcal D}(v)\le P\delta_v\) simultaneously for all
\(v\).
\end{proof}
 \Needspace{6\baselineskip}
\section{Linear Cross-Fitted Calibration}
\label{app:crossfit-linear-lemmas}

Throughout this section we use the notation of
\MainSection{sec:finite-sample-certificates}; the observations
\(Z_1,\ldots,Z_n\) remain independent with common law \(P\).  Thus \(n\ge4\),
\([n]=\{1,\ldots,n\}\), \(n_{\rm p}\in\{2,\ldots,\lfloor n/2\rfloor\}\),
\(n_{\rm e}=n-n_{\rm p}\),
\(\mathfrak A_{n_{\rm p}}=\{A\subset[n]:|A|=n_{\rm p}\}\), and
\(B_A=[n]\setminus A\).  For a split-indexed quantity \(g_A\),
\[
 \operatorname{Av}_A g_A
 :=\binom{n}{n_{\rm p}}^{-1}
 \sum_{A\in\mathfrak A_{n_{\rm p}}}g_A.
\]
For \(X=X(Z)\in L^2(P;\mathcal H)\), let \(\overline X_I\) and
\(\widehat\Sigma_{X,I}\) be the empirical mean and unbiased covariance on a
fold \(I\), put
\(M_{X,A,\lambda}:=\widehat\Sigma_{X,A}+\lambda I\) for \(\lambda>0\),
and write \(f_h(z):=\langle X(z),h\rangle\).

\subsection{Second-moment calibration and the mean-valid field}
\label{supp:subsec:linear-mean-valid-proof}

The quadratic penalty in
\MainResult{Theorem}{thm:centered-linear-expected-certificate} is recovered by
averaging over all pilot folds.

\begin{lemma}[Complete-subset averaging identities]
\label{lem:crossfit-u-average-identities}
For every realized sample,
\begin{align}
 \operatorname{Av}_{A}\widehat\Sigma_{X,A}&=\widehat\Sigma_X,
 \label{eq:crossfit-average-pilot-covariance}\\
 \operatorname{Av}_{A}(\overline X_{B_A}-\overline X_A)^{\otimes2}
 &=\frac{n}{n_{\rm p}n_{\rm e}}\widehat\Sigma_X.
 \label{eq:crossfit-average-mean-difference}
\end{align}
Equation~\eqref{eq:crossfit-average-pilot-covariance} also holds with \(A\)
replaced by \(B_A\).
\end{lemma}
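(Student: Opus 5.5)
The plan is to prove both identities by direct inclusion counting, using the pairwise representation \MainEquation{eq:sample-covariance-pairwise} of the unbiased covariance. Throughout, write \(T:=\binom{n}{n_{\rm p}}\) for the number of pilot folds.

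For \eqref{eq:crossfit-average-pilot-covariance}, I write
\[
 \widehat\Sigma_{X,A}=\frac{1}{n_{\rm p}(n_{\rm p}-1)}\sum_{\{i,j\}\subset A}(X_i-X_j)^{\otimes2}.
\]
A fixed unordered pair \(\{i,j\}\) lies in exactly \(\binom{n-2}{n_{\rm p}-2}\) folds \(A\in\mathfrak A_{n_{\rm p}}\). Dividing by \(T\) gives the ratio
\[
 \frac{\binom{n-2}{n_{\rm p}-2}}{\binom{n}{n_{\rm p}}}=\frac{n_{\rm p}(n_{\rm p}-1)}{n(n-1)}.
\]
The average is therefore
\[
 \frac{1}{n(n-1)}\sum_{\{i,j\}\subset[n]}(X_i-X_j)^{\otimes2}=\widehat\Sigma_X.
\]
The \(B_A\) version follows by the same count with \(n_{\rm e}\) in place of \(n_{\rm p}\), since \(A\mapsto B_A\) is a bijection onto \(\mathfrak A_{n_{\rm e}}\). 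This uses \(n_{\rm e}\ge2\), which holds because \(n_{\rm e}\ge n_{\rm p}\ge2\).

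For \eqref{eq:crossfit-average-mean-difference}, both sides are invariant under translating all \(X_i\) by a common vector. The left side is a mean difference, and the right side is a covariance. So I may assume \(\overline X=0\). Then \(n_{\rm p}\overline X_A+n_{\rm e}\overline X_{B_A}=0\), which gives
\[
 \overline X_{B_A}-\overline X_A=-\frac{n}{n_{\rm e}}\overline X_A=-\frac{n}{n_{\rm e}n_{\rm p}}\sum_{i\in A}X_i.
\]
Expanding the tensor square yields diagonal terms \(X_i^{\otimes2}\) and cross terms \(X_i\otimes X_j\) for \(i\ne j\).
\begin{itemize}
\item Each \(i\) lies in a fraction \(n_{\rm p}/n\) of the folds.
\item Each ordered pair \(i\ne j\) lies in a fraction \(n_{\rm p}(n_{\rm p}-1)/\{n(n-1)\}\) of the folds.
\end{itemize}
Using \(\sum_{i\ne j}X_i\otimes X_j=-\sum_iX_i^{\otimes2}\), which holds because \(\sum_iX_i=0\), I get
\[
 \operatorname{Av}_A\Bigl(\sum_{i\in A}X_i\Bigr)^{\otimes2}
 =\frac{n_{\rm p}}{n}\Bigl(1-\frac{n_{\rm p}-1}{n-1}\Bigr)\sum_iX_i^{\otimes2}
 =\frac{n_{\rm p}n_{\rm e}}{n(n-1)}\sum_iX_i^{\otimes2}.
\]
Since \(\overline X=0\), \(\sum_iX_i^{\otimes2}=(n-1)\widehat\Sigma_X\). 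Multiplying by \(n^2/(n_{\rm e}n_{\rm p})^2\) therefore gives
\[
 \frac{n^2}{n_{\rm e}^2n_{\rm p}^2}\cdot\frac{n_{\rm p}n_{\rm e}}{n}\,\widehat\Sigma_X=\frac{n}{n_{\rm p}n_{\rm e}}\widehat\Sigma_X,
\]
as claimed.

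The only delicate point is bookkeeping: ordered versus unordered pairs, and the cancellation in the cross-term coefficient. All sums are finite, so the operator identities hold pointwise for every realization, and no measurability or integrability issues arise.
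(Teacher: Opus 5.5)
Your proof is correct and follows the paper's argument essentially step for step. For the first identity you use the same pair-inclusion ratio $\binom{n-2}{n_{\rm p}-2}/\binom{n}{n_{\rm p}}=n_{\rm p}(n_{\rm p}-1)/\{n(n-1)\}$. For the second, your translation to $\overline X=0$ is the paper's centering $\widetilde X_i=X_i-\overline X$. You then use the same singleton and ordered-pair inclusion fractions, the same cancellation $\sum_{i\ne j}\widetilde X_i\otimes\widetilde X_j=-\sum_i\widetilde X_i^{\otimes2}$, and the same rescaling by $n/n_{\rm e}$. You also spell out the $B_A$ case through the bijection $A\mapsto B_A$ onto $\mathfrak A_{n_{\rm e}}$, using $n_{\rm e}\ge2$, which the paper leaves implicit.
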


\begin{proof}
In the pairwise covariance formula
\MainEquation{eq:sample-covariance-pairwise}, a fixed
pair \(\{i,j\}\) belongs to
exactly \(\binom{n-2}{n_{\rm p}-2}\) pilot sets.  Dividing by
\(\binom{n}{n_{\rm p}}\) gives the inclusion probability
\(n_{\rm p}(n_{\rm p}-1)/\{n(n-1)\}\).  Therefore
\[
 \operatorname{Av}_A\widehat\Sigma_{X,A}
 =\frac1{n_{\rm p}(n_{\rm p}-1)}
 \frac{n_{\rm p}(n_{\rm p}-1)}{n(n-1)}
 \sum_{\{i,j\}\subset[n]}(X_i-X_j)^{\otimes2}
 =\widehat\Sigma_X,
\]
which is \eqref{eq:crossfit-average-pilot-covariance}.

To prove \eqref{eq:crossfit-average-mean-difference}, put
\(\widetilde X_i:=X_i-\overline X\), so that
\(\sum_i\widetilde X_i=0\), and use
\[
 \overline X_A-\overline X=\frac1{n_{\rm p}}\sum_{i\in A}\widetilde X_i.
\]
A fixed index belongs to a uniform pilot set with probability
\(n_{\rm p}/n\), while a fixed ordered pair of distinct indices belongs with
probability \(n_{\rm p}(n_{\rm p}-1)/\{n(n-1)\}\).  Hence
\begin{align*}
 \operatorname{Av}_{A}
 (\overline X_A-\overline X)^{\otimes2}
 &=\frac1{n_{\rm p}^2}\left[
 \frac{n_{\rm p}}n\sum_i\widetilde X_i^{\otimes2}
 +\frac{n_{\rm p}(n_{\rm p}-1)}{n(n-1)}
 \sum_{i\ne j}\widetilde X_i\otimes\widetilde X_j\right]
 =\frac{n_{\rm e}}{n_{\rm p}n}\widehat\Sigma_X,
\end{align*}
where
\(\sum_{i\ne j}\widetilde X_i\otimes\widetilde X_j
 =-\sum_i\widetilde X_i^{\otimes2}\).
Multiplying the relation
\(\overline X_{B_A}-\overline X_A=n(\overline X-\overline X_A)/n_{\rm e}\)
by \(n^2/n_{\rm e}^2\) yields
\eqref{eq:crossfit-average-mean-difference}.
\end{proof}

To identify the conditional expectation of \(\widehat T_{X,A,\lambda}\),
condition on a fixed pilot fold.

\begin{lemma}[Conditional second-moment calibration]
\label{lem:conditional-second-moment-calibration}
For every fixed split \(A\in\mathfrak A_{n_{\rm p}}\),
\[
 \mathbb E_{B_A}\widehat T_{X,A,\lambda}
 =\mathbb E_{B_A}\|PX-\overline X\|_{M_{X,A,\lambda}^{-1}}^2
\quad\text{almost surely in }S_A.
\]
\end{lemma}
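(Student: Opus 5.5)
We prove the identity by conditioning on \(S_A\), so that \(M:=M_{X,A,\lambda}\) is a fixed bounded positive operator with \(M\succeq\lambda I\), and then computing both conditional expectations in terms of the standardized vectors \(b_A\) and \(\xi_A\) of \MainEquation{eq:balanced-standardized-fold-errors}. That notation was introduced for balanced folds, but the same definitions make sense for general sizes \(n_{\rm p}+n_{\rm e}=n\), and we use them that way. Conditionally on \(S_A\), the vector \(b_A=M^{-1/2}(PX-\overline X_A)\) is fixed. The vector \(\xi_A=M^{-1/2}(\overline X_{B_A}-PX)\) is an average of \(n_{\rm e}\) i.i.d.\ centered vectors \(M^{-1/2}(X_i-PX)\), each with covariance \(M^{-1/2}\Sigma_XM^{-1/2}\). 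This covariance is trace class, because \(X\in L^2(P;\mathcal H)\) and \(M^{-1}\preceq\lambda^{-1}I\).

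The first step is to use the algebraic identity already recorded in \MainEquation{eq:unequal-calibration-cross-term}, namely
\[
 \|PX-P_nX\|_{M^{-1}}^2-\widehat T_{X,A,\lambda}
 =-\frac{2n_{\rm p}}n\langle b_A,\xi_A\rangle
 +\frac{n_{\rm e}-n_{\rm p}}n\Bigl\{\|\xi_A\|^2-\frac1{n_{\rm e}}\operatorname{tr}(\widehat\Sigma_{X,B_A}M^{-1})\Bigr\}.
\]
I would check it quickly by substituting \(M^{-1/2}(PX-P_nX)=\tfrac{n_{\rm p}}n b_A-\tfrac{n_{\rm e}}n\xi_A\) and \(M^{-1/2}(\overline X_{B_A}-\overline X_A)=b_A+\xi_A\) into \MainEquation{eq:crossfit-single-split-calibration}. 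The coefficients of \(\|b_A\|^2\) are then \(n_{\rm p}^2/n^2\) on both sides. The coefficients of \(\|\xi_A\|^2\) are \(n_{\rm e}^2/n^2\) and \(n_{\rm p}^2/n^2\), whose difference is \((n_{\rm e}-n_{\rm p})n/n^2\).

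It therefore suffices to show that both terms on the right have conditional mean zero.

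\textbf{Cross term.} We have \(\mathbb E_{B_A}\xi_A=0\), with \(\mathbb E_{B_A}\|\xi_A\|<\infty\) since \(X\in L^2\). Because \(b_A\) is fixed given \(S_A\), this gives \(\mathbb E_{B_A}\langle b_A,\xi_A\rangle=0\).

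\textbf{Bracketed term.} We need two computations.
\begin{itemize}
\item By independence of the evaluation observations, \(\mathbb E_{B_A}\|\xi_A\|^2=n_{\rm e}^{-1}\operatorname{tr}(M^{-1/2}\Sigma_XM^{-1/2})=n_{\rm e}^{-1}\operatorname{tr}(\Sigma_XM^{-1})\). The cross terms \(i\ne j\) vanish by centering.
\item By unbiasedness of the pairwise covariance formula \MainEquation{eq:sample-covariance-pairwise}, \(\mathbb E_{B_A}\widehat\Sigma_{X,B_A}=\Sigma_X\). Since \(n_{\rm e}\ge n_{\rm p}\ge2\), we have \(|B_A|\ge2\), so this formula applies.
\end{itemize}
To get \(\mathbb E_{B_A}\operatorname{tr}(\widehat\Sigma_{X,B_A}M^{-1})=\operatorname{tr}(\Sigma_XM^{-1})\), I would avoid exchanging the trace with the expectation for general operators. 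Instead, write \(\operatorname{tr}\{(X_i-X_j)^{\otimes2}M^{-1}\}=\|X_i-X_j\|_{M^{-1}}^2\). Its expectation is \(2\operatorname{tr}(\Sigma_XM^{-1})\), which follows by expanding around \(PX\). This computation is the same one as for \(\|\xi_A\|^2\). The two contributions then cancel.

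The main obstacle is only bookkeeping of integrability and measurability. We need \(\|X_i-PX\|^2_{M^{-1}}\le\lambda^{-1}\|X_i-PX\|^2\), which is integrable. We also need the conditional expectation given \(S_A\) to be computed as an integral over the independent \(B_A\) coordinates, by Fubini and independence, for almost every \(S_A\). Once these are in place, all terms are integrable and the identity holds almost surely in \(S_A\). Finally, note that \(P_nX=\overline X\), so the left side of \MainEquation{eq:unequal-calibration-cross-term} is exactly the quantity in the lemma.
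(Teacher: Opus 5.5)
Your proof is correct and uses the same ingredients as the paper's proof: the conditional vanishing of the cross term, \(\mathbb E_{B_A}\|\xi_A\|^2=n_{\rm e}^{-1}\operatorname{tr}(\Sigma_XM_{X,A,\lambda}^{-1})\), and unbiasedness of the evaluation-fold covariance obtained by taking expectations in the pairwise formula. The only difference is bookkeeping. You show term by term that the algebraic difference identity for \(\|PX-P_nX\|_{M_{X,A,\lambda}^{-1}}^2-\widehat T_{X,A,\lambda}\) has conditional mean zero, whereas the paper computes the two conditional expectations separately and checks that the trace coefficients agree, \(n_{\rm p}^2/(n^2n_{\rm e})+(n_{\rm e}-n_{\rm p})/(n_{\rm e}n)=n_{\rm e}/n^2\).
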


\begin{proof}
Condition on \(S_A\).  Since
\[
 PX-\overline X=\frac{n_{\rm p}}n(PX-\overline X_A)
 +\frac{n_{\rm e}}n(PX-\overline X_{B_A}),
\]
and \(\mathbb E_{B_A}(PX-\overline X_{B_A})=0\), the cross term vanishes after
conditioning on \(S_A\).  Also
\(\mathbb E_{B_A}\|PX-\overline X_{B_A}\|_{M_{X,A,\lambda}^{-1}}^2
=n_{\rm e}^{-1}\operatorname{tr}\{\Sigma_XM_{X,A,\lambda}^{-1}\}\).
Hence
\begin{equation}
\label{eq:conditional-full-mean-error}
 \mathbb E_{B_A}\|PX-\overline X\|_{M_{X,A,\lambda}^{-1}}^2
 =\frac{n_{\rm p}^2}{n^2}
 \|PX-\overline X_A\|_{M_{X,A,\lambda}^{-1}}^2
 +\frac{n_{\rm e}}{n^2}
 \operatorname{tr}\{\Sigma_XM_{X,A,\lambda}^{-1}\}.
\end{equation}
The two terms in \(\widehat T_{X,A,\lambda}\) have conditional expectations
\begin{align}
 \mathbb E_{B_A}
 \|\overline X_{B_A}-\overline X_A\|_{M_{X,A,\lambda}^{-1}}^2
 &=\|PX-\overline X_A\|_{M_{X,A,\lambda}^{-1}}^2
 +\frac1{n_{\rm e}}
 \operatorname{tr}\{\Sigma_XM_{X,A,\lambda}^{-1}\},
 \label{eq:conditional-mean-difference-moment}\\
 \mathbb E_{B_A}
 \operatorname{tr}\{\widehat\Sigma_{X,B_A}M_{X,A,\lambda}^{-1}\}
 &=\operatorname{tr}\{\Sigma_XM_{X,A,\lambda}^{-1}\}.
 \label{eq:conditional-evaluation-covariance-moment}
\end{align}
Equality \eqref{eq:conditional-mean-difference-moment} uses the decomposition
\[
 \overline X_{B_A}-\overline X_A=(PX-\overline X_A)+(\overline X_{B_A}-PX)
\]
and a vanishing cross term.  Equality
\eqref{eq:conditional-evaluation-covariance-moment} is the unbiasedness of the
sample covariance, which also follows directly by taking expectations in
\MainEquation{eq:sample-covariance-pairwise}.  Substituting
\eqref{eq:conditional-mean-difference-moment} and
\eqref{eq:conditional-evaluation-covariance-moment} into the single-split
calibration statistic
\MainEquation{eq:crossfit-single-split-calibration}, the
coefficient of
\(\operatorname{tr}\{\Sigma_XM_{X,A,\lambda}^{-1}\}\) becomes
\[
 \frac{n_{\rm p}^2}{n^2n_{\rm e}}
 +\frac{n_{\rm e}-n_{\rm p}}{n_{\rm e}n}=\frac{n_{\rm e}}{n^2}.
\]
Thus
\[
 \mathbb E_{B_A}\widehat T_{X,A,\lambda}
 =\mathbb E_{B_A}
 \|PX-\overline X\|_{M_{X,A,\lambda}^{-1}}^2,
\]
which is the equality in \eqref{eq:conditional-full-mean-error}.
\end{proof}

Proposition~\ref{prop:conditional-linear-intercept-optimality} substitutes the
conditional equality in Lemma~\ref{lem:conditional-second-moment-calibration}
into a completion of squares over \(K\).  The completed square yields the
conditional mean-validity bound and quantifies the cost of restricting the
optimization domain to \(K\).

\Needspace{0.70\textheight}
\begin{proposition}[Conditional candidate-set identity]
\label{prop:conditional-linear-intercept-optimality}
Assume \(X\in L^2(P;\mathcal H)\).  Fix a pilot fold
\(A\in\mathfrak A_{n_{\rm p}}\), \(\eta,\lambda>0\), a sign
\(\sigma\in\{-1,1\}\), and a nonempty deterministic set
\(K\subset\mathcal H\).  Since
\(M_{X,A,\lambda}\succeq\lambda I\), the operator
\(M_{X,A,\lambda}\) induces a norm; write
\[
 d_{M_{X,A,\lambda}}(x,K)
 :=\inf_{h\in K}\|x-h\|_{M_{X,A,\lambda}},
 \qquad x\in\mathcal H,
\]
for the distance from \(x\) to \(K\) in the
\(M_{X,A,\lambda}\)-norm,
and define the Hilbert-space element
\[
 h_{A,\sigma}^\star
 :=\frac{\sigma}{\eta}M_{X,A,\lambda}^{-1}(PX-P_nX).
\]
If \(I_A\) is an almost surely finite, full-sample-measurable random
variable that is constant in \(h\) and conditionally integrable over
\(B_A\), then, almost surely,
\begin{align}
 &\mathbb E_{B_A}\sup_{h\in K}
 \left\{\sigma(P-P_n)f_h
 -\frac{\eta}{2}\langle h,M_{X,A,\lambda}h\rangle-I_A\right\}
 \notag\\
 &\quad=
 \frac1{2\eta}\mathbb E_{B_A}\widehat T_{X,A,\lambda}
 -\frac\eta2\mathbb E_{B_A}
 d_{M_{X,A,\lambda}}^2(h_{A,\sigma}^\star,K)
 -\mathbb E_{B_A}I_A.
\label{eq:conditional-linear-intercept-optimality}
\end{align}
\end{proposition}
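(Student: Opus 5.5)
The proof is a pointwise completion of the square, followed by a conditional expectation and Lemma~\ref{lem:conditional-second-moment-calibration}.

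Write \(M:=M_{X,A,\lambda}\) and \(g:=PX-P_nX\), so that \(\sigma(P-P_n)f_h=\sigma\langle g,h\rangle\). Since \(M\succeq\lambda I\) is bounded with a bounded inverse, \(h^\star_{A,\sigma}=(\sigma/\eta)M^{-1}g\) is a well-defined element of \(\mathcal H\). For every \(h\in\mathcal H\) we have \(\sigma\langle g,h\rangle=\eta\langle Mh^\star_{A,\sigma},h\rangle\). Expanding \(\|h-h^\star_{A,\sigma}\|_M^2\) gives the pointwise identity
\[
 \sigma\langle g,h\rangle-\frac\eta2\langle h,Mh\rangle
 =\frac\eta2\|h^\star_{A,\sigma}\|_M^2-\frac\eta2\|h-h^\star_{A,\sigma}\|_M^2,
 \qquad
 \frac\eta2\|h^\star_{A,\sigma}\|_M^2=\frac1{2\eta}\|g\|_{M^{-1}}^2 .
\]
The random variable \(I_A\) does not depend on \(h\). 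Taking the supremum over \(h\in K\) therefore gives, for every realization,
\[
 \sup_{h\in K}\Bigl\{\sigma(P-P_n)f_h-\frac\eta2\langle h,Mh\rangle-I_A\Bigr\}
 =\frac1{2\eta}\|PX-P_nX\|_{M^{-1}}^2-\frac\eta2 d_M^2(h^\star_{A,\sigma},K)-I_A .
\]
Here \(\sup_{h\in K}\{-\|h-h^\star\|_M^2\}=-d_M^2(h^\star,K)\). The distance is finite because \(K\) is nonempty. In particular the supremum on the left is a measurable function of the sample, namely a combination of \(\|g\|_{M^{-1}}^2\), a distance to a fixed set (measurable as a continuous function of \(h^\star\), which is measurable), and \(I_A\). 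This settles the outer-expectation issue.

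Next we take the conditional expectation \(\mathbb E_{B_A}\) given \(S_A\). The first term is nonnegative. By \eqref{eq:conditional-full-mean-error} its conditional expectation is finite almost surely in \(S_A\), using \(X\in L^2\) and the boundedness of \(M^{-1}\). The distance term is nonnegative and bounded by \(\|h^\star-h_0\|_M^2\) for any fixed \(h_0\in K\), so it is conditionally integrable as well. \(I_A\) is conditionally integrable by hypothesis. Linearity of conditional expectation is therefore legitimate, with no \(\infty-\infty\) issue. Finally, Lemma~\ref{lem:conditional-second-moment-calibration} replaces \(\mathbb E_{B_A}\|PX-\overline X\|_{M^{-1}}^2\) with \(\mathbb E_{B_A}\widehat T_{X,A,\lambda}\), which gives \eqref{eq:conditional-linear-intercept-optimality}.

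The main point needing care is the integrability bookkeeping. We must check that the bound \(d_M^2(h^\star,K)\le 2\|h^\star\|_M^2+2\|h_0\|_M^2\) has a finite conditional mean. This follows from \(\|h^\star\|_M^2=\eta^{-2}\|g\|_{M^{-1}}^2\), which keeps all three terms in \(L^1\) conditionally. The algebra itself is routine.
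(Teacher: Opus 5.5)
Your proposal is correct and is essentially the paper's own argument: complete the square around \(h^\star_{A,\sigma}\), take the supremum over \(K\) to obtain \(\frac1{2\eta}\|PX-P_nX\|_{M_{X,A,\lambda}^{-1}}^2-\frac\eta2 d_{M_{X,A,\lambda}}^2(h^\star_{A,\sigma},K)\), then apply \(\mathbb E_{B_A}\) and Lemma~\ref{lem:conditional-second-moment-calibration}. Your conditional-integrability check is a small addition that the paper leaves implicit. For measurability, the metric \(M_{X,A,\lambda}\) is itself random, so continuity in \(h^\star_{A,\sigma}\) alone does not suffice; reduce instead to a countable dense subset of \(K\), as the paper does in proving Theorem~\ref{thm:centered-linear-expected-certificate}.
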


\begin{proof}[Proof of
Proposition~\ref{prop:conditional-linear-intercept-optimality}]
By the definition of \(h_{A,\sigma}^\star\), for every \(h\in\mathcal H\),
\begin{align*}
 \frac\eta2\|h-h_{A,\sigma}^\star\|_{M_{X,A,\lambda}}^2
 &=\frac\eta2\|h\|_{M_{X,A,\lambda}}^2
 -\eta\langle h,M_{X,A,\lambda}h_{A,\sigma}^\star\rangle
 +\frac\eta2\|h_{A,\sigma}^\star\|_{M_{X,A,\lambda}}^2\\
 &=\frac\eta2\|h\|_{M_{X,A,\lambda}}^2
 -\sigma\langle PX-P_nX,h\rangle
 +\frac1{2\eta}\|PX-P_nX\|_{M_{X,A,\lambda}^{-1}}^2.
\end{align*}
Rearranging gives
\[
 \sigma\langle PX-P_nX,h\rangle
 -\frac\eta2\|h\|_{M_{X,A,\lambda}}^2
 =
 \frac1{2\eta}\|PX-P_nX\|_{M_{X,A,\lambda}^{-1}}^2
 -\frac\eta2
 \|h-h_{A,\sigma}^\star\|_{M_{X,A,\lambda}}^2.
\]
Taking the supremum over \(K\) therefore gives
\begin{align*}
 &\sup_{h\in K}\left\{
 \sigma\langle PX-P_nX,h\rangle
 -\frac\eta2\|h\|_{M_{X,A,\lambda}}^2\right\}\\
 &\quad=
 \frac1{2\eta}\|PX-P_nX\|_{M_{X,A,\lambda}^{-1}}^2
 -\frac\eta2d_{M_{X,A,\lambda}}^2(h_{A,\sigma}^\star,K).
\end{align*}
Lemma~\ref{lem:conditional-second-moment-calibration} gives
\[
 \mathbb E_{B_A}\|PX-P_nX\|_{M_{X,A,\lambda}^{-1}}^2
 =\mathbb E_{B_A}\widehat T_{X,A,\lambda}.
\]
Taking conditional expectations proves
\eqref{eq:conditional-linear-intercept-optimality}.
\end{proof}

\begin{proof}[Proof of
\MainResult{Theorem}{thm:centered-linear-expected-certificate}]
For each \(A\in\mathfrak A_{n_{\rm p}}\), write
\[
 G_A(h):=\sigma(P-P_n)f_h
 -\frac\eta2\langle h,M_{X,A,\lambda}h\rangle
 -\frac{\widehat T_{X,A,\lambda}}{2\eta}.
\]
Since \(\mathcal H\) is separable, \(K\) has a deterministic countable dense
subset \(K^\circ\).  For every realized sample, \(G_A\) is continuous in
\(h\), so \(\sup_{h\in K}G_A(h)=\sup_{h\in K^\circ}G_A(h)\).  Thus all
splitwise suprema in this proof are suprema of countably many measurable
random variables and are therefore measurable.
Proposition~\ref{prop:conditional-linear-intercept-optimality}, with
\(I_A=\widehat T_{X,A,\lambda}/(2\eta)\), gives
\[
 \mathbb E_{B_A}\sup_{h\in K}G_A(h)
 =-\frac\eta2\mathbb E_{B_A}
 d_{M_{X,A,\lambda}}^2(h_{A,\sigma}^\star,K)
 \le0
\]
almost surely in \(S_A\).  Hence
\(\mathbb E\sup_{h\in K}G_A(h)\le0\).
Moreover, pointwise in the sample,
\[
 \sup_{h\in K}\operatorname{Av}_{A}G_A(h)
 \le\operatorname{Av}_{A}\sup_{h\in K}G_A(h).
\]
Taking expectations and applying the splitwise bound gives
\[
 \mathbb E\sup_{h\in K}\operatorname{Av}_{A}G_A(h)
 \le\operatorname{Av}_{A}\mathbb E\sup_{h\in K}G_A(h)
 \le0.
\]
Using Lemma~\ref{lem:crossfit-u-average-identities} to identify
\(\operatorname{Av}_{A}M_{X,A,\lambda}
 =\widehat\Sigma_X+\lambda I\), together with
the definition of the cross-fitted linear effective dimension in
\MainEquation{eq:centered-linear-crossfit-effective-dimension}, proves the two
signed expected-supremum inequalities in
\MainResult{Theorem}{thm:centered-linear-expected-certificate}.
\end{proof}

\subsection{Pilot-conditional confidence}
\label{supp:subsec:pilot-conditional-confidence}

The high-probability argument needs bounds for the evaluation-fold sample
mean and the evaluation-fold sample-covariance trace, both in the random
pilot metric.
For a real random variable \(W\), use the sub-Gaussian Orlicz norm
\begin{equation}
\label{eq:supp-psi-two-norm}
\|W\|_{\psi_2}:=
\inf\{c>0:\mathbb E\exp(W^2/c^2)\le2\}.
\end{equation}

\begin{lemma}[Quadratic concentration for a Hilbert-valued mean]
\label{lem:hilbert-subgaussian-quadratic-concentration}
Let \(\mathcal H\) be a real separable Hilbert space, let \(U\) be a centered
\(\mathcal H\)-valued random element satisfying
\(\mathbb E\|U\|^2<\infty\), and let
\(V=\mathbb E(U\otimes U)\) denote the covariance operator of \(U\).  Suppose
that a constant \(\kappa\ge1\) satisfies
\begin{equation}
\label{eq:hilbert-covariance-matched-subgaussian}
 \|\langle u,U\rangle\|_{\psi_2}
 \le\kappa\langle u,Vu\rangle^{1/2},
 \qquad u\in\mathcal H.
\end{equation}
Write
\[
 r:=\operatorname{tr}(V)=\mathbb E\|U\|^2,\qquad
 s:=\operatorname{tr}(V^2),\qquad
 \rho:=\|V\|_{\rm op}.
\]
Here \(r\) is the covariance trace, \(s\) is the squared-spectral trace, and
\(\rho\) is the operator norm.
There is a constant \(\gamma_\kappa\ge1\), depending only on \(\kappa\),
such that, for every integer \(N\ge1\), every collection of independent
copies \(U_1,\ldots,U_N\), and every \(t\ge1\), the sample mean
\(\overline U_N:=N^{-1}\sum_{i=1}^NU_i\) satisfies
\begin{equation}
\label{eq:hilbert-subgaussian-mean-quadratic-bound}
 \Pr\left\{
 \|\overline U_N\|^2>
 \frac{\gamma_\kappa}{N}
 \{r+2\sqrt{s t}+2\rho t\}
 \right\}\le e^{-t}.
\end{equation}
Moreover, there is a constant \(C_\kappa\ge1\), depending only on \(\kappa\),
such that
\begin{equation}
\label{eq:hilbert-norm-square-psi-one}
 \bigl\|\|U\|^2-r\bigr\|_{\psi_1}\le C_\kappa r,
\end{equation}
where, for a real random variable \(W\),
\(\|W\|_{\psi_1}:=\inf\{c>0:\mathbb E\exp(|W|/c)\le2\}\),
and, for every integer \(N\ge1\), independent copies
\(U_1,\ldots,U_N\), and \(t\ge1\),
\begin{equation}
\label{eq:raw-trace-bernstein}
 \Pr\left\{
 \left|\frac1N\sum_{i=1}^N\|U_i\|^2-r\right|
 >C_\kappa r\left(\sqrt{\frac tN}+\frac tN\right)
 \right\}\le2e^{-t}.
\end{equation}
\end{lemma}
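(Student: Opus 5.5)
The lemma has three parts, and I would prove them in this order: the Hanson--Wright-type tail \eqref{eq:hilbert-subgaussian-mean-quadratic-bound}, the \(\psi_1\) bound \eqref{eq:hilbert-norm-square-psi-one}, and the Bernstein bound \eqref{eq:raw-trace-bernstein}.

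\emph{Reduction for the mean bound.} First reduce to the sample mean's own covariance geometry. Set \(W:=\sqrt N\,\overline U_N\). This is centered with covariance \(V\). For every \(u\in\mathcal H\), the linear functional \(\langle u,W\rangle=N^{-1/2}\sum_i\langle u,U_i\rangle\) is a normalized sum of independent centered variables. Each summand satisfies \(\|\langle u,U_i\rangle\|_{\psi_2}\le\kappa\langle u,Vu\rangle^{1/2}\) by \eqref{eq:hilbert-covariance-matched-subgaussian}. Hoeffding's sub-Gaussian sum bound (Vershynin, Proposition~2.6.1) therefore gives \(\|\langle u,W\rangle\|_{\psi_2}\le C\kappa\langle u,Vu\rangle^{1/2}\) for a universal \(C\). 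So \(W\) again satisfies the covariance-matched condition, with constant \(C\kappa\), uniformly in \(N\).

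\emph{Mean bound.} It then suffices to show \(\Pr\{\|W\|^2>\gamma_\kappa(r+2\sqrt{st}+2\rho t)\}\le e^{-t}\). This is the dimension-free quadratic concentration for a covariance-matched sub-Gaussian Hilbert vector, i.e.\ the Hsu--Kakade--Zhang bound in Hilbert form. My preferred route is to invoke it through \cite[Corollary~2.6]{mollenhauer2023concentration}, as the main text already does; it applies to \(W\) with the rescaled constant \(C\kappa\). If a self-contained argument is wanted, I would use Gaussian decoupling. Let \(g\) be an independent isonormal Gaussian on \(\mathcal H\). Then \(\mathbb E_g e^{\theta\langle g,W\rangle}=e^{\theta^2\|W\|^2/2}\), so
\[
\mathbb E e^{\theta^2\|W\|^2/2}\le\mathbb E_g e^{C\kappa^2\theta^2\langle g,Vg\rangle}.
\]
The right-hand side is an explicit Gaussian chaos, equal to \(\prod_j(1-2C\kappa^2\theta^2\sigma_j)^{-1/2}\) with \((\sigma_j)\) the eigenvalues of \(V\). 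Bounding \(-\tfrac12\log(1-x)\le \tfrac x2+\tfrac{x^2}{2(1-x)}\) and applying Chernoff with \(\theta^2\) optimized under \(2C\kappa^2\theta^2\rho\le1/2\) yields \(r+2\sqrt{st}+2\rho t\) up to a \(\kappa\)-dependent factor. I expect this step to be the main obstacle: the separable Hilbert setting needs finite-dimensional truncation of \(V\)'s eigenbasis, together with monotone convergence, to make the Gaussian argument rigorous and keep the constants dimension-free.

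\emph{\(\psi_1\) bound.} For \eqref{eq:hilbert-norm-square-psi-one}, apply the tail just obtained with \(N=1\) (so \(W=U\)), and use \(\sqrt s\le\sqrt{\rho r}\le r\) and \(\rho\le r\). This gives \(\Pr\{\|U\|^2>\gamma_\kappa r(1+4t)\}\le e^{-t}\) for \(t\ge1\). Integrating the tail bounds the positive part: \(\|(\|U\|^2-r)_+\|_{\psi_1}\lesssim_\kappa r\). The negative part is bounded by \(r\), so its \(\psi_1\) norm is at most \(r/\log2\). Adding the two gives \(C_\kappa r\).

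\emph{Bernstein bound.} Finally, the variables \(\|U_i\|^2-r\) are independent and centered, since \(\mathbb E\|U\|^2=r\), with \(\psi_1\) norm at most \(C_\kappa r\). Bernstein's inequality for sub-exponential sums (Vershynin, Theorem~2.8.1) gives
\[
\Pr\Bigl\{\Bigl|N^{-1}\sum_i(\|U_i\|^2-r)\Bigr|>cC_\kappa r\bigl(\sqrt{t/N}+t/N\bigr)\Bigr\}\le2e^{-t}.
\]
Absorbing the universal constant \(c\) into \(C_\kappa\) proves \eqref{eq:raw-trace-bernstein}.
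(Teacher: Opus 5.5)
Your proof is correct and follows the paper's route. The paper also (i) obtains a covariance-matched sub-Gaussian moment generating function bound for the sample mean, (ii) applies the Hsu--Kakade--Zhang quadratic tail on eigenbasis truncations and passes to the limit, (iii) integrates the \(N=1\) tail to get the \(\psi_1\) bound, and (iv) uses a Bernstein argument for the raw trace. The differences are cosmetic: the paper invokes Hsu--Kakade--Zhang's Theorem~1 directly rather than citing Mollenhauer et al.\ or redoing the Gaussian decoupling, handles the centering in the \(\psi_1\) step by Cauchy--Schwarz rather than by splitting into positive and negative parts, and derives the sub-exponential Bernstein tail by hand rather than citing Vershynin.
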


\begin{proof}
If \(r=0\), then \(\mathbb E\|U\|^2=0\), so every \(U_i=0\) almost surely.
The random quantities on the left sides of
\eqref{eq:hilbert-subgaussian-mean-quadratic-bound},
\eqref{eq:hilbert-norm-square-psi-one}, and
\eqref{eq:raw-trace-bernstein} are then zero.  We henceforth assume \(r>0\).

For fixed \(u\in\mathcal H\), put \(Z_u:=\langle u,U\rangle\).  The random
variable \(Z_u\) is centered and satisfies
\(\|\langle u,U\rangle\|_{\psi_2}
\le\kappa\langle u,Vu\rangle^{1/2}\) by
\eqref{eq:hilbert-covariance-matched-subgaussian}.
Proposition~2.6.1 of \cite{vershynin2025high} then implies
\begin{equation}
\label{eq:scalar-subgaussian-exponential-bound}
 \mathbb E\exp(\theta Z_u)
 \le \exp\{C\kappa^2\theta^2\langle u,Vu\rangle\},
 \qquad \theta\in\mathbb R,
\end{equation}
for a universal constant \(C>0\).  Applying
\eqref{eq:scalar-subgaussian-exponential-bound} with
\(\theta=N^{-1}\) and using independence yields
\begin{equation}
\label{eq:hilbert-sample-mean-exponential-bound}
 \mathbb E\exp\langle u,\overline U_N\rangle
 =\prod_{i=1}^N
   \mathbb E\exp\!\left\{\frac{\langle u,U_i\rangle}{N}\right\}
 \le
 \exp\left\{\frac{\gamma_\kappa}{2N}\langle u,Vu\rangle\right\},
\end{equation}
after taking \(\gamma_\kappa\ge2C\kappa^2\).

We now apply Theorem~1 of \cite{hsu2012tail} in finite dimensions.  Let
\((\lambda_j,e_j)_{1\le j\le J}\), with
\(J\in\mathbb N\cup\{\infty\}\), be the positive
eigenpairs of \(V\), ordered so that the eigenvalues are nonincreasing.  For
\(m\le J\), let \(P_m\) project onto
\(\operatorname{span}\{e_1,\ldots,e_m\}\), and write
\[
 V_m:=P_mVP_m,\qquad
 r_m:=\operatorname{tr}(V_m),\qquad
 s_m:=\operatorname{tr}(V_m^2),\qquad
 \rho_m:=\|V_m\|_{\rm op}.
\]
Define \(Y_m\in\mathbb R^m\) by
\[
 (Y_m)_j
 :=\left(\frac{N}{\gamma_\kappa\lambda_j}\right)^{1/2}
   \langle e_j,\overline U_N\rangle,\qquad 1\le j\le m.
\]
For \(a\in\mathbb R^m\), set
\[
 u_a:=\sum_{j=1}^m
 a_j\left(\frac{N}{\gamma_\kappa\lambda_j}\right)^{1/2}e_j.
\]
The two quantities needed in
\eqref{eq:hilbert-sample-mean-exponential-bound} are
\begin{align*}
 \langle a,Y_m\rangle
 &=\sum_{j=1}^ma_j
 \left(\frac{N}{\gamma_\kappa\lambda_j}\right)^{1/2}
 \langle e_j,\overline U_N\rangle
 =\langle u_a,\overline U_N\rangle,\\
 \langle u_a,Vu_a\rangle
 &=\sum_{j=1}^ma_j^2
 \frac{N}{\gamma_\kappa\lambda_j}\langle e_j,Ve_j\rangle
 =\frac{N}{\gamma_\kappa}\sum_{j=1}^ma_j^2
 =\frac{N}{\gamma_\kappa}\|a\|^2.
\end{align*}
Here the cross terms vanish because \(Ve_j=\lambda_je_j\) and the
\(e_j\)'s are orthonormal.
Substitution in
\eqref{eq:hilbert-sample-mean-exponential-bound} therefore gives
\[
 \mathbb E\exp\langle a,Y_m\rangle\le\exp(\|a\|^2/2),
 \qquad a\in\mathbb R^m.
\]
Apply Theorem~1 of \cite{hsu2012tail} to \(Y_m\) with
\[
 A_m:=\left(\frac{\gamma_\kappa}{N}\right)^{1/2}
       \operatorname{diag}(\lambda_1^{1/2},\ldots,\lambda_m^{1/2}).
\]
For \(1\le j\le m\), the definitions of \(A_m\) and \(Y_m\) give
\[
 (A_mY_m)_j
 =\left(\frac{\gamma_\kappa\lambda_j}{N}\right)^{1/2}
  \left(\frac{N}{\gamma_\kappa\lambda_j}\right)^{1/2}
  \langle e_j,\overline U_N\rangle
 =\langle e_j,\overline U_N\rangle.
\]
Hence
\[
 \|A_mY_m\|^2
 =\sum_{j=1}^m\langle e_j,\overline U_N\rangle^2.
\]
Because \(P_m\) is the orthogonal projection onto
\(\operatorname{span}\{e_1,\ldots,e_m\}\) and the \(e_j\)'s are
orthonormal,
\[
 P_m\overline U_N
 =\sum_{j=1}^m\langle e_j,\overline U_N\rangle e_j,
 \qquad
 \|P_m\overline U_N\|^2
 =\sum_{j=1}^m\langle e_j,\overline U_N\rangle^2.
\]
Thus
\(\|A_mY_m\|^2
 =\|P_m\overline U_N\|^2\).  Moreover,
\[
 \operatorname{tr}(A_m^\top A_m)=\frac{\gamma_\kappa}{N}r_m,\quad
 \operatorname{tr}\{(A_m^\top A_m)^2\}
   =\frac{\gamma_\kappa^2}{N^2}s_m,\quad
 \|A_m^\top A_m\|_{\rm op}
   =\frac{\gamma_\kappa}{N}\rho_m.
\]
Theorem~1 of \cite{hsu2012tail} gives
\[
 \Pr\left\{\|P_m\overline U_N\|^2>
 \frac{\gamma_\kappa}{N}
 \bigl(r_m+2\sqrt{s_mt}+2\rho_mt\bigr)\right\}\le e^{-t}.
\]
Moreover, \(U\) has zero projection onto \(\ker(V)\) almost surely because
\(\mathbb E\|P_{\ker(V)}U\|^2=0\).  If \(J<\infty\), take \(m=J\).
If \(J=\infty\), then
\(\|P_m\overline U_N\|^2\uparrow\|\overline U_N\|^2\), while
\(r_m\uparrow r\), \(s_m\uparrow s\), and \(\rho_m\uparrow\rho\).
For any
\[
 x>\frac{\gamma_\kappa}{N}
 \{r+2\sqrt{st}+2\rho t\},
\]
the convergences
\(\|P_m\overline U_N\|^2\uparrow\|\overline U_N\|^2\),
\(r_m\uparrow r\), \(s_m\uparrow s\), and \(\rho_m\uparrow\rho\) imply
\[
 \{\|\overline U_N\|^2>x\}
 \subseteq
 \bigcup_{m_0\ge1}\bigcap_{m\ge m_0}
 \left\{\|P_m\overline U_N\|^2>
 \frac{\gamma_\kappa}{N}
 \{r_m+2\sqrt{s_mt}+2\rho_mt\}\right\}.
\]
If \(E_m\) denotes the finite-dimensional event inside braces, then
\[
 \Pr\left(\bigcup_{m_0\ge1}\bigcap_{m\ge m_0}E_m\right)
 =\lim_{m_0\to\infty}\Pr\left(\bigcap_{m\ge m_0}E_m\right)
 \le\lim_{m_0\to\infty}\inf_{m\ge m_0}\Pr(E_m)
 \le e^{-t}.
\]
Thus \(\Pr\{\|\overline U_N\|^2>x\}\le e^{-t}\).  Letting \(x\) decrease to
\(\gamma_\kappa\{r+2\sqrt{st}+2\rho t\}/N\) proves
\eqref{eq:hilbert-subgaussian-mean-quadratic-bound}.

Indeed, if \((\lambda_j)_j\) are the nonnegative eigenvalues of \(V\), then
\[
 s=\sum_j\lambda_j^2
 \le\left(\sum_j\lambda_j\right)^2=r^2,
 \qquad
 \rho=\sup_j\lambda_j\le\sum_j\lambda_j=r.
\]
Taking \(N=1\) in
\eqref{eq:hilbert-subgaussian-mean-quadratic-bound} therefore shows that
\(\Pr\{\|U\|^2>C_\kappa r(1+t)\}\le e^{-t}\) for \(t\ge1\).
For \(W=\|U\|^2/(C_\kappa r)\), the tail bound for \(\|U\|^2\) gives
\(\Pr\{W>y\}\le e^{1-y}\) for \(y\ge2\).  Integration of the tail gives
\begin{align*}
 \mathbb E e^{W/8}
 &=1+\frac18\int_0^\infty e^{y/8}\Pr\{W>y\}\,dy\\
 &\le1+\frac18\int_0^2e^{y/8}\,dy
 +\frac e8\int_2^\infty e^{-7y/8}\,dy
 <2.
\end{align*}
After replacing \(8C_\kappa\) by \(C_\kappa\), the exponential-moment estimate is
\[
 \mathbb E\exp\left\{\frac{\|U\|^2}{C_\kappa r}\right\}\le2.
\]
Since \(|\|U\|^2-r|\le\|U\|^2+r\), Cauchy--Schwarz gives
\[
 \mathbb E\exp\left\{\frac{|\|U\|^2-r|}{2C_\kappa r}\right\}
 \le e^{1/(2C_\kappa)}
 \left[\mathbb E\exp\left\{\frac{\|U\|^2}{C_\kappa r}\right\}\right]^{1/2}
 \le e^{1/(2C_\kappa)}\sqrt2.
\]
Increasing \(C_\kappa\) so that
\(e^{1/(2C_\kappa)}\sqrt2\le2\), and retaining the name \(C_\kappa\) for the
enlarged constant, gives
\begin{equation}
\label{eq:centered-norm-square-exponential-bound}
 \mathbb E\exp\left\{\frac{|\|U\|^2-r|}{C_\kappa r}\right\}\le2,
\end{equation}
which is \eqref{eq:hilbert-norm-square-psi-one}.

To derive \eqref{eq:raw-trace-bernstein}, set
\(Z_i=\|U_i\|^2-r\) and
\(K=C_\kappa r\).  Since the \(p\)th term in the exponential series is
nonnegative, \eqref{eq:centered-norm-square-exponential-bound} gives
\[
 \frac{\mathbb E|Z_i|^p}{p!K^p}
 \le\mathbb E\exp\left(\frac{|Z_i|}{K}\right)\le2,
 \qquad p\ge2,
\]
and hence \(\mathbb E|Z_i|^p\le2p!K^p\).
Because \(\mathbb EZ_i=0\), expansion of the exponential series shows that,
for \(|\theta|K\le1/2\),
\[
 \mathbb E e^{\theta Z_i}
 \le1+2\sum_{p=2}^{\infty}(|\theta|K)^p
 \le\exp(4\theta^2K^2).
\]
Independence therefore gives
\[
 \mathbb E\exp\left\{\theta\sum_{i=1}^NZ_i\right\}
 \le\exp(4N\theta^2K^2),
 \qquad |\theta|K\le\frac12.
\]
For \(x>0\) and \(0<\theta\le1/(2K)\),
\[
 \Pr\left\{\sum_{i=1}^NZ_i\ge x\right\}
 =\Pr\left\{\exp\left(\theta\sum_{i=1}^NZ_i\right)
 \ge e^{\theta x}\right\}
 \le\exp(-\theta x+4N\theta^2K^2).
\]
Choose
\[
 \theta=\min\left\{\frac{x}{8NK^2},\frac1{4K}\right\}.
\]
If \(x\le2NK\), substituting \(\theta=x/(8NK^2)\) gives
\[
 \exp\left\{
 -\frac{x^2}{8NK^2}+\frac{x^2}{16NK^2}\right\}
 =\exp\left\{-\frac{x^2}{16NK^2}\right\}.
\]
If \(x>2NK\), the exponent is
\[
 -\frac{x}{4K}+\frac N4
 <-\frac{x}{8K}.
\]
The estimate for \(\mathbb E e^{\theta Z_i}\) was proved for both signs of
\(\theta\), so replacing \(Z_i\) by \(-Z_i\) leaves the exponential-moment
bound unchanged.  Applying the two
one-sided bounds and the union bound therefore gives
\[
 \Pr\left\{\left|\sum_{i=1}^NZ_i\right|>x\right\}
 \le2\exp\left\{-\min\left(
 \frac{x^2}{16NK^2},\frac{x}{8K}\right)\right\}.
\]
For \(x=8K(\sqrt{Nt}+t)\),
\[
 \frac{x^2}{16NK^2}
 =\frac{4(\sqrt{Nt}+t)^2}{N}\ge4t,
 \qquad
 \frac{x}{8K}=\sqrt{Nt}+t\ge t.
\]
Substitution gives
\[
 \Pr\left\{\left|\sum_{i=1}^NZ_i\right|
 >8K(\sqrt{Nt}+t)\right\}\le2e^{-t}.
\]
Dividing by \(N\) and absorbing \(8\) into \(C_\kappa\) proves
\eqref{eq:raw-trace-bernstein}.
\end{proof}

For Gaussian evaluation observations,
Lemma~\ref{lem:hilbert-gaussian-mean-trace-concentration} replaces
\eqref{eq:hilbert-subgaussian-mean-quadratic-bound} and
\eqref{eq:raw-trace-bernstein} by exact weighted-\(\chi^2\) tails in the proof
of \MainResult{Corollary}{cor:gaussian-one-split-linear-confidence}.

\Needspace{0.70\textheight}
\begin{lemma}[Exact Gaussian mean and covariance-trace tails]
\label{lem:hilbert-gaussian-mean-trace-concentration}
Let \(\mathcal H\) be a real separable Hilbert space, and let \(G\) be a
centered Gaussian \(\mathcal H\)-valued random element with covariance \(V\)
satisfying
\(\operatorname{tr}(V)<\infty\); equivalently, the nonnegative eigenvalues of
\(V\) are summable.  Write
\[
 r:=\operatorname{tr}(V),\qquad
 s:=\operatorname{tr}(V^2),\qquad
 \rho:=\|V\|_{\rm op}.
\]
Here \(r\) is the covariance trace, \(s\) is the squared-spectral trace, and
\(\rho\) is the operator norm; all three scalars are nonnegative.
For independent copies \(G_1,\ldots,G_N\), where \(N\ge2\), let
\[
 \overline G_N:=\frac1N\sum_{i=1}^NG_i,\qquad
 \widehat V_N:=\frac1{N-1}\sum_{i=1}^N
 (G_i-\overline G_N)\otimes(G_i-\overline G_N),\qquad
 \widehat r_N:=\operatorname{tr}(\widehat V_N).
\]
Here \(\overline G_N\in\mathcal H\) is the sample mean,
\(\widehat V_N:\mathcal H\to\mathcal H\) is the unbiased sample covariance
operator, and \(\widehat r_N\in[0,\infty)\) is the trace of
\(\widehat V_N\).
Then, for every \(t>0\),
\begin{align}
 \Pr\left\{\|\overline G_N\|^2>
 \frac{r+2\sqrt{s t}+2\rho t}{N}\right\}
 &\le e^{-t},
 \label{eq:hilbert-gaussian-mean-tail}\\
 \Pr\left\{\widehat r_N<
 r-2\sqrt{\frac{s t}{N-1}}\right\}
 &\le e^{-t},
 \label{eq:hilbert-gaussian-trace-lower-tail}\\
 \Pr\left\{\widehat r_N>
 r+2\sqrt{\frac{s t}{N-1}}+
 \frac{2\rho t}{N-1}\right\}
 &\le e^{-t}.
 \label{eq:hilbert-gaussian-trace-upper-tail}
\end{align}
The conclusions remain valid when \(V\) is singular, including \(V=0\).
\end{lemma}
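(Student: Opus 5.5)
Expand in the eigenbasis of \(V\) and reduce every statement to weighted sums of independent \(\chi^2\) variables, then apply the Laurent--Massart tail bounds (equivalently, the Gaussian case of Theorem~1 of \cite{hsu2012tail}). Let \((\lambda_j,e_j)\) be the positive eigenpairs of \(V\). As in the proof of Lemma~\ref{lem:hilbert-subgaussian-quadratic-concentration}, \(G\) has zero component in \(\ker V\) almost surely, and the coordinates \(\langle e_j,G\rangle/\sqrt{\lambda_j}\) are i.i.d.\ \(N(0,1)\). If \(V=0\), every random quantity is zero and all three bounds are trivial; otherwise I work with finite truncations \(P_m\) and pass to the limit exactly as in that proof, using monotone convergence of \(r_m,s_m,\rho_m\) and of the truncated quadratic forms.

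For \eqref{eq:hilbert-gaussian-mean-tail}: \(\sqrt N\,\overline G_N\) has the same law as \(G\), so \(N\|\overline G_N\|^2=\sum_j\lambda_j g_j^2\) with i.i.d.\ standard normal \(g_j\). The Laurent--Massart upper tail gives
\[
\Pr\Bigl\{\textstyle\sum_j\lambda_j(g_j^2-1)>2\sqrt{st}+2\rho t\Bigr\}\le e^{-t},
\]
since \(\sum_j\lambda_j^2=s\) and \(\max_j\lambda_j=\rho\). Dividing by \(N\) gives the claim.

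For the trace bounds, the key step is the Gaussian identity \((N-1)\widehat V_N\overset{d}{=}\sum_{k=1}^{N-1}H_k\otimes H_k\) with \(H_k\) i.i.d.\ copies of \(G\). I obtain it by applying an orthogonal Helmert transform to \((G_1,\ldots,G_N)\) coordinatewise in the eigenbasis; this is legitimate because the coordinates are independent across \(j\) and i.i.d.\ across \(i\). Then
\[
(N-1)\widehat r_N=\sum_j\lambda_j\chi^2_{N-1,j},
\]
where the \(\chi^2_{N-1,j}\) are independent with \(N-1\) degrees of freedom. Equivalently, this is \(\sum_{j,k}\lambda_j g_{jk}^2\), a weighted sum with each weight \(\lambda_j\) repeated \(N-1\) times. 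The weight sums become \((N-1)r\), \((N-1)s\) and \(\rho\). The Laurent--Massart lower tail gives
\[
(N-1)\widehat r_N\ge (N-1)r-2\sqrt{(N-1)st}
\]
with probability at least \(1-e^{-t}\), and the upper tail gives
\[
(N-1)\widehat r_N\le (N-1)r+2\sqrt{(N-1)st}+2\rho t
\]
with probability at least \(1-e^{-t}\). Dividing by \(N-1\) yields \eqref{eq:hilbert-gaussian-trace-lower-tail} and \eqref{eq:hilbert-gaussian-trace-upper-tail}.

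The main obstacle is handling the infinite-dimensional limit and the Helmert rotation rigorously. The Laurent--Massart bounds are usually stated for finitely many weights, so I apply them to the truncation \(P_m\). For the upper tails I use the same \(\liminf\) event argument as in Lemma~\ref{lem:hilbert-subgaussian-quadratic-concentration}. For the lower tail monotonicity runs the other way, since truncation decreases \(\widehat r_N\). Still, \(\{\widehat r_N<x\}\subseteq\{\widehat r_{N,m}<x\}\) for every \(m\), and the truncated thresholds \(r_m-2\sqrt{s_mt/(N-1)}\) converge to the full threshold, so a limit in \(x\) closes the argument. All bounds hold for every \(t>0\) because the Laurent--Massart bounds do.
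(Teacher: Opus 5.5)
Your proposal is correct and follows the paper's proof essentially step for step. An orthogonal transform with first row \(N^{-1/2}\) turns \(\sqrt N\,\overline G_N\) and \((N-1)\widehat r_N\) into squared norms of independent copies of \(G\); the eigen-expansion then gives weighted \(\chi^2\) sums, whose two tails the paper derives by the same Chernoff calculation that yields Laurent--Massart; and finite-rank truncation plus a limit handles infinite rank. The differences are cosmetic and all valid: you cite Laurent--Massart instead of reproving it (the Hsu--Kakade--Zhang theorem alone gives only the upper tail), you rotate coordinatewise in the eigenbasis rather than in \(\mathcal H\), and for the lower-tail limit you use \(\widehat r_{N,m}\le\widehat r_N\) where the paper reuses its \(\liminf\)-event argument.
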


\begin{proof}
Put \(\nu=N-1\).  Choose an orthogonal transformation of the \(N\) sample
coordinates, represented by an orthogonal matrix
\((q_{\ell i})_{0\le\ell\le\nu,\,1\le i\le N}\), whose first row is
\(q_{0i}=N^{-1/2}\), and define
\[
 H_\ell=\sum_{i=1}^Nq_{\ell i}G_i,\qquad 0\le\ell\le\nu.
\]
For \(u,v\in\mathcal H\),
\[
 \mathbb E\{\langle u,H_\ell\rangle\langle v,H_k\rangle\}
 =\sum_{i=1}^Nq_{\ell i}q_{ki}\langle u,Vv\rangle
 =\mathbf1\{\ell=k\}\langle u,Vv\rangle.
\]
The vector \((H_0,\ldots,H_\nu)\) is jointly Gaussian, so
\(\operatorname{Cov}(H_\ell,H_k)=0\) for \(\ell\ne k\) shows that
\(H_0,\ldots,H_\nu\) are independent; each is centered with covariance
\(V\).

To express \(\widehat r_N\) in terms of the independent residuals
\(H_1,\ldots,H_\nu\), let \(P_m\) be increasing finite-rank orthogonal
projections that converge strongly to the identity
(and take \(P_m=I\) eventually if \(\mathcal H\) is finite-dimensional).
Expanding around the sample mean gives
\[
 \sum_{i=1}^N\|P_m(G_i-\overline G_N)\|^2
 =\sum_{i=1}^N\|P_mG_i\|^2-N\|P_m\overline G_N\|^2.
\]
Because the matrix \((q_{\ell i})\) is orthogonal, applying
\((q_{\ell i})\) separately to
each of the \(m\) coordinates of \(P_mG_i\) preserves the sum of the
coordinate squares.  Hence
\[
 \sum_{i=1}^N\|P_mG_i\|^2
 =\sum_{\ell=0}^{\nu}\|P_mH_\ell\|^2.
\]
The \(\ell=0\) term is
\(\|P_mH_0\|^2=N\|P_m\overline G_N\|^2\), and therefore
\[
 \sum_{i=1}^N\|P_m(G_i-\overline G_N)\|^2
 =\sum_{\ell=1}^{\nu}\|P_mH_\ell\|^2.
\]
As \(m\to\infty\), \(P_mx\to x\) for every \(x\in\mathcal H\), so every
projected squared norm converges to the corresponding squared norm without
\(P_m\).  Dividing the limiting identity
by \(N-1=\nu\) gives
\begin{equation}
\label{eq:gaussian-hilbert-residual-representation}
 \widehat r_N=\frac1\nu\sum_{\ell=1}^{\nu}\|H_\ell\|^2
 \qquad\text{almost surely}.
\end{equation}

If \(V=0\), then
\(\mathbb E\|G_i\|^2=\mathbb E\|H_\ell\|^2=\operatorname{tr}(V)=0\);
hence all \(G_i\) and \(H_\ell\) vanish almost surely, and
\eqref{eq:hilbert-gaussian-mean-tail},
\eqref{eq:hilbert-gaussian-trace-lower-tail}, and
\eqref{eq:hilbert-gaussian-trace-upper-tail} hold with zero random terms.
Otherwise, let
\((\mu_j,e_j)_{1\le j\le J}\), where
\(J\in\mathbb N\cup\{\infty\}\), be the positive eigenpairs of \(V\) in
nonincreasing order.  For \(m\le J\), let \(P_m\) project onto
\(\operatorname{span}\{e_1,\ldots,e_m\}\).  A Gaussian vector has zero
projection onto \(\ker(V)\) almost surely.  Thus, for independent standard
normal variables \(g_{\ell j}\),
\[
 \|P_mH_\ell\|^2
 \stackrel{d}=\sum_{j=1}^m\mu_jg_{\ell j}^2.
\]
Set \(r_m=\sum_{j\le m}\mu_j\),
\(s_m=\sum_{j\le m}\mu_j^2\), and
\(\rho_m=\max_{j\le m}\mu_j\).  To control the projected Gaussian sums
\(\|P_mH_\ell\|^2\),
let \(g_1,\ldots,g_q\) be independent standard normal variables, let
\(w_j\ge0\), and define
\[
 S=\sum_{j=1}^qw_j(g_j^2-1),\qquad
 s_w=\sum_{j=1}^qw_j^2,\qquad
 \rho_w=\max_{j\le q}w_j.
\]
If \(s_w=0\), then \(S=0\) almost surely.  Suppose \(s_w>0\).  For
\(0<\theta<1/(2\rho_w)\),
\begin{align*}
 \log\mathbb E e^{\theta S}
 &=\sum_{j=1}^q\left\{-\theta w_j
 -\frac12\log(1-2\theta w_j)\right\}\\
 &\le\frac{\theta^2s_w}{1-2\theta\rho_w}.
\end{align*}
The inequality follows term by term from
\[
 -\log(1-x)-x=\sum_{k=2}^{\infty}\frac{x^k}{k}
 \le\frac{x^2}{2(1-x)},\qquad 0\le x<1.
\]
For \(x>0\) and \(0<\theta<1/(2\rho_w)\),
\[
 \Pr\{S>x\}
 \le e^{-\theta x}\mathbb E e^{\theta S}
 \le\exp\left\{-\theta x+
 \frac{\theta^2s_w}{1-2\theta\rho_w}\right\}.
\]
Taking
\(\theta=\sqrt t/(\sqrt{s_w}+2\rho_w\sqrt t)\) and
\(x=2\sqrt{s_wt}+2\rho_wt\) gives
\[
 -\theta\{2\sqrt{s_wt}+2\rho_wt\}
 +\frac{\theta^2s_w}{1-2\theta\rho_w}=-t,
\]
and hence
\begin{equation}
\label{eq:finite-weighted-square-upper-tail}
 \Pr\{S>2\sqrt{s_wt}+2\rho_wt\}\le e^{-t}.
\end{equation}
For the other tail,
\[
 \log\mathbb E e^{-\theta S}
 =\sum_{j=1}^q\left\{\theta w_j
 -\frac12\log(1+2\theta w_j)\right\}
 \le\theta^2s_w,
\]
where \(x-\log(1+x)\le x^2/2\) for \(x\ge0\).
The inequality \(x-\log(1+x)\le x^2/2\) follows because the derivative of
\(x^2/2-x+\log(1+x)\) is \(x^2/(1+x)\).  Taking
\(\theta=\sqrt{t/s_w}\) gives
\begin{equation}
\label{eq:finite-weighted-square-lower-tail}
 \Pr\{S<-2\sqrt{s_wt}\}
 \le e^{-2\theta\sqrt{s_wt}}\mathbb E e^{-\theta S}
 \le\exp\{-2\theta\sqrt{s_wt}+\theta^2s_w\}=e^{-t}.
\end{equation}

For \(\|P_mH_0\|^2-r_m\), the weights in
\eqref{eq:finite-weighted-square-upper-tail} are
\(\mu_1,\ldots,\mu_m\), so \(s_w=s_m\) and \(\rho_w=\rho_m\).
For
\(\nu^{-1}\sum_{\ell=1}^{\nu}\|P_mH_\ell\|^2-r_m\), the weights are
\(\mu_j/\nu\), once for each
\((\ell,j)\in\{1,\ldots,\nu\}\times\{1,\ldots,m\}\); hence
\[
 s_w=\frac{s_m}{\nu},\qquad
 \rho_w=\frac{\rho_m}{\nu}.
\]
Equations \eqref{eq:finite-weighted-square-upper-tail} and
\eqref{eq:finite-weighted-square-lower-tail} give
\begin{align*}
 \Pr\{\|P_mH_0\|^2>
 r_m+2\sqrt{s_mt}+2\rho_mt\}&\le e^{-t},\\
 \Pr\left\{\frac1\nu\sum_{\ell=1}^{\nu}\|P_mH_\ell\|^2
 <r_m-2\sqrt{\frac{s_mt}{\nu}}\right\}&\le e^{-t},\\
 \Pr\left\{\frac1\nu\sum_{\ell=1}^{\nu}\|P_mH_\ell\|^2
 >r_m+2\sqrt{\frac{s_mt}{\nu}}+
 \frac{2\rho_mt}{\nu}\right\}&\le e^{-t}.
\end{align*}
If \(J<\infty\), take \(m=J\).  If \(J=\infty\), trace-class covariance gives
\(r_m\uparrow r\), \(s_m\uparrow s\), and \(\rho_m\to\rho\); the projected
squared norms increase almost surely to the unprojected squared norms, and
the projected residual traces increase almost surely to the Hilbert-space
residual traces.  To pass
\eqref{eq:hilbert-gaussian-mean-tail},
\eqref{eq:hilbert-gaussian-trace-lower-tail}, and
\eqref{eq:hilbert-gaussian-trace-upper-tail} to infinite dimension,
consider each bound separately.  For each probability bound, denote the
random statistic by \(Z\), the deterministic threshold by \(a\), and the finite-rank
versions as \(Z_m\) and \(a_m\).  Since \(Z_m\to Z\) and \(a_m\to a\),
\(Z>a\) implies
\(Z_m>a_m\) for all sufficiently large \(m\), and therefore
\[
 \{Z>a\}\subseteq
 \bigcup_{m_0=1}^{\infty}\bigcap_{m\ge m_0}\{Z_m>a_m\}.
\]
The intersections on the right increase as \(m_0\) increases, so
\begin{align*}
 \Pr\{Z>a\}
 &\le\lim_{m_0\to\infty}
 \Pr\left\{\bigcap_{m\ge m_0}\{Z_m>a_m\}\right\}\\
 &\le\lim_{m_0\to\infty}\inf_{m\ge m_0}\Pr\{Z_m>a_m\}
 \le e^{-t}.
\end{align*}
Replacing \(>\) by \(<\) gives the lower-tail bound.
For the sample-mean tail, \(H_0=\sqrt N\,\overline G_N\), so division by
\(N\) gives \eqref{eq:hilbert-gaussian-mean-tail}.  For
\eqref{eq:hilbert-gaussian-trace-lower-tail} and
\eqref{eq:hilbert-gaussian-trace-upper-tail},
\eqref{eq:gaussian-hilbert-residual-representation} identifies \(Z\) with
\(\widehat r_N\).
\end{proof}

Lemma~\ref{lem:conditional-evaluation-trace-sandwich} conditions on \(S_A\),
applies Lemma~\ref{lem:hilbert-subgaussian-quadratic-concentration} to
\(M_A^{-1/2}(X-PX)\), and controls the sample-centered trace
\(\operatorname{tr}\{\widehat\Sigma_{X,B_A}M_A^{-1}\}\).

\Needspace{0.84\textheight}
\begin{lemma}[Conditional bounds for the evaluation-fold mean and trace]
\label{lem:conditional-evaluation-trace-sandwich}
Assume \MainEquation{eq:main-covariance-matched-subgaussian}, whose known
constant is \(\bar\kappa\).  Let
\(\mathcal V_A,r_A,s_A,\rho_A\) be defined by
\MainEquation{eq:conditional-pilot-geometry}, and let \(R\ge0\).  For almost
every realization of the pilot sample \(S_A\) satisfying \(\rho_A\le R\), and
for \(i\in B_A\), where \(|B_A|=N\), define the standardized evaluation
vectors \(U_i\) and the sample mean \(\overline U_{B_A}\) by
\[
 U_i:=M_A^{-1/2}(X_i-PX),
 \qquad
 \overline U_{B_A}:=\frac1N\sum_{i\in B_A}U_i.
\]
Let
\[
 \widehat r_{B_A\mid A}
 :=\operatorname{tr}\{\widehat\Sigma_{X,B_A}M_A^{-1}\}
\]
be the evaluation-fold trace.  There is a constant
\(C_{\rm tr}(\bar\kappa)\),
depending only on \(\bar\kappa\).  For \(C_{\rm tr}(\bar\kappa)\) and every
\(t\ge1\), set
\[
 \varepsilon_N(t):=C_{\rm tr}(\bar\kappa)
 \left(\sqrt{\frac tN}+\frac tN+\frac1N\right).
\]
Conditional on \(S_A\), with probability at least \(1-3e^{-t}\),
\begin{align}
 \|\overline U_{B_A}\|^2
 &\le\frac{C_{\rm tr}(\bar\kappa)}{N}
 \{r_A+2\sqrt{Rr_At}+2Rt\},
 \label{eq:conditional-evaluation-mean-norm}\\
 (1-\varepsilon_N(t))r_A-\frac{C_{\rm tr}(\bar\kappa)Rt}{N}
 &\le \widehat r_{B_A\mid A}
 \le(1+\varepsilon_N(t))r_A,
 \label{eq:conditional-evaluation-trace-sandwich}
\end{align}
hold simultaneously.  The mean bound
\eqref{eq:conditional-evaluation-mean-norm} and the lower bound in
\eqref{eq:conditional-evaluation-trace-sandwich} hold simultaneously with
conditional probability at least \(1-2e^{-t}\).  If
\(\varepsilon_N(t)<1\), rearranging the lower bound in
\eqref{eq:conditional-evaluation-trace-sandwich} gives, with conditional
probability at least \(1-2e^{-t}\),
\begin{equation}
\label{eq:conditional-evaluation-trace-upper-confidence}
 r_A\le
 \frac{\widehat r_{B_A\mid A}+C_{\rm tr}(\bar\kappa)Rt/N}
 {1-\varepsilon_N(t)}.
\end{equation}
If \(\varepsilon_N(t)\le1/2\), then, with conditional probability at least
\(1-3e^{-t}\),
\[
 \frac{\widehat r_{B_A\mid A}+C_{\rm tr}(\bar\kappa)Rt/N}
 {1-\varepsilon_N(t)}
 \le 3r_A+\frac{2C_{\rm tr}(\bar\kappa)Rt}{N}.
\]
\end{lemma}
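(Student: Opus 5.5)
The plan is to condition on \(S_A\), after which \(M_A\) is a fixed bounded operator with bounded inverse. The standardized vectors \(U_i=M_A^{-1/2}(X_i-PX)\), \(i\in B_A\), are then i.i.d.\ centered with covariance \(\mathcal V_A\), and they satisfy \eqref{eq:hilbert-covariance-matched-subgaussian} with \(\kappa=\bar\kappa\). Indeed, \(\langle u,U\rangle=\langle M_A^{-1/2}u,X-PX\rangle\), so \eqref{eq:main-covariance-matched-subgaussian} gives a \(\psi_2\) bound of \(\bar\kappa\langle u,\mathcal V_Au\rangle^{1/2}\). The trace \(r_A=\operatorname{tr}(\mathcal V_A)\le\|M_A^{-1}\|_{\rm op}\operatorname{tr}\Sigma_X<\infty\) because \(M_A\succeq\lambda I\). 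Hence Lemma~\ref{lem:hilbert-subgaussian-quadratic-concentration} applies conditionally with \((r,s,\rho)=(r_A,s_A,\rho_A)\). The mean bound \eqref{eq:conditional-evaluation-mean-norm} then follows from \eqref{eq:hilbert-subgaussian-mean-quadratic-bound}, using \(s_A\le\rho_Ar_A\le Rr_A\) and \(\rho_A\le R\), once \(C_{\rm tr}\ge\gamma_{\bar\kappa}\). This costs probability \(e^{-t}\).

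For the trace, I will write the sample-centered identity
\[
\widehat r_{B_A\mid A}=\frac{N}{N-1}\Bigl\{\frac1N\sum_{i\in B_A}\|U_i\|^2-\|\overline U_{B_A}\|^2\Bigr\},
\]
which holds because \(\operatorname{tr}\{(X_i-\overline X_B)^{\otimes2}M_A^{-1}\}=\|U_i-\overline U_{B_A}\|^2\). By \eqref{eq:raw-trace-bernstein}, with probability at least \(1-2e^{-t}\), the raw average lies in \(r_A[1\pm C_{\bar\kappa}(\sqrt{t/N}+t/N)]\).

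\emph{Upper bound.} Drop the \(-\|\overline U\|^2\) term and use \(N/(N-1)\le1+2/N\). The result is \(\widehat r\le r_A(1+2/N)(1+C(\sqrt{t/N}+t/N))\le(1+\varepsilon_N(t))r_A\), after enlarging \(C_{\rm tr}\) to absorb the cross terms; these are bounded using \(t\ge1\) and \(t/N\cdot 2/N\le t/N\). The \(1/N\) term in \(\varepsilon_N\) exists exactly for this purpose.

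\emph{Lower bound.} Use \(N/(N-1)\ge1\), the lower Bernstein side, and the mean bound. Then \(2\sqrt{Rr_At}\le r_A\sqrt{t/N}\cdot\sqrt N\)? That route is wrong. Instead bound \(2\sqrt{Rr_At}\le r_A+Rt\) by AM--GM. This gives \(\|\overline U\|^2\le C(3r_A+3Rt)/N\), and the \(r_A/N\) part is absorbed into \(\varepsilon_N\) via its \(1/N\) term. The result is \(\widehat r\ge(1-\varepsilon_N)r_A-C_{\rm tr}Rt/N\).

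\emph{Probability bookkeeping.} The lower bound uses only the mean event and the lower Bernstein side. The latter is a one-sided \(e^{-t}\) bound, since the proof of \eqref{eq:raw-trace-bernstein} proceeds through two one-sided bounds. So the mean bound and the lower trace bound together cost \(2e^{-t}\). Adding the upper side gives \(3e^{-t}\) in total. Measurability of the conditional probability statement for almost every \(S_A\) follows from independence of the folds, via Fubini.

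The remaining claims are algebra. If \(\varepsilon_N<1\), rearranging the lower bound gives \eqref{eq:conditional-evaluation-trace-upper-confidence}. If \(\varepsilon_N\le1/2\), then \(1/(1-\varepsilon_N)\le2\). Combined with the upper sandwich \(\widehat r\le(3/2)r_A\), this gives the final display, since \(2\cdot\tfrac32r_A+2C_{\rm tr}Rt/N\). The main obstacle I expect is the constant bookkeeping in the lower bound: the mean-correction term must be split so that its \(r_A\) part enters \(\varepsilon_N\) (hence the \(1/N\)) while its \(R\) part enters the additive \(C_{\rm tr}Rt/N\). A single \(C_{\rm tr}(\bar\kappa)\) must be chosen as the maximum of all the constants generated along the way.
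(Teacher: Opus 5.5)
Your proposal is correct and follows the paper's proof essentially step for step. It applies Lemma~\ref{lem:hilbert-subgaussian-quadratic-concentration} conditionally to the standardized vectors with $s_A\le\rho_Ar_A\le Rr_A$, uses the sample-centered trace identity, drops the centering term for the upper bound, and for the lower bound uses $2\sqrt{Rr_At}\le r_A+Rt$ with the $r_A/N$ part absorbed by the $1/N$ term of $\varepsilon_N(t)$. The probability bookkeeping is also the same, $e^{-t}+e^{-t}+e^{-t}$ via the one-sided raw-trace tails. The only cosmetic difference is that you apply $N/(N-1)\ge1$ directly to the bracket, which is valid because it equals $\frac{N-1}{N}\widehat r_{B_A\mid A}\ge0$; this replaces the paper's case split on the sign of the lower bound.
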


\begin{proof}
Conditional on \(S_A\), the \(U_i\)'s are independent, centered, have
covariance \(\mathcal V_A\), and inherit covariance-matched sub-Gaussianity.
Since \(s_A\le\rho_Ar_A\le Rr_A\),
Lemma~\ref{lem:hilbert-subgaussian-quadratic-concentration} gives
\eqref{eq:conditional-evaluation-mean-norm} outside an event of probability
at most \(e^{-t}\).  Applying
\eqref{eq:raw-trace-bernstein} conditionally with covariance
\(\mathcal V_A\) gives, outside an event of probability at most \(2e^{-t}\),
\begin{equation}
\label{eq:conditional-raw-trace-sandwich}
 \left|\frac1N\sum_{i\in B_A}\|U_i\|^2-r_A\right|
 \le C_{\rm tr}(\bar\kappa)r_A
 \left(\sqrt{\frac tN}+\frac tN\right).
\end{equation}
The evaluation covariance is sample-centered, so
\begin{equation}
\label{eq:conditional-centered-trace-identity}
 \widehat r_{B_A\mid A}
 =\frac{N}{N-1}\left\{
 \frac1N\sum_{i\in B_A}\|U_i\|^2
 -\|\overline U_{B_A}\|^2
 \right\}.
\end{equation}
On \eqref{eq:conditional-raw-trace-sandwich}, dropping the nonnegative
centering term in \eqref{eq:conditional-centered-trace-identity} gives
\begin{equation}
\label{eq:conditional-evaluation-trace-upper-bound}
 \widehat r_{B_A\mid A}
 \le\frac N{N-1}r_A
 \left\{1+C_{\rm tr}(\bar\kappa)
 \left(\sqrt{\frac tN}+\frac tN\right)\right\},
\end{equation}
which is at most \(\{1+\varepsilon_N(t)\}r_A\) after increasing
\(C_{\rm tr}(\bar\kappa)\).  To prove the lower trace bound, use
\(2\sqrt{Rr_At}\le r_A+Rt\) in
\eqref{eq:conditional-evaluation-mean-norm} to obtain
\[
 \|\overline U_{B_A}\|^2
 \le \frac{C_{\rm tr}(\bar\kappa)}N(2r_A+3Rt),
\]
and combine the evaluation-mean bound with
\[
 \frac1N\sum_{i\in B_A}\|U_i\|^2
 \ge r_A-C_{\rm tr}(\bar\kappa)r_A
 \left(\sqrt{\frac tN}+\frac tN\right),
\]
the lower raw-trace inequality in
\eqref{eq:conditional-raw-trace-sandwich}.  Substituting the evaluation-mean
and raw-trace bounds into \eqref{eq:conditional-centered-trace-identity} and
increasing \(C_{\rm tr}(\bar\kappa)\) gives
\begin{equation}
\label{eq:conditional-evaluation-trace-lower-bound}
 \widehat r_{B_A\mid A}
 \ge
 \{1-\varepsilon_N(t)\}r_A
 -\frac{C_{\rm tr}(\bar\kappa)Rt}{N}
\end{equation}
whenever
\(\{1-\varepsilon_N(t)\}r_A-C_{\rm tr}(\bar\kappa)Rt/N\ge0\).  If
\(\{1-\varepsilon_N(t)\}r_A-C_{\rm tr}(\bar\kappa)Rt/N<0\),
\(\widehat r_{B_A\mid A}\ge0\) also proves
\eqref{eq:conditional-evaluation-trace-lower-bound}.  Equations
\eqref{eq:conditional-evaluation-trace-upper-bound} and
\eqref{eq:conditional-evaluation-trace-lower-bound} prove
\eqref{eq:conditional-evaluation-trace-sandwich}.

The one-sided derivation of
\eqref{eq:conditional-raw-trace-sandwich} in the proof of
Lemma~\ref{lem:hilbert-subgaussian-quadratic-concentration} gives the lower
raw-trace inequality in \eqref{eq:conditional-raw-trace-sandwich} with
failure probability at most \(e^{-t}\).  Combining the lower raw-trace event
with the event in \eqref{eq:conditional-evaluation-mean-norm} shows that
\eqref{eq:conditional-evaluation-mean-norm} and
\eqref{eq:conditional-evaluation-trace-lower-bound} hold simultaneously
with failure probability at most \(2e^{-t}\).  When
\(\varepsilon_N(t)<1\), moving the \(r_A\)-term in
\eqref{eq:conditional-evaluation-trace-lower-bound} to the left and dividing
by \(1-\varepsilon_N(t)\) gives
\eqref{eq:conditional-evaluation-trace-upper-confidence}.  If
\(\varepsilon_N(t)\le1/2\), the upper bound in
\eqref{eq:conditional-evaluation-trace-sandwich} gives
\[
 \frac{\widehat r_{B_A\mid A}+C_{\rm tr}(\bar\kappa)Rt/N}
 {1-\varepsilon_N(t)}
 \le
 \frac{1+\varepsilon_N(t)}{1-\varepsilon_N(t)}r_A
 +\frac{C_{\rm tr}(\bar\kappa)Rt}
 {N\{1-\varepsilon_N(t)\}}
 \le3r_A+\frac{2C_{\rm tr}(\bar\kappa)Rt}{N}.
\]
\end{proof}

For balanced folds, Lemma~\ref{lem:balanced-calibration-cross-term} reduces
\(\|PX-P_nX\|_{M_A^{-1}}^2-\widehat T_{X,A,\lambda}\) to the scalar cross term
controlled in
\MainResult{Theorem}{thm:balanced-one-split-linear-confidence}.

\begin{lemma}[Balanced cross-term identity]
\label{lem:balanced-calibration-cross-term}
Let \(n=2N\) and \(|A|=|B_A|=N\).  With
\[
 b_A=M_A^{-1/2}(PX-\overline X_A),
 \qquad
 \xi_A=M_A^{-1/2}(\overline X_{B_A}-PX),
\]
one has
\begin{align*}
 M_A^{-1/2}(PX-P_nX)&=\tfrac12(b_A-\xi_A),\\
 M_A^{-1/2}(\overline X_{B_A}-\overline X_A)&=b_A+\xi_A,
\end{align*}
and therefore
\begin{equation}
\label{eq:supp-balanced-calibration-cross-term}
 \|PX-P_nX\|_{M_A^{-1}}^2-\widehat T_{X,A,\lambda}
 =-\langle b_A,\xi_A\rangle.
\end{equation}
\end{lemma}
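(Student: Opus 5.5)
The identity is pure algebra in the balanced case \(n_{\rm p}=n_{\rm e}=N\). I will verify the two linear representations first and then expand squared norms, using the balanced form \eqref{eq:balanced-single-split-calibration} of \(\widehat T_{X,A,\lambda}\).

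\emph{First representation.} Since \(n=2N\), we have \(P_nX=\overline X=\tfrac12(\overline X_A+\overline X_{B_A})\). Hence
\[
PX-P_nX=\tfrac12\{(PX-\overline X_A)-(\overline X_{B_A}-PX)\}.
\]
Applying the bounded linear operator \(M_A^{-1/2}\) gives \(M_A^{-1/2}(PX-P_nX)=\tfrac12(b_A-\xi_A)\).

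\emph{Second representation.} Writing
\[
\overline X_{B_A}-\overline X_A=(\overline X_{B_A}-PX)+(PX-\overline X_A)
\]
and applying \(M_A^{-1/2}\) gives \(M_A^{-1/2}(\overline X_{B_A}-\overline X_A)=b_A+\xi_A\). These two steps are also the \(n_{\rm p}=n_{\rm e}=N\) specialization of the general formulas stated before \eqref{eq:unequal-calibration-cross-term}.

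\emph{The identity.} Because \(M_A\) is self-adjoint, strictly positive and has a bounded inverse, \(\|x\|_{M_A^{-1}}^2=\|M_A^{-1/2}x\|^2\). Therefore
\[
\|PX-P_nX\|_{M_A^{-1}}^2=\tfrac14\|b_A-\xi_A\|^2,
\]
and, by \eqref{eq:balanced-single-split-calibration},
\[
\widehat T_{X,A,\lambda}=\tfrac14\|b_A+\xi_A\|^2.
\]
The difference is \(\tfrac14(\|b_A-\xi_A\|^2-\|b_A+\xi_A\|^2)=-\langle b_A,\xi_A\rangle\) by the polarization identity, which is \eqref{eq:supp-balanced-calibration-cross-term}.

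There is no real obstacle. The only point needing care is that the trace term in \eqref{eq:crossfit-single-split-calibration} vanishes because its coefficient \((n_{\rm e}-n_{\rm p})/(n_{\rm e}n)\) is zero. This is what justifies using the balanced form of \(\widehat T_{X,A,\lambda}\). The argument also matches \eqref{eq:linear-core-cross-calibration}.
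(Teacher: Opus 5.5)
Your proposal is correct and follows essentially the same route as the paper: both identities come from \(P_nX=\tfrac12(\overline X_A+\overline X_{B_A})\). The vanishing trace coefficient gives \(\widehat T_{X,A,\lambda}=\tfrac14\|b_A+\xi_A\|^2\), and then \(\tfrac14\{\|b_A-\xi_A\|^2-\|b_A+\xi_A\|^2\}=-\langle b_A,\xi_A\rangle\) completes the proof.
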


\begin{proof}
The two vector identities follow from
\(P_nX=(\overline X_A+\overline X_{B_A})/2\).  For balanced folds, the trace
coefficient in \MainEquation{eq:crossfit-single-split-calibration} is zero,
so \(\widehat T_{X,A,\lambda}=\|b_A+\xi_A\|^2/4\).  Subtracting
\(\widehat T_{X,A,\lambda}\) from
\(\|b_A-\xi_A\|^2/4\) gives
\[
 \frac14\{\|b_A-\xi_A\|^2-\|b_A+\xi_A\|^2\}
 =\frac14\{-4\langle b_A,\xi_A\rangle\}
 =-\langle b_A,\xi_A\rangle,
\]
which is
\eqref{eq:supp-balanced-calibration-cross-term}.
\end{proof}

The finite-split width bound in
Corollary~\ref{supp:cor:balanced-finite-split-oracle-width} requires a
comparison of \(M_A\) with \(M_\lambda\), together with bounds on \(r_A\) and
\(s_A\) in terms of \(d_{X,\lambda}\).
Lemma~\ref{lem:regularized-pilot-geometry} obtains both from a relative
covariance bound in the \(M_\lambda\) metric.

\begin{lemma}[Regularized pilot geometry]
\label{lem:regularized-pilot-geometry}
Suppose
\begin{equation}
\label{eq:pilot-relative-covariance-event}
 \left\|M_\lambda^{-1/2}
 (\widehat\Sigma_{X,A}-\Sigma_X)M_\lambda^{-1/2}
 \right\|_{\rm op}\le\frac12.
\end{equation}
Then
\begin{equation}
\label{eq:supp-pilot-geometry-consequences}
 \tfrac12M_\lambda\preceq M_A\preceq\tfrac32M_\lambda,\qquad
 \rho_A\le2,\qquad r_A\le2d_{X,\lambda},\qquad
 s_A\le4d_{X,\lambda}.
\end{equation}
\end{lemma}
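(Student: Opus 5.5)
The plan is to factor the pilot metric around the population metric and read off all four conclusions from operator inequalities. Write \(E_A:=M_\lambda^{-1/2}(\widehat\Sigma_{X,A}-\Sigma_X)M_\lambda^{-1/2}\), which is self-adjoint and bounded. Since \(M_A=\widehat\Sigma_{X,A}+\lambda I=\Sigma_X+\lambda I+(\widehat\Sigma_{X,A}-\Sigma_X)\), we have
\[
 M_A=M_\lambda^{1/2}(I+E_A)M_\lambda^{1/2}.
\]
Assumption \eqref{eq:pilot-relative-covariance-event} gives \(\tfrac12I\preceq I+E_A\preceq\tfrac32I\). Conjugating by the bounded positive operator \(M_\lambda^{1/2}\) preserves the Loewner order, which yields \(\tfrac12M_\lambda\preceq M_A\preceq\tfrac32M_\lambda\).

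Next, invert the factorization. Because \(I+E_A\succeq\tfrac12 I\) is boundedly invertible with \((I+E_A)^{-1}\preceq2I\), we get
\[
 M_A^{-1}=M_\lambda^{-1/2}(I+E_A)^{-1}M_\lambda^{-1/2}\preceq2M_\lambda^{-1}.
\]
This step needs care in infinite dimensions: \(M_\lambda^{-1/2}\) is bounded since \(M_\lambda\succeq\lambda I\), so the conjugation argument applies directly, and no appeal to order-reversal of inversion for unbounded operators is needed.

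For the trace bound, write \(r_A=\operatorname{tr}(M_A^{-1/2}\Sigma_XM_A^{-1/2})=\operatorname{tr}(\Sigma_X^{1/2}M_A^{-1}\Sigma_X^{1/2})\). This cyclicity is valid for positive operators because both expressions equal the squared Hilbert–Schmidt norm of \(\Sigma_X^{1/2}M_A^{-1/2}\). Conjugating \(M_A^{-1}\preceq2M_\lambda^{-1}\) by \(\Sigma_X^{1/2}\) and using monotonicity of the trace on positive operators gives \(r_A\le2\operatorname{tr}(\Sigma_X^{1/2}M_\lambda^{-1}\Sigma_X^{1/2})=2d_{X,\lambda}\).

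For the operator norm, use \(\Sigma_X\preceq M_\lambda\preceq2M_A\). Conjugating by \(M_A^{-1/2}\) gives \(\mathcal V_A\preceq2I\); since \(\mathcal V_A\succeq0\), this means \(\rho_A\le2\). Finally, \(s_A=\operatorname{tr}(\mathcal V_A^2)\le\|\mathcal V_A\|_{\rm op}\operatorname{tr}(\mathcal V_A)=\rho_Ar_A\le4d_{X,\lambda}\).

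The only delicate point is the trace-class bookkeeping in infinite dimensions, namely the cyclicity identity and the monotonicity of the trace. Both follow from the Hilbert–Schmidt representation, and \(d_{X,\lambda}<\infty\) because \(\Sigma_X\) is trace class.
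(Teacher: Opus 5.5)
Your proposal is correct and follows the paper's proof essentially step for step: the factorization \(M_A=M_\lambda^{1/2}(I+E_A)M_\lambda^{1/2}\) with conjugation for the two-sided bound, \(M_A^{-1}\preceq2M_\lambda^{-1}\) for the trace bound, \(\Sigma_X\preceq M_\lambda\preceq2M_A\) for \(\rho_A\le2\), and \(s_A\le\rho_Ar_A\). The only cosmetic difference is that you get \(M_A^{-1}\preceq2M_\lambda^{-1}\) by inverting the factorization, whereas the paper reverses the order in \(\tfrac12M_\lambda\preceq M_A\); the two are equivalent here because every operator involved is bounded and boundedly invertible.
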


\begin{proof}
Writing \(E_A=M_\lambda^{-1/2}
(\widehat\Sigma_{X,A}-\Sigma_X)M_\lambda^{-1/2}\), one has
\(M_A=M_\lambda^{1/2}(I+E_A)M_\lambda^{1/2}\).  On
\eqref{eq:pilot-relative-covariance-event},
\[
 \tfrac12 I\preceq I+E_A\preceq\tfrac32I.
\]
Conjugating by \(M_\lambda^{1/2}\) gives the two-sided metric comparison in
\eqref{eq:supp-pilot-geometry-consequences}, and the lower metric bound
\(\tfrac12M_\lambda\preceq M_A\) gives
\(M_A^{-1}\preceq2M_\lambda^{-1}\).
For any orthonormal basis \((e_j)_{j\ge1}\), the trace formula for positive
trace-class operators and
\(M_A^{-1}\preceq2M_\lambda^{-1}\) give, term by term,
\[
 \begin{aligned}
 r_A
 &=\sum_{j\ge1}\langle\Sigma_X^{1/2}e_j,
 M_A^{-1}\Sigma_X^{1/2}e_j\rangle\\
 &\le2\sum_{j\ge1}\langle\Sigma_X^{1/2}e_j,
 M_\lambda^{-1}\Sigma_X^{1/2}e_j\rangle
 =2d_{X,\lambda}.
 \end{aligned}
\]
Likewise, since \(\Sigma_X\preceq M_\lambda\preceq2M_A\),
\(\rho_A\le2\), and then
\(s_A\le\rho_Ar_A\le4d_{X,\lambda}\).
\end{proof}

\begin{proof}[Proof of
\MainResult{Theorem}{thm:balanced-one-split-linear-confidence}]
Condition on a pilot realization with \(\rho_A\le R\).  If \(R=0\), then
\(\rho_A=0\), so the positive operator \(\mathcal V_A\) is zero.  Since
\(M_A\) is invertible, \(\mathcal V_A=0\) implies \(\Sigma_X=0\), and
therefore \(X=PX\)
almost surely.  Consequently
\[
 r_A=\widehat r_{B_A\mid A}
 =\widehat T_{X,A,\lambda}
 =\|PX-P_nX\|_{M_A^{-1}}^2=0,
\]
so the theorem holds with
\(\overline r_{B_A\mid A}(t;0)=
\widehat\Delta_{X,A,\lambda}(t;0)=0\).
It remains to consider \(R>0\).

The scalar
\(-\langle b_A,\xi_A\rangle\) is centered sub-Gaussian and satisfies, outside
an event of conditional probability at most \(e^{-t}\),
\begin{equation}
\label{eq:conditional-cross-term-bound}
 -\langle b_A,\xi_A\rangle
 \le C_{\rm cr}(\bar\kappa)\sqrt{\frac{Rt}{N}}\,\|b_A\|.
\end{equation}
Indeed, conditional on the pilot sample \(S_A\), Proposition~2.6.1 of
\cite{vershynin2025high} and independence of the evaluation observations
give, for every \(\theta>0\),
\[
 \mathbb E_{B_A}\exp\{-\theta\langle b_A,\xi_A\rangle\}
 \le
 \exp\left\{\frac{C(\bar\kappa)\theta^2}{N}
 \langle b_A,\mathcal V_Ab_A\rangle\right\}
 \le
 \exp\left\{\frac{C(\bar\kappa)\theta^2R}{N}\|b_A\|^2\right\}.
\]
Consequently, for \(x>0\),
\[
 \Pr_{B_A}\{-\langle b_A,\xi_A\rangle>x\}
 \le
 \exp\left\{-\theta x+
 \frac{C(\bar\kappa)\theta^2R}{N}\|b_A\|^2\right\}.
\]
If \(b_A=0\), the random variable in
\eqref{eq:conditional-cross-term-bound} is zero.  If \(b_A\ne0\), the
exponent is minimized at
\[
 \theta_\star
 =\frac{xN}{2C(\bar\kappa)R\|b_A\|^2},
\]
and substitution gives
\[
 \Pr_{B_A}\{-\langle b_A,\xi_A\rangle>x\}
 \le\exp\left\{-\frac{x^2N}
 {4C(\bar\kappa)R\|b_A\|^2}\right\}.
\]
Taking
\(x=2\sqrt{C(\bar\kappa)Rt/N}\,\|b_A\|\) proves
\eqref{eq:conditional-cross-term-bound} with
\(2\sqrt{C(\bar\kappa)}\) in place of
\(C_{\rm cr}(\bar\kappa)\).
Choose \(C_{\rm cr}(\bar\kappa)\) so that
\[
 C_{\rm cr}(\bar\kappa)
 \ge2\sqrt{C(\bar\kappa)}
 \max\{2,\sqrt{2C_{\rm tr}(\bar\kappa)}\}.
\]
The observable calibration statistic also gives
\begin{equation}
\label{eq:observable-pilot-mean-bound}
 \|b_A\|\le\|b_A+\xi_A\|+\|\xi_A\|
 =2\sqrt{\widehat T_{X,A,\lambda}}+\|\xi_A\|.
\end{equation}
Lemma~\ref{lem:conditional-evaluation-trace-sandwich} gives
\eqref{eq:conditional-evaluation-mean-norm} and
\eqref{eq:conditional-evaluation-trace-lower-bound} simultaneously outside
an event of conditional probability at most \(2e^{-t}\).  Intersecting the
events in \eqref{eq:conditional-evaluation-mean-norm},
\eqref{eq:conditional-evaluation-trace-lower-bound}, and
\eqref{eq:conditional-cross-term-bound} gives conditional failure probability
at most \(3e^{-t}\).  On the intersection of the events in
\eqref{eq:conditional-evaluation-mean-norm},
\eqref{eq:conditional-evaluation-trace-lower-bound}, and
\eqref{eq:conditional-cross-term-bound}, replace
\(r_A\) in \eqref{eq:conditional-evaluation-mean-norm} by
\(\overline r_{B_A\mid A}(t;R)\), giving
\[
 \|\xi_A\|^2
 \le \frac{C_{\rm tr}(\bar\kappa)}N
 \{\overline r_{B_A\mid A}(t;R)
 +2\sqrt{R\overline r_{B_A\mid A}(t;R)t}+2Rt\}.
\]
The tail calculation with
coefficient \(2\sqrt{C(\bar\kappa)}\), together with
\eqref{eq:observable-pilot-mean-bound} and the explicit inequality for
\(\|\xi_A\|^2\), gives
\begin{align*}
 -\langle b_A,\xi_A\rangle
 &\le2\sqrt{C(\bar\kappa)}\bigg[
 2\sqrt{\frac{Rt\widehat T_{X,A,\lambda}}N}\\
 &\qquad+\frac{\sqrt{C_{\rm tr}(\bar\kappa)}}N
 \sqrt{Rt\{\overline r_{B_A\mid A}(t;R)
 +2\sqrt{R\overline r_{B_A\mid A}(t;R)t}+2Rt\}}
 \bigg].
\end{align*}
For \(x,y\ge0\),
\(x^2+2xy+2y^2\le2(x+y)^2\).  Applying
\(x^2+2xy+2y^2\le2(x+y)^2\) with
\(x=\sqrt{\overline r_{B_A\mid A}(t;R)}\) and \(y=\sqrt{Rt}\) gives
\begin{align}
 -\langle b_A,\xi_A\rangle
 &\le2\sqrt{C(\bar\kappa)}\bigg[
 2\sqrt{\frac{Rt\widehat T_{X,A,\lambda}}N}\notag\\
 &\qquad+\frac{\sqrt{2C_{\rm tr}(\bar\kappa)}}N
 \{\sqrt{R\overline r_{B_A\mid A}(t;R)t}+Rt\}\bigg]\notag\\
 &\le C_{\rm cr}(\bar\kappa)\bigg[
 \sqrt{\frac{Rt\widehat T_{X,A,\lambda}}N}
 +\frac1N
 \{\sqrt{R\overline r_{B_A\mid A}(t;R)t}+Rt\}\bigg].
\label{eq:conditional-cross-term-observable-bound}
\end{align}
Combining \eqref{eq:conditional-cross-term-observable-bound} with
Lemma~\ref{lem:balanced-calibration-cross-term} proves the one-split bound.
\end{proof}

\begin{proof}[Proof of
\MainResult{Corollary}{cor:gaussian-one-split-linear-confidence}]
Put \(R=R_0=L/\lambda\) and \(t=t_\alpha^{\rm G}\).  Since
\(M_A\succeq\lambda I\),
\(\rho_A\le L/\lambda=R\) for every pilot realization.  If \(R=0\), then
\(\Sigma_X=0\), so the random quantities and deterministic bounds in the two
inequalities of
\MainResult{Corollary}{cor:gaussian-one-split-linear-confidence} reduce to
zero almost surely.  We henceforth suppose \(R>0\), condition on the pilot realization,
and put
\[
 U_i:=M_A^{-1/2}(X_i-PX),\qquad i\in B_A.
\]
The evaluation variables are conditionally independent Gaussian random
elements with covariance \(\mathcal V_A\); the evaluation-sample covariance
has trace \(\widehat r_{B_A\mid A}\), and the evaluation-sample mean is
\(\xi_A\).  The lower-tail
part of Lemma~\ref{lem:hilbert-gaussian-mean-trace-concentration} gives,
outside an event of conditional probability at most \(e^{-t}\),
\begin{equation}
\label{eq:gaussian-conditional-trace-lower-event}
 \widehat r_{B_A\mid A}
 \ge r_A-2\sqrt{\frac{s_At}{N-1}}
 \ge r_A-2\sqrt{\frac{Rr_At}{N-1}}.
\end{equation}
With \(q=\sqrt{r_A}\) and
\(a=\sqrt{Rt/(N-1)}\),
\eqref{eq:gaussian-conditional-trace-lower-event} becomes
\(q^2-2aq-\widehat r_{B_A\mid A}\le0\).  The nonnegative root of
\(q^2-2aq-\widehat r_{B_A\mid A}=0\) therefore gives
\[
 r_A\le
 \left\{a+\sqrt{a^2+\widehat r_{B_A\mid A}}\right\}^2
 =\overline r_{B_A\mid A}^{\rm G}(t;R),
\]
which is \MainEquation{eq:gaussian-conditional-trace-ucb}.

Equation~\eqref{eq:hilbert-gaussian-mean-tail}, applied conditionally with
covariance \(\mathcal V_A\), gives, outside an event of conditional failure
probability at most \(e^{-t}\),
\begin{equation}
\label{eq:gaussian-conditional-evaluation-mean-event}
 \|\xi_A\|^2
 \le\frac{r_A+2\sqrt{s_At}+2\rho_At}{N}.
\end{equation}
On the intersection of \eqref{eq:gaussian-conditional-evaluation-mean-event}
and \eqref{eq:gaussian-conditional-trace-lower-event},
\[
 \|\xi_A\|^2
 \le\frac{\overline r_{B_A\mid A}^{\rm G}(t;R)
 +2\sqrt{R\overline r_{B_A\mid A}^{\rm G}(t;R)t}+2Rt}{N},
\]
because
\(r_A\le\overline r_{B_A\mid A}^{\rm G}(t;R)\) from
\eqref{eq:gaussian-conditional-trace-lower-event},
\(s_A\le\rho_Ar_A\), and \(\rho_A\le R\); the function
\(r\mapsto r+2\sqrt{Rrt}+2Rt\) is increasing for \(r\ge0\).
Conditional on the pilot,
\(-\langle b_A,\xi_A\rangle\) is centered normal with variance at most
\(R\|b_A\|^2/N\).  For every \(\theta>0\),
\[
 \mathbb E_{B_A}\exp\{-\theta\langle b_A,\xi_A\rangle\}
 =\exp\left\{\frac{\theta^2}{2}
 \operatorname{Var}_{B_A}(\langle b_A,\xi_A\rangle)\right\}
 \le\exp\left\{\frac{\theta^2R\|b_A\|^2}{2N}\right\}.
\]
If \(b_A\ne0\), then, for \(x>0\) and \(\theta>0\),
\[
 \Pr_{B_A}\{-\langle b_A,\xi_A\rangle>x\}
 \le\exp\left\{-\theta x+\frac{\theta^2R\|b_A\|^2}{2N}\right\}.
\]
Choosing \(\theta=xN/(R\|b_A\|^2)\) gives
\[
 \Pr_{B_A}\{-\langle b_A,\xi_A\rangle>x\}
 \le\exp\left\{-\frac{x^2N}{2R\|b_A\|^2}\right\}.
\]
Taking \(x=\sqrt{2Rt/N}\,\|b_A\|\) gives the event
\begin{equation}
\label{eq:gaussian-conditional-cross-event}
 -\langle b_A,\xi_A\rangle
 \le\sqrt{\frac{2Rt}{N}}\,\|b_A\|.
\end{equation}
The conditional failure probability of
\eqref{eq:gaussian-conditional-cross-event} is at most \(e^{-t}\); when
\(b_A=0\),
the inequality holds identically.
On the intersection of
\eqref{eq:gaussian-conditional-trace-lower-event},
\eqref{eq:gaussian-conditional-evaluation-mean-event}, and
\eqref{eq:gaussian-conditional-cross-event}, use
\[
 \|b_A\|\le2\sqrt{\widehat T_{X,A,\lambda}}+\|\xi_A\|,
 \qquad
 \|\xi_A\|^2
 \le\frac{\overline r_{B_A\mid A}^{\rm G}(t;R)
 +2\sqrt{R\overline r_{B_A\mid A}^{\rm G}(t;R)t}+2Rt}{N},
\]
in \eqref{eq:gaussian-conditional-cross-event}.  For \(x,y\ge0\), the inequality
\[
 \sqrt{2y^2(x^2+2xy+2y^2)}\le2y(x+y),
\]
with \(x=\sqrt{\overline r_{B_A\mid A}^{\rm G}(t;R)}\) and
\(y=\sqrt{Rt}\) then gives
\[
 -\langle b_A,\xi_A\rangle
 \le2\sqrt{\frac{2Rt\widehat T_{X,A,\lambda}}N}
 +\frac{2\{\sqrt{R\overline r_{B_A\mid A}^{\rm G}(t;R)t}+Rt\}}N.
\]
Thus
\(-\langle b_A,\xi_A\rangle
\le\widehat\Delta_{X,A,\lambda}^{\rm G}(t;R)\), where
\(\widehat\Delta_{X,A,\lambda}^{\rm G}(t;R)\) is defined in
\MainEquation{eq:gaussian-quadratic-confidence-inflation}.
Lemma~\ref{lem:balanced-calibration-cross-term} and the union bound over the
trace, evaluation-mean, and scalar cross-term events prove the result, with
conditional failure probability at most \(3e^{-t}\).
\end{proof}

The bounded-feature specialization uses an elementary exponential bound for
a scalar random variable \(W\) supported on an interval
of length \(c>0\).  After translating the
interval to \([0,c]\), put \(\mu=\mathbb EW/c\).  For \(s>0\), convexity of
\(x\mapsto e^{-sx}\) gives
\begin{equation}
\label{eq:bounded-centered-exponential-moment}
 \mathbb E e^{s(\mathbb EW-W)}
 \le e^{sc\mu}\{1-\mu+\mu e^{-sc}\}
 \le e^{s^2c^2/8}.
\end{equation}
To prove
\(e^{sc\mu}\{1-\mu+\mu e^{-sc}\}\le e^{s^2c^2/8}\), the logarithm of
\(e^{sc\mu}\{1-\mu+\mu e^{-sc}\}\), as a function of \(y=sc\), has value
and derivative zero at
\(y=0\), while the second derivative with respect to \(y\) is
\(p_y(1-p_y)\le1/4\), where
\(p_y=\mu e^{-y}/(1-\mu+\mu e^{-y})\).  The case \(c=0\) is immediate.

\begin{proof}[Proof of
\MainResult{Corollary}{cor:bounded-feature-one-split-linear-confidence}]
After conditioning on the pilot sample \(S_A\), the scalar variable
\(\langle M_A^{-1/2}b_A,X\rangle\) ranges over an interval of length at most
\(2B\|M_A^{-1/2}b_A\|\).  Set
\[
 Y_i=\langle M_A^{-1/2}b_A,X_i-PX\rangle,\qquad i\in B_A,
\]
apply \eqref{eq:bounded-centered-exponential-moment} independently to the
\(Y_i\)'s with \(s=\theta/N\) and interval length
\(2B\|M_A^{-1/2}b_A\|\).  Since \(\mathbb EY_i=0\), independence of the
\(Y_i\)'s gives
\[
 \mathbb E_{B_A}\exp\left\{-\frac{\theta}{N}
 \sum_{i\in B_A}Y_i\right\}
 \le
 \exp\left\{\frac{\theta^2B^2
 \|M_A^{-1/2}b_A\|^2}{2N}\right\}.
\]
Put \(L_A=B\|M_A^{-1/2}b_A\|\).  For \(x>0\) and \(\theta>0\),
\[
 \Pr_{B_A}\{-\langle b_A,\xi_A\rangle>x\}
 \le
 \exp\left\{-\theta x+\frac{\theta^2L_A^2}{2N}\right\}.
\]
If \(L_A>0\), take
\[
 x=L_A\sqrt{\frac{2t}{N}},
 \qquad
 \theta=\frac{Nx}{L_A^2};
\]
the exponent is then \(-Nx^2/(2L_A^2)=-t\).  If \(L_A=0\), then
\(\langle b_A,\xi_A\rangle=0\), so
\[
 \Pr_{B_A}\left\{-\langle b_A,\xi_A\rangle
 >L_A\sqrt{\frac{2t}{N}}\right\}=0.
\]
Thus, outside an event of probability at most \(e^{-t}\),
\begin{equation}
\label{eq:bounded-feature-conditional-cross-event}
 -\langle b_A,\xi_A\rangle
 =-\langle M_A^{-1/2}b_A,\overline X_{B_A}-PX\rangle
 \le B\|M_A^{-1/2}b_A\|\sqrt{\frac{2t}{N}}
 \le B\sqrt{\frac{2t}{\lambda N}}\,\|b_A\|.
\end{equation}
For the independent evaluation fold, the variance identity and Jensen's
inequality imply
\[
 \mathbb E_{B_A}\|\overline X_{B_A}-PX\|
 \le\left\{\frac{\operatorname{tr}(\Sigma_X)}N\right\}^{1/2}
 \le\frac B{\sqrt N}.
\]
Changing one evaluation observation changes
\(\|\overline X_{B_A}-PX\|\) by at most \(2B/N\).  Put
\(F=\|\overline X_{B_A}-PX\|\), relabel the evaluation observations as
\(X_1,\ldots,X_N\), and let
\[
 D_i=\mathbb E(F\mid X_1,\ldots,X_i)
 -\mathbb E(F\mid X_1,\ldots,X_{i-1})
\]
be the successive conditional differences.  Conditional on the first
\(i-1\) observations, the possible values of \(D_i\) lie in an interval of
length at most \(2B/N\).  Applying
\eqref{eq:bounded-centered-exponential-moment} conditionally to each
difference and iterating expectations gives
\[
 \mathbb E_{B_A}\exp\left[
 \theta\{F-\mathbb E_{B_A}F\}\right]
 \le\exp\left\{\frac{\theta^2}{8}
 N\left(\frac{2B}{N}\right)^2\right\}
 =\exp\left(\frac{\theta^2B^2}{2N}\right).
\]
If \(B>0\), then, for \(x>0\) and \(\theta>0\),
\begin{equation}
\label{eq:bounded-feature-evaluation-mean-tail}
 \Pr_{B_A}\{F-\mathbb E_{B_A}F>x\}
 \le\exp\left\{-\theta x+\frac{\theta^2B^2}{2N}\right\}.
\end{equation}
Taking
\[
 x=B\sqrt{\frac{2t}{N}},
 \qquad
 \theta=\frac{Nx}{B^2}
\]
makes the exponent \(-Nx^2/(2B^2)=-t\).  When \(B=0\), one has \(F=0\)
almost surely.  Equation \eqref{eq:bounded-feature-evaluation-mean-tail} and
\(\mathbb E_{B_A}F\le B/\sqrt N\) show that the event
\begin{equation}
\label{eq:bounded-feature-evaluation-mean-event}
 \|\xi_A\|
 \le\lambda^{-1/2}\|\overline X_{B_A}-PX\|
 \le\frac{B(1+\sqrt{2t})}{\sqrt{\lambda N}}.
\end{equation}
has failure probability at most \(e^{-t}\).  On the intersection of
\eqref{eq:bounded-feature-conditional-cross-event} and
\eqref{eq:bounded-feature-evaluation-mean-event},
\eqref{eq:observable-pilot-mean-bound} yields
\[
 -\langle b_A,\xi_A\rangle
 \le2B\sqrt{\frac{2t\widehat T_{X,A,\lambda}}{\lambda N}}
 +\frac{B^2\{\sqrt{2t}+2t\}}{\lambda N},
\]
which is \MainEquation{eq:bounded-feature-quadratic-confidence-inflation}.
Lemma~\ref{lem:balanced-calibration-cross-term} proves the one-split
statement with failure probability at most \(2e^{-t}\).
\end{proof}

\subsection{Regularized covariance concentration}

Assume that the covariance-matched sub-Gaussian condition in
\MainEquation{eq:main-covariance-matched-subgaussian} holds with the known
constant \(\bar\kappa\); equivalently,
\begin{equation}
\label{eq:covariance-matched-subgaussian}
 \|\langle u,X-PX\rangle\|_{\psi_2}
 \le\bar\kappa\langle u,\Sigma_Xu\rangle^{1/2},
 \qquad u\in\mathcal H,
\end{equation}
where the norm is defined in \eqref{eq:supp-psi-two-norm}.

Lemma~\ref{lem:regularized-covariance-concentration} bounds the pilot
covariance error in the \(M_\lambda\) metric.  When
\(\vartheta_{N,\lambda}(t\vee1)\le1/2\), the conclusion of
Lemma~\ref{lem:regularized-covariance-concentration} verifies
\eqref{eq:pilot-relative-covariance-event}.

\begin{lemma}[Regularized covariance concentration]
\label{lem:regularized-covariance-concentration}
Assume \eqref{eq:covariance-matched-subgaussian}.  Let
\(\vartheta_{N,\lambda}\) be the rate in
\MainEquation{eq:main-pilot-stability-rate}.  There exists
\(C_{\rm cov}(\bar\kappa)\ge1\), depending only on \(\bar\kappa\), such that,
for every \(\lambda>0\),
integer \(N\ge2\),
deterministic \(\mathcal I\subset[n]\) with
\(|\mathcal I|=N\) and every \(t\ge0\),
\begin{equation}
\label{eq:regularized-covariance-concentration}
 \Pr\left\{
 \left\|
 M_\lambda^{-1/2}
 (\widehat\Sigma_{X,\mathcal I}-\Sigma_X)
 M_\lambda^{-1/2}
 \right\|_{\rm op}
 >\vartheta_{N,\lambda}(t\vee1)
 \right\}
 \le e^{-t}.
\end{equation}
\end{lemma}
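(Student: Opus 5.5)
The plan is to reduce to a standard relative-error covariance bound for whitened sub-Gaussian vectors. Put \(W_i:=M_\lambda^{-1/2}(X_i-PX)\) for \(i\in\mathcal I\). These are independent and centered, with covariance \(\Sigma_W:=M_\lambda^{-1/2}\Sigma_XM_\lambda^{-1/2}\). We have \(\|\Sigma_W\|_{\rm op}\le1\) and \(\operatorname{tr}\Sigma_W=d_{X,\lambda}\). The condition \eqref{eq:covariance-matched-subgaussian} transfers to \(W\) with the same \(\bar\kappa\), since \(\langle u,W\rangle=\langle M_\lambda^{-1/2}u,X-PX\rangle\). The target operator is \(E:=M_\lambda^{-1/2}(\widehat\Sigma_{X,\mathcal I}-\Sigma_X)M_\lambda^{-1/2}\). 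The pairwise/centering identity splits it as
\[
 E=\frac{N}{N-1}\Bigl(\frac1N\sum_{i\in\mathcal I}W_i\otimes W_i-\Sigma_W\Bigr)
 +\frac1{N-1}\Sigma_W-\frac{N}{N-1}\overline W\otimes\overline W .
\]
The middle term has operator norm at most \(1/(N-1)\), which is dominated by \(\vartheta_{N,\lambda}\) because \(C_{\rm cov}\ge1\) and \(t\vee1\ge1\). The last term has norm \(\frac N{N-1}\|\overline W\|^2\). By Lemma~\ref{lem:hilbert-subgaussian-quadratic-concentration} with \(r=d_{X,\lambda}\), \(s\le d_{X,\lambda}\) and \(\rho\le1\), this norm is at most \(C(d_{X,\lambda}+t)/N\) with probability \(1-e^{-t}\). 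That is the linear part of the rate.

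The main term is the uncentered empirical second moment \(\frac1N\sum W_i\otimes W_i-\Sigma_W\). I would bound it by \(C\{\sqrt{(d+t)/N}+(d+t)/N\}\), with \(d=d_{X,\lambda}\), with probability \(1-e^{-t}\). This is the effective-rank covariance estimation bound of Koltchinskii--Lounici type, stated relative to \(\|\Sigma_W\|_{\rm op}\le1\) with effective rank \(\operatorname{tr}\Sigma_W/\|\Sigma_W\|_{\rm op}\); note that \(d\ge\operatorname{tr}\Sigma_W\), so the loss is harmless. To stay self-contained I would prove it by a truncated Bernstein argument.

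First, reduce to finite rank: project onto the top \(m\) eigenvectors of \(\Sigma_W\) and pass \(m\to\infty\) using the same limit-event argument as in Lemma~\ref{lem:hilbert-subgaussian-quadratic-concentration}. The trace-class tails converge because \(\mathbb E\|W\|^2=d<\infty\). Next, apply a matrix Bernstein inequality with intrinsic dimension to the summands \(W_i\otimes W_i-\Sigma_W\). The variance proxy \(\mathbb E\|W\|^2W\otimes W\) is at most \(C\bar\kappa^4 d\,\Sigma_W\) in Loewner order; this follows from the \(\psi_1\) control of \(\|W\|^2\) in \eqref{eq:hilbert-norm-square-psi-one} together with Cauchy--Schwarz applied to \(\langle u,W\rangle^2\). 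The intrinsic dimension factor is then of order \(d\) after normalization. The obstacle is that the summands are unbounded. I would handle this by truncating at \(\|W_i\|^2\le C(d+t+\log N)\). The tail from \eqref{eq:hilbert-norm-square-psi-one} and a union bound control the truncation event and the bias.

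The step I expect to be the main obstacle is removing the \(\log\) factors that intrinsic-dimension Bernstein produces, namely \(\log(d)\), and the \(\log N\) coming from truncation. Getting exactly \(\sqrt{(d+t)/N}\) requires the generic-chaining result of Koltchinskii and Lounici (their Theorem 9). I would cite it directly for the whitened sub-Gaussian vectors \(W_i\), whose \(\psi_2\) condition is covariance-matched. That theorem gives the expectation bound \(\|\Sigma_W\|_{\rm op}\{\sqrt{\mathbf r/N}+\mathbf r/N\}\), and its deviation form adds \(\sqrt{t/N}+t/N\). Finally, I would combine the three terms via a union bound, adjusting \(t\) by a constant shift absorbed into \(C_{\rm cov}(\bar\kappa)\), and use \(t\vee1\) so that the bound also covers \(t<1\).
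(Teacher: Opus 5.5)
Your proposal is correct and follows the paper's proof essentially step for step. The shared steps are:
\begin{enumerate}
\item whiten by \(M_\lambda^{-1/2}\) and split the error with the sample-centering identity;
\item bound \(\|\overline W\|^2\) with the Hilbert-space quadratic (Hsu--Kakade--Zhang) bound, using \(r=d_{X,\lambda}\), \(s\le d_{X,\lambda}\) and \(\rho\le1\);
\item bound the uncentered second moment by Koltchinskii--Lounici Theorem~9, then use \(\|\Sigma_W\|_{\rm op}\le1\) to turn the effective-rank terms into \(\sqrt{d_{X,\lambda}/N}+d_{X,\lambda}/N\) (the effective rank itself may exceed \(d_{X,\lambda}=\operatorname{tr}\Sigma_W\), but only its product with \(\|\Sigma_W\|_{\rm op}\) matters);
\item finish with a union bound at a shifted tail level such as \(s=(t+\log2)\vee1\).
\end{enumerate}
The truncated matrix-Bernstein detour is unnecessary: as you note, it would cost logarithmic factors that the stated rate does not allow, so citing Theorem~9 directly is the right choice and is also what the paper does.
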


\begin{proof}
Define the centered \(\mathcal H\)-valued random element \(U_\lambda\) and
the covariance operator \(T_\lambda:\mathcal H\to\mathcal H\) of
\(U_\lambda\) by
\[
 U_\lambda:=M_\lambda^{-1/2}(X-PX),
 \qquad
 T_\lambda:=\mathbb E(U_\lambda\otimes U_\lambda)
 =M_\lambda^{-1/2}\Sigma_XM_\lambda^{-1/2}.
\]
Then \(T_\lambda\) is nonnegative and self-adjoint,
\(\|T_\lambda\|_{\rm op}\le1\), and
\[
 \operatorname{tr}(T_\lambda)=d_{X,\lambda}
 =\mathbb E\|U_\lambda\|^2
 \le\lambda^{-1}\mathbb E\|X-PX\|^2<\infty.
\]
The trace identity shows that the nonnegative eigenvalues of \(T_\lambda\)
are summable, while
\eqref{eq:covariance-matched-subgaussian} gives
\begin{equation}
\label{eq:regularized-subgaussian-coordinate-bound}
 \|\langle u,U_\lambda\rangle\|_{\psi_2}
 \le\bar\kappa\langle u,T_\lambda u\rangle^{1/2},
 \qquad u\in\mathcal H.
\end{equation}
If \(T_\lambda=0\), then
\(\mathbb E\|U_\lambda\|^2=0\).  Since \(M_\lambda^{-1/2}\) is
injective, \(X-PX=0\) almost surely, so both covariances vanish and the
claim is immediate.  We henceforth suppose that \(T_\lambda\ne0\).  Let
\((\tau_j,e_j)_{1\le j\le J}\), with
\(J\in\mathbb N\cup\{\infty\}\), be the positive eigenpairs of
\(T_\lambda\).  Let
\((g_j)_{1\le j\le J}\) be independent \(N(0,1)\) variables.  If
\(J<\infty\), set
\[
 \Gamma_\lambda:=\sum_{j=1}^J\sqrt{\tau_j}\,g_je_j.
\]
If \(J=\infty\), the partial sums
\[
 \Gamma_{\lambda,D}:=\sum_{j=1}^D\sqrt{\tau_j}\,g_je_j
\]
satisfy
\[
 \mathbb E\|\Gamma_{\lambda,D'}-\Gamma_{\lambda,D}\|^2
 =\sum_{j=D+1}^{D'}\tau_j\longrightarrow0
 \qquad(D,D'\to\infty),
\]
because \(\sum_j\tau_j=\operatorname{tr}(T_\lambda)<\infty\); define
\(\Gamma_\lambda\) as the \(L^2(\mathcal H)\) limit of
\((\Gamma_{\lambda,D})_{D\ge1}\).  For every
\(u\in\mathcal H\), the scalar projections converge in \(L^2\), and the
centered normal variances converge to
\(\sum_j\tau_j\langle u,e_j\rangle^2=\langle u,T_\lambda u\rangle\).
Thus every scalar projection of \(\Gamma_\lambda\) is centered normal, and
the covariance form of \(\Gamma_\lambda\) is \(T_\lambda\).  In either case,
\(\Gamma_\lambda\) is therefore centered Gaussian with covariance
\(T_\lambda\).
The bound \(\mathbb E\|U_\lambda\|^2<\infty\) and
\eqref{eq:regularized-subgaussian-coordinate-bound} verify the hypotheses of
\cite[Theorem~9]{koltchinskii2017concentration}; the constant in Theorem~9 is
controlled by \(\bar\kappa\).

For each \(i\in\mathcal I\), define the standardized observation
\[
 U_{\lambda,i}:=M_\lambda^{-1/2}(X_i-PX)\in\mathcal H.
\]
Let \(\widetilde T_{\lambda,\mathcal I}:\mathcal H\to\mathcal H\) be the
uncentered empirical second-moment operator of
\((U_{\lambda,i})_{i\in\mathcal I}\):
\[
 \widetilde T_{\lambda,\mathcal I}
 :=\frac1N\sum_{i\in\mathcal I}U_{\lambda,i}^{\otimes2}.
\]
If \(\Gamma_\lambda\) is centered Gaussian with covariance
\(T_\lambda\), the effective rank used in
\cite[Theorem~9]{koltchinskii2017concentration} is
\[
 r_{\rm KL}(T_\lambda)
 :=\frac{(\mathbb E\|\Gamma_\lambda\|)^2}
 {\|T_\lambda\|_{\rm op}}.
\]
For every \(s\ge1\), Theorem~9 of
\cite{koltchinskii2017concentration} gives, with probability at least
\(1-e^{-s}\),
\begin{align}
 \|\widetilde T_{\lambda,\mathcal I}-T_\lambda\|_{\rm op}
 \le C_{\bar\kappa}\|T_\lambda\|_{\rm op}
 \bigg[
 &\sqrt{\frac{r_{\rm KL}(T_\lambda)}N}
 \vee\frac{r_{\rm KL}(T_\lambda)}N\\
 &\vee\sqrt{\frac sN}\vee\frac sN
 \bigg].
\label{eq:koltchinskii-regularized-second-moment-bound}
\end{align}
Using \(s=t\vee1\) also gives a failure probability at most
\(e^{-t}\) for every \(t\ge0\).
To convert \eqref{eq:koltchinskii-regularized-second-moment-bound} to
\(d_{X,\lambda}\), we control the products containing
\(\|T_\lambda\|_{\rm op}\); the effective rank itself may exceed
\(d_{X,\lambda}\).  Since
\((\mathbb E\|\Gamma_\lambda\|)^2
 \le\operatorname{tr}(T_\lambda)=d_{X,\lambda}\),
\begin{align*}
 \|T_\lambda\|_{\rm op}
 \sqrt{\frac{r_{\rm KL}(T_\lambda)}N}
 &\le\sqrt{\frac{d_{X,\lambda}}N},\\
 \|T_\lambda\|_{\rm op}
 \frac{r_{\rm KL}(T_\lambda)}N
 &\le\frac{d_{X,\lambda}}N.
\end{align*}
The two tail terms satisfy
\[
 \|T_\lambda\|_{\rm op}\sqrt{\frac sN}\le\sqrt{\frac sN},
 \qquad
 \|T_\lambda\|_{\rm op}\frac sN\le\frac sN.
\]
Thus
\begin{equation}
\label{eq:regularized-raw-second-moment-bound}
 \|\widetilde T_{\lambda,\mathcal I}-T_\lambda\|_{\rm op}
 \le C_{\bar\kappa}
 \left[
 \sqrt{\frac{d_{X,\lambda}+s}{N}}
 +\frac{d_{X,\lambda}+s}{N}
 \right].
\end{equation}

Let \(\overline U_{\lambda,\mathcal I}:=N^{-1}
\sum_{i\in\mathcal I}U_{\lambda,i}\).  The unbiased, sample-centered
covariance satisfies
\begin{align}
 &M_\lambda^{-1/2}\widehat\Sigma_{X,\mathcal I}
 M_\lambda^{-1/2}-T_\lambda\\
 &\quad=
 \frac N{N-1}(\widetilde T_{\lambda,\mathcal I}-T_\lambda)
 -\frac N{N-1}\overline U_{\lambda,\mathcal I}^{\otimes2}
 +\frac1{N-1}T_\lambda.
\label{eq:regularized-sample-centering-identity}
\end{align}
For the sample mean, apply the calculation in
\eqref{eq:hilbert-sample-mean-exponential-bound} to \(U_\lambda\), whose
covariance is \(T_\lambda\) and which satisfies
\eqref{eq:regularized-subgaussian-coordinate-bound}.  For a constant
\(c_{\bar\kappa}\) depending only on \(\bar\kappa\), the calculation gives
\begin{equation}
\label{eq:regularized-sample-mean-exponential-bound}
 \mathbb E\exp\langle u,\overline U_{\lambda,\mathcal I}\rangle
 \le
 \exp\left\{\frac{c_{\bar\kappa}}{2N}\langle u,T_\lambda u\rangle\right\},
 \qquad u\in\mathcal H.
\end{equation}
To apply \cite[Theorem~1]{hsu2012tail}, we verify the finite-dimensional
exponential-moment hypothesis of Theorem~1 from
\eqref{eq:regularized-sample-mean-exponential-bound}.  For \(1\le D\le J\),
define the vector \(x_D\in\mathbb R^D\) and diagonal linear map
\(A_D:\mathbb R^D\to\mathbb R^D\) by
\[
 x_D:=\left(
 \sqrt{\frac{N}{c_{\bar\kappa}\tau_j}}
 \langle\overline U_{\lambda,\mathcal I},e_j\rangle
 \right)_{j=1}^D,
 \qquad
 A_D:=\sqrt{\frac{c_{\bar\kappa}}{N}}
 \operatorname{diag}(\sqrt{\tau_1},\ldots,\sqrt{\tau_D}).
\]
For \(\alpha\in\mathbb R^D\), define \(u_\alpha\in\mathcal H\) by
\[
 u_\alpha
 :=\sum_{j=1}^D\alpha_j
 \sqrt{\frac{N}{c_{\bar\kappa}\tau_j}}\,e_j.
\]
Then
\[
 \alpha^\top x_D
 =\langle u_\alpha,\overline U_{\lambda,\mathcal I}\rangle,
 \qquad
 \langle u_\alpha,T_\lambda u_\alpha\rangle
 =\frac N{c_{\bar\kappa}}\sum_{j=1}^D\alpha_j^2
 =\frac N{c_{\bar\kappa}}\|\alpha\|^2.
\]
Substitution in
\eqref{eq:regularized-sample-mean-exponential-bound} gives
\[
 \mathbb E\exp(\alpha^\top x_D)\le\exp(\|\alpha\|^2/2).
\]
In addition,
\[
 (A_Dx_D)_j
 =\langle\overline U_{\lambda,\mathcal I},e_j\rangle,
 \qquad
 \|A_Dx_D\|^2=\sum_{j=1}^D
 |\langle\overline U_{\lambda,\mathcal I},e_j\rangle|^2.
\]
Theorem~1 of \cite{hsu2012tail}, with \(\mu=0\), \(\sigma=1\), and matrix
\(A_D\), therefore gives, with probability at least \(1-e^{-s}\),
\begin{align}
 \|A_Dx_D\|^2
 \le\frac{c_{\bar\kappa}}{N}\left\{
 \operatorname{tr}(T_{\lambda,D})
 +2\sqrt{\operatorname{tr}(T_{\lambda,D}^2)s}
 +2\|T_{\lambda,D}\|_{\rm op}s\right\},
\label{eq:regularized-finite-dimensional-mean-bound}
\end{align}
where \(T_{\lambda,D}:=\operatorname{diag}(\tau_1,\ldots,\tau_D)\).
Since \(\operatorname{tr}(T_\lambda)=d_{X,\lambda}\),
\(\|T_\lambda\|_{\rm op}\le1\), and
\(\operatorname{tr}(T_\lambda^2)\le d_{X,\lambda}\), the right-hand side of
\eqref{eq:regularized-finite-dimensional-mean-bound} is at most
\(C_{\bar\kappa}\{d_{X,\lambda}+s\}/N\), uniformly in \(D\).

If \(P_0\) projects onto \(\ker(T_\lambda)\), then
\(\mathbb E\|P_0U_\lambda\|^2=
\operatorname{tr}(P_0T_\lambda P_0)=0\); hence
\(P_0\overline U_{\lambda,\mathcal I}=0\) almost surely.  If \(J<\infty\),
take \(D=J\) in
\eqref{eq:regularized-finite-dimensional-mean-bound}.  The spectral
decomposition of \(T_\lambda\) and
\(P_0\overline U_{\lambda,\mathcal I}=0\) then give
\(\|\overline U_{\lambda,\mathcal I}\|^2=\|A_Jx_J\|^2\).
If \(J=\infty\), let
\[
 E_D=\left\{\|A_Dx_D\|^2
 \le C_{\bar\kappa}\frac{d_{X,\lambda}+s}{N}\right\}.
\]
The events \(E_D\) decrease with \(D\), and
\eqref{eq:regularized-finite-dimensional-mean-bound} gives
\(\Pr(E_D)\ge1-e^{-s}\) for every \(D\).  Hence
\[
 \Pr\left(\bigcap_{D=1}^\infty E_D\right)
 =\lim_{D\to\infty}\Pr(E_D)\ge1-e^{-s}.
\]
On \(\bigcap_{D=1}^\infty E_D\), the spectral decomposition of \(T_\lambda\) and
\(P_0\overline U_{\lambda,\mathcal I}=0\) give
\[
 \|A_Dx_D\|^2
 =\sum_{j=1}^D
 |\langle\overline U_{\lambda,\mathcal I},e_j\rangle|^2
 \mathrel{\uparrow}\|\overline U_{\lambda,\mathcal I}\|^2.
\]
Thus, in both the finite- and infinite-rank cases,
\begin{equation}
\label{eq:regularized-sample-mean-bound}
 \|\overline U_{\lambda,\mathcal I}\|^2
 \le C_{\bar\kappa}\frac{d_{X,\lambda}+s}{N}
\end{equation}
with probability at least \(1-e^{-s}\).

For \(t>0\), apply
\eqref{eq:regularized-raw-second-moment-bound} and
\eqref{eq:regularized-sample-mean-bound} with
\(s=(t+\log2)\vee1\).  The joint failure probability of
\eqref{eq:regularized-raw-second-moment-bound} and
\eqref{eq:regularized-sample-mean-bound} is at most
\(2e^{-s}\le e^{-t}\).  Equation
\eqref{eq:regularized-sample-centering-identity} and the triangle inequality
give
\begin{align}
 &\left\|
 M_\lambda^{-1/2}\widehat\Sigma_{X,\mathcal I}
 M_\lambda^{-1/2}-T_\lambda
 \right\|_{\rm op}\\
 &\quad\le
 \frac N{N-1}\|\widetilde T_{\lambda,\mathcal I}-T_\lambda\|_{\rm op}
 +\frac N{N-1}\|\overline U_{\lambda,\mathcal I}\|^2
 +\frac1{N-1}\|T_\lambda\|_{\rm op}.
\label{eq:regularized-centered-covariance-bound}
\end{align}
Substituting \eqref{eq:regularized-raw-second-moment-bound} and
\eqref{eq:regularized-sample-mean-bound} into
\eqref{eq:regularized-centered-covariance-bound}, using
\(s\le(1+\log2)(t\vee1)\), \(N/(N-1)\le2\), and
\(\|T_\lambda\|_{\rm op}\le1\), shows that the operator norm in
\eqref{eq:regularized-centered-covariance-bound} is at most
\(\vartheta_{N,\lambda}(t\vee1)\) after choosing
\(C_{\rm cov}(\bar\kappa)\) large enough, with
\(C_{\rm cov}(\bar\kappa)\) depending only on \(\bar\kappa\).  The inequality
\[
 \left\|M_\lambda^{-1/2}
 (\widehat\Sigma_{X,\mathcal I}-\Sigma_X)
 M_\lambda^{-1/2}\right\|_{\rm op}
 \le\vartheta_{N,\lambda}(t\vee1)
\]
proves
\eqref{eq:regularized-covariance-concentration} for \(t>0\).  For \(t=0\),
the bound \(\Pr\{\cdots\}\le e^{-t}\) in
\eqref{eq:regularized-covariance-concentration} is automatic because
\(e^{-0}=1\).
\end{proof}
 
\begin{proof}[Proof of
\MainResult{Theorem}{thm:balanced-one-split-linear-confidence-field}]
Set \(t=t_\alpha=\log(c_1/\alpha)\).  Under
\(\vartheta_{N,\lambda}(t)\le1/2\),
Lemmas~\ref{lem:regularized-covariance-concentration}
and~\ref{lem:regularized-pilot-geometry} give
\(\Pr\{\rho_A>2\}\le e^{-t}\).  Conditional on a pilot realization with
\(\rho_A\le2\),
\MainResult{Theorem}{thm:balanced-one-split-linear-confidence} with \(R=2\)
gives the quadratic bound in
\MainEquation{eq:balanced-finite-split-confidence-ellipsoid} with conditional
failure probability at most \(3e^{-t}\).  The unconditional failure
probability of \MainEquation{eq:balanced-finite-split-confidence-ellipsoid}
is therefore at most
\(4e^{-t}=4\alpha/c_1\le\alpha\).  Cauchy--Schwarz in the \(M_A\) metric gives
\MainEquation{eq:balanced-finite-split-linear-confidence} on the event where
\MainEquation{eq:balanced-finite-split-confidence-ellipsoid} holds.
If \(A\) is randomized independently of the observations, conditioning on
\(A\) and then averaging the conditional failure probability proves the
joint statement.
\end{proof}

\subsection{Finite prespecified split sequences}
\label{supp:subsec:finite-split-extension}
\label{supp:subsec:finite-split-proofs-rates}

The main paper uses one balanced split.  This subsection records the direct
extension to a finite, data-independent sequence; relative to the one-split
field, only the number of distinct oriented pilot folds enters the tail
level.  Let \(n=2N\) with
\(N\ge2\), fix \(\lambda>0\), \(\alpha\in(0,1)\), and an integer
\(K_{\rm sp}\ge1\), and let
\(\boldsymbol A=(A_1,\ldots,A_{K_{\rm sp}})\), where every \(A_k\) is a
balanced pilot fold, \(B_k=A_k^c\), and repetitions and overlaps are allowed.
The sequence may be fixed or randomized independently of the observations.
If \(\boldsymbol A\) is randomized independently of the observations, every
assumption involving the realized split sequence is required to hold almost
surely.  Write
\[
 K_\circ(\boldsymbol A)
 :=|\{A_k:1\le k\le K_{\rm sp}\}|,
\]
where the integer \(K_\circ(\boldsymbol A)\) counts distinct oriented pilot
folds.  Define the averaged pilot metric
\[
 \widetilde M_{X,\lambda}^{(K_{\rm sp})}
 :=\frac1{K_{\rm sp}}\sum_{k=1}^{K_{\rm sp}}M_{X,A_k,\lambda}.
\]
The operator \(\widetilde M_{X,\lambda}^{(K_{\rm sp})}\) is positive definite
on \(\mathcal H\).  For \(c_1\ge4\), define
the tail threshold and averaged squared-radius bound
\[
 t_\alpha:=\log\frac{c_1K_\circ(\boldsymbol A)}{\alpha},
 \qquad
 \widetilde U_{X,\lambda,\alpha}^{(K_{\rm sp})}
 :=\frac1{K_{\rm sp}}\sum_{k=1}^{K_{\rm sp}}
 \{\widehat T_{X,A_k,\lambda}
 +\widehat\Delta_{X,A_k,\lambda}(t_\alpha;2)\}.
\]

\begin{theorem}[Balanced finite-split confidence ellipsoid]
\label{supp:thm:balanced-finite-split-linear-confidence}
Assume the covariance-matched sub-Gaussian condition in
\MainEquation{eq:main-covariance-matched-subgaussian} with known constant
\(\bar\kappa\).  With \(\vartheta_{N,\lambda}\) defined in
\MainEquation{eq:main-pilot-stability-rate}, suppose
\begin{equation}
\label{supp:eq:finite-split-stability-conditions}
 \varepsilon_N(t_\alpha)<1,\qquad
 \vartheta_{N,\lambda}(t_\alpha)\le\frac12.
\end{equation}
For any fixed \(\boldsymbol A\), with probability at least \(1-\alpha\),
\begin{equation}
\label{supp:eq:balanced-finite-split-confidence-ellipsoid}
 \|PX-P_nX\|_{
 \{\widetilde M_{X,\lambda}^{(K_{\rm sp})}\}^{-1}}^2
 \le\widetilde U_{X,\lambda,\alpha}^{(K_{\rm sp})}.
\end{equation}
Whenever \eqref{supp:eq:balanced-finite-split-confidence-ellipsoid} holds,
Cauchy--Schwarz gives, simultaneously for every \(h\in\mathcal H\),
\begin{equation}
\label{supp:eq:balanced-finite-split-linear-confidence}
 |(P-P_n)f_h|
 \le\left\{\langle h,\widetilde M_{X,\lambda}^{(K_{\rm sp})}h\rangle
 \widetilde U_{X,\lambda,\alpha}^{(K_{\rm sp})}\right\}^{1/2}.
\end{equation}
If \(\boldsymbol A\) is randomized independently of the observations and
\eqref{supp:eq:finite-split-stability-conditions} holds almost surely, the
inequalities \eqref{supp:eq:balanced-finite-split-confidence-ellipsoid}--
\eqref{supp:eq:balanced-finite-split-linear-confidence} both hold
with probability at least \(1-\alpha\), where probability is taken over both
\(\boldsymbol A\) and the observations.
\end{theorem}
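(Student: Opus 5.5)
The plan is to reduce everything to the one-split bound at each distinct oriented pilot fold, take a union bound over those folds, and then average using convexity of the map \((x,M)\mapsto\langle x,M^{-1}x\rangle\).

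\textbf{Per-fold bounds.} Set \(t=t_\alpha=\log\{c_1K_\circ(\boldsymbol A)/\alpha\}\). For each distinct pilot fold \(A\) in \(\{A_k\}\), Lemma~\ref{lem:regularized-covariance-concentration} together with Lemma~\ref{lem:regularized-pilot-geometry} gives \(\Pr\{\rho_A>2\}\le e^{-t}\), because \(\vartheta_{N,\lambda}(t)\le1/2\). On \(\{\rho_A\le2\}\), \MainResult{Theorem}{thm:balanced-one-split-linear-confidence} with \(R=2\) applies: conditionally on \(S_A\), with failure probability at most \(3e^{-t}\),
\[
 \|PX-P_nX\|_{M_{X,A,\lambda}^{-1}}^2\le\widehat T_{X,A,\lambda}+\widehat\Delta_{X,A,\lambda}(t;2).
\]
Averaging over \(S_A\) exactly as in the proof of \MainResult{Theorem}{thm:balanced-one-split-linear-confidence-field}, each distinct fold fails with probability at most \(4e^{-t}\). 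Repeated folds give identical events, so a union bound over the \(K_\circ(\boldsymbol A)\) distinct folds bounds the total failure probability by
\[
 4K_\circ(\boldsymbol A)e^{-t}=4\alpha/c_1\le\alpha .
\]
On the complementary event, the displayed inequality holds for every \(k\), repetitions included.

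\textbf{Averaging.} The main step is passing from the individual metrics to the averaged metric \(\widetilde M:=\widetilde M_{X,\lambda}^{(K_{\rm sp})}\). I would use the variational identity
\[
 \|x\|_{M^{-1}}^2=\sup_{h\in\mathcal H}\{2\langle x,h\rangle-\langle h,Mh\rangle\},
\]
which shows that \(M\mapsto\|x\|_{M^{-1}}^2\) is a supremum of affine functions of \(M\), hence convex. Take \(x=PX-P_nX\), which is the same for every \(k\) because the center is the full-sample mean. For each \(h\),
\[
 2\langle x,h\rangle-\langle h,\widetilde Mh\rangle
 =\frac1{K_{\rm sp}}\sum_k\{2\langle x,h\rangle-\langle h,M_{X,A_k,\lambda}h\rangle\}
 \le\frac1{K_{\rm sp}}\sum_k\|x\|_{M_{X,A_k,\lambda}^{-1}}^2 .
\]
Taking the supremum over \(h\) gives \(\|x\|_{\widetilde M^{-1}}^2\le K_{\rm sp}^{-1}\sum_k\|x\|_{M_{X,A_k,\lambda}^{-1}}^2\), which is at most \(\widetilde U_{X,\lambda,\alpha}^{(K_{\rm sp})}\) on the good event. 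The averaged operator satisfies \(\widetilde M\succeq\lambda I\), so it is invertible with bounded inverse and the identity applies. Cauchy--Schwarz in the \(\widetilde M\) metric then yields \eqref{supp:eq:balanced-finite-split-linear-confidence}.

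\textbf{Randomized sequence.} If \(\boldsymbol A\) is independent of the data, I would condition on \(\boldsymbol A\), apply the fixed-sequence argument (the conditions in \eqref{supp:eq:finite-split-stability-conditions} hold almost surely), and integrate the conditional failure probability.

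I expect the only delicate point to be the union bound. It has to be taken over distinct folds with the threshold \(t_\alpha\) fixed before any data are seen, and the conditional-then-marginal probability computation has to be carried out separately for each fold. That is legitimate because each fold's failure event is measurable and has marginal probability at most \(4e^{-t}\), regardless of the dependence between folds.
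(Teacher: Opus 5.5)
Your proposal is correct and follows essentially the same route as the paper. Each distinct oriented fold fails with probability at most \(4e^{-t}\), by combining the covariance-concentration and pilot-geometry lemmas with the one-split conditional theorem at \(R=2\). A union bound over the \(K_\circ(\boldsymbol A)\) distinct folds, with \(t_\alpha=\log\{c_1K_\circ(\boldsymbol A)/\alpha\}\), makes all splitwise bounds hold simultaneously. The variational identity \(\langle x,\overline M^{-1}x\rangle=\sup_y\{2\langle x,y\rangle-\langle y,\overline My\rangle\}\) then averages them, followed by Cauchy--Schwarz and conditioning on an independently randomized split sequence.
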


\begin{proof}[Proof of
Theorem~\ref{supp:thm:balanced-finite-split-linear-confidence}]
Fix a balanced split sequence satisfying
\eqref{supp:eq:finite-split-stability-conditions}.  For a given split,
Lemma~\ref{lem:regularized-covariance-concentration},
Lemma~\ref{lem:regularized-pilot-geometry}, and
\MainResult{Theorem}{thm:balanced-one-split-linear-confidence}
show that the quadratic bound with \(R=2\) fails with probability at most
\(4e^{-t}\).  The event
\eqref{eq:pilot-relative-covariance-event} holds with probability at least
\(1-e^{-t}\) and implies \(\rho_A\le2\) by
\eqref{eq:supp-pilot-geometry-consequences}.  Conditional on a pilot
realization satisfying \(\rho_A\le2\), the conditional failure probability of
\MainEquation{eq:balanced-one-split-quadratic-confidence} is at most
\(3e^{-t}\).  A union bound over the
\(K_\circ(\boldsymbol A)\) distinct oriented pilot folds, with
\(t=\log\{c_1K_\circ(\boldsymbol A)/\alpha\}\) and \(c_1\ge4\), gives all
splitwise quadratic bounds simultaneously with probability at least
\(1-\alpha\).  Write \(M_{A_k}:=M_{X,A_k,\lambda}\).

To average the splitwise quadratic bounds, fix an integer \(K\ge1\), let
\(M_1,\ldots,M_K:\mathcal H\to\mathcal H\) be self-adjoint operators
satisfying \(M_k\succeq\lambda I\), and write
\(\overline M:=K^{-1}\sum_{k=1}^KM_k\).  For every \(x\in\mathcal H\),
\begin{align}
 \langle x,\overline M^{-1}x\rangle
 &=\sup_{y\in\mathcal H}
 \{2\langle x,y\rangle-\langle y,\overline My\rangle\}\notag\\
 &=\sup_{y\in\mathcal H}\frac1K\sum_{k=1}^K
 \{2\langle x,y\rangle-\langle y,M_ky\rangle\}\notag\\
 &\le\frac1K\sum_{k=1}^K
 \sup_{y\in\mathcal H}
 \{2\langle x,y\rangle-\langle y,M_ky\rangle\}
 =\frac1K\sum_{k=1}^K\langle x,M_k^{-1}x\rangle.
\label{eq:inverse-operator-average-quadratic}
\end{align}
The variational identity involving \(\overline M\) uses the maximizer
\(y=\overline M^{-1}x\), and the \(k\)th variational identity involving
\(M_k\) uses the maximizer \(y=M_k^{-1}x\).
Applying \eqref{eq:inverse-operator-average-quadratic} with
\(K=K_{\rm sp}\), \(M_k=M_{A_k}\), and \(x=PX-P_nX\), and then averaging the
simultaneous splitwise quadratic bounds, proves
\eqref{supp:eq:balanced-finite-split-confidence-ellipsoid}.
For every \(h\in\mathcal H\), Cauchy--Schwarz gives
\[
 |(P-P_n)f_h|
 \le
 \{\langle h,\widetilde M_{X,\lambda}^{(K_{\rm sp})}h\rangle
 \widetilde U_{X,\lambda,\alpha}^{(K_{\rm sp})}\}^{1/2},
\]
which is \eqref{supp:eq:balanced-finite-split-linear-confidence}.  If the
split sequence is random and independent of the observations, then, for
almost every realized sequence, the conditional failure probability of
\eqref{supp:eq:balanced-finite-split-confidence-ellipsoid}--
\eqref{supp:eq:balanced-finite-split-linear-confidence} is at most
\(\alpha\).  Integrating over the split sequence proves the unconditional
joint guarantee in
\eqref{supp:eq:balanced-finite-split-confidence-ellipsoid}--
\eqref{supp:eq:balanced-finite-split-linear-confidence}.
\end{proof}

The finite-split ellipsoid uses a random averaged metric.  To bound the width
of the finite-split ellipsoid in the population ridge geometry from
\MainEquation{eq:population-ridge-effective-dimension}, one must control both
\(\widetilde M_{X,\lambda}^{(K_{\rm sp})}\) and
\(\widetilde U_{X,\lambda,\alpha}^{(K_{\rm sp})}\).

\begin{corollary}[Finite-split population-geometry width]
\label{supp:cor:balanced-finite-split-oracle-width}
Under the assumptions of
Theorem~\ref{supp:thm:balanced-finite-split-linear-confidence}, take
\(c_1\ge6\) and suppose
\(\varepsilon_N(t_\alpha)\le1/2\) and
\(\vartheta_{N,\lambda}(t_\alpha)\le1/2\).  There is a constant
\(C_{\rm or}(\bar\kappa)\) such that, for any fixed \(\boldsymbol A\), with
probability at least \(1-\alpha\),
\begin{align}
 \widetilde M_{X,\lambda}^{(K_{\rm sp})}
 &\preceq\tfrac32(\Sigma_X+\lambda I),
 \label{supp:eq:balanced-finite-split-oracle-metric}\\
 \widetilde U_{X,\lambda,\alpha}^{(K_{\rm sp})}
 &\le C_{\rm or}(\bar\kappa)
 \frac{d_{X,\lambda}+t_\alpha}{n},
 \label{supp:eq:balanced-finite-split-oracle-intercept}.
\end{align}
In addition, the confidence ellipsoid
\eqref{supp:eq:balanced-finite-split-confidence-ellipsoid} holds, and,
simultaneously for every \(h\in\mathcal H\),
\begin{equation}
\label{supp:eq:balanced-finite-split-oracle-width}
 |(P-P_n)f_h|
 \le C_{\rm or}(\bar\kappa)
 \left\{\frac{\langle h,(\Sigma_X+\lambda I)h\rangle
 (d_{X,\lambda}+t_\alpha)}n\right\}^{1/2}.
\end{equation}
If \(\boldsymbol A\) is randomized independently of the observations and
\(\varepsilon_N(t_\alpha)\le1/2\) and
\(\vartheta_{N,\lambda}(t_\alpha)\le1/2\) hold almost surely,
\eqref{supp:eq:balanced-finite-split-oracle-metric}--
\eqref{supp:eq:balanced-finite-split-oracle-width} all hold with probability
at least \(1-\alpha\), where probability is taken over both \(\boldsymbol A\)
and the observations.
\end{corollary}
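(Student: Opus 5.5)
We will work on one good event $\mathcal E$ formed by intersecting, for every distinct oriented pilot fold $A$ among $\{A_k\}$, three splitwise events: (i) the pilot-stability event $\|E_A\|_{\rm op}\le1/2$ of Lemma~\ref{lem:regularized-covariance-concentration}; (ii) the three conditional events of Theorem~\MainXRefNumber{thm:balanced-one-split-linear-confidence} with $R=2$, namely \eqref{eq:conditional-evaluation-mean-norm}, the trace lower bound, and the cross-term bound \eqref{eq:conditional-cross-term-bound}; and (iii) the upper raw-trace half of \eqref{eq:conditional-evaluation-trace-sandwich}. Lemma~\ref{lem:conditional-evaluation-trace-sandwich} shows that (ii) and (iii) together fail with conditional probability at most $3e^{-t}$ once the cross-term event is included. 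More carefully, the two-sided sandwich costs $3e^{-t}$ and the cross term costs $e^{-t}$, so the total is at most $5e^{-t}$ per fold after adding (i). With $t=t_\alpha=\log\{c_1K_\circ/\alpha\}$ and $c_1\ge6$, a union bound over the $K_\circ$ distinct folds gives $\Pr(\mathcal E^c)\le5\alpha/c_1<\alpha$. Averaging over $\boldsymbol A$ when it is random then proceeds exactly as in Theorem~\ref{supp:thm:balanced-finite-split-linear-confidence}. On $\mathcal E$ the confidence ellipsoid \eqref{supp:eq:balanced-finite-split-confidence-ellipsoid} holds by the argument of that theorem, since its splitwise events are contained in $\mathcal E$.

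The metric bound \eqref{supp:eq:balanced-finite-split-oracle-metric} follows on $\mathcal E$ from Lemma~\ref{lem:regularized-pilot-geometry}: each $M_{A_k}\preceq\tfrac32M_\lambda$, and the average preserves the inequality. The same lemma gives $\rho_{A_k}\le2$ and $r_{A_k}\le2d_{X,\lambda}$.

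For the intercept \eqref{supp:eq:balanced-finite-split-oracle-intercept}, bound each splitwise term $\widehat T_{X,A,\lambda}+\widehat\Delta_{X,A,\lambda}(t;2)$ by $C(d_{X,\lambda}+t)/N$, and then average. First, since $\varepsilon_N\le1/2$, the last display of Lemma~\ref{lem:conditional-evaluation-trace-sandwich} gives
\[
\overline r_{B_A\mid A}(t;2)\le 3r_A+4C_{\rm tr}t/N\le 6d_{X,\lambda}+4C_{\rm tr}t.
\]
Second, for $\widehat T_{X,A,\lambda}=\tfrac14\|b_A+\xi_A\|^2$, we use \eqref{eq:balanced-calibration-cross-term} in reverse:
\[
\widehat T_{X,A,\lambda}=\|PX-P_nX\|^2_{M_A^{-1}}+\langle b_A,\xi_A\rangle\le\tfrac12(\|b_A\|^2+\|\xi_A\|^2).
\]
Here $\|\xi_A\|^2\lesssim(r_A+t)/N\lesssim(d_{X,\lambda}+t)/N$ by \eqref{eq:conditional-evaluation-mean-norm}. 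The main obstacle is $\|b_A\|^2=\|M_A^{-1/2}(PX-\overline X_A)\|^2$. This is a pilot-fold quantity, so it needs an additional unconditional event per fold. We plan to bound it by $2\|M_\lambda^{-1/2}(PX-\overline X_A)\|^2$ using $M_A^{-1}\preceq2M_\lambda^{-1}$, and then apply the sample-mean bound \eqref{eq:regularized-sample-mean-bound} from the proof of Lemma~\ref{lem:regularized-covariance-concentration}, which yields $\lesssim(d_{X,\lambda}+t)/N$ outside probability $e^{-t}$. This is exactly where the slack from $c_1\ge6$ is used. Since that lemma's event already contains this mean bound, we expect no extra cost in the union count: the $e^{-t}$ in (i) can be taken as the joint event of the proof. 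The remaining cross terms in $\widehat\Delta$ are then handled by AM–GM:
\[
\sqrt{t\widehat T/N}\le\tfrac12(t/N+\widehat T),\qquad \sqrt{\overline r\,t}/N\le(\overline r+t)/(2N).
\]
Each is $\lesssim(d_{X,\lambda}+t)/N$, and using $n=2N$ gives the claimed constant $C_{\rm or}(\bar\kappa)$.

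Finally, the width bound \eqref{supp:eq:balanced-finite-split-oracle-width} follows from \eqref{supp:eq:balanced-finite-split-linear-confidence} together with the metric and intercept bounds, absorbing $\sqrt{3/2}$ into $C_{\rm or}$. We expect the main delicacy to be the careful accounting of failure probabilities, so that the per-fold total stays at most $6e^{-t}$ and is compatible with $c_1\ge6$.
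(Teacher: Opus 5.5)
Your proposal is correct and follows essentially the same route as the paper. Per distinct pilot fold, you intersect the covariance-stability, evaluation-mean, two-sided raw-trace and cross-term events. You bound $\|b_A\|^2$ via $M_A^{-1}\preceq 2M_\lambda^{-1}$ and a pilot-mean bound in the deterministic $M_\lambda$ metric, and bound $\widehat T_{X,A,\lambda}\le\tfrac12(\|b_A\|^2+\|\xi_A\|^2)$ and $\overline r_{B_A\mid A}(t;2)$. You then take a union bound with $c_1\ge6$ and finish by averaging and Cauchy--Schwarz.

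The only difference is bookkeeping. The paper pays a separate $e^{-t}$ for the pilot mean, for $6e^{-t}$ per fold. Your observation that the joint event inside the proof of the covariance-concentration lemma already carries that mean bound is valid and gives $5e^{-t}$. That contradicts your remark that the $c_1\ge6$ slack is ``used'' there, but both counts are covered by $c_1\ge6$.
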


\begin{proof}[Proof of
Corollary~\ref{supp:cor:balanced-finite-split-oracle-width}]
Condition on the split sequence, write
\[
 D:=K_\circ(\boldsymbol A),\qquad
 t:=\log\{c_1D/\alpha\},\qquad d:=d_{X,\lambda},
\]
and fix one of the \(D\) distinct oriented pilot folds in
\(\boldsymbol A\).  The condition on
\(\vartheta_{N,\lambda}(t)\) and
Lemma~\ref{lem:regularized-covariance-concentration} give
\eqref{eq:pilot-relative-covariance-event} with failure probability at most
\(e^{-t}\).  Lemma~\ref{lem:regularized-pilot-geometry} then gives
\begin{equation}
\label{eq:oracle-proof-pilot-geometry}
 \tfrac12M_\lambda\preceq M_A\preceq\tfrac32M_\lambda,\qquad
 r_A\le2d,\qquad s_A\le4d,\qquad\rho_A\le2.
\end{equation}
Separately, apply
Lemma~\ref{lem:hilbert-subgaussian-quadratic-concentration} in the
deterministic \(M_\lambda\) metric to the pilot mean.  Outside an event of
probability at most \(e^{-t}\),
\begin{equation}
\label{eq:oracle-proof-pilot-mean}
 \|M_\lambda^{-1/2}(PX-\overline X_A)\|^2
 \le C_{\bar\kappa}\frac{d+t}{N}.
\end{equation}
A union bound makes \eqref{eq:oracle-proof-pilot-geometry} and
\eqref{eq:oracle-proof-pilot-mean} simultaneous under arbitrary dependence
between the pilot mean and covariance.  On the intersection of
\eqref{eq:oracle-proof-pilot-geometry} and
\eqref{eq:oracle-proof-pilot-mean},
\begin{equation}
\label{eq:oracle-proof-random-metric-pilot-mean}
 \|b_A\|^2\le C_{\bar\kappa}\frac{d+t}{N}.
\end{equation}

Conditional on a pilot satisfying
\eqref{eq:oracle-proof-pilot-geometry}, the evaluation-mean event
\eqref{eq:conditional-evaluation-mean-norm} fails with probability at most
\(e^{-t}\), and gives
\begin{equation}
\label{eq:oracle-proof-evaluation-mean}
 \|\xi_A\|^2\le C_{\bar\kappa}\frac{d+t}{N}.
\end{equation}
The two-sided raw-trace event
\eqref{eq:conditional-raw-trace-sandwich} fails with conditional probability
at most \(2e^{-t}\).  The upper raw-trace bound in
\eqref{eq:conditional-raw-trace-sandwich}, the centered-trace identity
\eqref{eq:conditional-centered-trace-identity}, and
\(\varepsilon_N(t)\le1/2\) imply
\begin{equation}
\label{eq:oracle-proof-observable-trace}
 \widehat r_{B_A\mid A}\le C_{\bar\kappa}d,\qquad
 \overline r_{B_A\mid A}(t;2)
 \le C_{\bar\kappa}\left(d+\frac tN\right).
\end{equation}
The scalar conditional cross-term event
\eqref{eq:conditional-cross-term-bound} fails with probability at most
\(e^{-t}\).  The intersection of
\eqref{eq:conditional-cross-term-bound},
\eqref{eq:conditional-evaluation-mean-norm}, and the lower raw-trace bound in
\eqref{eq:conditional-raw-trace-sandwich} satisfies the conditions of
\MainResult{Theorem}{thm:balanced-one-split-linear-confidence} with \(R=2\)
and therefore implies
\MainEquation{eq:balanced-one-split-quadratic-confidence}.

For balanced folds,
\MainEquation{eq:crossfit-single-split-calibration} and
\eqref{eq:oracle-proof-random-metric-pilot-mean}--
\eqref{eq:oracle-proof-evaluation-mean} give
\begin{equation}
\label{eq:oracle-proof-calibration-statistic}
 \widehat T_{X,A,\lambda}
 =\frac14\|b_A+\xi_A\|^2
 \le C_{\bar\kappa}\frac{d+t}{N}.
\end{equation}
Substituting
\eqref{eq:oracle-proof-observable-trace} and
\eqref{eq:oracle-proof-calibration-statistic} into
\MainEquation{eq:balanced-quadratic-confidence-inflation} gives
\begin{equation}
\label{eq:oracle-proof-split-intercept}
 \widehat T_{X,A,\lambda}
 +\widehat\Delta_{X,A,\lambda}(t;2)
 \le C_{\rm or}(\bar\kappa)\frac{d+t}{N},
\end{equation}
because
\(\sqrt{dt}+t\le d+2t\).

The per-split failure probability is at most \(6e^{-t}\): one \(e^{-t}\)
term each for the pilot covariance, pilot mean, evaluation mean, and scalar
cross term, and \(2e^{-t}\) for the two-sided raw trace.  The union-bound
failure probability over the \(D\) distinct oriented pilot folds is at most
\(6De^{-t}\le\alpha\), since \(c_1\ge6\).  On the intersection of the bounds in
\eqref{eq:oracle-proof-pilot-geometry},
\eqref{eq:oracle-proof-random-metric-pilot-mean},
\eqref{eq:oracle-proof-evaluation-mean},
\eqref{eq:conditional-raw-trace-sandwich}, and
\eqref{eq:conditional-cross-term-bound}, averaging the upper metric comparison in
\eqref{eq:oracle-proof-pilot-geometry} proves
\eqref{supp:eq:balanced-finite-split-oracle-metric}; averaging
\eqref{eq:oracle-proof-split-intercept}, and using \(n=2N\), proves
\eqref{supp:eq:balanced-finite-split-oracle-intercept};
\MainResult{Theorem}{thm:balanced-one-split-linear-confidence} supplies the
splitwise quadratic bounds.  Thus
\[
 \|PX-P_nX\|_{
 \{\widetilde M_{X,\lambda}^{(K_{\rm sp})}\}^{-1}}^2
 \le\frac1{K_{\rm sp}}\sum_{k=1}^{K_{\rm sp}}
 \|PX-P_nX\|_{M_{A_k}^{-1}}^2
 \le\widetilde U_{X,\lambda,\alpha}^{(K_{\rm sp})},
\]
where
\[
 \|PX-P_nX\|_{
 \{\widetilde M_{X,\lambda}^{(K_{\rm sp})}\}^{-1}}^2
 \le\frac1{K_{\rm sp}}\sum_{k=1}^{K_{\rm sp}}
 \|PX-P_nX\|_{M_{A_k}^{-1}}^2
\]
is \eqref{eq:inverse-operator-average-quadratic} with
\(K=K_{\rm sp}\), \(M_k=M_{A_k}\), and \(x=PX-P_nX\).
The two inequalities establish
\eqref{supp:eq:balanced-finite-split-confidence-ellipsoid}.  Applying
Cauchy--Schwarz and then
\eqref{supp:eq:balanced-finite-split-oracle-metric}--
\eqref{supp:eq:balanced-finite-split-oracle-intercept} gives
\begin{align*}
 |(P-P_n)f_h|
 &\le
 \{\langle h,\widetilde M_{X,\lambda}^{(K_{\rm sp})}h\rangle
 \widetilde U_{X,\lambda,\alpha}^{(K_{\rm sp})}\}^{1/2}\\
 &\le C_{\rm or}(\bar\kappa)
 \left\{\frac{\langle h,(\Sigma_X+\lambda I)h\rangle
 (d+t)}n\right\}^{1/2},
\end{align*}
after absorbing the numerical factor \(3/2\) into
\(C_{\rm or}(\bar\kappa)\).  The Cauchy--Schwarz bound proves
\eqref{supp:eq:balanced-finite-split-oracle-width}.  If the split sequence is
randomized independently of the observations, then, for almost every
realized sequence, the conditional failure probability of the joint
conclusions \eqref{supp:eq:balanced-finite-split-oracle-metric}--
\eqref{supp:eq:balanced-finite-split-oracle-width} is at most \(\alpha\).
Integrating over the split sequence gives joint probability at least
\(1-\alpha\) for
\eqref{supp:eq:balanced-finite-split-oracle-metric}--
\eqref{supp:eq:balanced-finite-split-oracle-width} over the split sequence and
the observations.
\end{proof}

Proposition~\ref{supp:prop:finite-split-tail-inflation-rate} gives the
triangular-array rate of the additive correction \(\widehat\Delta\).

\Needspace{0.62\textheight}
\begin{proposition}[Finite-split confidence-correction rate]
\label{supp:prop:finite-split-tail-inflation-rate}
Consider a triangular array indexed by even sample sizes \(n\to\infty\), with random elements
\(X_n\), ridges \(\lambda_n>0\), levels \(\alpha_n\in(0,1)\), and
deterministic data-independent balanced split sequences
\(\boldsymbol A_n=(A_{n,1},\ldots,A_{n,K_{{\rm sp},n}})\).  For every \(n\),
let \(\bar\kappa_n\ge1\) be such that
\MainEquation{eq:main-covariance-matched-subgaussian} holds with \(X=X_n\)
and \(\bar\kappa=\bar\kappa_n\).  Assume
\(\bar\kappa_\star:=\sup_n\bar\kappa_n<\infty\), let \(c_1\ge4\), and put
\[
 d_n:=d_{X_n,\lambda_n},\qquad
 t_n:=\log\{c_1K_\circ(\boldsymbol A_n)/\alpha_n\}.
\]
If \(d_n+t_n=o(n)\), then
\[
 \frac n{K_{{\rm sp},n}}\sum_{k=1}^{K_{{\rm sp},n}}
 \widehat\Delta_{X_n,A_{n,k},\lambda_n}(t_n;2)
 =O_p\{\sqrt{d_nt_n}+t_n\}.
\]
If also \(t_n/d_n\to0\), then
\[
 \frac n{K_{{\rm sp},n}}\sum_{k=1}^{K_{{\rm sp},n}}
 \widehat\Delta_{X_n,A_{n,k},\lambda_n}(t_n;2)
 =o_p(d_n).
\]
\end{proposition}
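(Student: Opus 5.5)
The plan is to bound each splitwise correction $n\widehat\Delta_{X_n,A_{n,k},\lambda_n}(t_n;2)$ by a deterministic multiple of $\sqrt{d_nt_n}+t_n$ on a single high-probability event, uniformly in $k$. Averaging over $k$ then inherits the bound. Since $\widehat\Delta$ depends on the split only through its pilot fold, it suffices to control the $K_\circ(\boldsymbol A_n)$ distinct oriented pilot folds. The threshold $t_n=\log\{c_1K_\circ/\alpha_n\}$ is designed for exactly this union bound. All constants will be taken at $\bar\kappa_\star$; the constants $C_{\rm tr},C_{\rm cr},C_{\rm cov}$ and those in Lemmas~\ref{lem:hilbert-subgaussian-quadratic-concentration} and~\ref{lem:regularized-covariance-concentration} can be chosen monotone in $\bar\kappa$, hence uniform in $n$.

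\emph{Step 1 (per-fold good event).} Fix a pilot fold $A$. Because $d_n+t_n=o(n)$, for all large $n$ we have $\vartheta_{N,\lambda_n}(t_n)\le1/2$ and $\varepsilon_N(t_n)\le1/2$, with $N=n/2$. I would then repeat the event construction in the proof of Corollary~\ref{supp:cor:balanced-finite-split-oracle-width}:
\begin{itemize}
\item the pilot-geometry event \eqref{eq:oracle-proof-pilot-geometry};
\item the pilot-mean event \eqref{eq:oracle-proof-pilot-mean};
\item the evaluation-mean event \eqref{eq:conditional-evaluation-mean-norm} with $R=2$;
\item the two-sided raw-trace event \eqref{eq:conditional-raw-trace-sandwich}.
\end{itemize}
Cross-term control is not needed. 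The combined failure probability is at most $5e^{-t_n}$. On the intersection:
\begin{itemize}
\item $\widehat T_{X_n,A,\lambda_n}\le C(d_n+t_n)/N$ by \eqref{eq:oracle-proof-calibration-statistic};
\item $\overline r_{B_A\mid A}(t_n;2)\le C(d_n+t_n/N)\le C(d_n+1)$ by \eqref{eq:oracle-proof-observable-trace}, using $t_n\le N$ eventually.
\end{itemize}

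\emph{Step 2 (plug into \eqref{eq:balanced-quadratic-confidence-inflation}).} With $R=2$, on the good event
\[
 n\widehat\Delta\le C\Bigl[\sqrt{n t_n(d_n+t_n)/N}+\sqrt{t_n(d_n+1)}+t_n\Bigr]\le C\bigl[\sqrt{d_nt_n}+t_n\bigr],
\]
using $\sqrt{t_n(d_n+t_n)}\le\sqrt{d_nt_n}+t_n$ and $t_n\ge\log c_1\ge1$ to absorb $\sqrt{t_n}\le t_n$. A union bound over the $K_\circ$ distinct folds gives total failure probability at most $5K_\circ e^{-t_n}=5\alpha_n/c_1$. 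This is not necessarily $o(1)$, and handling it is the \textbf{main obstacle}. For an $O_p$ statement it suffices that the failure probability is bounded by $\epsilon$ for any prescribed $\epsilon$. To achieve this, I would redo Step 1 at the level $t'=t_n+\log(5/\epsilon)$ for the concentration events only, keeping $\widehat\Delta$ evaluated at $t_n$. The resulting bounds acquire $t'$ in place of $t_n$ inside $\widehat T$ and $\overline r$, so one obtains
\[
 n\widehat\Delta\le C_\epsilon\bigl(\sqrt{d_nt_n}+t_n\bigr),\qquad C_\epsilon\ \text{depending on }\log(1/\epsilon),
\]
with probability at least $1-\epsilon$. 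This uses $t'\le(1+\log(5/\epsilon))t_n$ and the fact that $\vartheta_{N,\lambda_n}(t')\le1/2$ and $\varepsilon_N(t')\le1/2$ still hold eventually, since $t'=O(t_n)+O(1)=o(n)$. This gives the first display.

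\emph{Step 3 (second display).} If $t_n/d_n\to0$, then
\[
 \sqrt{d_nt_n}+t_n=d_n\Bigl(\sqrt{t_n/d_n}+t_n/d_n\Bigr)=o(d_n),
\]
and $O_p(o(d_n))=o_p(d_n)$ follows immediately from Step 2.
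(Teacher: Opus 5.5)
Your proposal is correct and follows the paper's proof essentially step for step. The paper also evaluates only the concentration events (pilot covariance, pilot mean, evaluation mean, raw trace) at a raised level $u_n=t_n+x$ with $x$ fixed, while keeping $\widehat\Delta$ at $t_n$. It then takes a union over the $K_\circ(\boldsymbol A_n)$ distinct oriented pilot folds, giving failure probability at most $5K_\circ(\boldsymbol A_n)e^{-u_n}\le\frac54e^{-x}$. Finally it substitutes the resulting bounds on $\widehat T$ and $\overline r$ into $\widehat\Delta(t_n;2)$; your $t'=t_n+\log(5/\epsilon)$ device is exactly this, and the only difference is that you first show the unusable bound at level $t_n$ before correcting it.
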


\begin{proof}[Proof of
Proposition~\ref{supp:prop:finite-split-tail-inflation-rate}]
Write
\[
 K_n:=K_{{\rm sp},n},\qquad
 D_n:=K_\circ(\boldsymbol A_n),\qquad
 N:=n/2.
\]
Every row satisfies
\MainEquation{eq:main-covariance-matched-subgaussian} with
\(\bar\kappa=\bar\kappa_\star\), so all subsequent concentration bounds use
constants depending on \(\bar\kappa_\star\), uniformly in \(n\).
For the remainder of the proof, write \(X=X_n\),
\(A_k=A_{n,k}\), \(B_k=B_{n,k}\), and
\(M_{A_k}:=M_{X_n,A_{n,k},\lambda_n}\) for
\(1\le k\le K_n\).
Assumption
\(d_n+t_n=o(n)\), together with \(N=n/2\), makes both terms in
\MainEquation{eq:main-pilot-stability-rate} and all three nonconstant terms
in \MainEquation{eq:conditional-trace-relative-error} tend to zero.
Consequently, for all sufficiently large \(n\),
\(\vartheta_{N,\lambda_n}(t_n)\le1/2\) and
\(\varepsilon_N(t_n)\le1/2\).

Fix \(x\ge1\) and put \(u_n=t_n+x\).  Because \(x\) is fixed,
\((d_n+u_n)/n=(d_n+t_n+x)/n\to0\), so
\(\vartheta_{N,\lambda_n}(u_n)\le1/2\) and
\(\varepsilon_N(u_n)\le1/2\) also hold for all sufficiently large \(n\).
The deterministic-metric quadratic
bound \eqref{eq:hilbert-subgaussian-mean-quadratic-bound}, applied with
\(U=M_{\lambda_n}^{-1/2}(X-PX)\), controls
\(\|M_{\lambda_n}^{-1/2}(PX-\overline X_{A_k})\|^2\).
Lemma~\ref{lem:regularized-covariance-concentration}, followed by
Lemma~\ref{lem:regularized-pilot-geometry}, gives
\eqref{eq:pilot-relative-covariance-event} and
\(M_{A_k}^{-1}\preceq2M_{\lambda_n}^{-1}\).
For one oriented fold, the mean event from
\eqref{eq:hilbert-subgaussian-mean-quadratic-bound} and the covariance event
\eqref{eq:pilot-relative-covariance-event} each fail with probability at most
\(e^{-u_n}\).  A union bound over the \(D_n\) distinct oriented pilot folds
therefore gives
\begin{equation}
\label{eq:rate-pilot-event-failure}
 2D_ne^{-u_n}
 =\frac{2\alpha_n}{c_1}e^{-x}\le\frac12e^{-x}.
\end{equation}
Equation~\eqref{eq:rate-pilot-event-failure} gives, outside an event of
probability at most \(e^{-x}/2\),
\begin{equation}
\label{eq:rate-pilot-mean-bound}
 \|M_{A_k}^{-1/2}(PX-\overline X_{A_k})\|^2
 \le C_{\bar\kappa_\star}\frac{d_n+u_n}{N},
 \qquad k=1,\ldots,K_n.
\end{equation}
Equation~\eqref{eq:rate-pilot-mean-bound} combines the deterministic-metric
mean bound with \(M_{A_k}^{-1}\preceq2M_{\lambda_n}^{-1}\).  The pilot mean
and covariance may be dependent; the union bound remains valid under
dependence between the pilot mean and covariance.  For \(1\le k\le K_n\), let
\[
 \mathcal C_{n,k}:=\left\{
 \left\|M_{\lambda_n}^{-1/2}
 (\widehat\Sigma_{X,A_k}-\Sigma_X)M_{\lambda_n}^{-1/2}
 \right\|_{\rm op}\le\frac12\right\}.
\]
Conditional on a pilot realization in \(\mathcal C_{n,k}\),
Lemma~\ref{lem:regularized-pilot-geometry} gives
\(\rho_{A_k}\le2\) and \(r_{A_k}\le2d_n\).  The evaluation-mean bound
\eqref{eq:conditional-evaluation-mean-norm}, applied at \(u_n\), then fails
with conditional probability at most \(e^{-u_n}\).  Integrating over the
pilot observations and taking the union over the \(D_n\) distinct oriented
folds gives
\begin{equation}
\label{eq:rate-evaluation-event-failure}
 \begin{split}
 &\Pr\Biggl\{\bigcup_{k=1}^{K_n}\Biggl(
 \mathcal C_{n,k}\cap
 \Biggl\{\|M_{A_k}^{-1/2}(\overline X_{B_k}-PX)\|^2\\
 &\hspace{42mm}>
 C_{\bar\kappa_\star}\frac{d_n+u_n}{N}\Biggr\}\Biggr)\Biggr\}
 \le D_ne^{-u_n}
 =\frac{\alpha_n}{c_1}e^{-x}\le\frac14e^{-x}.
 \end{split}
\end{equation}
Combining \eqref{eq:rate-evaluation-event-failure} with
\eqref{eq:rate-pilot-event-failure}, outside an event of probability at most
\[
 3D_ne^{-u_n}=\frac{3\alpha_n}{c_1}e^{-x}\le\frac34e^{-x},
\]
one has, simultaneously with \eqref{eq:rate-pilot-mean-bound} and the pilot
covariance bounds,
\begin{equation}
\label{eq:rate-evaluation-mean-bound}
 \|M_{A_k}^{-1/2}(\overline X_{B_k}-PX)\|^2
 \le C_{\bar\kappa_\star}\frac{d_n+u_n}{N},
 \qquad k=1,\ldots,K_n.
\end{equation}
On the event where \eqref{eq:rate-pilot-mean-bound},
\eqref{eq:rate-evaluation-mean-bound}, and \(\mathcal C_{n,k}\) hold for every
\(k\), the two mean bounds imply
\begin{equation}
\label{eq:rate-calibration-statistic-bound}
 \widehat T_{X,A_k,\lambda_n}
 =\frac14\|b_{A_k}+\xi_{A_k}\|^2
 \le\frac12\{\|b_{A_k}\|^2+\|\xi_{A_k}\|^2\}
 \le C_{\bar\kappa_\star}\frac{d_n+u_n}{N},
 \qquad k=1,\ldots,K_n.
\end{equation}

For one evaluation fold, the upper raw-trace bound in
\eqref{eq:conditional-raw-trace-sandwich} at \(u_n\) has conditional failure
probability at most \(2e^{-u_n}\).  Integrating over the pilot observations
and taking the union over the \(D_n\) distinct oriented folds gives failure
probability
\[
 2D_ne^{-u_n}
 =\frac{2\alpha_n}{c_1}e^{-x}
 \le\frac12e^{-x}.
\]
On the intersection of the upper-trace events in
\eqref{eq:conditional-raw-trace-sandwich} at \(u_n\) and the covariance events
\eqref{eq:pilot-relative-covariance-event} for every \(A_k\),
\eqref{eq:conditional-centered-trace-identity} and
Lemma~\ref{lem:regularized-pilot-geometry}, applied to \(X,\lambda_n\), give
simultaneously in \(k\)
\[
 \widehat r_{B_k\mid A_k}
 \le\frac N{N-1}r_{A_k}
 \left\{1+C_{\bar\kappa_\star}\left(
 \sqrt{\frac{u_n}{N}}+\frac{u_n}{N}\right)\right\}
 \le C_{\bar\kappa_\star,x}d_n.
\]
The pilot-covariance events
\eqref{eq:pilot-relative-covariance-event} and pilot-mean bounds
\eqref{eq:rate-pilot-mean-bound} have joint failure probability at most
\(e^{-x}/2\) by \eqref{eq:rate-pilot-event-failure}.  Combining the failure
bound in \eqref{eq:rate-pilot-event-failure} with the upper-trace union bound
from \eqref{eq:conditional-raw-trace-sandwich}, and using
\(\varepsilon_N(t_n)\le1/2\), substitution in
\MainEquation{eq:conditional-evaluation-trace-ucb} gives
\begin{equation}
\label{eq:rate-trace-ucb-bound}
 \overline r_{B_k\mid A_k}(t_n;2)
 \le C_{\bar\kappa_\star,x}\left(d_n+\frac{t_n}{N}\right),
 \qquad k=1,\ldots,K_n,
\end{equation}
on an event of probability at least \(1-e^{-x}\).  Because
the covariance, pilot-mean, evaluation-mean, and two-sided raw-trace events
have joint failure probability at most
\[
 5D_ne^{-u_n}=\frac{5\alpha_n}{c_1}e^{-x}\le\frac54e^{-x},
\]
the covariance, pilot-mean, evaluation-mean, and trace bounds in
\eqref{eq:pilot-relative-covariance-event},
\eqref{eq:rate-pilot-mean-bound}, \eqref{eq:rate-evaluation-mean-bound}, and
\eqref{eq:rate-trace-ucb-bound} hold simultaneously outside an event of
probability at most \(5D_ne^{-u_n}\).  Since
\(t_n\ge1\), the covariance, pilot-mean, evaluation-mean, and trace bounds
in \eqref{eq:pilot-relative-covariance-event},
\eqref{eq:rate-pilot-mean-bound}, \eqref{eq:rate-evaluation-mean-bound}, and
\eqref{eq:rate-trace-ucb-bound} may be invoked at \(u_n\), while
\(\widehat\Delta_{X,A_k,\lambda_n}(t_n;2)\) remains evaluated at \(t_n\); for
fixed \(x\), also
\(\varepsilon_N(u_n)\le1/2\) eventually.  Substitution of
\eqref{eq:rate-calibration-statistic-bound}--
\eqref{eq:rate-trace-ucb-bound} into
\MainEquation{eq:balanced-quadratic-confidence-inflation} gives, for every
fixed \(x\),
\begin{align*}
 n\sqrt{\frac{2t_n\widehat T_{X,A_k,\lambda_n}}N}
 &\le C_{\bar\kappa_\star,x}\sqrt{t_n(d_n+u_n)},\\
 \frac nN\left\{
 \sqrt{2\overline r_{B_k\mid A_k}(t_n;2)t_n}+2t_n
 \right\}
 &\le C_{\bar\kappa_\star,x}\{\sqrt{d_nt_n}+t_n\}.
\end{align*}
Also,
\[
 \sqrt{t_n(d_n+u_n)}
 \le\sqrt{d_nt_n}+t_n+\sqrt{x\,t_n}
 \le\sqrt{d_nt_n}+(1+\sqrt x)t_n.
\]
Consequently,
\[
 n\widehat\Delta_{X,A_k,\lambda_n}(t_n;2)
 \le C_{\bar\kappa_\star,x}\{\sqrt{d_nt_n}+t_n\},
 \qquad k=1,\ldots,K_n,
\]
outside an event of probability at most \(Ce^{-x}\).  Averaging over \(k\)
proves
\[
\frac n{K_n}\sum_{k=1}^{K_n}
 \widehat\Delta_{X,A_k,\lambda_n}(t_n;2)
 =O_p\{\sqrt{d_nt_n}+t_n\}.
\]
If \(t_n/d_n\to0\), then
\[
 \frac{\sqrt{d_nt_n}+t_n}{d_n}
 =\sqrt{\frac{t_n}{d_n}}+\frac{t_n}{d_n}\longrightarrow0,
\]
which proves the relative conclusion.
\end{proof}
  \section{Generic Bounded-Difference Lift}
\label{app:finite-split-constants}

Throughout Section~\ref{app:finite-split-constants}, \(\delta_v\) is the loss
increment defined in \MainEquation{eq:risk-increment}, and \(P\delta_v\)
denotes the expectation of \(\delta_v\).

Let \(\mathcal Z_0\subseteq\mathcal Z\) be measurable with
\(P(\mathcal Z_0)=1\).  For samples \(S,S'\in\mathcal Z_0^n\), write
\(S\sim_iS'\) when \(S\) and \(S'\) agree in every coordinate except \(i\).  For a
measurable random field \(T_S\) on a deterministic set \(\mathcal E\), define
\begin{equation}
\label{eq:uniform-replacement-modulus}
 \Delta_i(T;\mathcal E)
 :=\sup_{S\sim_iS'}\sup_{v\in\mathcal E}
 |T_S(v)-T_{S'}(v)|.
\end{equation}
Call \(T\) \((b_1,\ldots,b_n)\)-stable on \(\mathcal E\) when the
\(b_i\) are deterministic and \(\Delta_i(T;\mathcal E)\le b_i\).

Mean validity imposes the expectation inequalities in
\MainResult{Definition}{def:expected-local-endpoint}.
Proposition~\ref{prop:uniform-high-probability-guarantee} uses deterministic
replacement moduli to obtain a simultaneous high-probability correction.

\begin{proposition}[Uniform concentration of mean-valid endpoints]
\label{prop:uniform-high-probability-guarantee}
Let \(Z_1,\ldots,Z_n\) be independent observations with common law \(P\).
Let \(\widehat{\mathsf U}_{\mathcal E}\) and
\(\widehat{\mathsf L}_{\mathcal E}\) be finite-valued mean-valid upper and
lower endpoints on a deterministic Borel set \(\mathcal E\).  For
\(S\in\mathcal Z_0^n\), define
\[
 H^+(S):=\sup_{v\in\mathcal E}
 \{P\delta_v-\widehat{\mathsf U}_{\mathcal E,S}(v)\},\qquad
 H^-(S):=\sup_{v\in\mathcal E}
 \{\widehat{\mathsf L}_{\mathcal E,S}(v)-P\delta_v\}.
\]
Assume that \(H^+\) and \(H^-\) are measurable functions of \(S\) with
values in \(\mathbb R\).  Suppose \(\widehat{\mathsf U}_{\mathcal E}\) is
\((b_1^+,\ldots,b_n^+)\)-stable and \(\widehat{\mathsf L}_{\mathcal E}\) is
\((b_1^-,\ldots,b_n^-)\)-stable.  Then, for every \(\alpha\in(0,1)\), with
probability at least \(1-\alpha\), simultaneously for \(v\in\mathcal E\),
\begin{equation}
\label{eq:two-sided-uniform-high-probability-lift}
 \widehat{\mathsf L}_{\mathcal E}(v)-\Gamma_\alpha^-
 \le P\delta_v
 \le\widehat{\mathsf U}_{\mathcal E}(v)+\Gamma_\alpha^+,
 \qquad
 \Gamma_\alpha^\pm
 :=\left\{\frac12\log\frac2\alpha
 \sum_{i=1}^n(b_i^\pm)^2\right\}^{1/2}.
\end{equation}
For a one-sided statement, \(\log(2/\alpha)\) may be replaced by
\(\log(1/\alpha)\).
\end{proposition}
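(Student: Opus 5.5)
The plan is to apply McDiarmid's bounded-difference inequality separately to the two scalar suprema \(H^+\) and \(H^-\), using mean validity to control their expectations. Split the failure probability \(\alpha/2\) to each side; for the one-sided statement the full \(\alpha\) goes to one side.

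\emph{Replacement moduli.} Fix \(i\) and samples \(S\sim_iS'\) in \(\mathcal Z_0^n\). For every \(v\in\mathcal E\),
\[
 P\delta_v-\widehat{\mathsf U}_{\mathcal E,S}(v)
 \le P\delta_v-\widehat{\mathsf U}_{\mathcal E,S'}(v)+b_i^+
 \le H^+(S')+b_i^+.
\]
The target \(P\delta_v\) is deterministic and cancels, and finite-valuedness keeps the arithmetic legitimate. Taking the supremum over \(v\) and symmetrizing gives \(|H^+(S)-H^+(S')|\le b_i^+\). The identical argument gives \(|H^-(S)-H^-(S')|\le b_i^-\).

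\emph{Integrability and centering.} \(H^+\) is real-valued and measurable, and \(\mathbb E(H^+)_+<\infty\) by Definition~\MainXRefNumber{def:expected-local-endpoint}. The bounded-difference property then makes \(H^+\) bounded within \(\sum_ib_i^+\) of its value at any fixed reference point of \(\mathcal Z_0^n\). Since \(\Pr(S\in\mathcal Z_0^n)=1\), this shows \(H^+\in L^1\), and mean validity gives \(\mathbb EH^+\le0\). The same holds for \(H^-\).

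\emph{Concentration.} The martingale-difference argument already carried out in the proof of Corollary~\MainXRefNumber{cor:bounded-feature-one-split-linear-confidence} applies here. It uses the Doob martingale \(D_i=\mathbb E(H^+\mid Z_1,\dots,Z_i)-\mathbb E(H^+\mid Z_1,\dots,Z_{i-1})\), the conditional range bound \(b_i^+\) (which needs independence of the \(Z_i\)), and Hoeffding's lemma \eqref{eq:bounded-centered-exponential-moment}. Together these yield
\[
 \Pr\{H^+-\mathbb EH^+>x\}\le\exp\Bigl\{-2x^2\Big/\sum_i(b_i^+)^2\Bigr\}.
\]
Setting the right side equal to \(\alpha/2\) gives \(x=\Gamma_\alpha^+\). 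On the complement, \(H^+\le\mathbb EH^+ +\Gamma_\alpha^+\le\Gamma_\alpha^+\), which is exactly \(P\delta_v\le\widehat{\mathsf U}_{\mathcal E}(v)+\Gamma_\alpha^+\) for all \(v\) simultaneously. Handling \(H^-\) the same way and taking a union bound gives total probability at least \(1-\alpha\).

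The main obstacle is the measure-theoretic care in the concentration step. The samples must be restricted to \(\mathcal Z_0^n\), which I handle by redefining \(H^\pm\) on the null complement, for instance as constants. One must also confirm that the conditional expectations inherit the range bound, which follows by integrating the pointwise replacement bound over the remaining coordinates.
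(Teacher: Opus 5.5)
Your proposal is correct and follows essentially the paper's route: the bounded-difference property of \(H^\pm\), the Doob-martingale and Hoeffding-lemma tail bound evaluated at \(x=\Gamma_\alpha^\pm\) using \(\mathbb E H^\pm\le0\), and a union bound. You are slightly more explicit than the paper in two places. You derive \(|H^+(S)-H^+(S')|\le b_i^+\) from stability of the endpoint, and you note that bounded differences make \(H^\pm\) bounded on \(\mathcal Z_0^n\), so that \(\mathbb E H^\pm\) is finite and the centering is legitimate.
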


\begin{proof}
Mean validity gives \(\mathbb EH^+\le0\), while
\eqref{eq:uniform-replacement-modulus} gives
\(|H^+(S)-H^+(S')|\le b_i^+\) whenever \(S\sim_iS'\).
Let
\[
 D_i:=\mathbb E(H^+\mid Z_1,\ldots,Z_i)
 -\mathbb E(H^+\mid Z_1,\ldots,Z_{i-1}).
\]
Conditionally on \(Z_1,\ldots,Z_{i-1}\), the possible values of \(D_i\)
lie in an interval of length at most \(b_i^+\).  Applying
\eqref{eq:bounded-centered-exponential-moment} conditionally to \(-D_i\)
	and iterating expectations gives, for \(\theta>0\),
	\begin{equation}
	\label{eq:uniform-upper-gap-mgf}
	 \mathbb E\exp\{\theta(H^+-\mathbb EH^+)\}
	 \le\exp\left\{\frac{\theta^2}{8}
	 \sum_{i=1}^n(b_i^+)^2\right\}.
	\end{equation}
For \(x>0\) and \(\theta>0\),
\[
 \Pr\{H^+-\mathbb EH^+>x\}
 \le e^{-\theta x}
 \mathbb E e^{\theta(H^+-\mathbb EH^+)}
 \le\exp\left\{-\theta x+\frac{\theta^2}{8}
 \sum_i(b_i^+)^2\right\}.
\]
When \(\sum_i(b_i^+)^2>0\), taking
\(\theta=4x/\sum_i(b_i^+)^2\) yields
	\begin{equation}
	\label{eq:uniform-upper-gap-tail}
	 \Pr\{H^+-\mathbb EH^+>x\}
	 \le\exp\left\{-\frac{2x^2}{\sum_i(b_i^+)^2}\right\}.
	\end{equation}
	If \(\sum_i(b_i^+)^2=0\), then \(H^+\) is almost surely constant and the
	conclusion holds without optimizing over \(\theta\).
	Since \(\mathbb EH^+\le0\), taking \(x=\Gamma_\alpha^+\) bounds
	\(\Pr\{H^+>\Gamma_\alpha^+\}\) by \(\alpha/2\).  Lower mean validity gives
	\(\mathbb EH^-\le0\), and
	\eqref{eq:uniform-replacement-modulus} gives
	\(|H^-(S)-H^-(S')|\le b_i^-\) when \(S\sim_iS'\).  The conditional
	application of \eqref{eq:bounded-centered-exponential-moment} therefore
	gives, for \(\theta>0\),
	\[
	 \mathbb E\exp\{\theta(H^--\mathbb EH^-)\}
	 \le\exp\left\{\frac{\theta^2}{8}
	 \sum_{i=1}^n(b_i^-)^2\right\}.
	\]
	Optimizing the exponential Markov bound at
	\(\theta=4x/\sum_i(b_i^-)^2\), with the zero-sum case handled as for
	\eqref{eq:uniform-upper-gap-tail}, yields
	\[
	 \Pr\{H^--\mathbb EH^->x\}
	 \le\exp\left\{-\frac{2x^2}{\sum_i(b_i^-)^2}\right\}.
	\]
	Hence \(\Pr\{H^->\Gamma_\alpha^-\}\le\alpha/2\).  A union bound proves
	\eqref{eq:two-sided-uniform-high-probability-lift}.
For one endpoint, solving
\(\exp\{-2x^2/\sum_i(b_i^\pm)^2\}=\alpha\) replaces
\(\log(2/\alpha)\) by \(\log(1/\alpha)\).
\end{proof}

\section{Crossing contraction and replacement stability}
\label{app:crossfit-composite-lemmas}

\subsection{Pullback contraction and cross-fitting}
\label{supp:subsec:pullback-contraction}

\noindent\textbf{Standing pullback setup.}
Fix \(n\ge4\) and
\(2\le n_{\rm p}\le n_{\rm e}:=n-n_{\rm p}\).  Set
\([n]:=\{1,\ldots,n\}\),
\(\mathfrak A_{n_{\rm p}}:=\{A\subset[n]:|A|=n_{\rm p}\}\), and
\(B_A:=[n]\setminus A\).  Write
\(\operatorname{Av}_A\) for the uniform average over
\(\mathfrak A_{n_{\rm p}}\), \(S_A:=(Z_i)_{i\in A}\) for the observations in
the pilot fold, and
\(P_If:=|I|^{-1}\sum_{i\in I}f(Z_i)\) for every nonempty \(I\subseteq[n]\);
in particular, \(P_n:=P_{[n]}\).
Fix an integer \(q\ge1\) and a scalar \(s>0\).  Let
\(\mathcal U\) be deterministic with distinguished \(u_0\), let
\(\mathcal H\) be a real separable Hilbert space, and let
\(h:\mathcal U\to\mathcal H\) satisfy \(h(u_0)=0\) and
\(\sup_u\|h(u)\|\le s\).  Let
\((F_u:\mathcal Z\to\mathbb R)_{u\in\mathcal U}\) be a real-valued
function class such that every \(F_u\) is measurable.  Suppose a measurable
bounded operator
\(Y(z):\mathcal H\to\mathbb R^q\) satisfies
\(\mathbb E\|Y\|_{\rm HS}^2<\infty\) and, almost surely and simultaneously
in \(u,u'\),
\begin{equation}
\label{eq:supp-pullback-lipschitz-increments}
 F_{u_0}(Z)=0,\qquad
 |F_u(Z)-F_{u'}(Z)|
 \le\|Y(Z)\{h(u)-h(u')\}\|.
\end{equation}
Put \(\kappa_q=1\) for \(q=1\) and \(\kappa_q=\sqrt2\) for \(q\ge2\),
and set \(Y_i:=Y(Z_i)\).  For each nonempty
\(I\subseteq[n]\) and each \(\lambda>0\), define the empirical second-moment
operator \(\widehat V_{Y,I}=|I|^{-1}\sum_{i\in I}Y_i^*Y_i\) and the
positive-definite ridge operator
\(M_{Y,I,\lambda}=\widehat V_{Y,I}+\lambda I\).  Write
\[
 \widehat V_Y:=\widehat V_{Y,[n]},\qquad
 \widehat q_{Y,\lambda}(u)
 :=\langle h(u),(\widehat V_Y+\lambda I)h(u)\rangle,
\]
where \(\widehat V_Y\) is the full-sample second-moment operator and
\(\widehat q_{Y,\lambda}(u)\) is the full-sample ridge quadratic form at
\(h(u)\).
When \(n\) is even, for a balanced split \(A\subset[n]\), \(B=A^c\), with
\(|A|=|B|=n/2\), define the averaged ridge operator and the associated
two-way cross-trace by
\begin{equation}
\label{eq:balanced-pullback-metric-and-trace}
\begin{aligned}
 M_{Y,A\leftrightarrow B,\lambda}
 &:=\tfrac12\{M_{Y,A,\lambda}+M_{Y,B,\lambda}\},\\
 r_{Y,A\leftrightarrow B,\lambda}
 &:=\operatorname{tr}\{\widehat V_{Y,B}M_{Y,A,\lambda}^{-1}\}
 +\operatorname{tr}\{\widehat V_{Y,A}M_{Y,B,\lambda}^{-1}\}.
\end{aligned}
\end{equation}
The complete-split effective dimension associated with the two orientation-specific
traces in \eqref{eq:balanced-pullback-metric-and-trace} is the scalar
\begin{equation}
\label{eq:composite-effective-dimension}
 \widehat r_{Y,n_{\rm p},\lambda}^{\rm cf}
 :=\operatorname{Av}_{A}\left[
 \operatorname{tr}\{\widehat V_{Y,B_A}M_{Y,A,\lambda}^{-1}\}
 +\operatorname{tr}\{\widehat V_{Y,A}M_{Y,B_A,\lambda}^{-1}\}\right].
\end{equation}

Condition~\eqref{eq:supp-pullback-lipschitz-increments} reduces the nonlinear class
to the geometry learned through \(Y\).

Measurability of the pullback suprema follows from a countable reduction.
Indeed,
separability of \(\mathcal H\) gives a countable
\(\mathcal U^\circ\subset\mathcal U\), containing \(u_0\), such that
\(h(\mathcal U^\circ)\) is dense in \(h(\mathcal U)\).  If
\(h(u_j)\to h(u)\), then, on the probability-one event where
\eqref{eq:supp-pullback-lipschitz-increments} holds simultaneously in
\(u,u'\), the pullback Lipschitz increment condition gives
 \begin{equation}
 \label{eq:pullback-pointwise-continuity}
 |F_{u_j}(z)-F_u(z)|
 \le\|Y(z)\{h(u_j)-h(u)\}\|\longrightarrow0.
 \end{equation}
Moreover,
 \begin{equation}
 \label{eq:pullback-population-continuity}
 |PF_{u_j}-PF_u|
 \le\mathbb E\|Y\|_{\rm op}\,\|h(u_j)-h(u)\|
 \le\{\mathbb E\|Y\|_{\rm HS}^2\}^{1/2}\|h(u_j)-h(u)\|
 \longrightarrow0.
 \end{equation}
Finite empirical and Rademacher sums are continuous by
\eqref{eq:supp-pullback-lipschitz-increments}.
The contracted linear terms are continuous in \(h(u)\), and the quadratic
terms are continuous because every operator \(M_{Y,I,\lambda}\) is bounded.
Hence, for every objective formed from the empirical or Rademacher sums and
the linear and quadratic terms, the supremum over \(\mathcal U^\circ\)
equals the supremum over \(\mathcal U\).  We use the
countable suprema over \(\mathcal U^\circ\), which are measurable, throughout.

Symmetrization leaves the pilot-dependent quadratic penalty inside the
supremum.  The scalar contraction lemma keeps an arbitrary offset in place
while replacing one Lipschitz process by another.

\begin{lemma}[Scalar contraction with an arbitrary offset]
\label{lem:rademacher-contraction-with-offsets}
Fix an integer \(N\ge1\), and let
\(\varepsilon_1,\ldots,\varepsilon_N\) be independent Rademacher signs.
Let \(T\) be a countable index set, let \(A:T\to\mathbb R\), and, for each
\(i\in\{1,\ldots,N\}\), let \(\psi_i,g_i:T\to\mathbb R\) satisfy
\begin{equation}
\label{eq:scalar-contraction-increment-condition}
 |\psi_i(t)-\psi_i(t')|
 \le |g_i(t)-g_i(t')|,
 \qquad t,t'\in T.
\end{equation}
Assume the two expectations
\[
 \mathbb E_\varepsilon\sup_{t\in T}
 \left\{A(t)+\sum_{i=1}^N\varepsilon_i\psi_i(t)\right\},
 \qquad
 \mathbb E_\varepsilon\sup_{t\in T}
 \left\{A(t)+\sum_{i=1}^N\varepsilon_i g_i(t)\right\}
\]
are finite.  Then
\begin{align}
 &\mathbb E_\varepsilon\sup_{t\in T}
 \left\{A(t)+\sum_{i=1}^N\varepsilon_i\psi_i(t)\right\}
\le
 \mathbb E_\varepsilon\sup_{t\in T}
 \left\{A(t)+\sum_{i=1}^N\varepsilon_i g_i(t)\right\}.
\label{eq:rademacher-contraction-with-offsets}
\end{align}
\end{lemma}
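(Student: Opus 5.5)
The plan is the standard one-coordinate-at-a-time comparison argument of Ledoux--Talagrand. Because the offset \(A(t)\) does not depend on the signs, it can stay inside the supremum throughout. We replace \(\psi_1,\ldots,\psi_N\) by \(g_1,\ldots,g_N\) one index at a time. After conditioning on all signs except \(\varepsilon_i\), each step reduces to a deterministic two-point inequality in which the remaining terms are absorbed into a new offset. Iterating \(N\) times, with the tower property and Fubini over the independent signs, gives \eqref{eq:rademacher-contraction-with-offsets}.

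\textbf{Step 1: the two-point inequality.} Fix functions \(B:T\to\mathbb R\) and \(\psi,g:T\to\mathbb R\) with \(|\psi(t)-\psi(t')|\le|g(t)-g(t')|\). We show
\[
 \sup_{t,t'\in T}\{B(t)+B(t')+\psi(t)-\psi(t')\}
 \le \sup_{t,t'\in T}\{B(t)+B(t')+g(t)-g(t')\},
\]
assuming the right side is finite. For any \(t,t'\), we have \(\psi(t)-\psi(t')\le|g(t)-g(t')|\). If \(g(t)\ge g(t')\), this is \(g(t)-g(t')\). Otherwise it equals \(g(t')-g(t)\), and swapping the roles of \(t\) and \(t'\) (which leaves \(B(t)+B(t')\) unchanged) bounds the term by the right side. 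The left side is half of \(2\,\mathbb E_{\varepsilon}\sup_t\{B(t)+\varepsilon\psi(t)\}\) written as a supremum over pairs, since \(\sup_t\{B+\psi\}+\sup_{t'}\{B-\psi\}=\sup_{t,t'}\{B(t)+B(t')+\psi(t)-\psi(t')\}\). So the inequality gives \(\mathbb E_\varepsilon\sup_t\{B+\varepsilon\psi\}\le\mathbb E_\varepsilon\sup_t\{B+\varepsilon g\}\) for a single sign \(\varepsilon\).

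\textbf{Step 2: induction over coordinates.} For \(0\le k\le N\), let \(\Phi_k(t)=A(t)+\sum_{i\le k}\varepsilon_ig_i(t)+\sum_{i>k}\varepsilon_i\psi_i(t)\). We show \(\mathbb E\sup_t\Phi_{k-1}\le\mathbb E\sup_t\Phi_k\). Condition on \((\varepsilon_i)_{i\ne k}\) and apply Step 1 with \(B(t)=A(t)+\sum_{i<k}\varepsilon_ig_i(t)+\sum_{i>k}\varepsilon_i\psi_i(t)\), \(\psi=\psi_k\) and \(g=g_k\). Then take expectation over the other signs, using their independence from \(\varepsilon_k\). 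Chaining \(k=1,\ldots,N\) proves the lemma. Since \(T\) is countable, every supremum is a measurable function of the finitely many signs; in fact the sign space is finite, so the expectations are finite averages.

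\textbf{Main obstacle: integrability and infinite values.} The main obstacle is handling possibly infinite suprema at the intermediate stages \(\Phi_k\). The hypotheses only assert finiteness at the endpoints \(k=0\) and \(k=N\). Because the sign space is finite, \(\mathbb E\sup\Phi_k\) is a finite average of values in \((-\infty,+\infty]\), and the inequality of Step 1 remains valid in the extended reals. The only requirement is that no \(-\infty\) appears, and this holds since \(T\) is nonempty and all functions are real-valued. So the chain of inequalities is legitimate even if some intermediate term is \(+\infty\). It then sandwiches every intermediate term below the finite right-hand side of \eqref{eq:rademacher-contraction-with-offsets}, so all intermediate terms are in fact finite. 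For Step 1, I will also check the case where a supremum over pairs is \(+\infty\): in that case the swap argument shows the right side is \(+\infty\) as well, so the inequality holds trivially.
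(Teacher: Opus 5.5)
Your proposal is correct and follows essentially the paper's proof. In both, a one-sign comparison is obtained by pairing the two sign values and using the increment condition (with a swap of \(t\) and \(t'\)) to pass from \(\psi\) to \(g\), and then the functions are replaced one coordinate at a time, with all other terms, including \(A(t)\), absorbed into the offset. The differences are cosmetic: you work with the supremum over pairs rather than the paper's \(\epsilon\)-approximate maximizers \(t_+,t_-\). Your extended-real bookkeeping over the finite sign space also handles possibly infinite intermediate suprema, which the paper's argument leaves implicit.
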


\begin{proof}
Fix arbitrary
real families \((a_t,x_t,y_t)_{t\in T}\) satisfying
\(|x_t-x_{t'}|\le|y_t-y_{t'}|\).  Choose \(t_+\) and \(t_-\) within
\(\epsilon>0\) of \(\sup_t(a_t+x_t)\) and \(\sup_t(a_t-x_t)\):
\(a_{t_+}+x_{t_+}\ge\sup_t(a_t+x_t)-\epsilon\) and
\(a_{t_-}-x_{t_-}\ge\sup_t(a_t-x_t)-\epsilon\).
Then
\begin{align}
 &\sup_t\{a_t+x_t\}+\sup_t\{a_t-x_t\}\\
 &\quad\le
 a_{t_+}+a_{t_-}+|x_{t_+}-x_{t_-}|+2\epsilon\\
 &\quad\le
 a_{t_+}+a_{t_-}+|y_{t_+}-y_{t_-}|+2\epsilon\\
 &\quad\le
 \sup_t\{a_t+y_t\}+\sup_t\{a_t-y_t\}+2\epsilon.
\label{eq:scalar-contraction-one-coordinate-comparison}
\end{align}
Let \(\varepsilon\) be one Rademacher sign.  Letting
\(\epsilon\downarrow0\) in
\eqref{eq:scalar-contraction-one-coordinate-comparison} and dividing by two
gives
\begin{equation}
\label{eq:scalar-contraction-one-sign-average}
 \mathbb E_\varepsilon\sup_t\{a_t+\varepsilon x_t\}
 \le\mathbb E_\varepsilon\sup_t\{a_t+\varepsilon y_t\}.
\end{equation}
At step \(i\), condition on all signs except \(\varepsilon_i\) and apply
\eqref{eq:scalar-contraction-one-sign-average} with
\[
 x_t=\psi_i(t),\qquad y_t=g_i(t),\qquad
 a_t=A(t)+\sum_{j<i}\varepsilon_jg_j(t)
 +\sum_{j>i}\varepsilon_j\psi_j(t).
\]
Condition \eqref{eq:scalar-contraction-increment-condition} verifies the
hypothesis.  Iterating over \(i=1,\ldots,N\) yields
\eqref{eq:rademacher-contraction-with-offsets}.
\end{proof}

Condition~\eqref{eq:supp-pullback-lipschitz-increments} compares the scalar
increments of \(F_u\) with the Euclidean norm of
\(Y\{h(u)-h(u')\}\).  Lemma~\ref{lem:vector-contraction-with-offsets}
extends scalar contraction to the norm comparison in
\eqref{eq:supp-pullback-lipschitz-increments} while leaving the offset \(A(t)\)
unchanged.

\begin{lemma}[Vector contraction with an arbitrary offset]
\label{lem:vector-contraction-with-offsets}
Fix integers \(N,q\ge1\).  Let \(T\) be a countable index set and let
\(A:T\to\mathbb R\).  For
\(i=1,\ldots,N\), let \(\psi_i:T\to\mathbb R\) and
\(g_i:T\to\mathbb R^q\) satisfy
\begin{equation}
\label{eq:vector-contraction-increment-condition}
 |\psi_i(t)-\psi_i(t')|
 \le\|g_i(t)-g_i(t')\|,
 \qquad t,t'\in T.
\end{equation}
Let \((\varepsilon_i)_{1\le i\le N}\) and
\((\xi_{ik})_{1\le i\le N,\,1\le k\le q}\) be independent Rademacher
families, and let \(e_1,\ldots,e_q\) be the standard basis of
\(\mathbb R^q\).  Assume the two expectations
\[
 \mathbb E_\varepsilon\sup_{t\in T}
 \left\{A(t)+\sum_{i=1}^N\varepsilon_i\psi_i(t)\right\},
 \qquad
 \mathbb E_\xi\sup_{t\in T}
 \left\{
 A(t)+\kappa_q
 \sum_{i=1}^N\sum_{k=1}^q
 \xi_{ik}\langle g_i(t),e_k\rangle
 \right\}
\]
are finite.  Then
\begin{align}
 &\mathbb E_\varepsilon\sup_{t\in T}
 \left\{A(t)+\sum_{i=1}^N\varepsilon_i\psi_i(t)\right\}
\le
 \mathbb E_\xi\sup_{t\in T}
 \left\{
 A(t)+\kappa_q
 \sum_{i=1}^N\sum_{k=1}^q
 \xi_{ik}\langle g_i(t),e_k\rangle
 \right\}.
\label{eq:vector-contraction-with-offsets}
\end{align}

\end{lemma}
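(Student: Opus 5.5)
The plan follows the proof of Lemma~\ref{lem:rademacher-contraction-with-offsets}. First prove a one-coordinate comparison with an arbitrary offset. Then replace the signs \(\varepsilon_i\) by the Rademacher vectors \((\xi_{ik})_k\) one index \(i\) at a time, conditioning on all other randomness. This is Maurer's vector-contraction argument, carried out with the offset \(A(t)\) left untouched.

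\emph{One-coordinate step.} Fix real families \((a_t,x_t)_{t\in T}\) and \(y_t\in\mathbb R^q\) with \(|x_t-x_{t'}|\le\|y_t-y_{t'}\|\). The aim is
\[
 \mathbb E_\varepsilon\sup_t\{a_t+\varepsilon x_t\}
 \le\mathbb E_\xi\sup_t\Bigl\{a_t+\kappa_q\sum_{k=1}^q\xi_k\langle y_t,e_k\rangle\Bigr\}.
\]
As in \eqref{eq:scalar-contraction-one-coordinate-comparison}, choose \(\epsilon\)-maximizers \(t_\pm\). This gives
\[
 \sup_t\{a_t+x_t\}+\sup_t\{a_t-x_t\}\le a_{t_+}+a_{t_-}+\|y_{t_+}-y_{t_-}\|+2\epsilon .
\]
Next use the Khintchine inequality with Szarek's optimal constant, \(\|v\|\le\sqrt2\,\mathbb E_\xi|\sum_k\xi_kv_k|\), for \(v=y_{t_+}-y_{t_-}\); when \(q=1\) the same holds with constant \(1=\kappa_1\). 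Then, for each fixed \(\xi\), write \(\kappa_q|\sum_k\xi_kv_k|\) as \(\pm\kappa_q\sum_k\xi_kv_k\) with the sign chosen so that the expression is nonnegative. After possibly swapping the roles of \(t_+\) and \(t_-\), this gives
\[
 a_{t_+}+a_{t_-}+\kappa_q\Bigl|\sum_k\xi_kv_k\Bigr|
 \le\sup_t\Bigl\{a_t+\kappa_q\sum_k\xi_ky_{t,k}\Bigr\}+\sup_t\Bigl\{a_t-\kappa_q\sum_k\xi_ky_{t,k}\Bigr\}.
\]
Take \(\mathbb E_\xi\), and use the symmetry \(\xi\stackrel d=-\xi\) to identify the two suprema in expectation. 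Then let \(\epsilon\downarrow0\) and divide by two; this is the one-coordinate claim.

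\emph{Iteration.} At step \(i\), condition on \(\varepsilon_j\) for \(j>i\) and on \(\xi_{jk}\) for \(j<i\). Apply the one-coordinate step with \(x_t=\psi_i(t)\) and \(y_t=g_i(t)\), using the offset
\[
 a_t=A(t)+\kappa_q\sum_{j<i}\sum_k\xi_{jk}\langle g_j(t),e_k\rangle+\sum_{j>i}\varepsilon_j\psi_j(t).
\]
Condition \eqref{eq:vector-contraction-increment-condition} supplies the hypothesis, and independence of the families justifies conditioning and Fubini. Iterating over \(i=1,\dots,N\) yields \eqref{eq:vector-contraction-with-offsets}.

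\emph{Main obstacle.} The delicate point is integrability of the intermediate mixed suprema, which are needed for the conditional expectations at each step. I would handle this as in standard contraction proofs. Either reduce first to finite \(T\), where all quantities are finite, and pass to countable \(T\) by monotone convergence along increasing finite subsets, using the assumed finiteness of the two end expectations. Or note that each intermediate expectation is sandwiched between the two finite end terms. The Khintchine constant \(\sqrt2\) (Szarek) is what fixes \(\kappa_q\), and it is quoted rather than reproved.
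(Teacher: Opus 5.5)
Your proposal is correct and follows the paper's argument. The paper runs the same index-by-index interpolation: it conditions on all signs except \(\varepsilon_i\) and \((\xi_{ik})_k\), absorbs \(A(t)\) and the remaining terms into the offset, and then cites Maurer's one-step vector-contraction lemma for \(q\ge2\) (and the scalar lemma for \(q=1\)), whereas you reprove that step directly via \(\epsilon\)-maximizers and Szarek's Khintchine constant, and your reduction to finite \(T\) makes explicit the integrability of the intermediate suprema that the paper leaves implicit.
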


\begin{proof}
For \(q=1\), \eqref{eq:vector-contraction-with-offsets} is
Lemma~\ref{lem:rademacher-contraction-with-offsets}.  For
\(j=0,\ldots,N\), define the scalar interpolation sequence
\begin{align*}
 \mathcal K_j
 :=\mathbb E\sup_{t\in T}\Biggl\{
 &A(t)
 +\sqrt2\sum_{i=1}^j\sum_{k=1}^q
 \xi_{ik}\langle g_i(t),e_k\rangle+\sum_{i=j+1}^N\varepsilon_i\psi_i(t)
 \Biggr\}.
\end{align*}
Condition on every sign except \(\varepsilon_j\) and
\((\xi_{jk})_{k=1}^q\).  Apply \cite[Lemma~7]{maurer2016vector} with all
remaining terms, including \(A(t)\), as the arbitrary function \(f\).
Together with \eqref{eq:vector-contraction-increment-condition}, Lemma~7
gives
\(\mathcal K_{j-1}\le\mathcal K_j\).  Iterating from \(j=1\) to \(N\) proves
\eqref{eq:vector-contraction-with-offsets}.
\end{proof}

For \(\sigma\in\{-1,1\}\), conditional symmetrization of
\(\sigma(P-P_{B_A})F_u\) leaves the pilot quadratic penalty inside a nonlinear
Rademacher supremum.  Given \(S_A\), the penalty is a fixed offset;
conditional on the evaluation observations as well,
Lemma~\ref{lem:vector-contraction-with-offsets} and
\eqref{eq:supp-pullback-lipschitz-increments} replace the nonlinear Rademacher
increments of \(F_u\) by the linear increments of \(Y_i h(u)\) without
changing the pilot quadratic penalty inside the Rademacher supremum.
Completing the contracted linear-quadratic supremum and
taking the Rademacher expectation yields the trace correction in
\eqref{eq:composite-one-fold-validity}.

\begin{lemma}[One evaluation fold against an independent pilot]
\label{lem:composite-one-fold-crossfit}
Under the standing pullback setup in
Subsection~\ref{supp:subsec:pullback-contraction}, fix
\(\eta,\lambda>0\) and \(A\in\mathfrak A_{n_{\rm p}}\).  Condition on
\(S_A\), regard \(B_A\) as the evaluation fold, and let
\(\mathbb E_{B_A}\) denote conditional expectation over the evaluation fold
\(B_A\).  Then, for both
\(\sigma\in\{-1,1\}\),
\begin{align}
 \mathbb E_{B_A}\sup_{u\in\mathcal U}\Biggl\{
 &\sigma(P-P_{B_A})F_u
 -\eta\langle h(u),M_{Y,A,\lambda}h(u)\rangle
-
 \frac{\kappa_q^2}{\eta n_{\rm e}}
 \operatorname{tr}
 \{\widehat V_{Y,B_A}M_{Y,A,\lambda}^{-1}\}
 \Biggr\}
 \le0.
\label{eq:composite-one-fold-validity}
\end{align}
The role-reversed statement holds with \(A\) and \(B_A\) interchanged and
\(n_{\rm e}\) replaced by \(n_{\rm p}\).
\end{lemma}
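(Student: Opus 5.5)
Conditional on \(S_A\), the operator \(M:=M_{Y,A,\lambda}\) is fixed and \(\succeq\lambda I\). The quantity \(\mathsf T:=\operatorname{tr}\{\widehat V_{Y,B_A}M^{-1}\}\) depends only on the evaluation fold. We work with the countable index set \(\mathcal U^\circ\), so every supremum is measurable.

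The plan is standard symmetrization, then the offset-preserving contraction, then completing the square. Take an independent ghost copy \((Z_i')_{i\in B_A}\) of the evaluation fold and write \(\sigma PF_u=\mathbb E'\sigma P'_{B_A}F_u\). Jensen's inequality moves \(\mathbb E'\) outside the supremum. The deterministic offset \(-\eta\langle h(u),Mh(u)\rangle\) is common to both copies. The remaining data-dependent term is \(\mathsf T\), which is constant in \(u\), so it can simply be pulled out of the supremum.

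Introduce independent Rademacher signs \(\varepsilon_i\) and split \(\sigma(P'_{B_A}-P_{B_A})F_u=n_{\rm e}^{-1}\sum_i\varepsilon_i\sigma\{F_u(Z_i')-F_u(Z_i)\}\). Write the offset as half plus half and apply the triangle inequality for suprema. This gives the bound
\[
 \mathbb E\sup_u\Bigl\{\tfrac2{n_{\rm e}}\textstyle\sum_i\varepsilon_iF_u(Z_i)-\eta\langle h(u),Mh(u)\rangle\Bigr\},
\]
and the sign \(\sigma\) disappears by symmetry of the \(\varepsilon_i\). This step halves the quadratic penalty twice over, so the constants need checking. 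The normalization \(F_{u_0}=0\), \(h(u_0)=0\) keeps every supremum nonnegative and finite. Integrability is also needed, and it follows from \(\mathbb E\|Y\|_{\rm HS}^2<\infty\) together with \(\|h\|\le s\).

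Condition on the evaluation observations. Apply Lemma~\ref{lem:vector-contraction-with-offsets} with offset \(A(u)=-\eta\langle h(u),Mh(u)\rangle\), \(\psi_i(u)=\tfrac2{n_{\rm e}}F_u(Z_i)\) and \(g_i(u)=\tfrac2{n_{\rm e}}Y_ih(u)\). Condition~\eqref{eq:supp-pullback-lipschitz-increments} supplies the increment hypothesis. The contracted process is linear: \(\langle W,h(u)\rangle\) with \(W:=\tfrac{2\kappa_q}{n_{\rm e}}\sum_{i,k}\xi_{ik}Y_i^*e_k\). Completing the square over all of \(\mathcal H\), as in \eqref{eq:splitwise-quadratic-conjugate-bound}, bounds the supremum by \(\|W\|_{M^{-1}}^2/(4\eta)\). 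Its Rademacher expectation is
\[
 \tfrac{4\kappa_q^2}{n_{\rm e}^2}\operatorname{tr}\Bigl\{\textstyle\sum_iY_i^*Y_iM^{-1}\Bigr\}=\tfrac{4\kappa_q^2}{n_{\rm e}}\mathsf T .
\]
The resulting correction is therefore \(\kappa_q^2\mathsf T/(\eta n_{\rm e})\) up to a numerical factor.

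\textbf{Main obstacle.} The delicate point is matching the constant \(\kappa_q^2/(\eta n_{\rm e})\) exactly. The factors from symmetrization (a 2 in the Rademacher average and halving of \(\eta\)) must be balanced against the completed-square factor \(1/(4\eta)\).

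One route is to avoid the ghost sample for the centering term. The trace term \(\mathsf T\) is an average over \(B_A\), and \(\mathbb E_{B_A}\mathsf T=\operatorname{tr}\{\mathbb E(Y^*Y)M^{-1}\}\), so it should be split symmetrically between the two copies. Concretely, write the penalty as \(\tfrac12(\mathsf T+\mathsf T')\) and give each copy half of \(\eta\langle h,Mh\rangle\). Under that bookkeeping each one-sided Rademacher supremum has offset \(\tfrac\eta2\langle h,Mh\rangle\). Contraction and completing the square then yield \(\tfrac{1}{2\eta}\cdot\tfrac{2\kappa_q^2}{n_{\rm e}}\) times the copy's own trace, which cancels exactly against the subtracted \(\kappa_q^2\mathsf T/(\eta n_{\rm e})\).

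If the constants still fail by a factor, I would adjust by applying contraction directly to \(\varepsilon_i\{F_u(Z_i')-F_u(Z_i)\}\) with the stacked operator \((Y_i',Y_i)\). This keeps one symmetrization and uses the \(2n_{\rm e}\) summands jointly. The role-reversed statement follows by the same argument with \(A\) and \(B_A\) interchanged, so that \(n_{\rm p}\) replaces \(n_{\rm e}\).
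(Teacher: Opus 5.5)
Your proposal is correct and follows the paper's proof. The steps match: ghost-sample Jensen, Rademacher symmetrization with the quadratic penalty split equally between the two copies, the offset-preserving vector contraction of Lemma~\ref{lem:vector-contraction-with-offsets}, completion of the square over all of \(\mathcal H\), and the Rademacher second-moment trace identity.

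Your main computation already gives the constant exactly, since \(\frac{1}{4\eta}\cdot\frac{4\kappa_q^2}{n_{\rm e}}\mathsf T=\frac{\kappa_q^2}{\eta n_{\rm e}}\mathsf T\). The ``main obstacle'' paragraph is therefore unnecessary. Its alternative per-copy factor is also misstated: it should be \(\frac{1}{2\eta}\cdot\frac{\kappa_q^2}{n_{\rm e}}\), not \(\frac{1}{2\eta}\cdot\frac{2\kappa_q^2}{n_{\rm e}}\).
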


\begin{proof}
Let \((Z_i')_{i\in B_A}\) be an independent copy of the
observations in the evaluation fold; write \(P_{B_A'}\) for the empirical
measure of \((Z_i')_{i\in B_A}\) and \(\mathbb E_{B_A'}\) for expectation over
\((Z_i')_{i\in B_A}\).
Since \(PF_u=\mathbb E_{B_A'}P_{B_A'}F_u\), conditional Jensen gives
\begin{align}
 &\mathbb E_{B_A}\sup_{u\in\mathcal U}
 \left\{
 \sigma(P-P_{B_A})F_u
 -\eta\langle h(u),M_{Y,A,\lambda}h(u)\rangle
 \right\}
 \nonumber\\
 &\quad\le
 \mathbb E_{B_A,B_A'}\sup_{u\in\mathcal U}
 \left\{
 \sigma(P_{B_A'}-P_{B_A})F_u
 -\eta
 \langle h(u),M_{Y,A,\lambda}h(u)\rangle
 \right\}.
\label{eq:pullback-ghost-jensen}
\end{align}
Pair each \(Z_i\), \(i\in B_A\), with an independent copy \(Z_i'\).
Exchanging the two observations in any pair does not change the joint law of
\((Z_i,Z_i')\),
so independent Rademacher signs may be inserted:
\begin{align}
 &\mathbb E_{B_A,B_A'}\sup_u
 \left\{\frac{\sigma}{n_{\rm e}}\sum_{i\in B_A}
 \{F_u(Z_i')-F_u(Z_i)\}
 -\eta\langle h(u),M_{Y,A,\lambda}h(u)\rangle\right\}\\
 &\quad=
 \mathbb E_{B_A,B_A',\varepsilon}\sup_u
 \left\{\frac{\sigma}{n_{\rm e}}\sum_{i\in B_A}\varepsilon_i
 \{F_u(Z_i')-F_u(Z_i)\}
 -\eta\langle h(u),M_{Y,A,\lambda}h(u)\rangle\right\}.
\label{eq:pullback-pair-symmetrization}
\end{align}
Split the quadratic term equally between the primed and unprimed sums and use
\(\sup_u(U_u+V_u)\le\sup_uU_u+\sup_uV_u\).  The supremum containing
\(F_u(Z_i')\) and half of the quadratic penalty has an expectation equal to
the supremum containing \(-F_u(Z_i)\) and half of the quadratic penalty after
exchanging \(B_A\) and \(B_A'\) and changing all Rademacher signs.  Combining
\eqref{eq:pullback-pair-symmetrization} with
\eqref{eq:pullback-ghost-jensen} gives
\begin{align}
 &\mathbb E_{B_A}\sup_{u\in\mathcal U}
 \left\{
 \sigma(P-P_{B_A})F_u
 -\eta\langle h(u),M_{Y,A,\lambda}h(u)\rangle
 \right\}
 \nonumber\\
 &\quad\le
 2\mathbb E_{B_A,\varepsilon}
 \sup_{u\in\mathcal U}
 \left\{
 \frac\sigma{n_{\rm e}}
 \sum_{i\in B_A}\varepsilon_iF_u(Z_i)
 -\frac\eta2
 \langle h(u),M_{Y,A,\lambda}h(u)\rangle
 \right\}.
\label{eq:pullback-ghost-symmetrization}
\end{align}

Conditional on the evaluation observations, apply
Lemma~\ref{lem:vector-contraction-with-offsets} with
\[
 \psi_i(u):=\frac\sigma{n_{\rm e}}F_u(Z_i),
 \qquad
 g_i(u):=\frac1{n_{\rm e}}Y_i h(u),
\]
and with offset
\(-\frac\eta2\langle h(u),M_{Y,A,\lambda}h(u)\rangle\).
The pullback Lipschitz increment condition
\eqref{eq:supp-pullback-lipschitz-increments} is the required increment
comparison.  In the scalar case, write
\[
 R_{Y,B_A}
 :=\frac1{n_{\rm e}}
 \sum_{i\in B_A}\varepsilon_iY_i^*1,
\]
and in the vector case write
\begin{equation}
\label{eq:vector-rademacher-slope-process}
 R_{Y,B_A}
 :=\frac1{n_{\rm e}}
 \sum_{i\in B_A}\sum_{k=1}^q
 \xi_{ik}Y_i^*e_k.
\end{equation}
Let \(\mathbb E_{\rm Rad}\) denote expectation over
\((\varepsilon_i)_{i\in B_A}\) in the scalar case and over
\((\xi_{ik})_{i\in B_A,\,1\le k\le q}\) in the vector case.
In both cases, vector contraction bounds the scalar Rademacher supremum in
\eqref{eq:pullback-ghost-symmetrization} by
\begin{align}
 2\mathbb E_{B_A,{\rm Rad}}
 \sup_{u\in\mathcal U}
 \left\{
 \kappa_q\langle R_{Y,B_A},h(u)\rangle
 -\frac\eta2
 \langle h(u),M_{Y,A,\lambda}h(u)\rangle
 \right\}.
\label{eq:contracted-pullback-process}
\end{align}

Since \(\lambda>0\), \(M_{Y,A,\lambda}\) has a bounded inverse.  For every
\(a\in\mathcal H\),
\begin{align*}
 &\kappa_q\langle R_{Y,B_A},a\rangle
 -\frac\eta2\langle a,M_{Y,A,\lambda}a\rangle\\
 &\quad=
 -\frac\eta2\left\|
 M_{Y,A,\lambda}^{1/2}a
 -\frac{\kappa_q}{\eta}M_{Y,A,\lambda}^{-1/2}R_{Y,B_A}
 \right\|^2
 +\frac{\kappa_q^2}{2\eta}
 \|R_{Y,B_A}\|_{M_{Y,A,\lambda}^{-1}}^2.
\end{align*}
Hence
\begin{align}
 &\sup_{u\in\mathcal U}
 \left\{
 \kappa_q\langle R_{Y,B_A},h(u)\rangle
 -\frac\eta2
 \langle h(u),M_{Y,A,\lambda}h(u)\rangle
 \right\}
 \nonumber\\
 &\quad\le
 \sup_{a\in\mathcal H}
 \left\{
 \kappa_q\langle R_{Y,B_A},a\rangle
 -\frac\eta2\langle a,M_{Y,A,\lambda}a\rangle
 \right\}
 \nonumber\\
 &\quad=
 \frac{\kappa_q^2}{2\eta}
 \|R_{Y,B_A}\|_{M_{Y,A,\lambda}^{-1}}^2.
\label{eq:pullback-completion-of-square}
\end{align}
Conditionally on \(S_A\) and \(S_{B_A}\), independence of the Rademacher
coordinates gives, in both cases,
\begin{align}
 \mathbb E_{\rm Rad}
 (R_{Y,B_A}\otimes R_{Y,B_A})
 &\;=\frac1{n_{\rm e}^2}
 \sum_{i\in B_A}Y_i^*Y_i
 =\frac1{n_{\rm e}}\widehat V_{Y,B_A}.
\label{eq:pullback-rademacher-second-moment}
\end{align}
Since \(\widehat V_{Y,B_A}\) is finite-rank,
\begin{align}
 \mathbb E_{B_A,{\rm Rad}}
 \|R_{Y,B_A}\|_{M_{Y,A,\lambda}^{-1}}^2
 &=\mathbb E_{B_A}\operatorname{tr}\left[
 M_{Y,A,\lambda}^{-1}
 \mathbb E_{\rm Rad}
 (R_{Y,B_A}\otimes R_{Y,B_A})\right]\\
 &=\frac1{n_{\rm e}}
 \mathbb E_{B_A}\operatorname{tr}
 \{\widehat V_{Y,B_A}M_{Y,A,\lambda}^{-1}\}.
\label{eq:pullback-rademacher-trace-identity}
\end{align}
The trace term in \eqref{eq:composite-one-fold-validity} is independent of
\(u\), so the trace term may be subtracted after the supremum is taken.
Combining \eqref{eq:pullback-rademacher-trace-identity} with
\eqref{eq:contracted-pullback-process} and
\eqref{eq:pullback-completion-of-square} proves
\eqref{eq:composite-one-fold-validity}.
\end{proof}

Averaging the two role-reversed inequalities in
\eqref{eq:composite-one-fold-validity} over all pilot sets produces the
full-sample quadratic form \(\widehat q_{Y,\lambda}\) and the complete-split
trace \(\widehat r_{Y,n_{\rm p},\lambda}^{\rm cf}\).

\begin{theorem}[Mean-valid pullback field]
\label{thm:directional-composite-expected-certificate}
Under the standing pullback setup in
Subsection~\ref{supp:subsec:pullback-contraction}, fix
\(\eta,\lambda>0\) and define
\begin{equation}
\label{eq:pullback-crossfit-penalty}
 \mathcal A_{Y,n_{\rm p},s,\eta,\lambda}^{\rm cf}(u)
 :=\eta\widehat q_{Y,\lambda}(u)
 +\frac{\kappa_q^2
 \widehat r_{Y,n_{\rm p},\lambda}^{\rm cf}}{\eta n}.
\end{equation}
Then, for both \(\sigma\in\{-1,1\}\),
\[
 \mathbb E\sup_{u\in\mathcal U}
 \{\sigma(P-P_n)F_u
 -\mathcal A_{Y,n_{\rm p},s,\eta,\lambda}^{\rm cf}(u)\}\le0.
\]
\end{theorem}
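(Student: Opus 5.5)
The plan is to write the full-sample deviation as a convex combination of the two fold deviations, apply Lemma~\ref{lem:composite-one-fold-crossfit} to each orientation of each split, and then average over $\mathfrak A_{n_{\rm p}}$, as in the proof of \MainResult{Theorem}{thm:centered-linear-expected-certificate}.

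\textbf{Step 1 (one split).} For a fixed split $A$, use the identity
\[
 (P-P_n)F_u=\frac{n_{\rm e}}n(P-P_{B_A})F_u+\frac{n_{\rm p}}n(P-P_A)F_u .
\]
Since $\sup_u(a_u+b_u)\le\sup_ua_u+\sup_ub_u$, the splitwise objective
\[
 \sigma(P-P_n)F_u-\eta\frac{n_{\rm e}}n\langle h(u),M_{Y,A,\lambda}h(u)\rangle-\eta\frac{n_{\rm p}}n\langle h(u),M_{Y,B_A,\lambda}h(u)\rangle-\frac{\kappa_q^2}{\eta n}\bigl[\operatorname{tr}\{\widehat V_{Y,B_A}M_{Y,A,\lambda}^{-1}\}+\operatorname{tr}\{\widehat V_{Y,A}M_{Y,B_A,\lambda}^{-1}\}\bigr]
\]
has supremum at most the sum of two suprema. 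The first supremum is $\tfrac{n_{\rm e}}n$ times the supremum in \eqref{eq:composite-one-fold-validity} for the orientation with pilot $A$, because
\[
 \frac{n_{\rm e}}n\cdot\frac{\kappa_q^2}{\eta n_{\rm e}}=\frac{\kappa_q^2}{\eta n}.
\]
The second supremum is $\tfrac{n_{\rm p}}n$ times the role-reversed statement with pilot $B_A$, where the factors $n_{\rm p}/n\cdot 1/n_{\rm p}$ cancel in the same way. Lemma~\ref{lem:composite-one-fold-crossfit} gives a conditional expectation $\le0$ for each piece. The tower property then shows that the full expectation of each splitwise supremum is $\le0$.

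\textbf{Step 2 (averaging).} Pointwise, $\sup_u\operatorname{Av}_A G_A(u)\le\operatorname{Av}_A\sup_u G_A(u)$, so the expectation of the supremum of the split-averaged objective is $\le0$. The averaged trace terms are exactly $\widehat r^{\rm cf}_{Y,n_{\rm p},\lambda}$ by \eqref{eq:composite-effective-dimension}.

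The averaged quadratic form requires an empirical-second-moment analogue of \SuppEquation{eq:crossfit-average-pilot-covariance}, applied to the uncentered operator $\widehat V_{Y,I}$. Each index lies in a uniform pilot set with probability $n_{\rm p}/n$ and in its complement with probability $n_{\rm e}/n$. Hence
\[
 \operatorname{Av}_A\widehat V_{Y,A}=\operatorname{Av}_A\widehat V_{Y,B_A}=\widehat V_Y ,
\]
which gives
\[
 \operatorname{Av}_A\Bigl\{\frac{n_{\rm e}}nM_{Y,A,\lambda}+\frac{n_{\rm p}}nM_{Y,B_A,\lambda}\Bigr\}=\widehat V_Y+\lambda I .
\]
The averaged objective therefore equals $\sigma(P-P_n)F_u-\mathcal A^{\rm cf}_{Y,n_{\rm p},s,\eta,\lambda}(u)$.

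\textbf{Main obstacle.} The delicate point is integrability and measurability, which are needed to split suprema and take expectations legitimately. I would reduce every supremum to the countable set $\mathcal U^\circ$ via the continuity argument preceding Lemma~\ref{lem:rademacher-contraction-with-offsets}. I would then check that each supremum has an integrable positive part. The choice $u=u_0$ gives the lower bound $0$ minus a trace term, because $h(u_0)=0$ and $F_{u_0}=0$. For the upper bound, the pullback Lipschitz condition and completion of the square control each supremum by $\|R\|^2_{M^{-1}}$-type quantities. These have finite mean because $\mathbb E\|Y\|_{\rm HS}^2<\infty$ and $M\succeq\lambda I$, so the extended expectations are well defined.
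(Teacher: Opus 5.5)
Your proposal is correct and follows the paper's proof essentially step for step: you split $(P-P_n)F_u$ as a convex combination of the two fold deviations, apply Lemma~\ref{lem:composite-one-fold-crossfit} once with pilot $A$ and once role-reversed with pilot $B_A$ (weights $n_{\rm e}/n$ and $n_{\rm p}/n$), combine them by subadditivity of the supremum, and then average over $\mathfrak A_{n_{\rm p}}$ using the inclusion-counting identities $\operatorname{Av}_A\widehat V_{Y,A}=\operatorname{Av}_A\widehat V_{Y,B_A}=\widehat V_Y$. Your integrability remark also holds, with one correction: the pointwise upper bound on each supremum comes from $|F_u|\le\|Y h(u)\|$ together with completing the square against the ridge part $\eta\lambda\|h(u)\|^2$ (or simply from $\|h(u)\|\le s$), not from the Rademacher quantity $\|R_{Y,B_A}\|^2_{M_{Y,A,\lambda}^{-1}}$, which bounds the supremum only after symmetrization and in expectation.
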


\begin{proof}[Proof of
Theorem~\ref{thm:directional-composite-expected-certificate}]
Fix \(A\in\mathfrak A_{n_{\rm p}}\).  For every \(u\in\mathcal U\),
\[
 (P-P_n)F_u
 =\frac{n_{\rm p}}n(P-P_A)F_u
 +\frac{n_{\rm e}}n(P-P_{B_A})F_u.
\]
Apply Lemma~\ref{lem:composite-one-fold-crossfit} with pilot \(A\) and
evaluation fold \(B_A\), and multiply the inequality in
\eqref{eq:composite-one-fold-validity} by \(n_{\rm e}/n\).  The role-reversed application has pilot \(B_A\),
evaluation fold \(A\), and weight \(n_{\rm p}/n\).  Adding the two bounds
and using \(\sup_u(U_u+V_u)\le\sup_uU_u+\sup_uV_u\) gives, for the fixed
partition,
\begin{align}
 \mathbb E\sup_{u\in\mathcal U}\Biggl\{
 &\sigma(P-P_n)F_u
-\eta\left[
 \frac{n_{\rm e}}n
 \langle h(u),M_{Y,A,\lambda}h(u)\rangle
 +\frac{n_{\rm p}}n
 \langle h(u),M_{Y,B_A,\lambda}h(u)\rangle
 \right]
 \nonumber\\
 &-\frac{\kappa_q^2}{\eta n}\left[
 \operatorname{tr}
 \{\widehat V_{Y,B_A}M_{Y,A,\lambda}^{-1}\}
 +\operatorname{tr}
 \{\widehat V_{Y,A}M_{Y,B_A,\lambda}^{-1}\}
 \right]
 \Biggr\}
 \le0.
\label{eq:fixed-partition-pullback-certificate}
\end{align}

Average \eqref{eq:fixed-partition-pullback-certificate} over \(A\).  If
\(G_A(u)\) denotes the expression inside braces, then, pointwise in the sample,
\[
 \sup_{u\in\mathcal U}\operatorname{Av}_A G_A(u)
 \le
 \operatorname{Av}_A\sup_{u\in\mathcal U}G_A(u).
\]
Each observation belongs to
\(\binom{n-1}{n_{\rm p}-1}\) of the
\(\binom n{n_{\rm p}}\) pilot sets.  Therefore
\begin{align}
 \operatorname{Av}_A\widehat V_{Y,A}
 &=\frac1{\binom n{n_{\rm p}}}
 \sum_A\frac1{n_{\rm p}}\sum_{i\in A}Y_i^*Y_i\\
 &=\frac{\binom{n-1}{n_{\rm p}-1}}
 {n_{\rm p}\binom n{n_{\rm p}}}
 \sum_{i=1}^nY_i^*Y_i
 =\widehat V_Y.
\label{eq:pullback-pilot-second-moment-average}
\end{align}
Likewise, each observation belongs to
\(\binom{n-1}{n_{\rm p}}\) complementary folds, and
\begin{equation}
 \operatorname{Av}_A\widehat V_{Y,B_A}
 =\frac{\binom{n-1}{n_{\rm p}}}
 {n_{\rm e}\binom n{n_{\rm p}}}
 \sum_{i=1}^nY_i^*Y_i
 =\widehat V_Y.
\label{eq:pullback-evaluation-second-moment-average}
\end{equation}
Substituting
\(\operatorname{Av}_A\widehat V_{Y,A}
=\operatorname{Av}_A\widehat V_{Y,B_A}=\widehat V_Y\) from
\eqref{eq:pullback-pilot-second-moment-average} and
\eqref{eq:pullback-evaluation-second-moment-average}, and using
\(n_{\rm p}+n_{\rm e}=n\), gives
\begin{align*}
 \operatorname{Av}_A\left[
 \frac{n_{\rm e}}nM_{Y,A,\lambda}
 +\frac{n_{\rm p}}nM_{Y,B_A,\lambda}
 \right]
 &=\frac{n_{\rm e}}n\widehat V_Y
 +\frac{n_{\rm p}}n\widehat V_Y
 +\frac{n_{\rm e}+n_{\rm p}}n\lambda I\\
 &=\widehat V_Y+\lambda I,
\end{align*}
while the averaged trace is exactly
the cross-fitted pullback effective dimension
\(\widehat r_{Y,n_{\rm p},\lambda}^{\rm cf}\) defined in
\eqref{eq:composite-effective-dimension}.
Therefore
\[
 \mathbb E\sup_{u\in\mathcal U}
 \left\{
 \sigma(P-P_n)F_u
 -\eta\widehat q_{Y,\lambda}(u)
 -\frac{\kappa_q^2
 \widehat r_{Y,n_{\rm p},\lambda}^{\rm cf}}{\eta n}
 \right\}
 \le0,
\]
Hence Theorem~\ref{thm:directional-composite-expected-certificate} follows.
\end{proof}

For a fixed balanced split,
Proposition~\ref{prop:balanced-pullback-conditional-concentration} uses
\(\|Y\|_{\rm HS}\le K\) almost surely to obtain the simultaneous bound
\eqref{eq:balanced-pullback-high-probability}.

\begin{proposition}[Balanced one-split pullback concentration]
\label{prop:balanced-pullback-conditional-concentration}
Under the standing pullback setup in
Subsection~\ref{supp:subsec:pullback-contraction}, suppose
\(n_{\rm p}=n_{\rm e}=N=n/2\), and let \(A,B=A^c\) be a fixed balanced split.
Fix \(\eta,\lambda>0\), and assume
\(\|Y\|_{\rm HS}\le K\) almost surely.  Define
\begin{align*}
 \mathsf F_{Y,A}(u)
 &:=\eta\langle h(u),M_{Y,A\leftrightarrow B,\lambda}h(u)\rangle
 +\frac{\kappa_q^2r_{Y,A\leftrightarrow B,\lambda}}{\eta n}.
\end{align*}
For \(\alpha\in(0,1)\), put
\begin{align}
 \beta_Y(s,\eta,\lambda;K,q)
 &:=
 \frac{2sK+\eta s^2K^2}{n}
 +\frac{2\kappa_q^2}{\eta n^2}
 \left(\frac{K^2}{\lambda}+\frac{K^4}{\lambda^2}\right),
 \label{eq:balanced-pullback-replacement-modulus}\\
 \Gamma_{Y,\alpha}
 &:=\beta_Y(s,\eta,\lambda;K,q)
 \left\{\frac n2\log\frac2\alpha\right\}^{1/2}.
 \label{eq:balanced-pullback-confidence-correction}
\end{align}
Then, with probability at least \(1-\alpha\), simultaneously for
\(u\in\mathcal U\),
\begin{equation}
\label{eq:balanced-pullback-high-probability}
 P_nF_u-\mathsf F_{Y,A}(u)-\Gamma_{Y,\alpha}
 \le PF_u\le
 P_nF_u+\mathsf F_{Y,A}(u)+\Gamma_{Y,\alpha}.
\end{equation}
\end{proposition}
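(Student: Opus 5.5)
The proof combines the mean-valid pullback machinery with the bounded-difference lift of Proposition~\ref{prop:uniform-high-probability-guarantee}. The upper field is \(\widehat{\mathsf U}(u):=P_nF_u+\mathsf F_{Y,A}(u)\) and the lower field is \(\widehat{\mathsf L}(u):=P_nF_u-\mathsf F_{Y,A}(u)\), indexed by \(u\in\mathcal U\).

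\textbf{Step 1: mean validity for the fixed balanced split.} We apply Lemma~\ref{lem:composite-one-fold-crossfit} twice. The first application takes pilot \(A\) and evaluation fold \(B\). The second takes the roles reversed, with \(n_{\rm e}=n_{\rm p}=N\). Each inequality is weighted by \(N/n=1/2\), and the decomposition \((P-P_n)F_u=\tfrac12(P-P_A)F_u+\tfrac12(P-P_B)F_u\) is used, exactly as in the proof of Theorem~\ref{thm:directional-composite-expected-certificate}. This yields the fixed-partition inequality \eqref{eq:fixed-partition-pullback-certificate}. In the balanced case its quadratic term is \(\eta\langle h(u),M_{Y,A\leftrightarrow B,\lambda}h(u)\rangle\) and its trace term is \(\kappa_q^2r_{Y,A\leftrightarrow B,\lambda}/(\eta n)\). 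So, for both signs \(\sigma\), \(\mathbb E\sup_u\{\sigma(P-P_n)F_u-\mathsf F_{Y,A}(u)\}\le0\), without averaging over splits.

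With \(\sigma=+1\) this says \(\widehat{\mathsf U}\) is a mean-valid upper field for the target \(PF_u\). With \(\sigma=-1\) it says \(\widehat{\mathsf L}\) is a mean-valid lower field. Measurability follows from the countable reduction to \(\mathcal U^\circ\) described before Lemma~\ref{lem:rademacher-contraction-with-offsets}. Finiteness of \(H^\pm\) holds because \(|F_u|\le K s\) almost surely, since \(F_{u_0}=0\) and \(\|h(u)\|\le s\).

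\textbf{Step 2: replacement modulus.} We show that both fields are \((\beta_Y,\dots,\beta_Y)\)-stable on a full-probability set \(\mathcal Z_0^n\). Changing one observation affects three pieces.

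First, the empirical term changes by \(|F_u(z)-F_u(z')|/n\le 2sK/n\).

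Second, in the quadratic term, only the fold containing \(i\) changes, and there \(\widehat V_{Y,\cdot}\) changes by \((Y_i^*Y_i-Y_i'^*Y_i')/N\). Its operator norm is at most \(K^2/N\). After the factor \(\tfrac12\) from averaging the two fold operators, this gives \(\eta s^2K^2/n\).

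Third, the trace term requires the most care and is the main obstacle. Suppose \(i\in B\). Then \(\operatorname{tr}\{\widehat V_{Y,B}M_{Y,A,\lambda}^{-1}\}\) changes by at most \(\operatorname{tr}\{(Y_i^*Y_i+Y_i'^*Y_i')M_{Y,A,\lambda}^{-1}\}/N\le 2K^2/(\lambda N)\). Meanwhile \(\operatorname{tr}\{\widehat V_{Y,A}M_{Y,B,\lambda}^{-1}\}\) changes through the resolvent. Write \(M_{Y,B,\lambda}^{-1}-M_{Y,B',\lambda}^{-1}=M_{Y,B,\lambda}^{-1}(M_{Y,B',\lambda}-M_{Y,B,\lambda})M_{Y,B',\lambda}^{-1}\). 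The perturbation \(M_{Y,B',\lambda}-M_{Y,B,\lambda}\) has rank at most \(2q\) and trace norm at most \(2K^2/N\). Bounding \(|\operatorname{tr}(CD)|\le\|C\|_{\rm op}\|D\|_1\) with \(\|\widehat V_{Y,A}\|_{\rm op}\le K^2\) and each inverse bounded by \(\lambda^{-1}\) gives \(2K^4/(\lambda^2N)\). Multiplying by \(\kappa_q^2/(\eta n)\) and using \(N=n/2\) produces terms of order \(\kappa_q^2\{K^2/\lambda+K^4/\lambda^2\}/(\eta n^2)\). I will aim to match the constant \(2\) in \eqref{eq:balanced-pullback-replacement-modulus} by using the more careful rank-one difference \(Y_i^*Y_i-Y_i'^*Y_i'\), split into its positive and negative parts rather than bounded by the full sum. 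If that constant does not come out, the stated \(\beta_Y\) will need adjusting.

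\textbf{Step 3: lift.} We invoke Proposition~\ref{prop:uniform-high-probability-guarantee} with \(P\delta_v\) replaced by \(PF_u\); its proof uses only mean validity and the replacement moduli. With \(b_i^\pm=\beta_Y\) for all \(i\), we get \(\Gamma_\alpha^\pm=\beta_Y\{\tfrac n2\log\tfrac2\alpha\}^{1/2}=\Gamma_{Y,\alpha}\). The two-sided union bound then gives \eqref{eq:balanced-pullback-high-probability} with probability at least \(1-\alpha\).
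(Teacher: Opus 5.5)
Your architecture matches the paper's. The fixed-partition inequality \eqref{eq:fixed-partition-pullback-certificate} at $n_{\rm p}=n_{\rm e}=N$ gives mean validity of $P_nF_u\pm\mathsf F_{Y,A}(u)$. Your empirical and quadratic moduli $2sK/n$ and $\eta s^2K^2/n$ are exactly the paper's, and the lift through Proposition~\ref{prop:uniform-high-probability-guarantee} is the same.

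The gap is the trace modulus. Write $W_i:=Y_i^*Y_i$ and let $r'$ denote the trace after replacement. Your estimates give $|r'-r_{Y,A\leftrightarrow B,\lambda}|\le N^{-1}(2K^2/\lambda+2K^4/\lambda^2)$. That produces a trace contribution $4\kappa_q^2(K^2/\lambda+K^4/\lambda^2)/(\eta n^2)$, twice the one in \eqref{eq:balanced-pullback-replacement-modulus}. So, as written, you prove the proposition only with a larger $\beta_Y$. This matters because $\widehat\Gamma_{C,\rho,\alpha_C}$, used in Theorem~\ref{thm:componentwise-hybrid-local-confidence}, is built from the stated constant.

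Your proposed repair does not obviously work for the resolvent term. First, $W_i-W_i'$ is not rank one when $q\ge2$. More importantly, its trace norm can equal $\operatorname{tr}(W_i)+\operatorname{tr}(W_i')=2K^2$ (take $W_i,W_i'$ with orthogonal ranges). So the pairing $\|\widehat V_{Y,A}\|_{\rm op}\,\|M_{Y,B',\lambda}-M_{Y,B,\lambda}\|_1$ cannot produce constant one.

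The missing idea is to put the trace on the positive factor and the operator norm on the perturbation.

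\emph{Numerator change.} $\operatorname{tr}(W_iM_{Y,A,\lambda}^{-1})$ and $\operatorname{tr}(W_i'M_{Y,A,\lambda}^{-1})$ each lie in $[0,K^2/\lambda]$, so their difference is at most $K^2/\lambda$. Do not add them.

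\emph{Resolvent change.} Use $|\operatorname{tr}(ST)|\le\operatorname{tr}(S)\|T\|_{\rm op}$ with $S=\widehat V_{Y,A}\succeq0$. Here $\operatorname{tr}(\widehat V_{Y,A})\le K^2$ because $\operatorname{tr}W_j=\|Y_j\|_{\rm HS}^2\le K^2$. For $T$, use $\|M_{Y,B',\lambda}^{-1}-M_{Y,B,\lambda}^{-1}\|_{\rm op}\le\lambda^{-2}\|W_i'-W_i\|_{\rm op}/N\le K^2/(N\lambda^2)$. The bound $\|W_i'-W_i\|_{\rm op}\le K^2$ follows from $0\preceq W_i,W_i'\preceq K^2I$.

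This gives $|r'-r_{Y,A\leftrightarrow B,\lambda}|\le N^{-1}(K^2/\lambda+K^4/\lambda^2)=2n^{-1}(K^2/\lambda+K^4/\lambda^2)$, which is exactly the stated $\beta_Y$. The statement is right; it is your estimate, not $\beta_Y$, that needs adjusting.

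Your sign-splitting idea can also be made rigorous by interpolating through $M_0:=M_{Y,B,\lambda}-W_i/N\succeq\lambda I$. Both $\operatorname{tr}\{\widehat V_{Y,A}(M_0^{-1}-M_{Y,B,\lambda}^{-1})\}$ and its primed analogue lie in $[0,K^4/(N\lambda^2)]$, so their difference does too. This must be argued, however; it does not follow automatically.
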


\begin{proof}[Proof of
Proposition~\ref{prop:balanced-pullback-conditional-concentration}]
Proposition~\ref{prop:uniform-high-probability-guarantee} reduces the proof to
the replacement bound
\[
 \Delta_i(P_nF\pm\mathsf F_{Y,A};\mathcal U)
 \le\beta_Y(s,\eta,\lambda;K,q),
\]
with \(\Delta_i\) defined in \eqref{eq:uniform-replacement-modulus}.
Proposition~\ref{prop:uniform-high-probability-guarantee}
then gives \eqref{eq:balanced-pullback-high-probability} with the correction
in \eqref{eq:balanced-pullback-confidence-correction}.
The application of Proposition~\ref{prop:uniform-high-probability-guarantee}
uses the deterministic target \(PF_u\) in place of \(P\delta_v\).  The proof
of Proposition~\ref{prop:uniform-high-probability-guarantee} uses only mean
validity and stability of the two endpoint fields.  Choose a measurable
\(\mathcal Z_0\) of
\(P\)-measure one on which
\eqref{eq:supp-pullback-lipschitz-increments} holds simultaneously in \(u,u'\)
and \(\|Y\|_{\rm HS}\le K\).  The reduction to the countable dense subset
\(\mathcal U^\circ\) makes all pullback suprema in
\eqref{eq:balanced-pullback-high-probability} measurable, and
replacement samples may be restricted
to \(\mathcal Z_0^n\) as in Section~\ref{app:finite-split-constants}.
Fix \(S\sim_iS'\) and put \(W_j=Y_j^*Y_j\).  Only the fold containing
observation \(i\) changes.  The fold containing observation \(i\) has size
\(N=n/2\), so
\[
 M_{Y,A\leftrightarrow B,\lambda}'-
 M_{Y,A\leftrightarrow B,\lambda}
 =\frac{W_i'-W_i}{2N}
 =\frac{W_i'-W_i}{n},
\]
and hence
\[
 \|M_{Y,A\leftrightarrow B,\lambda}'-
 M_{Y,A\leftrightarrow B,\lambda}\|_{\rm op}
 \le\frac{K^2}{n}
\]
because \(0\preceq W_i,W_i'\preceq K^2I\) implies
\(\|W_i'-W_i\|_{\rm op}\le K^2\).
If \(i\in A\), then
\[
 M_{Y,A,\lambda}'^{-1}-M_{Y,A,\lambda}^{-1}
 =M_{Y,A,\lambda}'^{-1}
 (M_{Y,A,\lambda}-M_{Y,A,\lambda}')
 M_{Y,A,\lambda}^{-1},
\]
and
\[
 \|M_{Y,A,\lambda}'^{-1}-M_{Y,A,\lambda}^{-1}\|_{\rm op}
 \le \frac1\lambda\frac{K^2}{N}\frac1\lambda
 =\frac{K^2}{N\lambda^2}.
\]
Because \(\operatorname{tr}(W_j)=\|Y_j\|_{\rm HS}^2\le K^2\), we have
\(\operatorname{tr}(\widehat V_{Y,B})\le K^2\).  The inequality
\(|\operatorname{tr}(ST)|\le\operatorname{tr}(S)\|T\|_{\rm op}\) for
\(S\succeq0\) and self-adjoint \(T\), together with
\(\|M_{Y,A,\lambda}'^{-1}-M_{Y,A,\lambda}^{-1}\|_{\rm op}
\le K^2/(N\lambda^2)\), therefore gives
\[
 \left|\operatorname{tr}\{\widehat V_{Y,B}
 (M_{Y,A,\lambda}'^{-1}-M_{Y,A,\lambda}^{-1})\}\right|
 \le K^2\|M_{Y,A,\lambda}'^{-1}-M_{Y,A,\lambda}^{-1}\|_{\rm op}
 \le\frac{K^4}{N\lambda^2}.
\]
For
\(W\in\{W_i,W_i'\}\),
\[
 0\le\operatorname{tr}(WM_{Y,B,\lambda}^{-1})
 \le\lambda^{-1}\operatorname{tr}(W)
 \le\frac{K^2}{\lambda}.
\]
Consequently,
\[
 \frac1N\left|
 \operatorname{tr}\{(W_i'-W_i)M_{Y,B,\lambda}^{-1}\}\right|
 \le\frac{K^2}{N\lambda}.
\]
If \(i\in B\), the replacement changes the numerator in
\(\operatorname{tr}\{\widehat V_{Y,B}M_{Y,A,\lambda}^{-1}\}\), giving the
bound \(K^2/(N\lambda)\), and changes the inverse in
\(\operatorname{tr}\{\widehat V_{Y,A}M_{Y,B,\lambda}^{-1}\}\), giving the
bound \(K^4/(N\lambda^2)\).  Thus, whether \(i\in A\) or \(i\in B\),
\[
 |r_{Y,A\leftrightarrow B,\lambda}'-
 r_{Y,A\leftrightarrow B,\lambda}|
 \le\frac1N
 \left(\frac{K^2}{\lambda}+\frac{K^4}{\lambda^2}\right)
 =\frac2n
 \left(\frac{K^2}{\lambda}+\frac{K^4}{\lambda^2}\right).
\]
Also \(|F_u(z)|\le sK\), so the empirical center changes by at most
\(2sK/n\).  The quadratic part of the penalty changes by at most
\(\eta s^2K^2/n\).  The trace part changes by at most
\[
 \frac{2\kappa_q^2}{\eta n^2}
 \left(\frac{K^2}{\lambda}+\frac{K^4}{\lambda^2}\right).
\]
Adding the quadratic and trace moduli to the empirical-center
modulus \(2sK/n\) gives \(\beta_Y(s,\eta,\lambda;K,q)\) in
\eqref{eq:balanced-pullback-replacement-modulus}.  For balanced folds,
\eqref{eq:fixed-partition-pullback-certificate} is the mean-validity
inequality for the penalty \(\mathsf F_{Y,A}\), for each sign.
Proposition~\ref{prop:uniform-high-probability-guarantee} with
\(b_i^+=b_i^-=\beta_Y(s,\eta,\lambda;K,q)\) now gives
\eqref{eq:balanced-pullback-high-probability}.
\end{proof}

\subsection{Assembly and uniform width}
\label{supp:subsec:assembly-uniform-width}

\begin{proof}[Proof of
\MainResult{Theorem}{thm:componentwise-hybrid-local-confidence}]
If \(A\) is randomized, condition on the realized split \(A\).  Since
\(J_v^0=\langle X_0,a(v)\rangle\), substitute \(X=X_0\) and \(h=a(v)\) into
\MainResult{Theorem}{thm:balanced-one-split-linear-confidence-field} under the
covariance-matched sub-Gaussian Taylor condition, into
\MainResult{Corollary}{cor:gaussian-one-split-linear-confidence} under the
Gaussian Taylor condition, or into
\MainResult{Corollary}{cor:bounded-feature-one-split-linear-confidence} under
the bounded-feature Taylor condition.  Each named result gives, outside an
event of probability at most \(\alpha_J\),
\begin{equation}
\label{supp:eq:hybrid-taylor-component-bound}
 |(P-P_n)J_v^0|
 \le\widehat{\mathsf G}_{J,\alpha_J}^{\rm hp}(v),
 \qquad v\in\mathcal D_{\rho,s}.
\end{equation}
When the crossing component is included (\(\iota_C=1\)), apply
Proposition~\ref{prop:balanced-pullback-conditional-concentration} with
\begin{align*}
 \mathcal U&=\mathcal D_{\rho,s},&
 u_0&=0,&
 F_v&=C_v,&
 h(v)&=v,&
 Y&=Y_C,\\
 s&=\rho,&
 q&=q_C,&
 \eta&=\eta_C,&
 \lambda&=\lambda_C,&
 K&=K_C,&
 \alpha&=\alpha_C.
\end{align*}
The penalty \(\mathsf F_{Y_C,A}\) in
Proposition~\ref{prop:balanced-pullback-conditional-concentration} equals
\(\widehat{\mathsf F}_{C,\rho}\) by
\MainEquation{eq:finite-split-crossing-penalty}, and
the correction
\(\Gamma_{Y_C,\alpha_C}\) equals
\(\widehat\Gamma_{C,\rho,\alpha_C}\) by
\MainEquation{eq:hybrid-balanced-crossing-modulus} and
\MainEquation{eq:hybrid-crossing-confidence-correction}.  Thus, outside an
event of probability at most \(\alpha_C\),
\begin{equation}
\label{supp:eq:hybrid-crossing-component-bound}
 |(P-P_n)C_v|
 \le\widehat{\mathsf F}_{C,\rho}(v)
 +\widehat\Gamma_{C,\rho,\alpha_C}
 =\widehat{\mathsf G}_{C,\rho,\alpha_C}^{\rm hp}(v)
\end{equation}
simultaneously on \(\mathcal D_{\rho,s}\).  When the crossing component is
absent (\(\iota_C=0\)), the theorem assumes that \(C_v=0\) on a common
probability-one set for every \(v\in\mathcal D_{\rho,s}\), so
\eqref{supp:eq:hybrid-crossing-component-bound} reduces to \(0\le0\).

When the remainder component is included (\(\iota_R=1\)), apply
\eqref{eq:bounded-centered-exponential-moment} to each independent
\(\Lambda_\rho(Z_i)\in[0,K_R]\).  For every \(\theta>0\),
\[
 \mathbb E\exp\{\theta(P-P_n)\Lambda_\rho\}
 \le\exp\left\{\frac{\theta^2K_R^2}{8n}\right\}.
\]
If \(K_R=0\), the desired bound holds identically.  Suppose \(K_R>0\).
For \(x>0\) and \(\theta>0\),
\[
 \Pr\{(P-P_n)\Lambda_\rho>x\}
 \le\exp\left\{-\theta x+\frac{\theta^2K_R^2}{8n}\right\}.
\]
Taking \(\theta=4nx/K_R^2\) gives
\[
 \Pr\{(P-P_n)\Lambda_\rho>x\}
 \le\exp\left\{-\frac{2nx^2}{K_R^2}\right\}.
\]
Taking \(x=K_R\{\log(1/\alpha_R)/(2n)\}^{1/2}\) gives, with probability at
least \(1-\alpha_R\),
\begin{equation}
\label{supp:eq:hybrid-remainder-empirical-mean-bound}
 P\Lambda_\rho
 \le P_n\Lambda_\rho
 +K_R\sqrt{\frac{\log(1/\alpha_R)}{2n}}.
\end{equation}
The definition of \(h_v^{\rm fr}\) and
\MainEquation{eq:radial-remainder-smoothness-envelope} give
\(h_v^{\rm fr}\le\rho^{m+1}\Lambda_\rho\).  On the event in
\eqref{supp:eq:hybrid-remainder-empirical-mean-bound}, simultaneously in
\(v\),
\begin{equation}
\label{supp:eq:hybrid-remainder-component-bound}
 |PR_v^{\rm fr}|
 \le P|R_v^{\rm fr}|
 \le\rho^{m+1}P\Lambda_\rho
 \le\widehat{\mathsf R}_{\rho,\alpha_R}^{\rm hp}.
\end{equation}
When the remainder component is absent (\(\iota_R=0\)), the theorem assumes
that \(R_v^{\rm fr}=0\) on a common probability-one set for every
\(v\in\mathcal D_{\rho,s}\), so
\eqref{supp:eq:hybrid-remainder-component-bound} reduces to \(0\le0\).

A union bound over the Taylor event and the included crossing and remainder
events, together with \MainEquation{eq:hybrid-confidence-allocation}, shows
that the three component bounds hold jointly with probability at least
\(1-\alpha\).  On the intersection of the Taylor event and the included
crossing and remainder events, the identity
\(\delta_v=J_v^0+R_v^{\rm fr}+C_v\) and
\eqref{supp:eq:hybrid-taylor-component-bound},
\eqref{supp:eq:hybrid-crossing-component-bound}, and
\eqref{supp:eq:hybrid-remainder-component-bound} give the inequalities inside
\MainEquation{eq:hybrid-local-confidence-coverage}.
For a randomized split, the conditional failure bound holds for every realized
balanced split, so averaging over \(A\) gives
\MainEquation{eq:hybrid-local-confidence-coverage} jointly over \(A\) and the
observations.
\end{proof}

\medskip
\noindent\textit{Uniform-rate setup.}
Along even sample sizes \(n\to\infty\), put \(N:=n/2\ge2\), and fix
\(\iota_C,\iota_R\in\{0,1\}\) and deterministic sequences
\(0<\rho_n\le1\) and \(s_n>0\).  Let
\(\mathcal D_{\rho_n,s_n}\) be nonempty for all sufficiently large \(n\).
Let
\[
 \boldsymbol\alpha_n
 :=(\alpha_{J,n},\alpha_{C,n},\alpha_{R,n})\in(0,1)^3,
 \qquad \alpha_n\in(0,1),
\]
and suppose
\[
 \alpha_{J,n}+\iota_C\alpha_{C,n}
 +\iota_R\alpha_{R,n}\le\alpha_n
\]
for every \(n\).  Suppose the conditions of
\MainResult{Theorem}{thm:componentwise-hybrid-local-confidence} hold rowwise on
\(\mathcal D_{\rho_n,s_n}\), using the covariance-matched sub-Gaussian Taylor
condition in
\MainResult{Theorem}{thm:componentwise-hybrid-local-confidence} with a fixed
\(c_1\ge6\).  Define the reported half-width by
\[
 W_n(v):=\frac12\left\{
 \widehat{\mathsf U}_{\rho_n,s_n,\boldsymbol\alpha_n}^{\rm hyb}(v)
 -\widehat{\mathsf L}_{\rho_n,s_n,\boldsymbol\alpha_n}^{\rm hyb}(v)
 \right\}.
\]

Set
\[
 \Sigma_{J,n}:=\operatorname{Cov}(X_{0,n}),\qquad
 d_{J,n,\lambda_{J,n}}
 :=\operatorname{tr}\{\Sigma_{J,n}
 (\Sigma_{J,n}+\lambda_{J,n}I)^{-1}\},\qquad
 t_{J,n}:=\log\frac{c_1}{\alpha_{J,n}},
\]
and
\[
 \tau_n:=d_{J,n,\lambda_{J,n}}+t_{J,n}
 +\iota_C\log\frac2{\alpha_{C,n}}
 +\iota_R\log\frac1{\alpha_{R,n}}.
\]
Assume that \(d\) and \(m\) are fixed.
Suppose a common known constant \(\bar\kappa_J<\infty\) makes
\MainEquation{eq:main-covariance-matched-subgaussian} hold with \(X=X_{0,n}\)
for every \(n\), and suppose
\[
 \sup_n\|\Sigma_{J,n}\|_{\rm op}<\infty,\qquad
 \sup_n\lambda_{J,n}<\infty.
\]
The function \(\varepsilon_N\) uses \(C_{\rm tr}(\bar\kappa_J)\), and
\(\vartheta_{N,\lambda_{J,n}}\) uses
\(C_{\rm cov}(\bar\kappa_J)\).
Assume, eventually,
\[
 \varepsilon_N(t_{J,n})\le\frac12,
 \qquad
 \vartheta_{N,\lambda_{J,n}}(t_{J,n})\le\frac12.
\]
If \(\iota_C=1\), take
\(\eta_{C,n}=(\rho_n\sqrt n)^{-1}\) and assume
\[
 0<\inf_n\lambda_{C,n}\le\sup_n\lambda_{C,n}<\infty,
 \qquad \sup_nK_{C,n}<\infty.
\]
If \(\iota_R=1\), assume \(\sup_nK_{R,n}<\infty\).

\begin{proposition}[Uniform rate of the assembled half-width]
\label{supp:prop:assembled-confidence-width-rate}
The reported half-width satisfies
\begin{equation}
\label{supp:eq:assembled-local-simple-width-rate}
 \sup_{v\in\mathcal D_{\rho_n,s_n}}W_n(v)
 =O_p\left\{\rho_n\sqrt{\frac{\tau_n}{n}}
 +\iota_R\rho_n^{m+1}\right\}.
\end{equation}
For fixed splits, probability in the \(O_p\) statement is with respect to the
observations; for independently randomized splits, probability in the
\(O_p\) statement is with respect to the split and the observations jointly.
\end{proposition}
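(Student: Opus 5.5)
The half-width splits by construction into three pieces: \(W_n(v)=\widehat{\mathsf G}_{J,\alpha_{J,n}}^{\rm hp}(v)+\widehat{\mathsf G}_{C,\rho_n,\alpha_{C,n}}^{\rm hp}(v)+\widehat{\mathsf R}_{\rho_n,\alpha_{R,n}}^{\rm hp}\). It therefore suffices to bound the supremum over \(\mathcal D_{\rho_n,s_n}\) of each piece separately by \(O_p\) of the claimed rate, and then add the three bounds. Throughout I will use \(\rho_n\le1\) and fixed \(d,m\), which together give \(\|a(v)\|^2=\sum_{k\le m}\|v\|^{2k}\le m\rho_n^2\) on \(\mathcal D_{\rho_n,s_n}\). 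This is why the Taylor width scales with \(\rho_n\) rather than with \(s_n\).

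\emph{Taylor piece.} I apply \MainResult{Corollary}{cor:balanced-one-split-oracle-width}, or more precisely its one-split supplement version, row by row with \(X=X_{0,n}\), \(\lambda=\lambda_{J,n}\), and \(\alpha\) replaced by a fixed \(\epsilon\in(0,1)\) rather than \(\alpha_{J,n}\). The intercept \(\widehat U_{J}\) is evaluated at \(t_{J,n}\) and not at \(\log(c_1/\epsilon)\), so the corollary cannot be quoted verbatim. Instead I will repeat the argument of Proposition~\ref{supp:prop:finite-split-tail-inflation-rate} with \(K_{\rm sp}=1\), using \(u_n=t_{J,n}+x\). This gives, outside an event of probability \(Ce^{-x}\), the bounds \(M_A\preceq\tfrac32M_{\lambda_{J,n}}\), \(\widehat T\le C_x(d_{J,n}+t_{J,n})/N\), and \(n\widehat\Delta(t_{J,n};2)\le C_x(\sqrt{d_{J,n}t_{J,n}}+t_{J,n})\). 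Together these yield \(\widehat U_J=O_p(\tau_n/n)\).

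For the metric factor, \(\langle a,M_Aa\rangle\le\tfrac32(\|\Sigma_{J,n}\|_{\rm op}+\lambda_{J,n})\|a\|^2\le C\rho_n^2\), by the uniform bounds on \(\|\Sigma_{J,n}\|_{\rm op}\) and \(\lambda_{J,n}\). Combining the two factors gives \(\sup_v\widehat{\mathsf G}_J^{\rm hp}=O_p(\rho_n\sqrt{\tau_n/n})\).

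\emph{Crossing piece (\(\iota_C=1\)).} Here the bounds are deterministic. Since \(\|Y_C\|_{\rm HS}\le K_C\), we have \(\widehat V_{Y,I}\preceq K_C^2I\), so \(v^\top M_{A\leftrightarrow B}v\le(K_C^2+\lambda_C)\rho_n^2\). Likewise \(r_{A\leftrightarrow B}\le 2K_C^2/\lambda_C\). With \(\eta_C=(\rho_n\sqrt n)^{-1}\), both terms of \(\widehat{\mathsf F}_{C}\) are \(O(\rho_n/\sqrt n)\).

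For \(\widehat\Gamma\), the modulus satisfies \(\beta_{C,\rho_n}\lesssim \rho_n/n+\rho_n^2\eta_C/n+\rho_n/n^{3/2}\lesssim\rho_n/n\). Hence \(\widehat\Gamma\lesssim\rho_n\sqrt{\log(2/\alpha_{C,n})/n}\le\rho_n\sqrt{\tau_n/n}\). Note that \(\tau_n\ge t_{J,n}\ge\log c_1\) is bounded below, so any term of order \(\rho_n/\sqrt n\) is also absorbed into \(\rho_n\sqrt{\tau_n/n}\).

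\emph{Remainder piece (\(\iota_R=1\)).} Since \(0\le P_n\Lambda_\rho\le K_R\) and \(K_R\) is uniformly bounded, \(\widehat{\mathsf R}^{\rm hp}\le\rho_n^{m+1}K_R\{1+\sqrt{\log(1/\alpha_{R,n})/(2n)}\}\). The second term is bounded by \(\rho_n^{m+1}\sqrt{\tau_n/n}\le\rho_n\sqrt{\tau_n/n}\), which gives the \(\iota_R\rho_n^{m+1}\) rate up to the already-present Taylor term.

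\emph{Main obstacle.} I expect the main difficulty to be the Taylor intercept. The \(O_p\) statement needs tail probabilities controlled by a free parameter \(x\) that is independent of \(\alpha_{J,n}\), while \(t_{J,n}\) may diverge. The plan is to evaluate every concentration event at \(u_n=t_{J,n}+x\), and to use the eventual conditions \(\varepsilon_N,\vartheta\le1/2\) after checking that they persist at \(u_n\). That persistence needs \((d_{J,n}+t_{J,n})/N\) to be bounded away from the threshold. If it fails, I would instead fall back on the lower-tail trace event from Lemma~\ref{lem:conditional-evaluation-trace-sandwich} only at \(t_{J,n}\), paying a constant. Finally, for randomized splits I condition on \(A\): all the bounds above are uniform in \(A\), so averaging over the split gives the joint statement.
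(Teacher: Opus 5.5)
Your decomposition of \(W_n\), the bound \(\|a(v)\|^2\le m\rho_n^2\), the deterministic crossing estimates, and the remainder estimate all match the paper. The gap is in the Taylor piece.

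You propose to rerun the argument of Proposition~\ref{supp:prop:finite-split-tail-inflation-rate} at \(u_n=t_{J,n}+x\). That proposition uses \(d_n+t_n=o(n)\) to get \(\varepsilon_N(u_n)\le1/2\) and \(\vartheta_{N,\lambda}(u_n)\le1/2\). Here only the stability conditions at \(t_{J,n}\) are assumed, and \(t_{J,n}\) may be of order \(N\), so the persistence you need is not implied. Your fallback does not close this, for two reasons.
\begin{enumerate}
\item The lower-tail trace event of Lemma~\ref{lem:conditional-evaluation-trace-sandwich} bounds the unknown \(r_A\) by observables, which is what coverage needs. An upper bound on the reported width instead needs the pilot-covariance, pilot-mean, evaluation-mean and \emph{upper} raw-trace events.
\item Events evaluated at \(t_{J,n}\) fail with probability of order \(\alpha_{J,n}\). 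That is small only if you also show \(\alpha_{J,n}\) is small whenever the fallback is invoked, and you do not.
\end{enumerate}

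The missing idea is that Corollary~\ref{cor:balanced-one-split-oracle-width} can be quoted verbatim after a case split. Fix \(\beta\in(0,1)\) and put \(t_\beta=\log(c_1/\beta)\).
\begin{enumerate}
\item If \(\alpha_{J,n}\le\beta\), apply the corollary at the reported level \(\alpha_{J,n}\). Its hypotheses are exactly the assumed eventual conditions at \(t_{J,n}\), and it fails with probability at most \(\alpha_{J,n}\le\beta\).
\item If \(\alpha_{J,n}>\beta\), then \(t_{J,n}<t_\beta\). For fixed data the numerator of \(\overline r_{B_A\mid A}(t;2)\) is nondecreasing in \(t\) and the positive denominator \(1-\varepsilon_N(t)\) is nonincreasing. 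Hence \(\widehat\Delta_{X,A,\lambda}(t;2)\) is nondecreasing, and the reported intercept is dominated by the level-\(\beta\) intercept. Since \(t_\beta\) is fixed and \(d_{J,n,\lambda_{J,n}}\) is bounded by the fixed Taylor feature dimension, the corollary applies at level \(\beta\) eventually. Finally, \(d+t_\beta\le\max\{1,t_\beta/\log c_1\}(d+t_{J,n})\).
\end{enumerate}

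Your own route can also be repaired, in either of two ways.
\begin{enumerate}
\item Note that \(\vartheta_{N,\lambda}(t+x)-\vartheta_{N,\lambda}(t)\le C_{\rm cov}(\bar\kappa)\{\sqrt{x/(N-1)}+x/(N-1)\}\to0\) uniformly in \(t\). The covariance event at \(u_n\) therefore eventually gives \(\|E_A\|_{\rm op}\le3/4\), and the argument goes through with \(R=4\) in the evaluation-mean event and larger constants.
\item Observe that failure of persistence forces \(t_{J,n}\ge\delta N\) for some \(\delta>0\). Then \(\alpha_{J,n}\le c_1e^{-\delta N}\), and the events at \(t_{J,n}\) already have vanishing failure probability.
\end{enumerate}
As written, however, this step of your proposal is not justified.
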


\begin{proof}
Subtracting
\MainEquation{eq:hybrid-lower-endpoint} from
\MainEquation{eq:hybrid-upper-endpoint} gives
\begin{equation}
\label{supp:eq:assembled-width-component-decomposition}
 W_n(v)=\widehat{\mathsf G}_{J,\alpha_{J,n}}^{\rm hp}(v)
 +\widehat{\mathsf G}_{C,\rho_n,\alpha_{C,n}}^{\rm hp}(v)
 +\widehat{\mathsf R}_{\rho_n,\alpha_{R,n}}^{\rm hp}.
\end{equation}
Put
\[
 Q_{J,n}:=\sup_{v\in\mathcal D_{\rho_n,s_n}}
 \langle a(v),(\Sigma_{J,n}+\lambda_{J,n}I)a(v)\rangle.
\]
Fix
\(\beta\in(0,1)\) and put \(t_\beta=\log(c_1/\beta)\).  If
\(\alpha_{J,n}\le\beta\), apply
\MainResult{Corollary}{cor:balanced-one-split-oracle-width} at the reported
level \(\alpha_{J,n}\).  Outside an event of probability at most \(\beta\),
\[
 \sup_{v\in\mathcal D_{\rho_n,s_n}}
 \widehat{\mathsf G}_{J,\alpha_{J,n}}^{\rm hp}(v)
 \le C
 \left\{\frac{Q_{J,n}
 (d_{J,n,\lambda_{J,n}}+t_{J,n})}{n}\right\}^{1/2},
\]
where \(C\) is uniform in \(n\) because \(\bar\kappa_J\) is common to all
rows.

Suppose instead that \(\alpha_{J,n}>\beta\).  Since the Taylor feature
dimension is fixed, \(d_{J,n,\lambda_{J,n}}\) is uniformly bounded by the
fixed Taylor feature dimension.  At the fixed tail parameter \(t_\beta\),
\MainEquation{eq:conditional-trace-relative-error} and
\MainEquation{eq:main-pilot-stability-rate} imply that
\(\varepsilon_N(t_\beta)\) and
\(\vartheta_{N,\lambda_{J,n}}(t_\beta)\) tend to zero.  Hence both stability
conditions in
\MainResult{Corollary}{cor:balanced-one-split-oracle-width} hold eventually
at the fixed tail parameter \(t_\beta\).  For fixed data and \(R=2\), the
numerator in \MainEquation{eq:conditional-evaluation-trace-ucb} is
nondecreasing in \(t\), whereas the positive denominator
\(1-\varepsilon_N(t)\) is nonincreasing.  Hence
\(\overline r_{B_A\mid A}(t;2)\), and then
\(\widehat\Delta_{X,A,\lambda}(t;2)\) in
\MainEquation{eq:balanced-quadratic-confidence-inflation}, are nondecreasing
in \(t\).  The definitions in
\MainEquation{eq:balanced-finite-split-inflated-intercept} and
\MainEquation{eq:hybrid-taylor-confidence-width} therefore imply that
\(t_{J,n}<t_\beta\) makes the reported Taylor half-width no larger than the
half-width constructed at level \(\beta\).  Applying
\MainResult{Corollary}{cor:balanced-one-split-oracle-width} at level
\(\beta\) gives, outside an event of probability at most \(\beta\),
\[
 \sup_{v\in\mathcal D_{\rho_n,s_n}}
 \widehat{\mathsf G}_{J,\alpha_{J,n}}^{\rm hp}(v)
 \le C
 \left\{\frac{Q_{J,n}
 (d_{J,n,\lambda_{J,n}}+t_\beta)}{n}\right\}^{1/2}.
\]
Since \(t_{J,n}\ge\log c_1\),
\[
 d_{J,n,\lambda_{J,n}}+t_\beta
 \le
 \max\left\{1,\frac{t_\beta}{\log c_1}\right\}
 \{d_{J,n,\lambda_{J,n}}+t_{J,n}\}.
\]
Since \(\beta\in(0,1)\) was arbitrary,
\[
 \sup_{v\in\mathcal D_{\rho_n,s_n}}
 \widehat{\mathsf G}_{J,\alpha_{J,n}}^{\rm hp}(v)
 =O_p\left\{\sqrt{\frac{Q_{J,n}
 (d_{J,n,\lambda_{J,n}}+t_{J,n})}{n}}\right\}.
\]

For \(\|v\|\le\rho_n\le1\),
\[
 \|a(v)\|^2=\sum_{k=1}^m\|v\|^{2k}\le m\rho_n^2.
\]
The uniform upper bounds on \(\|\Sigma_{J,n}\|_{\rm op}\) and
\(\lambda_{J,n}\) therefore give \(Q_{J,n}=O(\rho_n^2)\).
When the crossing component is included, write \(Y_{C,n}\) for the crossing
slope operator in row \(n\) and \(A_n,B_n=A_n^c\) for the balanced split.
The bound \(\|Y_{C,n}\|_{\rm HS}\le K_{C,n}\) gives
\[
 \sup_{\|v\|\le\rho_n}
 v^\top M_{Y_{C,n},A_n\leftrightarrow B_n,\lambda_{C,n}}v
 \le(K_{C,n}^2+\lambda_{C,n})\rho_n^2,
 \qquad
 r_{Y_{C,n},A_n\leftrightarrow B_n,\lambda_{C,n}}
 \le\frac{2K_{C,n}^2}{\lambda_{C,n}}.
\]
Since \(\kappa_{q_C}^2\le2\) and
\(\eta_{C,n}=(\rho_n\sqrt n)^{-1}\),
\MainEquation{eq:finite-split-crossing-penalty} gives
\[
 \sup_{v\in\mathcal D_{\rho_n,s_n}}
 \widehat{\mathsf F}_{C,\rho_n}(v)
 \le \eta_{C,n}(K_{C,n}^2+\lambda_{C,n})\rho_n^2
 +\frac{2\kappa_{q_C}^2K_{C,n}^2}
 {\eta_{C,n}\lambda_{C,n}n}
 =O(\rho_n/\sqrt n).
\]
Substituting the bounds on \(K_{C,n}\) and \(\lambda_{C,n}\),
\(\kappa_{q_C}^2\le2\), and
\(\eta_{C,n}=(\rho_n\sqrt n)^{-1}\) into
\MainEquation{eq:hybrid-balanced-crossing-modulus} gives
\[
 \beta_{C,\rho_n}
 =O\left(\frac{\rho_n}{n}
 +\frac{\rho_n}{n^{3/2}}\right)
 =O(\rho_n/n).
\]
Substituting \(\beta_{C,\rho_n}=O(\rho_n/n)\) into
\MainEquation{eq:hybrid-crossing-confidence-correction} gives
\[
 \widehat\Gamma_{C,\rho_n,\alpha_{C,n}}
 =O\left\{\rho_n
 \sqrt{\frac{\log(2/\alpha_{C,n})}{n}}\right\}.
\]
Since \(\log(2/\alpha_{C,n})\ge\log2\),
\MainEquation{eq:hybrid-crossing-confidence-width} gives
\[
 \sup_{v\in\mathcal D_{\rho_n,s_n}}
 \widehat{\mathsf G}_{C,\rho_n,\alpha_{C,n}}^{\rm hp}(v)
 \le
 \sup_{v\in\mathcal D_{\rho_n,s_n}}
 \widehat{\mathsf F}_{C,\rho_n}(v)
 +\widehat\Gamma_{C,\rho_n,\alpha_{C,n}}
 =O\left\{\rho_n
 \sqrt{\frac{\log(2/\alpha_{C,n})}{n}}\right\}.
\]
When the crossing component is absent,
\(\widehat{\mathsf G}_{C,\rho_n,\alpha_{C,n}}^{\rm hp}\equiv0\) by
definition.

When the remainder component is included,
\MainResult{Theorem}{thm:componentwise-hybrid-local-confidence} assumes
\(0\le\Lambda_{\rho_n}\le K_{R,n}\) almost surely.  Hence
\(P_n\Lambda_{\rho_n}\le K_{R,n}=O(1)\) under
\(\sup_nK_{R,n}<\infty\).  Since \(0<\rho_n\le1\),
\MainEquation{eq:hybrid-remainder-confidence-width} yields
\[
 \widehat{\mathsf R}_{\rho_n,\alpha_{R,n}}^{\rm hp}
 =O(\rho_n^{m+1})
 +O\left\{\rho_n\sqrt{\frac{\log(1/\alpha_{R,n})}{n}}\right\}.
\]
When the remainder component is absent,
\(\widehat{\mathsf R}_{\rho_n,\alpha_{R,n}}^{\rm hp}=0\).
Combining
\eqref{supp:eq:assembled-width-component-decomposition} with the definition
of \(\tau_n\) gives
\begin{align*}
 \sup_{v\in\mathcal D_{\rho_n,s_n}}W_n(v)
 &\le
 \sup_{v\in\mathcal D_{\rho_n,s_n}}
 \widehat{\mathsf G}_{J,\alpha_{J,n}}^{\rm hp}(v)\\
 &\quad +\iota_C
 \sup_{v\in\mathcal D_{\rho_n,s_n}}
 \widehat{\mathsf G}_{C,\rho_n,\alpha_{C,n}}^{\rm hp}(v)
 +\iota_R\widehat{\mathsf R}_{\rho_n,\alpha_{R,n}}^{\rm hp}\\
 &=O_p\Biggl\{
 \rho_n\sqrt{\frac{d_{J,n,\lambda_{J,n}}+t_{J,n}}{n}}
 +\iota_C\rho_n\sqrt{\frac{\log(2/\alpha_{C,n})}{n}}\\
 &\qquad
 +\iota_R\rho_n\sqrt{\frac{\log(1/\alpha_{R,n})}{n}}
 +\iota_R\rho_n^{m+1}\Biggr\}\\
 &=O_p\left\{\rho_n\sqrt{\frac{\tau_n}{n}}
 +\iota_R\rho_n^{m+1}\right\}.
\end{align*}
Hence \eqref{supp:eq:assembled-local-simple-width-rate} holds.
\end{proof}

\section{Crossing-slope calculations and LAD specialization}
\label{app:worked-examples}

\begin{proof}[Proof of
\MainResult{Proposition}{prop:finite-selection-crossing-slope}]
Fix \(z\in\mathcal Z_{\rm PS}\).  When \(b_0(z)=1\), the definition
\(C_v=\delta_v-\chi_0G_v\) makes \(C_v(z)-C_w(z)\) equal to
\(\ell(\theta_0+v,z)-\ell(\theta_0+w,z)\).  When \(\chi_0(z)=1\), the
identity \(C_v=\delta_v-G_v\) and
\(G_v=g_{0,z}(\theta_0+v)-g_{0,z}(\theta_0)\) make the difference equal to
\[
 \{\ell(\theta_0+v,z)-g_{0,z}(\theta_0+v)\}
 -\{\ell(\theta_0+w,z)-g_{0,z}(\theta_0+w)\}.
\]
On the convex set \(K_\rho\), the branch functions are
\(\mathsf B_\rho(z)\)-Lipschitz, and the functions
\(\bar\ell_{r,z}-g_{0,z}\) are
\(\mathsf D_\rho(z)\)-Lipschitz.  A continuous selection from finitely many
Lipschitz functions on a convex set inherits the largest branch constant
\cite[Proposition~4.1.2]{scholtes2012introduction}.  Applying
\cite[Proposition~4.1.2]{scholtes2012introduction} to \(\ell(\cdot,z)\) and
to the continuous selection
\(\ell(\cdot,z)-g_{0,z}\) proves the inequality in
\MainEquation{eq:pullback-lipschitz-increments} with the operator in
\MainEquation{eq:finite-selection-crossing-envelope}.

The branch gradients are jointly measurable as limits of difference
quotients along rational coordinate increments contained in the open set
\(U_{\mathcal D}\).  Each supremum over \(K_\rho\) equals the supremum over a fixed
countable dense subset because the gradients are continuous in \(\theta\).
The envelopes \(\mathsf B_\rho\) and \(\mathsf D_\rho\) are therefore
measurable, and compactness of \(K_\rho\) makes both envelopes finite.  Since
\(g_{0,z}\) is one of the branches,
\(\mathsf D_\rho\le2\mathsf B_\rho\), so
\(L_{C,\rho}^{\rm sel}\le2\mathsf B_\rho\).  If
\(\mathsf B_\rho\le\overline B_\rho\) almost surely, then
\(\|L I_d\|_{\rm HS}=\sqrt d\,L\) gives
\(\|Y_C\|_{\rm HS}\le2\sqrt d\,\overline B_\rho\) almost surely.
\end{proof}

The compact branch-gradient condition in
\MainResult{Proposition}{prop:finite-selection-crossing-slope} is additional
to Condition~(PS).  For example, take \(d=1\), \(\theta_0=0\),
\(\mathcal D_1=[0,1)\), and \(U_{\mathcal D}=(-1,1)\).  The loss
\[
 \ell(\theta)=\max\left\{0,\frac{\theta}{1-\theta}\right\}
\]
is a continuous selection from two \(C^\infty\) branches on
\(U_{\mathcal D}\) and has an interface at zero.  Since the base point lies
on the interface, \(C_v=\ell(v)-\ell(0)=v/(1-v)\) for \(v\in[0,1)\), which
is not Lipschitz on \(\mathcal D_1\).  The loss
\(\ell(\theta)=\max\{0,\theta/(1-\theta)\}\) satisfies Condition~(PS)
but admits no compact \(K_1\subset U_{\mathcal D}\) containing
\(\theta_0+\mathcal D_1\).

\subsection{Two gradient-envelope calculations}

For fixed \(K\), the observationwise \(K\)-means loss is a finite minimum of
quadratic center losses \cite{pollard1981strong}.  With centers in
\(\mathbb R^p\), write
\[
 \theta=(\mu_1,\ldots,\mu_K),\qquad
 \ell(\theta,x)=\min_{1\le k\le K}\|x-\mu_k\|^2.
\]
On the parameter ball
\(\|\theta-\theta_0\|\le\rho\), define
\(A_\rho(x)=\|x\|+\max_k\|\mu_{0k}\|+\rho\).  The gradient of branch
\(k\) has only one nonzero center block and norm at most \(2A_\rho(x)\).
For two distinct branches, the nonzero center blocks are disjoint, so the
gradient difference has norm at most \(2\sqrt2 A_\rho(x)\).  The two cases in
\MainResult{Proposition}{prop:finite-selection-crossing-slope} therefore give
the common choice
\begin{equation}
\label{eq:supp-kmeans-crossing-envelope}
 Y_C(x)=2\sqrt2 A_\rho(x)I_{Kp},
 \qquad
 \|Y_C(x)\|_{\rm HS}
 =2\sqrt{2Kp}\,A_\rho(x).
\end{equation}
If \(\|x\|\le R_x\) almost surely, a known admissible constant is
\[
 K_C=2\sqrt{2Kp}\{
 R_x+\max_k\|\mu_{0k}\|+\rho\}.
\]
At Taylor order \(m=2\), each frozen quadratic branch is represented exactly,
so \(R_v^{\rm fr}=0\).

For a fixed finite network with continuous activations having finitely many
polynomial pieces, the parameter-space refinement of
\cite[Section~4]{bartlett2019nearly} gives finitely many polynomial branch
maps at each fixed input.  With jointly measurable network inputs and a
smooth or continuous finite smooth-selection outer loss with jointly
measurable branch maps, finite composition makes the composite branch maps
jointly measurable.  The deterministic closure result in
\cite[Section~4.1]{scholtes2012introduction} gives a continuous finite branch
family.  On a compact convex local
parameter set \(K_\rho\), let \(\Pi\) be the finite set of network and
outer-loss branch choices, write \(g_{\pi,z}\) for the composite branch map
indexed by \(\pi\),
and define
\begin{equation}
\label{eq:supp-network-branch-gradient-envelope}
 \mathsf B_\rho^{\rm net}(z)
 :=\max_{\pi\in\Pi}\sup_{\theta\in K_\rho}
 \|\nabla_\theta g_{\pi,z}(\theta)\|.
\end{equation}
Compactness makes \(\mathsf B_\rho^{\rm net}(z)\) finite for each fixed
observation, and
\MainResult{Proposition}{prop:finite-selection-crossing-slope} permits
\(Y_C=2\mathsf B_\rho^{\rm net}I_{d_\theta}\).  Known bounds on the inputs and
responses, a prescribed compact parameter set, and a known uniform derivative
bound for the outer loss on the reachable output range keep every
preactivation, output, and branch derivative in one of finitely many known
compact ranges.  The input, response, parameter-set, and outer-derivative
bounds therefore give the known deterministic essential bound
\(\mathsf B_\rho^{\rm net}\le\overline B_\rho^{\rm net}\) and the valid
choice \(K_C=2\sqrt{d_\theta}\,\overline B_\rho^{\rm net}\).  Pointwise
finiteness alone is not enough: for squared outer loss, the derivative
\(2\{f_\theta(x)-y\}\) shows why an unbounded response can preclude a
deterministic almost-sure \(K_C\).

\subsection{LAD specialization}

\begin{proof}[Proof of
\MainResult{Corollary}{cor:finite-sample-lad-endpoints}]
Write \(e=y-x^\top\theta_0\).
On the Borel observation space \(\mathbb R^d\times\mathbb R\), take
\[
 \bar\ell_1(\theta;(x,y))=y-x^\top\theta,
 \qquad
 \bar\ell_2(\theta;(x,y))=x^\top\theta-y,
\]
so that \(\ell=\max(\bar\ell_1,\bar\ell_2)\), and take
\(\mathcal T=\{(\theta,(x,y)):y-x^\top\theta=0\}\).
The branches are jointly Borel measurable and affine in \(\theta\), and
\(\mathcal T\) is closed.  The measurable index
\[
 r_0(x,y)=
 \begin{cases}
  1,&e\ge0,\\
  2,&e<0
 \end{cases}
\]
selects the loss branch on the connected component containing \(\theta_0\)
off the interface and has the required default value on the interface.
Thus Condition~(PS) holds with \(\mathcal Z_{\rm PS}=\mathbb R^d\times
\mathbb R\).  Moreover,
\[
 b_v(x,y)=\mathbf1\{\min(e,e-x^\top v)\le0\le
 \max(e,e-x^\top v)\}
\]
is jointly Borel measurable in \((v,x,y)\).  Hence \(\chi_0,X_0,L_\rho\),
\(Y_\rho,J_v^0,C_v\) are jointly measurable, as are the sample matrices and
endpoints formed from \(\chi_0,X_0,L_\rho,Y_\rho,J_v^0,C_v\).

Condition on a realized balanced split \(A\).  For \(m=1\), \(a(v)=v\), so
\(\mathcal D_{\rho,\rho}=\mathcal D_\rho\) and
\(0\in\mathcal D_\rho\).  The LAD decomposition in
\MainEquation{eq:lad-crossing-term} gives
\[
 R_v^{\rm fr}=0,\qquad \delta_v=J_v^0+C_v.
\]
The cases \(e=0\), \(0<|e|<\rho\|x\|\), and
\(|e|\ge\rho\|x\|\) yield
\(|C_v-C_w|\le L_\rho|x^\top(v-w)|\).  The bound
\(|C_v-C_w|\le L_\rho|x^\top(v-w)|\) is the crossing-slope inequality in
\MainResult{Theorem}{thm:componentwise-hybrid-local-confidence} with
\(Y_C=Y_\rho\).
Since \(L_\rho\le2\) and \(\|x\|\le K\),
\[
 \|X_0\|\le K,\qquad \|Y_\rho\|_{\rm HS}\le2K
\]
almost surely.  Hence
\MainResult{Corollary}{cor:bounded-feature-one-split-linear-confidence}
with \(X=X_0\), norm bound \(K\), ridge \(\lambda_J\), and level
\(\alpha_J\) supplies
\(\widehat U_{J,A,\lambda_J,\alpha_J}
=\widehat U_{X_0,A,\lambda_J,\alpha_J}^{\rm bd}\).
The identities \(m=1\), \(s=\rho\), \(R_v^{\rm fr}=0\), and
\(\delta_v=J_v^0+C_v\), together with
\(Y_C=Y_\rho\), \(K_C=2K\), \(q_C=1\),
\(\iota_C=1\), and \(\iota_R=0\), verify the component assumptions of
\MainResult{Theorem}{thm:componentwise-hybrid-local-confidence}.
\MainResult{Theorem}{thm:componentwise-hybrid-local-confidence} gives the
simultaneous coverage assertion conditionally on \(A\).  If \(A\) is randomized
independently of the observations, averaging the conditional bound over
\(A\) gives the joint probability statement.
\end{proof}

\begin{proposition}[Direct full-ball LAD field and width comparison]
\label{prop:supp-lad-direct-full-ball}
Under the assumptions of
\MainResult{Corollary}{cor:finite-sample-lad-endpoints}, let
\[
 H_{\rm ctr}(n,\alpha)
 :=\frac{K\rho}{\sqrt n}
 \left\{2+\sqrt{2\log\frac2\alpha}\right\},
 \qquad \alpha\in(0,1).
\]
Then, with probability at least \(1-\alpha\), simultaneously for every
\(\|v\|\le\rho\),
\begin{equation}
\label{eq:supp-lad-direct-full-ball}
 P_n\delta_v-H_{\rm ctr}(n,\alpha)
 \le P\delta_v
 \le P_n\delta_v+H_{\rm ctr}(n,\alpha).
\end{equation}
If \(K\rho>0\) and the allocation in
\MainResult{Corollary}{cor:finite-sample-lad-endpoints} satisfies
\(\alpha_C\le\alpha<2e^{-2}\), then
\begin{equation}
\label{eq:supp-lad-direct-crossing-comparison}
 H_{\rm ctr}(n,\alpha)
 <\widehat\Gamma_{C,\rho,\alpha_C}.
\end{equation}
Consequently, at every \(\|v\|\le\rho\), the half-width in
\eqref{eq:supp-lad-direct-full-ball} is strictly smaller than the half-width
of the LAD endpoints in
\MainResult{Corollary}{cor:finite-sample-lad-endpoints}.
\end{proposition}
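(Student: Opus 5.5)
The plan is a bounded-differences argument applied directly to the full increment class \(\{\delta_v:\|v\|\le\rho\}\), together with a Rademacher bound for its expected supremum. For LAD, \(|\delta_v(z)|=\bigl||e-x^\top v|-|e|\bigr|\le|x^\top v|\le K\rho\) almost surely. Hence, for
\[
 H^\pm:=\sup_{\|v\|\le\rho}\pm(P-P_n)\delta_v,
\]
replacing one observation changes \(H^\pm\) by at most \(2K\rho/n\). This step uses continuity of \(v\mapsto\delta_v\) to restrict the supremum to a countable dense subset, which gives measurability.

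For the expectation, I will symmetrize with a ghost sample, as in the proof of Lemma~\ref{lem:composite-one-fold-crossfit}, to obtain \(\mathbb EH^\pm\le2\mathbb E\sup_v n^{-1}\sum_i\varepsilon_i\delta_v(Z_i)\). Since \(t\mapsto|e-t|-|e|\) is \(1\)-Lipschitz and vanishes at \(0\), Lemma~\ref{lem:rademacher-contraction-with-offsets} with zero offset and \(g_i(v)=x_i^\top v\) bounds this by \(2\mathbb E\sup_{\|v\|\le\rho}n^{-1}\sum_i\varepsilon_ix_i^\top v=2\rho\,\mathbb E\|n^{-1}\sum_i\varepsilon_ix_i\|\). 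Jensen then gives at most \(2\rho K/\sqrt n\). The McDiarmid step is the martingale argument in the proof of Proposition~\ref{prop:uniform-high-probability-guarantee}, with \(b_i=2K\rho/n\) and applied to each sign at level \(\alpha/2\). It gives \(H^\pm\le\mathbb EH^\pm+2K\rho\{\log(2/\alpha)/(2n)\}^{1/2}\). The deviation term equals \(K\rho\sqrt{2\log(2/\alpha)/n}\), so the total is exactly \(H_{\rm ctr}(n,\alpha)\), which proves \eqref{eq:supp-lad-direct-full-ball}.

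For the comparison \eqref{eq:supp-lad-direct-crossing-comparison}, I will use \(K_C=2K\) and \(s=\rho\) in \eqref{eq:hybrid-balanced-crossing-modulus}. Keeping only the first term gives \(\beta_{C,\rho}>4\rho K/n\) strictly, since the remaining terms are positive when \(K\rho>0\). Therefore
\[
 \widehat\Gamma_{C,\rho,\alpha_C}>\frac{4\rho K}{n}\Bigl\{\frac n2\log\frac2{\alpha_C}\Bigr\}^{1/2}=\frac{K\rho}{\sqrt n}\,2\sqrt{2\log(2/\alpha_C)}.
\]
Because \(\alpha_C\le\alpha\), it then suffices to show \(2\sqrt{2L}\ge2+\sqrt{2L}\) with \(L=\log(2/\alpha)\). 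This is equivalent to \(\sqrt{2L}\ge2\), i.e. \(L\ge2\), i.e. \(\alpha\le2e^{-2}\), and the hypothesis \(\alpha<2e^{-2}\) grants it. Finally, the LAD half-width in Corollary~\ref{cor:finite-sample-lad-endpoints} is \(\widehat{\mathsf G}_J^{\rm hp}+\widehat{\mathsf F}_{C,\rho}+\widehat\Gamma_{C,\rho,\alpha_C}\), and both \(\widehat{\mathsf G}_J^{\rm hp}\) and \(\widehat{\mathsf F}_{C,\rho}\) are nonnegative. The half-width therefore dominates \(\widehat\Gamma_{C,\rho,\alpha_C}\), which is strictly larger than \(H_{\rm ctr}\).

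The main obstacle is getting the exact constant \(2\) in the expectation bound. The route is the symmetrization factor \(2\) times contraction with Lipschitz constant \(1\), using a centered contraction that needs \(\phi(0)=0\); this holds here. I must make sure no extra factor \(2\) enters from a contraction version that does not exploit \(\phi(0)=0\). Lemma~\ref{lem:rademacher-contraction-with-offsets} with zero offset avoids this.
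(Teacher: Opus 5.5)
Your proposal is correct and is essentially the paper's proof. It uses ghost-sample symmetrization, then the zero-offset scalar contraction of Lemma~\ref{lem:rademacher-contraction-with-offsets} against $g_i(v)=x_i^\top v$ followed by Jensen to get the mean-valid constant $2K\rho/\sqrt n$, then the bounded-difference lift of Proposition~\ref{prop:uniform-high-probability-guarantee} with $b_i=2K\rho/n$, and finally the comparison keeping only the term $2\rho K_C/n=4\rho K/n$ of $\beta_{C,\rho}$. The differences are cosmetic: you get strictness from the positive remaining terms of $\beta_{C,\rho}$ rather than from $\alpha<2e^{-2}$ (both work), and the sup-form contraction you invoke has constant one whether or not $\phi(0)=0$, a condition that matters only for the absolute-value version.
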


\begin{proof}
For \(z=(x,y)\), write \(e=y-x^\top\theta_0\).  The LAD increment
\[
 \delta_v(z)=|e-x^\top v|-|e|
\]
satisfies \(\delta_0=0\), \(|\delta_v|\le K\rho\), and
\[
 |\delta_v(z)-\delta_w(z)|
 \le |x^\top(v-w)|.
\]
Let \(\mathcal Z_0:=\{(x,y):\|x\|\le K\}\).  The assumption
\(\|x\|\le K\) almost surely gives \(P(\mathcal Z_0)=1\).  Fix a countable
dense subset \(\mathcal V_0\) of
\(\{v:\|v\|\le\rho\}\).  On \(\mathcal Z_0\),
\[
 |\delta_v(z)-\delta_w(z)|\le K\|v-w\|,
 \qquad
 |P\delta_v-P\delta_w|\le K\|v-w\|.
\]
Consequently, the empirical, ghost-sample, population, and Rademacher
processes indexed by \(v\) are continuous.  For each of the four processes,
replacing the closed-ball index set by \(\mathcal V_0\) leaves the supremum
unchanged, and countability of \(\mathcal V_0\) makes every restricted supremum
measurable.

For either \(\sigma\in\{-1,1\}\), ghost-sample symmetrization gives
\[
 \mathbb E\sup_{v\in\mathcal V_0}\sigma(P-P_n)\delta_v
 \le \frac2n\,
 \mathbb E_{Z,\varepsilon}\sup_{v\in\mathcal V_0}
 \sum_{i=1}^n\varepsilon_i\sigma\delta_v(Z_i).
\]
Apply Lemma~\ref{lem:rademacher-contraction-with-offsets} with
\(T=\mathcal V_0\), \(A(v)=0\),
\(\psi_i(v)=\sigma\delta_v(Z_i)\), and \(g_i(v)=x_i^\top v\).
Conditional on the observations, the symmetrized Rademacher supremum is at
most
\[
 \frac{2\rho}{n}\,
 \mathbb E_\varepsilon
 \left\|\sum_{i=1}^n\varepsilon_i x_i\right\|
 \le\frac{2\rho}{n}
 \left(\sum_{i=1}^n\|x_i\|^2\right)^{1/2}
 \le\frac{2K\rho}{\sqrt n}.
\]
The support-function identity gives the bound by
\(\rho\mathbb E_\varepsilon\|\sum_i\varepsilon_i x_i\|\).
Jensen's inequality and Rademacher orthogonality give
\[
 \mathbb E_\varepsilon\left\|\sum_i\varepsilon_i x_i\right\|
 \le \left(\sum_i\|x_i\|^2\right)^{1/2},
\]
and \(\|x_i\|\le K\) implies
\(\sum_i\|x_i\|^2\le nK^2\).
Continuity extends the expectation inequalities from \(\mathcal V_0\) to
the whole ball.  Hence
\[
 \widehat{\mathsf U}(v):=P_n\delta_v+\frac{2K\rho}{\sqrt n},
 \qquad
 \widehat{\mathsf L}(v):=P_n\delta_v-\frac{2K\rho}{\sqrt n}
\]
are finite-valued mean-valid upper and lower endpoints on
\(\{v:\|v\|\le\rho\}\).

For samples in \(\mathcal Z_0^n\), replacing one observation changes either
endpoint by at most \(2K\rho/n\).
Proposition~\ref{prop:uniform-high-probability-guarantee}, with
\(b_i^+=b_i^-=2K\rho/n\), adds
\[
 \left\{\frac12\log\frac2\alpha
 \sum_{i=1}^n\left(\frac{2K\rho}{n}\right)^2\right\}^{1/2}
 =\frac{K\rho}{\sqrt n}\sqrt{2\log\frac2\alpha}
\]
to the mean-valid endpoints.  The correction supplied by
Proposition~\ref{prop:uniform-high-probability-guarantee} proves
\eqref{eq:supp-lad-direct-full-ball}.

For the comparison, the LAD specialization uses \(K_C=2K\).  Dropping the
nonnegative terms in
\MainEquation{eq:hybrid-balanced-crossing-modulus} other than
\(2\rho K_C/n\), and then using
\MainEquation{eq:hybrid-crossing-confidence-correction}, gives
\[
 \widehat\Gamma_{C,\rho,\alpha_C}
 \ge \frac{2K\rho}{\sqrt n}
 \sqrt{2\log\frac2{\alpha_C}}
 \ge \frac{2K\rho}{\sqrt n}
 \sqrt{2\log\frac2\alpha}.
\]
The condition \(\alpha<2e^{-2}\) is equivalent to
\(\sqrt{2\log(2/\alpha)}>2\).  Since \(K\rho>0\), the lower bound
\(2K\rho\sqrt{2\log(2/\alpha)}/\sqrt n\) is strictly larger than
\(H_{\rm ctr}(n,\alpha)\), so the bound on
\(\widehat\Gamma_{C,\rho,\alpha_C}\) proves
\eqref{eq:supp-lad-direct-crossing-comparison}.  The Taylor width and the
remaining crossing penalty in the assembled LAD half-width are nonnegative,
so the assembled LAD half-width is strictly larger than
\(H_{\rm ctr}(n,\alpha)\) at every update in the ball.
\end{proof}

\subsection{Linear LAD matrices}

For the linear LAD model surrounding
\MainEquation{eq:lad-crossing-term}, write
\(e_i=y_i-x_i^\top\theta_0\) and, for each \(i\in[n]\), define the scalar
activity mask
\[
 L_{\rho,i}:=\mathbf1\{e_i=0\}
 +2\mathbf1\{0<|e_i|<\rho\|x_i\|\}.
\]
For a nonempty fold \(I\subseteq[n]\), define the foldwise
positive-semidefinite crossing matrix and, separately, the full-sample ridge
quadratic form by
\begin{equation}
\label{eq:lad-crossing-geometry}
 \widehat V_{C,\rho,I}
 :=\frac1{|I|}\sum_{i\in I}L_{\rho,i}^2x_ix_i^\top,
 \qquad
 \widehat q_{C,\rho,\lambda}(v)
 :=\frac1n\sum_{i=1}^nL_{\rho,i}^2(x_i^\top v)^2
 +\lambda\|v\|^2.
\end{equation}
The complete two-way trace is the scalar
\[
 \widehat r_{C,\rho,n_{\rm p},\lambda}^{\rm cf}
 :=\operatorname{Av}_A\left[
 \operatorname{tr}\{\widehat V_{C,\rho,B_A}
 (\widehat V_{C,\rho,A}+\lambda I)^{-1}\}
 +\operatorname{tr}\{\widehat V_{C,\rho,A}
 (\widehat V_{C,\rho,B_A}+\lambda I)^{-1}\}\right].
\]

The simultaneous endpoint in
\MainResult{Corollary}{cor:finite-sample-lad-endpoints} uses the global bound
\(\|Y_\rho\|_{\rm HS}\le2K\) in the crossing correction of
\MainResult{Corollary}{cor:finite-sample-lad-endpoints}.  On a finite,
prespecified grid, the observed active count yields an adaptive crossing
correction.

\Needspace{0.82\textheight}
\begin{proposition}[Finite-grid empirical-Bernstein LAD crossing event]
\label{prop:supp-lad-active-empirical-bernstein}
Assume \(n\ge2\), \(\|x\|\le K\) almost surely, and
\(\alpha_C\in(0,1)\).  Let
\(\mathcal G\subset\{v:\|v\|\le\rho\}\) be deterministic with
\(1\le|\mathcal G|<\infty\).  For \(v\in\mathcal G\), define the observed
crossing increments \(C_{i,v}\), the empirical mean \(\overline C_v\) and
sample variance \(S_v^2\) of \((C_{i,v})_{i=1}^n\),
the deterministic envelope \(B_v\), the active count \(k_v\), and the scalar
tail parameter \(t_C\) by
\begin{align*}
 C_{i,v}&:=|e_i-x_i^\top v|-|e_i|+
 \operatorname{sgn}(e_i)x_i^\top v,\\
 \overline C_v&:=P_nC_v,\qquad
 S_v^2:=\frac1{n-1}\sum_{i=1}^n(C_{i,v}-\overline C_v)^2,\\
 B_v&:=2K\|v\|,\qquad
 k_v:=\sum_{i=1}^n\mathbf1\{|e_i|\le|x_i^\top v|\},\qquad
 t_C:=\log\frac{4|\mathcal G|}{\alpha_C}.
\end{align*}
Then, with probability at least \(1-\alpha_C\), simultaneously for all
\(v\in\mathcal G\),
\begin{align}
 |(P-P_n)C_v|
 &\le \mathsf E_{C,v}
 :=\sqrt{\frac{2S_v^2t_C}{n}}
 +\frac{7B_vt_C}{3(n-1)},
 \label{eq:supp-lad-grid-empirical-bernstein}\\
 \mathsf E_{C,v}
 &\le B_v\left\{
 \sqrt{\frac{2k_vt_C}{n(n-1)}}+\frac{7t_C}{3(n-1)}
 \right\}.
 \label{eq:supp-lad-active-count-bound}
\end{align}
Consequently, for \(\alpha_J\in(0,1)\), let
\(\mathsf G_{J,\alpha_J}(v)\) be any simultaneous Taylor
half-width satisfying
\(\lvert(P-P_n)J_v^0\rvert\le\mathsf G_{J,\alpha_J}(v)\) on
\(\mathcal G\) with probability at least \(1-\alpha_J\).
Intersecting the two events gives LAD endpoints on \(\mathcal G\), centered
at \(P_n\delta_v\), with half-width
\(\mathsf G_{J,\alpha_J}(v)+\mathsf E_{C,v}\)
and joint coverage at least \(1-\alpha_J-\alpha_C\).
\end{proposition}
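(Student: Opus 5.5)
The plan is to prove the empirical-Bernstein inequality pointwise for each grid point, take a union bound over \(\mathcal G\), and then bound the sample variance deterministically by the active count. Fix \(v\in\mathcal G\). The variables \(C_{i,v}\) are i.i.d. in \(i\), because they are functions of \(Z_i\) and \(v\) is deterministic. By \MainEquation{eq:lad-crossing-term}, \(C_{i,v}\ge0\). Moreover \(C_{i,v}\le 2|x_i^\top v|\le 2K\|v\|=B_v\) in both cases: if \(e\ne0\), then \(2\{u-|e|\}_+\le2|t_v|\); if \(e=0\), then \(|t_v|\le2|t_v|\). So each \(C_{i,v}\) lies in \([0,B_v]\). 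If \(B_v=0\), then all \(C_{i,v}=0\) and there is nothing to prove.

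Otherwise I will rescale to \([0,1]\) and invoke the empirical Bernstein bound of Maurer and Pontil \cite[Theorem~4]{maurer2009empirical}. For i.i.d. variables in \([0,1]\), with probability at least \(1-\delta\), it gives
\[
 P W-P_nW\le\sqrt{2S^2\log(2/\delta)/n}+7\log(2/\delta)/\{3(n-1)\}.
\]
I will apply it to \(C_{i,v}/B_v\) and to \(1-C_{i,v}/B_v\), so that both signs are covered. Each application is run at level \(\delta=\alpha_C/(2|\mathcal G|)\), so that \(\log(2/\delta)=\log(4|\mathcal G|/\alpha_C)=t_C\). Multiplying back by \(B_v\) gives \eqref{eq:supp-lad-grid-empirical-bernstein} for this \(v\) with failure probability \(\alpha_C/|\mathcal G|\). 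A union bound over the \(|\mathcal G|\) grid points then gives simultaneity with probability at least \(1-\alpha_C\). The main obstacle is matching the constants to the cited inequality, namely the \(\log(2/\delta)\) convention and the use of the unbiased variance with \(n-1\). If the cited form differs, I would adjust \(t_C\) accordingly, which is why I pin the level at \(\delta=\alpha_C/(2|\mathcal G|)\).

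For \eqref{eq:supp-lad-active-count-bound}, which is a deterministic statement, I will first show that \(C_{i,v}=0\) whenever \(|e_i|>|x_i^\top v|\). In that case \(u=\operatorname{sgn}(e_i)t_v\le|t_v|<|e_i|\), so the positive part vanishes, and \(e_i\neq0\) there. Hence at most \(k_v\) of the \(C_{i,v}\) are nonzero, each bounded by \(B_v\). This gives
\[
 S_v^2\le\frac1{n-1}\sum_iC_{i,v}^2\le\frac{k_vB_v^2}{n-1},
\]
and substituting this bound for \(S_v^2\) into \(\mathsf E_{C,v}\) yields \eqref{eq:supp-lad-active-count-bound}.

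For the final consequence, recall that \(R_v^{\rm fr}=0\) and \(\delta_v=J_v^0+C_v\) for LAD (proof of \MainResult{Corollary}{cor:finite-sample-lad-endpoints}). Therefore
\[
 |(P-P_n)\delta_v|\le|(P-P_n)J_v^0|+|(P-P_n)C_v|.
\]
A union bound of the Taylor event, which fails with probability at most \(\alpha_J\), with the crossing event, which fails with probability at most \(\alpha_C\), gives endpoints \(P_n\delta_v\pm\{\mathsf G_{J,\alpha_J}(v)+\mathsf E_{C,v}\}\) on \(\mathcal G\) with coverage at least \(1-\alpha_J-\alpha_C\).
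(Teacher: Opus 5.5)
Your proposal is correct and matches the paper's proof step for step. It uses the same range bound $C_{i,v}\in[0,B_v]$ and the same two one-sided applications of Maurer--Pontil's Theorem~4, to $C_{i,v}/B_v$ and to $1-C_{i,v}/B_v$, each at level $\alpha_C/(2|\mathcal G|)$, with a union bound over signs and grid points. It then uses the same variance bound $S_v^2\le B_v^2k_v/(n-1)$ and the same final union bound with the Taylor event. The cited theorem does use $\log(2/\delta)$ and the unbiased $(n-1)$-normalized sample variance, so your hedge about adjusting $t_C$ is unnecessary.
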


\begin{proof}
Put \(t=x_i^\top v\).  By
\MainEquation{eq:lad-crossing-term}, \(C_{i,v}=|t|\) when \(e_i=0\), while
for \(e_i\ne0\),
\[
 C_{i,v}=2\{\operatorname{sgn}(e_i)t-|e_i|\}_+.
\]
The two formulas for \(C_{i,v}\) show that \(C_{i,v}\ge0\).  When \(e_i=0\),
\(C_{i,v}=|t|\le K\|v\|\le B_v\).  When \(e_i\ne0\),
\[
 C_{i,v}\le2|t|\le2K\|v\|=B_v.
\]
If \(|e_i|>|t|\), then
\(\operatorname{sgn}(e_i)t-|e_i|\le|t|-|e_i|<0\), so \(C_{i,v}=0\).
Thus \(C_{i,v}\) can be nonzero only among the \(k_v\) observations counted
in the proposition.

For \(B_v>0\), apply
\cite[Theorem~4]{maurer2009empirical} to \(C_{i,v}/B_v\) and to
\(1-C_{i,v}/B_v\), assigning failure probability
\(\alpha_C/(2|\mathcal G|)\) to each one-sided bound.  The two one-sided
bounds have the common logarithmic factor
\(\log\{2/(\alpha_C/(2|\mathcal G|))\}=t_C\).  A union bound over the two
signs and the \(|\mathcal G|\) grid points gives
\eqref{eq:supp-lad-grid-empirical-bernstein}; when \(B_v=0\),
\(C_{i,v}=0\) for every \(i\) and \(\mathsf E_{C,v}=0\).  Finally,
\begin{equation}
\label{eq:supp-lad-active-count-variance-bound}
 S_v^2
 =\frac1{n-1}\left\{\sum_iC_{i,v}^2-n\overline C_v^2\right\}
 \le\frac1{n-1}\sum_iC_{i,v}^2
 \le \frac{B_v^2k_v}{n-1},
\end{equation}
and inserting \eqref{eq:supp-lad-active-count-variance-bound} into
\eqref{eq:supp-lad-grid-empirical-bernstein} gives
\eqref{eq:supp-lad-active-count-bound}.  Combining the simultaneous Taylor
bound \(\lvert(P-P_n)J_v^0\rvert\le\mathsf G_{J,\alpha_J}(v)\) with
\eqref{eq:supp-lad-grid-empirical-bernstein} gives
\[
 |(P-P_n)\delta_v|
 \le |(P-P_n)J_v^0|+|(P-P_n)C_v|
 \le\mathsf G_{J,\alpha_J}(v)+\mathsf E_{C,v}
\]
for every \(v\in\mathcal G\).  A union bound gives failure probability at
most \(\alpha_J+\alpha_C\), proving the final statement.
\end{proof}

\FloatBarrier
 
\bibliographystyle{imsart-number}

\end{document}